\theoremstyle{plain}
  \newtheorem{theorem}{Theorem}[section]
  \newtheorem{lemma}{Lemma}[section]
  \newtheorem{proposition}{Proposition}[section]
  \newtheorem{corollary}{Corollary}[section]
\theoremstyle{definition}
  \newtheorem{definition}{Definition}[section]
  \newtheorem{example}{Example}[section]
  \newtheorem{remark}{Remark}[section]
\newif\if@borderstar
\def\bordermatrix{\@ifnextchar*{%
\@borderstartrue\@bordermatrix@i}{\@borderstarfalse\@bordermatrix@i*}%
}
\def\@bordermatrix@i*{\@ifnextchar[{\@bordermatrix@ii}{\@bordermatrix@ii[()]}}
\def\@bordermatrix@ii[#1]#2{%
\begingroup
\m@th\@tempdima8.75\p@\setbox\z@\vbox{%
\def\cr{\crcr\noalign{\kern 2\p@\global\let\cr\endline }}%
\ialign {$##$\hfil\kern 2\p@\kern\@tempdima &\thinspace %
\hfil $##$\hfil &&\quad\hfil $##$\hfil\crcr\omit\strut %
\hfil\crcr\noalign{\kern -\baselineskip}#2\crcr\omit %
\strut\cr}}%
\setbox\tw@\vbox{\unvcopy\z@\global\setbox\@ne\lastbox}%
\setbox\tw@\hbox{\unhbox\@ne\unskip\global\setbox\@ne\lastbox}%
\setbox\tw@\hbox{%
$\kern\wd\@ne\kern -\@tempdima\left\@firstoftwo#1%
\if@borderstar\kern2pt\else\kern -\wd\@ne\fi%
\global\setbox\@ne\vbox{\box\@ne\if@borderstar\else\kern 2\p@\fi}%
\vcenter{\if@borderstar\else\kern -\ht\@ne\fi%
\unvbox\z@\kern-\if@borderstar2\fi\baselineskip}%
\if@borderstar\kern-2\@tempdima\kern2\p@\else\,\fi\right\@secondoftwo#1$%
}\null \;\vbox{\kern\ht\@ne\box\tw@}%
\endgroup
}
\begin{document}
\begin{frontmatter}

\title{On three types of $L$-fuzzy $\beta$-covering-based rough sets}
\author{Wei Li\fnref{label1}}
\ead{lxylw@nwafu.edu.cn}
\author{Bin Yang\fnref{label1}\corref{cor1}}
\ead{binyang0906@nwsuaf.edu.cn}
\author{Junsheng Qiao\fnref{label2}\corref{cor1}}
\ead{jsqiao@nwnu.edu.cn}
 \address[label1]{College of Science, Northwest A \& F University, Yangling, Shaanxi 712100, PR China}
 \address[label2]{College of Mathematics and Statistics, Northwest Normal University, Lanzhou 730070, PR China}
\cortext[cor1]{Corresponding author.}
\begin{abstract}
    In this paper, we mainly construct three types of $L$-fuzzy $\beta$-covering-based rough set models and study the axiom sets,  matrix representations and interdependency of these three pairs of $L$-fuzzy $\beta$-covering-based rough approximation operators.
    Firstly, we propose three pairs of $L$-fuzzy $\beta$-covering-based rough approximation operators by introducing the concepts such as $\beta$-degree of intersection and $\beta$-subsethood degree, which are generalizations of degree of intersection and subsethood degree, respectively. And then, the axiom set for each of these $L$-fuzzy $\beta$-covering-based rough approximation operator is investigated. Thirdly, we give the matrix representations of three types of $L$-fuzzy $\beta$-covering-based rough approximation operators, which make it valid to calculate the $L$-fuzzy $\beta$-covering-based lower and upper rough approximation operators through operations on matrices.
    Finally, the interdependency of the three pairs
    of rough approximation operators based on $L$-fuzzy $\beta$-covering is studied by using the notion of reducible elements and independent elements. In other words, we present the necessary and sufficient conditions under which two $L$-fuzzy $\beta$-coverings can generate the same lower and upper rough approximation operations.
\end{abstract}

\begin{keyword}
     $L$-fuzzy covering; Covering-based rough set; Residuated lattice; Axiomatic characterization; Matrix representation; Interdependency
\end{keyword}

\end{frontmatter}

\section{Introduction}\label{section1}
    Rough set theory was initiated by Pawlak \cite{Pawlak1982Rough} as a fruitful mathematical approach to tackle the vagueness, incompleteness and granularity of knowledge, which seems to be of fundamental importance to artificial intelligence and cognitive analysis, such as machine learning, decision analysis and pattern recognition. Basic notions in rough set theory are the lower and upper rough approximation operators, which can approximate an arbitrary subset. In addition, Pawlak's rough set model was explored based on equivalence relation (a binary relation with reflexivity, symmetry and transitivity). However, the equivalence relation appears to be a stringent condition which may impose restrictions on the applicability of Pawlak's rough set model\cite{wei2012comparative,liu2010closures,dai2012approximations}.
    Therefore, these restrictions had been relaxed in recent years by replacing equivalence relations or notions of partition such as binary relations \cite{greco2002rough, slowinski2000generalized, yao1998constructive, skowron1996tolerance}, neighborhood systems and Boolean algebras \cite{boixader2000upper, wu2002neighborhood}, and covering of the universe of discourse \cite{pomykala1987approximation, bonikowski1998extensions, degang2007new, pomykala1988definability}. Further, Pomykala \cite{pomykala1987approximation, pomykala1988definability} proposed two pairs of dual rough approximation operators over the concept of covering and Yao \cite{yao1998relational} examined the connections between neighborhood and approximation operators.
    In the past 40 years, rough set theory has received wide attention in both theoretical research and practical applications, mainly due to the following advantages:
     \begin{enumerate}[(1)]
       \item
       No preliminary or additional information about the data is required.
       \item
       Valid algorithms for discovering hidden patterns in data are provided.
       \item
       Minimal sets of data can be obtained.
     \end{enumerate}

     However, the problem is that rough sets mainly handle qualitative (discrete) data, and it faces restrictions with
     real-valued data sets due to the values of the attributes in the databases could be both symbolic and real-valued \cite{jensen2004fuzzy}. Fuzzy set theory \cite{zadeh1996fuzzy} can effectively solve this problem, by setting a membership function between $0$ and $1$ for each object in the set. So, it is more natural to try to combine rough set and fuzzy set, rather than to prove that one is more general, or, more useful than the other. Considering the lower and upper rough approximation in fuzzy status, Dubois and Prade \cite{dubois1990rough}, and Chakrabarty et al. \cite{chakrabarty2000fuzziness} proposed rough fuzzy sets and fuzzy rough sets. Many scholars try to use different methods to generalize fuzzy rough sets \cite{hu2014generalized, hu2015generalized,mi2004axiomatic, radzikowska2004fuzzy, wang2015granular}. The most common fuzzy rough sets can be constructed by replacing the crisp relations and crisp sets with fuzzy relations and fuzzy sets.
     In addition, the concept of fuzzy covering \cite{deng2007novel, li2007fuzzy} is also an important way for the construction of fuzzy rough sets.
     De Cock et.al \cite{de2004fuzzy} further explored fuzzy rough sets with the help of the true essence of fuzzy set theory that an element can belong (to some degree) to multiple sets at the same time, which was an important step in the fuzzification process. Similarly, Deng et al. \cite{deng2007novel} given the new model of fuzzy rough
     sets based on the concepts of both fuzzy covering and binary fuzzy logical operators. In 2007, Li and Ma \cite{li2007fuzzy} constructed two types of approximation operators, where the fuzzy covering-based fuzzy rough approximation operators employed two special logical operators i.e., the standard min operator and Kleene-Dienes implicator. Thus, it is necessary to construct more general fuzzy rough set models based on fuzzy covering. Such deficiency has stimulated more research in this area \cite{d2017fuzzy, ma2016two, yang2016fuzzy, yang2017some}. In addition, Ma in \cite{ma2016two} generalized the notion of fuzzy covering and proposed fuzzy $\beta$-covering, which can build a bridge between covering-based rough set theory and fuzzy set theory.

      In the meantime, fuzzy rough sets can be obtained by extending the basic structure $[0, 1]$ to the abstract algebraic structure. Thus, some lattice structure were introduced  to replace the interval $[0, 1]$ as the truth table for membership degrees, among which residuated lattices play a significant role. For instance, Radzikowska and Kerre \cite{radzikowska2004fuzzy} constructed the $L$-fuzzy rough sets by residuated lattice, and Deng et al. in 2007 \cite{deng2007novel} presented a fuzzy rough set by the use of complete lattice (a complete lattice means a partial ordering set with every subset possessing infimum and supremum).
      In general, we discussed residuated lattice-valued fuzzy approximation operators from two directions, called, $L$-fuzzy relation-based rough approximation operators and $L$-fuzzy $\beta$-covering-based approximation operators. In \cite{deng2007novel}, Deng et al. explored a pair of $L$-fuzzy $\beta$-covering-based rough approximation operators. In particular, Li et al. \cite{li2008generalized} introduced another two pairs of $L$-fuzzy $\beta$-covering-based approximation operators when $L$ = [0, 1]. Furthermore, Jin and Li \cite{jin2013second} given two $L$-fuzzy $\beta$-covering-based rough approximation operators with the condition of completely distributive complete lattice.

      There are at least two approaches to characterize the lower and upper rough approximation operators, called, the constructive and axiomatic approaches. The axiomatic approaches consider the abstract lower and upper rough approximation operators, by a set of axioms to depict rough approximation operators which produced using the constructive approach, see \cite{liu2013relationship, zhang2010axiomatic, zhang2011minimization, li2017theaxiomatic}.
      In 2017, Li et al. \cite{li2017theaxiomatic} studied three pairs of $L$-fuzzy $\beta$-covering-based rough approximation operators with axiomatic approach, which are presented in \cite{deng2007novel} and \cite{li2008generalized}, and analysed the differences between the axiom sets of $L$-fuzzy $\beta$-covering-based approximation operators and their crisp counterparts. Furthermore, they proved that any $L$-fuzzy $\beta$-covering-based rough approximation operators can be represented by a special $L$-fuzzy relation. On the other hand, Yang and Hu \cite{yang2018matrix} studied the Matrix representations and interdependency of three pairs of  $L$-fuzzy $\beta$-covering-based rough approximation operators. The calculation of lower and upper rough approximations can be converted into operations on matrices which greatly facilitate the computation. In addition, the research of interdependency is necessary, which show a necessary and sufficient condition under which two $L$-fuzzy coverings can generate the same $L$-fuzzy $\beta$-covering-based rough approximation operators by introducing the notion of reduct and core for a fuzzy covering.

     In this paper, we further study $L$-fuzzy $\beta$-covering-based lower and upper rough
     approximation operators.
     Following the idea of \cite{deng2007novel} and \cite{li2008generalized}, we give the definition of $L$-fuzzy $\beta$-covering-based lower and upper rough
     approximation operators by introducing the concepts such as $\beta$-degree of intersection and $\beta$-subsethood degree, which are generalizations of  degree of intersection and subsethood degree, respectively. After that, starting from three aspects of axiomatic characterizations, matrix representations and interdependency, explore the properties of lower and upper rough approximation operators based on $L$-fuzzy $\beta$-covering.

     To be more specific, the main general objectives of this paper are as follows.
      \begin{itemize}
        \item To discuss the axiomatic characterization of the defined $L$-fuzzy $\beta$-covering-based lower and upper rough approximation operators, which can give us an insight into the structure of them.
        \item To investigate the matrix representations of the defined $L$-fuzzy $\beta$-covering-based lower and upper rough approximation operators, which make it valid to calculate the $L$-fuzzy $\beta$-covering-based lower and upper rough approximation operators though operations on matrices.
        \item To focus the interdependency of the defined $L$-fuzzy $\beta$-covering-based lower and upper rough approximations, which allows us to have a clear resolution of the relationships among them.
      \end{itemize}

 In addition, it should be pointed out that the obtained results in this work not only can be a supplement of the research topic of rough set theory from the view point of mathematic, also, they provide theoretical
basis and more possibilities for the applications of rough set theory in real application problems, such as in expert systems, machine learning, decision analysis, pattern recognition and so on.

      The reminder of this paper is organized as follows. In Section~\ref{section2}, we review some preliminary definitions and results about residuated lattice, $L$-fuzzy set and so on.
      In Section~\ref{section3}, we propose the concept about $\beta$-degree of intersection and $\beta$-subsethood degree, and then the expression of three pairs of $L$-fuzzy $\beta$-covering-based rough approximation operators are given. In Section~\ref{section4}, the axiomatic characterizations on three pairs of $L$-fuzzy $\beta$-covering-based rough approximation operators are established. In Section~\ref{section5}, the matrix representations on three pairs of $L$-fuzzy $\beta$-covering-based rough approximation operators are proposed. In Section~\ref{section6}, we propose the necessary and sufficient conditions under which two $L$-fuzzy coverings can generate the same $L$-fuzzy $\beta$-covering-based rough approximation operations. In Section~\ref{section7}, we conclude this paper.

\section{Preliminaries}\label{section2}
    Throughout this paper, let the universe of discourse $U$ to be an arbitrary non-empty set.
    The class of all subsets of $U$ will be denoted by $\mathscr{P}(U)$.

    A \emph{complete residuated lattice} \cite{Ward39Residuatedlattices} is a pair $L=(L, \otimes)$ subject to the following conditions:
    \begin{enumerate}[(1)]
      \item $L$ is a complete lattice with a top element $1$ and a bottom element $0$;
      \item $(L, \otimes, 1)$ is a commutative monoid;
      \item $a\otimes \underset{j\in J}{\bigvee}b_{j}=\underset{j\in J}{\bigvee}(a\otimes b_{j})$
      for all $a\in L$ and $\{b_{j}: j\in J\}\subseteq L$.
    \end{enumerate}
    The binary operation $\otimes$ induces another binary operation $\rightarrow$ on $L$ via the adjoint property:
    \begin{align*}
    &a\otimes b\leq c\Longleftrightarrow b\leq a\rightarrow c.
    \end{align*}

    In this paper, if not otherwise specified, $L=(L, \bigwedge, \bigvee, \otimes, \rightarrow, 0, 1)$ is always a complete residuated lattice.
    In addition, a function $A: U\longrightarrow L$ is an $L$-fuzzy set in $U$. We use $L^{U}$ to denote the set of all $L$-fuzzy sets in $U$ and call it the $L$-fuzzy power set on $U$. The operators $\bigvee, \bigwedge, \otimes, \rightarrow$ on $L$ can
    be translated onto $L^{U}$ in a pointed wise. That is, for any $A, B, A_{t}~(t\in T)\in L^{U}$,
    \begin{align*}
    A\leq B&\Longleftrightarrow A(x)\leq B(x)~for~any~x\in U,\\
    (\underset{t\in T}{\bigwedge}A_{t})(x)&=\underset{t\in T}{\bigwedge}A_{t}(x),\ (\underset{t\in T}{\bigvee}A_{t})(x)=\underset{t\in T}{\bigvee}A_{t}(x),\\
    (A\otimes B)(x)&=A(x)\otimes B(x),\ (A\rightarrow B)(x)=A(x)\rightarrow B(x).
    \end{align*}
    For the convenience of readers, we provide some basic properties of the operations on complete residuated lattices in the following theorem.

    \begin{theorem}\label{t2-1}(\cite{Blount03thestructure,Hajek98Matamathematics,hohle2012mathematics,Ward39Residuatedlattices})
    Suppose that $(L, \otimes, \rightarrow, 0, 1)$  is a complete
    residuated lattice. Then, for any $\alpha, \beta\in L$ and $\{\alpha_{t}: t\in T\}, \{\beta_{t}: t\in T\}\subseteq L$, the following statements hold.
    \begin{enumerate}
      \item[(I1)] $\alpha\otimes\beta=\beta\otimes\alpha$, $\alpha\rightarrow\beta=1\Longleftrightarrow\alpha\leq\beta$, $1\rightarrow\alpha=\alpha$;
      \item[(I2)] $\alpha\otimes(\alpha\rightarrow\beta)\leq\beta$;
      \item[(I3)] $\alpha\rightarrow(\beta\rightarrow\gamma)
          =(\alpha\otimes\beta)\rightarrow\gamma=\beta\rightarrow(\alpha\rightarrow\gamma)$;
      \item[(I4)] $\alpha\otimes(\underset{t\in T}{\bigvee}\beta_{t})=\underset{t\in T}{\bigvee}(\alpha\otimes\beta_{t})$,\\
          (I4) implies that (I$4'$)~$\beta\leq\gamma\Longrightarrow\alpha\otimes\beta\leq\alpha\otimes\gamma$;
      \item[(I5)] $(\underset{t\in T}{\bigvee}\alpha_{t})\rightarrow\beta=\underset{t\in T}{\bigwedge}(\alpha_{t}\rightarrow\beta)$,\\
          (I5) implies that (I$5'$)~$\alpha\leq\beta \Longrightarrow\alpha\rightarrow\gamma\geq\beta\rightarrow\gamma$;
      \item[(I6)] $\alpha\rightarrow(\underset{t\in T}{\bigwedge}\beta_{t})=\underset{t\in T}{\bigwedge}(\alpha\rightarrow\beta_{t})$,\\
          (I6) implies that (I$6'$)~$\beta\leq\gamma\Longrightarrow\alpha\rightarrow\beta\leq\alpha\rightarrow\gamma$;
      \item[(I7)] $\alpha\otimes\beta\leq\gamma\Longleftrightarrow\alpha\leq\beta\rightarrow\gamma$;
      \item[(I8)] $\beta\leq\alpha\rightarrow(\alpha\otimes\beta)$.
    \end{enumerate}
   \end{theorem}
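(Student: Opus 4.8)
The plan is to derive the whole list from a single mechanism: the adjunction $a\otimes b\le c\Longleftrightarrow b\le a\rightarrow c$, together with commutativity and associativity of $\otimes$, the unit law, and the infinite distributivity axiom~(3) of a complete residuated lattice. The recurring device is a ``Galois'' uniqueness observation: if $x\le u\Longleftrightarrow x\le v$ holds for every $x\in L$, then $u=v$ (instantiate $x=u$ and $x=v$); equivalently, $a\rightarrow c=\max\{b\in L: a\otimes b\le c\}$, so any two expressions shown to describe that maximum must coincide. I would fix once and for all the convention $a\otimes b\le c\Longleftrightarrow b\le a\rightarrow c$ and invoke commutativity of $\otimes$ explicitly whenever the symmetric form $a\le b\rightarrow c$ is needed.

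First I would dispose of the easy items. Commutativity of $\otimes$ in (I1) is a monoid axiom; $\alpha\rightarrow\beta=1\Longleftrightarrow 1\le\alpha\rightarrow\beta\Longleftrightarrow\alpha\otimes1\le\beta\Longleftrightarrow\alpha\le\beta$ by the adjunction and the unit law; and $1\rightarrow\alpha=\alpha$ follows from $1\otimes(1\rightarrow\alpha)\le\alpha$ in one direction and from $1\otimes\alpha\le\alpha$, hence $\alpha\le1\rightarrow\alpha$, in the other. Then (I2) is the adjunction applied to $\alpha\rightarrow\beta\le\alpha\rightarrow\beta$; (I7) is the adjunction read through commutativity, $\alpha\otimes\beta\le\gamma\Longleftrightarrow\beta\otimes\alpha\le\gamma\Longleftrightarrow\alpha\le\beta\rightarrow\gamma$; and (I8) is (I7) applied to $\alpha\otimes\beta\le\alpha\otimes\beta$. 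Finally (I4) is literally axiom~(3).

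Next I would handle the ``exchange'' and ``continuity'' identities by uniform equivalence chains. For (I3): $x\le\alpha\rightarrow(\beta\rightarrow\gamma)\Longleftrightarrow\alpha\otimes x\le\beta\rightarrow\gamma\Longleftrightarrow\beta\otimes(\alpha\otimes x)\le\gamma\Longleftrightarrow(\alpha\otimes\beta)\otimes x\le\gamma\Longleftrightarrow x\le(\alpha\otimes\beta)\rightarrow\gamma$, using associativity, and then $\beta\rightarrow(\alpha\rightarrow\gamma)=(\beta\otimes\alpha)\rightarrow\gamma=(\alpha\otimes\beta)\rightarrow\gamma$ by commutativity. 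For (I5): $x\le(\bigvee_{t}\alpha_{t})\rightarrow\beta\Longleftrightarrow(\bigvee_{t}\alpha_{t})\otimes x\le\beta\Longleftrightarrow\bigvee_{t}(\alpha_{t}\otimes x)\le\beta\Longleftrightarrow(\forall t)\,\alpha_{t}\otimes x\le\beta\Longleftrightarrow(\forall t)\,x\le\alpha_{t}\rightarrow\beta\Longleftrightarrow x\le\bigwedge_{t}(\alpha_{t}\rightarrow\beta)$, where the third step is axiom~(3). For (I6): $x\le\alpha\rightarrow\bigwedge_{t}\beta_{t}\Longleftrightarrow\alpha\otimes x\le\bigwedge_{t}\beta_{t}\Longleftrightarrow(\forall t)\,\alpha\otimes x\le\beta_{t}\Longleftrightarrow(\forall t)\,x\le\alpha\rightarrow\beta_{t}\Longleftrightarrow x\le\bigwedge_{t}(\alpha\rightarrow\beta_{t})$. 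The monotonicity corollaries then fall out by the ``absorb the smaller element'' trick: $\beta\le\gamma$ gives $\gamma=\beta\vee\gamma$, so $\alpha\otimes\gamma=(\alpha\otimes\beta)\vee(\alpha\otimes\gamma)\ge\alpha\otimes\beta$ by (I4), yielding (I4$'$); $\beta\le\gamma$ gives $\beta=\beta\wedge\gamma$, so $\alpha\rightarrow\beta=(\alpha\rightarrow\beta)\wedge(\alpha\rightarrow\gamma)\le\alpha\rightarrow\gamma$ by (I6), yielding (I6$'$); and $\alpha\le\beta$ gives $\beta=\alpha\vee\beta$, so $\beta\rightarrow\gamma=(\alpha\rightarrow\gamma)\wedge(\beta\rightarrow\gamma)\le\alpha\rightarrow\gamma$ by (I5), yielding (I5$'$).

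There is no genuine obstacle here: the theorem is essentially ``general nonsense'' about the adjunction in a quantale-like structure. The only point demanding care is bookkeeping the order of the arguments of $\otimes$ and $\rightarrow$ — several identities hold only modulo the commutativity of $\otimes$ — which is precisely why I would pin down the adjunction convention at the start and cite commutativity each time the other orientation is used.
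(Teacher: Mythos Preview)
Your proposal is correct and complete: each item is derived cleanly from the adjunction, commutativity and associativity of $\otimes$, the unit law, and the join-distributivity axiom, with the ``same upper bounds implies equal'' device doing the work for (I3), (I5), (I6). The paper, however, does not prove this theorem at all --- it is stated as a collection of standard facts with citations to \cite{Blount03thestructure,Hajek98Matamathematics,hohle2012mathematics,Ward39Residuatedlattices}, so there is no in-paper argument to compare against; your write-up is exactly the kind of routine verification those references contain.
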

    A complete residuated lattice is regular if it satisfies the double negation law: $$(\alpha\rightarrow 0)\rightarrow 0=\alpha,$$
  for all $\alpha \in L$. In the following, we use $\neg\alpha$ to denote $\alpha\rightarrow 0$. A complete regular residuated satisfies the following conditions.
   \begin{enumerate}
     \item[(I9)] $\alpha\rightarrow\beta=\neg\beta\rightarrow \neg\alpha$;
     \item[(I10)] $\alpha\rightarrow\beta=\neg(\alpha\otimes \neg\beta)$;
     \item[(I11)] $\neg(\bigwedge_{t\in T}\alpha_{t})=\bigvee_{t\in T}(\neg\alpha_{t})$.
   \end{enumerate}

    Further, a complete residuated lattice is called a complete Heyting algebra or a frame if the
    binary operator $\otimes$ on $L$ with $\otimes=\bigwedge$.

    Also,we make no difference between a constant function and its value while no confusion
    will arise. Then, $(L^{U}, \otimes, \rightarrow, \bigwedge, \bigvee, 0, 1)$
    forms a complete residuated lattice.

    For a crisp subset $A\subseteq U$, let $1_{A}$ be the characteristic function, i.e., $1_{A}(x)=1$ if $x\in A$
    and $1_{A}(x)=0$ if $x\notin A$.
    Clearly, the characteristic function $1_{A}$ of a subset $A\subseteq U$ can be
    regarded as an $L$-fuzzy set in $U$. Thus, when $L=\{0, 1\}$, the set $L^{U}$
    degenerates into $\mathscr{P}(U)$
    if we make no difference between a subset of $U$ and its characteristic function.
\begin{definition}\label{d2-1}
    Let $A, B$ be $L$-fuzzy sets in $U$.
    \begin{enumerate}[(1)]
      \item In~\cite{GeorgescuPopescu2004Non-Dual}, the degree of intersection of $A, B$, denoted by
      $N(A, B)$, is defined by
      \begin{align*}
      N(A, B)&=\underset{x\in U}{\bigvee}[A(x)\otimes B(x)].
      \end{align*}
      \item In~\cite{bvelohlavek2002fuzzy}, the subsethood degree of $A, B$, denoted by $S(A, B)$, is defined by
          \begin{align*}
      S(A, B)&=\underset{x\in U}{\bigwedge}[A(x)\rightarrow B(x)].
      \end{align*}
    \end{enumerate}
\end{definition}

    The following lemma collects some properties of the above two notions. These properties appear in the literature such as~\cite{bvelohlavek2002fuzzy,chen2007construction,jinming2010stratified,
    GeorgescuPopescu2004Non-Dual,hohle2012mathematics,li2012stratified,zhang2007enriched}.

\begin{lemma}\label{l2-1}
    Let $A, B, A_{t}~(t\in T)\in L^{U}, \alpha\in L$. Then,
    \begin{enumerate}
      \item [(S1)] $S(A, B)=1\Longleftrightarrow A\leq B$;
      \item [(S2)] $S(A, \underset{t\in T}{\bigwedge}A_{t})=\underset{t\in T}{\bigwedge} S(A, A_{t})$;
      \item [(S3)] $S(A, \alpha\rightarrow B)=\alpha\rightarrow S(A, B)$;
      \item [(S4)] $S(A, \alpha\otimes B)\geq\alpha\otimes S(A, B)$;
      \item [(N1)] $N(A, B)=N(B, A)$;
      \item [(N2)] $N(A, \underset{t\in T}{\bigvee}A_{t})=\underset{t\in T}{\bigvee}N(A, A_{t})$;
      \item [(N3)] $N(A, \alpha\otimes B)=\alpha\otimes N(A, B)$;
      \item [(N4)] $N(A, \alpha\rightarrow B)\leq\alpha\rightarrow N(A, B)$;
      \item [(NS)] $N(A, B)\rightarrow\alpha=S(A, B\rightarrow\alpha)$.
    \end{enumerate}
\end{lemma}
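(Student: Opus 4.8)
The plan is to verify each of the nine items directly from the definitions of $N$ and $S$ in Definition~\ref{d2-1}, using only the arithmetic of complete residuated lattices recorded in Theorem~\ref{t2-1}. Since $N(A,B)$ and $S(A,B)$ are assembled pointwise from $\otimes$, $\rightarrow$ and the lattice operations, in each case I would first reduce the claim to an identity or inequality among elements of $L$ indexed by $x\in U$, and then pull the outer supremum or infimum over $x$ through the relevant operation by the appropriate distributivity law.

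I would start with the ``distributive'' group. For (N1) commutativity of $\otimes$ from (I1) gives the equality termwise. For (N2) and (N3) I would expand the definition of $N$ and move the outer $\bigvee_x$ past $\otimes$ using (I4) (together with monoid associativity and commutativity in (N3), so as to factor $\alpha$ out of $A(x)\otimes\alpha\otimes B(x)$ before applying (I4)). Dually, for (S2) I would expand $S$ and move $\bigwedge_x$ past $a\rightarrow(-)$ using (I6); for (S3) I would additionally invoke (I3) to rewrite $A(x)\rightarrow(\alpha\rightarrow B(x))$ as $\alpha\rightarrow(A(x)\rightarrow B(x))$ before extracting $\alpha$ via (I6). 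For (S1): if $S(A,B)=1$ then each $A(x)\rightarrow B(x)=1$, because a meet of elements $\le 1$ equals $1$ only when every term does, whence $A(x)\le B(x)$ by (I1); reading the chain backwards gives the converse. For (NS) I would write $N(A,B)\rightarrow\alpha=\big(\bigvee_{x}[A(x)\otimes B(x)]\big)\rightarrow\alpha$, apply (I5) to turn the join-indexed antecedent into a meet of implications, and then (I3) to rewrite $(A(x)\otimes B(x))\rightarrow\alpha$ as $A(x)\rightarrow(B(x)\rightarrow\alpha)$, which is exactly the meet defining $S(A,B\rightarrow\alpha)$.

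The only items that are not pure distributivity are (S4) and (N4), and these are where the care is needed; both come out via the adjunction (I7). For (N4), the inequality $N(A,\alpha\rightarrow B)\le\alpha\rightarrow N(A,B)$ is, by (I7), equivalent to $\alpha\otimes N(A,\alpha\rightarrow B)\le N(A,B)$; now $\alpha\otimes N(A,\alpha\rightarrow B)=N(A,\alpha\otimes(\alpha\rightarrow B))$ by the already-established (N3), and $\alpha\otimes(\alpha\rightarrow B)\le B$ pointwise by (I2), so monotonicity of $N$ in its second argument (a consequence of (N2), since $B\le B'$ forces $B\vee B'=B'$) finishes it. For (S4) I would argue symmetrically: fix $x$ and show $\alpha\otimes S(A,B)\le A(x)\rightarrow(\alpha\otimes B(x))$, which by (I7) reduces to $A(x)\otimes\alpha\otimes S(A,B)\le\alpha\otimes B(x)$; since $S(A,B)\le A(x)\rightarrow B(x)$ we get $A(x)\otimes S(A,B)\le B(x)$ from (I2), and then (I4$'$) yields the claim, after which taking $\bigwedge_x$ gives (S4).

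I do not expect a genuine obstacle, but the point worth flagging is that (N4) and (S4) are inequalities rather than equalities precisely because $N$ does not commute with $\rightarrow$ nor $S$ with $\otimes$; so instead of hoping for a direct distributive cancellation one must route through the adjunction (I7) and the monotonicity/identity facts proved earlier in the lemma. Organizing the proof so that (N2), (N3) are in hand before (N4), and (S2), (S3) before (S4), is therefore the one structural choice that matters.
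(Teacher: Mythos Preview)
Your proposal is correct: every one of the nine items follows from the residuated-lattice identities in Theorem~\ref{t2-1} exactly along the lines you sketch, including the adjunction-based arguments for (S4) and (N4). Note, however, that the paper does not give its own proof of Lemma~\ref{l2-1} at all; it simply records the properties and points to the literature (B{\v{e}}lohl{\'a}vek, Georgescu--Popescu, H{\"o}hle--Rodabaugh, etc.) for justification. So there is nothing to compare against on the paper's side---your write-up would in fact supply what the paper omits.
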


\section{Three types of $L$-fuzzy $\beta$-covering-based rough sets}\label{section3}
    In this section, we propose three types of $L$-fuzzy $\beta$-covering-based rough sets by introducing some new concepts such as $\beta$-degree of intersection and $\beta$-subsethood degree. Furthermore, some basic properties of these $L$-fuzzy $\beta$-covering-based rough approximation operators are studied.

\subsection{$\beta$-degree of intersection and $\beta$-subsethood degree}\label{section3-1}

    In this subsection, we generalize the concepts of degree of intersection
    and subsethood degree by using a parameter $\beta$, where $\beta\in L$ and $\beta>0$. Some properties of $\beta$-degree of intersection and $\beta$-subsethood degree are also studied.

\begin{definition}\label{d3-1}
    Let $A, B$ be $L$-fuzzy sets in $U$, $\beta\in L$ and $\beta>0$.
    \begin{enumerate}[(1)]
      \item The $\beta$-degree of intersection of $A, B$, denoted by
      $N^{\beta}(A, B)$, is defined by
      \begin{align*}
      N^{\beta}(A, B)&=N(A, B)\otimes \beta=\left(\underset{x\in U}{\bigvee}\left[A(x)\otimes B(x)\right]\right)\otimes \beta.
      \end{align*}
      \item The $\beta$-subsethood degree of $A, B$, denoted by $S^{\beta}(A, B)$, is defined by
          \begin{align*}
      S^{\beta}(A, B)&=\beta\rightarrow S(A, B)\\&=\beta\rightarrow\left(\underset{x\in U}{\bigwedge}[A(x)\rightarrow B(x)]\right)
      \\&=\underset{x\in U}{\bigwedge}\left(\beta\rightarrow[A(x)\rightarrow B(x)]\right)
      \\&=\underset{x\in U}{\bigwedge}\left(\left(\beta\otimes A(x)\right)\rightarrow B(x)\right).
      \end{align*}
    \end{enumerate}
\end{definition}

\begin{remark}\label{r3-1}
    Let $A, B$ be $L$-fuzzy sets in $U$, $\beta\in L$ and $\beta>0$.
    If $\beta=1$, then $N^{1}(A, B)=N(A, B)\otimes 1=N(A, B)$
    and $S^{1}(A, B)=1\rightarrow S(A, B)=S(A, B)$.
\end{remark}

 \begin{definition}(\cite{d2017})\label{d3-2}
    A fuzzy subsethood measure is a mapping $\sigma: \mathscr{P}(U)\times\mathscr{P}(U)\longrightarrow L$ satisfying the following properties, for every $A, B, C \in \mathscr{P}(U)$:
    \begin{enumerate}[(SM1)]
      \item If $\sigma(A, B)=1$, then $A\leq B$;
      \item $\sigma(A, B)=0 \Longleftrightarrow A=1_U$ and $B=0_U$;
      \item (i) If $A\leq B\leq C$, then $\sigma(C, A)\leq \sigma(B, A)$, and (ii) if $A\leq B$, then $\sigma(C, A)\leq \sigma(C, B)$.
    \end{enumerate}
\end{definition}

If we remove item (SM2) in Definition \ref{d3-2} and replace $\leq$ item (SM1) in Definition \ref{d3-2} with $\leq_{\beta}$, and call such $\sigma$ a fuzzy pseudo-subsethood measure, then $S^{\beta}$ is a fuzzy pseudo-subsethood measure.

\begin{lemma}\label{l3-1}
    Let $A, B, A_{t}~(t\in T)\in L^{U}, \alpha, \beta\in L$ and $\beta>0$. Then,
    \begin{enumerate}
      \item [(\textbf{S}1)] $S^{\beta}(A, B)=1\Longleftrightarrow A\leq_{\beta} B$, where $A\leq_{\beta} B\Longleftrightarrow S(A, B)\geq\beta$;
      \item [(\textbf{S}2)] $S^{\beta}(A, \underset{t\in T}{\bigwedge}A_{t})=\underset{t\in T}{\bigwedge} S^{\beta}(A, A_{t})$;
      \item [(\textbf{S}3)] $S^{\beta}(A, \alpha\rightarrow B)=\alpha\rightarrow S^{\beta}(A, B)$;
      \item [(\textbf{S}4)] $S^{\beta}(A, \alpha\otimes B)\geq\alpha\otimes S^{\beta}(A, B)$;
      \item [(\textbf{N}1)] $N^{\beta}(A, B)=N^{\beta}(B, A)$;
      \item [(\textbf{N}2)] $N^{\beta}(A, \underset{t\in T}{\bigvee}A_{t})=\underset{t\in T}{\bigvee}N^{\beta}(A, A_{t})$;
      \item [(\textbf{N}3)] $N^{\beta}(A, \alpha\otimes B)=\alpha\otimes N^{\beta}(A, B)$;
      \item [(\textbf{N}4)] $N^{\beta}(A, \alpha\rightarrow B)\leq\alpha\rightarrow N^{\beta}(A, B)$;
      \item [(\textbf{NS})] $N^{\beta}(A, B)\rightarrow\alpha=S^{\beta}(A, B\rightarrow\alpha)$.
    \end{enumerate}
\end{lemma}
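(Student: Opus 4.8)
The plan is to derive each of the nine identities for $N^{\beta}$ and $S^{\beta}$ directly from their definitions together with the corresponding properties of $N$ and $S$ collected in Lemma~\ref{l2-1}, using only the arithmetic of a complete residuated lattice from Theorem~\ref{t2-1}. Since $N^{\beta}(A,B)=N(A,B)\otimes\beta$ and $S^{\beta}(A,B)=\beta\rightarrow S(A,B)$, most items reduce to a one-line manipulation: the $\beta$ on the outside either commutes past a supremum (using (I4)) or slides through an implication (using (I3), (I5), (I6)). So the work is genuinely routine, and the proposal is to organize it by which lemma feeds which item.

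\textbf{The $S^{\beta}$ items.} For (\textbf{S}1), observe $S^{\beta}(A,B)=1$ iff $\beta\rightarrow S(A,B)=1$, which by (I1) holds iff $\beta\leq S(A,B)$, i.e. exactly the definition of $A\leq_{\beta}B$. For (\textbf{S}2), apply (S2) inside and then pull $\beta\rightarrow(-)$ through the meet using (I6): $S^{\beta}(A,\bigwedge_t A_t)=\beta\rightarrow\bigwedge_t S(A,A_t)=\bigwedge_t(\beta\rightarrow S(A,A_t))=\bigwedge_t S^{\beta}(A,A_t)$. For (\textbf{S}3), use (S3) to get $S^{\beta}(A,\alpha\rightarrow B)=\beta\rightarrow(\alpha\rightarrow S(A,B))$ and then swap the two antecedents by (I3), which equals $\alpha\rightarrow(\beta\rightarrow S(A,B))=\alpha\rightarrow S^{\beta}(A,B)$. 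For (\textbf{S}4), start from (S4): $S^{\beta}(A,\alpha\otimes B)=\beta\rightarrow S(A,\alpha\otimes B)\geq\beta\rightarrow(\alpha\otimes S(A,B))$, then apply (I$6'$) is not enough on its own; instead note $\alpha\otimes(\beta\rightarrow S(A,B))\leq\beta\rightarrow(\alpha\otimes S(A,B))$ by the adjunction (I7), since $\beta\otimes\alpha\otimes(\beta\rightarrow S(A,B))\leq\alpha\otimes S(A,B)$ follows from (I2) and (I$4'$). Chaining gives $\alpha\otimes S^{\beta}(A,B)\leq S^{\beta}(A,\alpha\otimes B)$.

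\textbf{The $N^{\beta}$ items.} (\textbf{N}1) is immediate from (N1). (\textbf{N}2) uses (N2) and then distributes $(-)\otimes\beta$ over the join via (I4): $N(A,\bigvee_t A_t)\otimes\beta=(\bigvee_t N(A,A_t))\otimes\beta=\bigvee_t(N(A,A_t)\otimes\beta)$. (\textbf{N}3) uses (N3) and commutativity/associativity of $\otimes$: $N(A,\alpha\otimes B)\otimes\beta=(\alpha\otimes N(A,B))\otimes\beta=\alpha\otimes(N(A,B)\otimes\beta)$. (\textbf{N}4) uses (N4) plus the same adjunction estimate as in (\textbf{S}4): from $N(A,\alpha\rightarrow B)\leq\alpha\rightarrow N(A,B)$ we get $N(A,\alpha\rightarrow B)\otimes\beta\leq(\alpha\rightarrow N(A,B))\otimes\beta\leq\alpha\rightarrow(N(A,B)\otimes\beta)$, the last step because $\alpha\otimes(\alpha\rightarrow N(A,B))\otimes\beta\leq N(A,B)\otimes\beta$ by (I2). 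Finally (\textbf{NS}): $N^{\beta}(A,B)\rightarrow\alpha=(N(A,B)\otimes\beta)\rightarrow\alpha=N(A,B)\rightarrow(\beta\rightarrow\alpha)$ by (I3); on the other side $S^{\beta}(A,B\rightarrow\alpha)=\beta\rightarrow S(A,B\rightarrow\alpha)=\beta\rightarrow(N(A,B)\rightarrow\alpha)$ by (NS); and these two agree again by (I3), noting $\beta\rightarrow(N(A,B)\rightarrow\alpha)=N(A,B)\rightarrow(\beta\rightarrow\alpha)$.

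The only mild obstacle is (\textbf{S}4) and (\textbf{N}4): these inequalities are not formal consequences of the $\beta=1$ versions, and one must insert the correct direction of the adjunction — specifically the estimate $\alpha\otimes(\beta\rightarrow c)\leq\beta\rightarrow(\alpha\otimes c)$, which follows from (I2) and (I$4'$) via (I7). Everything else is a mechanical substitution of the $L$-arithmetic identities (I3), (I4), (I6) into the already-known properties of $N$ and $S$, so I would present (\textbf{S}1)--(\textbf{S}3), (\textbf{N}1)--(\textbf{N}3) and (\textbf{NS}) tersely and spend a couple of extra lines on (\textbf{S}4) and (\textbf{N}4).
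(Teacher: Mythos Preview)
Your proposal is correct and follows essentially the same route as the paper's proof: each item is reduced to the corresponding $\beta=1$ property from Lemma~\ref{l2-1}, together with the residuated-lattice identities (I1), (I3), (I4), (I6) from Theorem~\ref{t2-1}. Your handling of the two nontrivial items (\textbf{S}4) and (\textbf{N}4) via the auxiliary inequality $\alpha\otimes(\beta\rightarrow c)\leq\beta\rightarrow(\alpha\otimes c)$ is exactly what the paper does (the paper phrases (\textbf{N}4) as the adjoint inequality $\alpha\otimes N^{\beta}(A,\alpha\rightarrow B)\leq N^{\beta}(A,B)$, but the underlying step is the same).
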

\begin{proof}
    (\textbf{S}1) By (I1) of Theorem~\ref{t2-1}, we have that $$A\leq_{\beta} B\Longleftrightarrow S(A, B)\geq\beta\Longleftrightarrow S^{\beta}(A, B)=\beta\rightarrow S(A, B)=1.$$

    (\textbf{S}2) We have that
    \begin{align*}
    S^{\beta}(A, \underset{t\in T}{\bigwedge}A_{t})&=\beta\rightarrow S(A, \underset{t\in T}{\bigwedge}A_{t})\\
    &=\beta\rightarrow\underset{t\in T}{\bigwedge}S(A, A_{t})~(by~(S2)~of ~Lemma~\ref{l2-1})\\
    &=\underset{t\in T}{\bigwedge}\left(\beta\rightarrow S(A, A_{t})\right)~(by~(I6)~of~Theorem~\ref{t2-1})\\
    &=\underset{t\in T}{\bigwedge}S^{\beta}(A, A_{t}).
    \end{align*}

    (\textbf{S}3) We have that
    \begin{align*}
    S^{\beta}(A, \alpha\rightarrow B)&=\beta\rightarrow S(A, \alpha\rightarrow B)\\
    &=\beta\rightarrow\left(\alpha\rightarrow S(A, B)\right)~(by~(S3)~of ~Lemma~\ref{l2-1})\\
    &=\alpha\rightarrow\left(\beta\rightarrow S(A, B)\right)~(by~(I3)~of~Theorem~\ref{t2-1})\\
    &=\alpha\rightarrow S^{\beta}(A, B).
    \end{align*}

    (\textbf{S}4) We have that
    \begin{align*}
    S^{\beta}(A, \alpha\otimes B)&=\beta\rightarrow S(A, \alpha\otimes B)\\
    &\geq \beta\rightarrow (\alpha\otimes S(A, B))~(by~(S4)~of ~Lemma~\ref{l2-1})\\
    &\geq \alpha\otimes(\beta\rightarrow S(A, B))\\
    &=\alpha\otimes S^{\beta}(A, B).
    \end{align*}

    (\textbf{N}1) By (N1) of Lemma~\ref{l2-1}, we have that
    $$N^{\beta}(A, B)=N(A, B)\otimes \beta=N(B, A)\otimes \beta=N^{\beta}(B, A).$$

    (\textbf{N}2) We have that
    \begin{align*}
    N^{\beta}(A, \underset{t\in T}{\bigvee}A_{t})&=N(A, \underset{t\in T}{\bigvee}A_{t})\otimes \beta\\
    &=\left(\underset{t\in T}{\bigvee}N(A, A_{t})\right)\otimes \beta~(by~(N2)~of ~Lemma~\ref{l2-1})\\
    &=\underset{t\in T}{\bigvee}\left(N(A, A_{t})\otimes\beta\right)~(by~(I4)~of~Theorem~\ref{t2-1})\\
    &=\underset{t\in T}{\bigvee}N^{\beta}(A, A_{t}).
    \end{align*}

    (\textbf{N}3) By (N3) of Lemma~\ref{l2-1}, we have that
    $$N^{\beta}(A, \alpha\otimes B)=N(A, \alpha\otimes B)\otimes\beta=(\alpha\otimes N(A, B))\otimes\beta=\alpha\otimes(N(A, B)\otimes\beta)=\alpha\otimes N^{\beta}(A, B).$$

    (\textbf{N}4) It is easy to see that
    $$N^{\beta}(A, \alpha\rightarrow B)\leq\alpha\rightarrow N^{\beta}(A, B)\Longleftrightarrow \alpha\otimes N^{\beta}(A, \alpha\rightarrow B)\leq N^{\beta}(A, B).$$
    Since
    \begin{align*}
    \alpha\otimes N^{\beta}(A, \alpha\rightarrow B)&=\alpha\otimes \left(N(A, \alpha\rightarrow B)\otimes\beta\right)\\
    &\leq\alpha\otimes \left(\alpha\rightarrow N(A, B)\right)\otimes \beta~(by~(N4)~of ~Lemma~\ref{l2-1})\\
    &\leq N(A, B)\otimes \beta~(by~(I8)~of~Theorem~\ref{t2-1})\\
    &=N^{\beta}(A, B),
    \end{align*}
    we have that $N^{\beta}(A, \alpha\rightarrow B)\leq\alpha\rightarrow N^{\beta}(A, B)$.

     (\textbf{NS}) We have that
     \begin{align*}
     N^{\beta}(A, B)\rightarrow\alpha&=(N(A, B)\otimes\beta)\rightarrow\alpha\\
     &=\beta\rightarrow(N(A, B)\rightarrow\alpha)~(by~(I3)~of~Theorem~\ref{t2-1})\\
     &=\beta\rightarrow S(A, B\rightarrow\alpha)~(by~(NS)~of ~Lemma~\ref{l2-1})\\
     &=S^{\beta}(A, B\rightarrow\alpha).
     \end{align*}
\end{proof}

\begin{lemma}\label{l3-2}
    Let $A, B, C\in L^{U}$, $\beta\in L$ and $\beta>0$. If $A\leq B$, then
    $S^{\beta}(A, C)\geq S^{\beta}(B, C)$ and $N^{\beta}(A, C)\leq N^{\beta}(B, C)$.
\end{lemma}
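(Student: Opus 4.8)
The plan is to prove the two monotonicity statements separately, each time reducing to the corresponding monotonicity property of the underlying operators $S$ and $N$ from Lemma~\ref{l2-1}, together with the fact that $\beta\rightarrow(-)$ is isotone (I$6'$) and $(-)\otimes\beta$ is isotone (I$4'$).

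First I would handle $S^{\beta}$. Assume $A\leq B$. The known building block is that $S(-,C)$ is antitone in its first argument, i.e. $A\leq B$ implies $S(A,C)\geq S(B,C)$; this is immediate from the definition $S(A,C)=\bigwedge_{x\in U}(A(x)\rightarrow C(x))$ and (I$5'$) of Theorem~\ref{t2-1}, since $A(x)\leq B(x)$ gives $A(x)\rightarrow C(x)\geq B(x)\rightarrow C(x)$ pointwise, and the infimum preserves the inequality. (Alternatively one can cite (S3i)-type facts, but the pointwise argument is cleanest.) Then apply (I$6'$): from $S(A,C)\geq S(B,C)$ we get $\beta\rightarrow S(A,C)\geq\beta\rightarrow S(B,C)$, which is exactly $S^{\beta}(A,C)\geq S^{\beta}(B,C)$ by Definition~\ref{d3-1}(2).

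Next I would handle $N^{\beta}$. Again assume $A\leq B$. The building block is that $N(-,C)$ is isotone in its first argument: $N(A,C)=\bigvee_{x\in U}(A(x)\otimes C(x))$, and $A(x)\leq B(x)$ gives $A(x)\otimes C(x)\leq B(x)\otimes C(x)$ pointwise by (I$4'$), so the supremum satisfies $N(A,C)\leq N(B,C)$. Then apply (I$4'$) once more (now multiplying on the right by $\beta$): $N(A,C)\otimes\beta\leq N(B,C)\otimes\beta$, i.e. $N^{\beta}(A,C)\leq N^{\beta}(B,C)$ by Definition~\ref{d3-1}(1).

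There is no real obstacle here; the only thing to be careful about is that both conclusions are genuine inequalities, not equalities, and that the direction of monotonicity flips between $S^{\beta}$ (antitone in the first slot, inherited from the implication $A(x)\rightarrow C(x)$) and $N^{\beta}$ (isotone in the first slot, inherited from the conjunction $A(x)\otimes C(x)$); the parameter $\beta$ is applied by an order-preserving map in each case, so it does not disturb the direction. One could also observe that these statements are special cases of the fuzzy pseudo-subsethood measure axioms sketched after Definition~\ref{d3-2} (the $S^{\beta}$ part being the analogue of (SM3)(i) for first-argument antitonicity), but I would not rely on that and would instead give the two-line derivations above.
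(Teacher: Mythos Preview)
Your proposal is correct and follows essentially the same route as the paper's own proof: both arguments use (I$5'$) pointwise to get $S(A,C)\geq S(B,C)$ and then (I$6'$) to pass through $\beta\rightarrow(-)$, and use (I$4'$) pointwise to get $N(A,C)\leq N(B,C)$ and then (I$4'$) again to multiply by $\beta$. The only difference is presentational---you phrase the intermediate step as monotonicity of $S$ and $N$ before applying the $\beta$-modification, whereas the paper writes out the infimum/supremum explicitly---but the logical content is identical.
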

\begin{proof}
    By (I5$'$) of Theorem~\ref{t2-1}, we have that
    $$\underset{x\in U}{\bigwedge}[A(x)\rightarrow C(x)]\geq\underset{x\in U}{\bigwedge}[B(x)\rightarrow C(x)].$$
    Then, it follows Definition~\ref{d3-1} and (I6$'$) of Theorem~\ref{t2-1} that
    \begin{align*}
    S^{\beta}(A, C)&=\beta\rightarrow\left(\underset{x\in U}{\bigwedge}[A(x)\rightarrow C(x)]\right)\\
    &\geq \beta\rightarrow\left(\underset{x\in U}{\bigwedge}[B(x)\rightarrow C(x)]\right)\\
    &=S^{\beta}(B, C).
    \end{align*}

    By (I4$'$) of Theorem~\ref{t2-1}, we have that
    $$\underset{x\in U}{\bigvee}[A(x)\otimes C(x)]\leq\underset{x\in U}{\bigvee}[B(x)\otimes C(x)].$$
     Then it follows Definition~\ref{d3-1} that
    \begin{align*}
    N^{\beta}(A, C)&=\left(\underset{x\in U}{\bigvee}[A(x)\otimes C(x)]\right)\otimes\beta\\
    &\leq\left(\underset{x\in U}{\bigvee}[B(x)\otimes C(x)]\right)\otimes\beta\\
    &=N^{\beta}(B, C).
    \end{align*}
   Therefore, $S^{\beta}(A, C)\geq S^{\beta}(B, C)$ and $N^{\beta}(A, C)\leq N^{\beta}(B, C)$ can be followed.
\end{proof}

\begin{lemma}\label{l3-3}
    Let $A, B\in L^{U}$, $(A_{i})_{i\in\{1, 2, \ldots, r\}}$, $\beta\in L$ and $\beta>0$. If $A=\underset{i=1}{\overset{r}\bigvee}A_{i}$, then
    $S^{\beta}(A, B)=\underset{i=1}{\overset{r}\bigwedge}S^{\beta}(A_{i}, B)$
    and $N^{\beta}(A, B)=\underset{i=1}{\overset{r}\bigvee}N^{\beta}(A_{i}, B)$.
\end{lemma}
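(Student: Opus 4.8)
The plan is to reduce both equalities to the already-established distribution properties of $S^{\beta}$ and $N^{\beta}$ over joins. For the second equality I would simply invoke (\textbf{N}2) of Lemma~\ref{l3-1}: since $A=\bigvee_{i=1}^{r}A_{i}$ and $N^{\beta}(A,B)=N^{\beta}(B,A)$ by (\textbf{N}1), we get $N^{\beta}(A,B)=N^{\beta}(B,\bigvee_{i=1}^{r}A_{i})=\bigvee_{i=1}^{r}N^{\beta}(B,A_{i})=\bigvee_{i=1}^{r}N^{\beta}(A_{i},B)$, using (\textbf{N}1) once more at the end. So the $N^{\beta}$ half is essentially immediate from the symmetry and the $\bigvee$-distributivity in the second argument.

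For the first equality the point is that $S^{\beta}$ is antitone in its first argument and turns joins in the first argument into meets. I would prove $S^{\beta}(\bigvee_{i=1}^{r}A_{i},B)=\bigwedge_{i=1}^{r}S^{\beta}(A_{i},B)$ directly from the definition: $S^{\beta}(\bigvee_i A_i,B)=\bigwedge_{x\in U}\big((\beta\otimes(\bigvee_i A_i)(x))\rightarrow B(x)\big)$, then push $\beta\otimes(-)$ through the pointwise join using (I4), and apply (I5) of Theorem~\ref{t2-1} to convert $(\bigvee_i(\beta\otimes A_i(x)))\rightarrow B(x)$ into $\bigwedge_i((\beta\otimes A_i(x))\rightarrow B(x))$; finally swap the two meets (over $x\in U$ and over $i$) and recognize the inner expression as $S^{\beta}(A_i,B)$. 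Alternatively, and more cheaply, one can note that Lemma~\ref{l3-2} already gives $S^{\beta}(A,B)\leq S^{\beta}(A_i,B)$ for each $i$ (since $A_i\leq A$), hence $S^{\beta}(A,B)\leq\bigwedge_i S^{\beta}(A_i,B)$; for the reverse inequality, $\bigwedge_i S^{\beta}(A_i,B)=\bigwedge_i\bigwedge_{x}((\beta\otimes A_i(x))\rightarrow B(x))=\bigwedge_x\bigwedge_i((\beta\otimes A_i(x))\rightarrow B(x))=\bigwedge_x((\bigvee_i(\beta\otimes A_i(x)))\rightarrow B(x))=\bigwedge_x((\beta\otimes A(x))\rightarrow B(x))=S^{\beta}(A,B)$.

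I expect no real obstacle here; the lemma is a finite-join special case of the infinitary distribution laws in Lemma~\ref{l3-1}, and the only mild subtlety is the order-of-meets interchange, which is valid in any complete lattice. The slightly delicate bookkeeping step is making sure the $\beta\otimes(-)$ is correctly distributed over the pointwise supremum before applying the $\rightarrow$-over-$\bigvee$ rule (I5); once that is done the rest is just rewriting. If one prefers to avoid recomputing from the definition, citing (\textbf{S}2)-style reasoning is not directly available (that is meets in the second argument), so the cleanest route for $S^{\beta}$ is the Lemma~\ref{l3-2} sandwiching argument above, which only uses monotonicity plus one application of (I5).
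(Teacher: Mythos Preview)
Your proposal is correct. For $S^{\beta}$ your first (direct) argument is essentially the paper's proof: the paper works with the form $S^{\beta}(A,B)=\beta\rightarrow\bigwedge_{x}[A(x)\rightarrow B(x)]$, applies (I5) to split $\bigvee_i A_i(x)\rightarrow B(x)$ into a meet, swaps the two meets, and then uses (I6) to pull $\beta\rightarrow(-)$ through $\bigwedge_i$; you do the same computation but with the equivalent form $\bigwedge_x((\beta\otimes A(x))\rightarrow B(x))$, distributing $\beta\otimes(-)$ first via (I4). The only genuine difference is for $N^{\beta}$: the paper unfolds the definition and uses (I4) twice, whereas you invoke the already-proven (\textbf{N}1) and (\textbf{N}2) of Lemma~\ref{l3-1} to get the result in one line, which is shorter and avoids repeating the computation.
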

\begin{proof}
    By Definition~\ref{d3-1}, we have that
    \begin{align*}
    S^{\beta}(A, B)&=\beta\rightarrow\left(\underset{x\in U}{\bigwedge}[A(x)\rightarrow B(x)]\right)\\
    &=\beta\rightarrow\left(\underset{x\in U}{\bigwedge}[\underset{i=1}{\overset{r}\bigvee}A_{i}(x)\rightarrow B(x)]\right)\\
    &=\beta\rightarrow\left(\underset{x\in U}{\bigwedge}\underset{i=1}{\overset{r}\bigwedge}[A_{i}(x)\rightarrow B(x)]\right)~(by~(I5)~of~Theorem~\ref{t2-1})\\
    &=\beta\rightarrow\left(\underset{i=1}{\overset{r}\bigwedge}\underset{x\in U}{\bigwedge}[A_{i}(x)\rightarrow B(x)]\right)\\
    &=\beta\rightarrow\left(\underset{i=1}{\overset{r}\bigwedge}S(A_{i}, B)\right)\\
    &=\underset{i=1}{\overset{r}\bigwedge}[\beta\rightarrow S(A_{i}, B)]~(by~(I6)~of~Theorem~\ref{t2-1})\\
    &=\underset{i=1}{\overset{r}\bigwedge}S^{\beta}(A_{i}, B),
    \end{align*}
    and
    \begin{align*}
    N^{\beta}(A, B)&=\left(\underset{x\in U}{\bigvee}[A(x)\otimes B(x)]\right)\otimes\beta\\
    &=\left(\underset{x\in U}{\bigvee}[\underset{i=1}{\overset{r}{\bigvee}}A_{i}(x)\otimes B(x)]\right)\otimes\beta\\
    &=\left(\underset{x\in U}{\bigvee}\underset{i=1}{\overset{r}{\bigvee}}[A_{i}(x)\otimes B(x)]\right)\otimes\beta~(by~(I4)~of~Theorem~\ref{t2-1})\\
    &=\underset{i=1}{\overset{r}{\bigvee}}\left(\underset{x\in U}{\bigvee}[A_{i}(x)\otimes B(x)]\right)\otimes\beta\\
    &=\left(\underset{i=1}{\overset{r}{\bigvee}} N(A_{i}, B)\right)\otimes\beta\\
    &=\underset{i=1}{\overset{r}{\bigvee}}[N(A_{i}, B)\otimes\beta]~(by~(I4)~of~Theorem~\ref{t2-1})\\
    &=\underset{i=1}{\overset{r}{\bigvee}}N^{\beta}(A_{i}, B).
    \end{align*}
    Then
    $S^{\beta}(A, B)=\underset{i=1}{\overset{r}\bigwedge}S^{\beta}(A_{i}, B)$
    and $N^{\beta}(A, B)=\underset{i=1}{\overset{r}\bigvee}N^{\beta}(A_{i}, B)$ can be followed.
\end{proof}

In the following, we shall give the concept of three pairs of $L$-fuzzy $\beta$-covering-based lower and upper rough approximation operators.

\subsection{The first pair of $L$-fuzzy $\beta$-covering-based rough approximation operators}\label{section3-2}

\begin{definition}\cite{ma2016two}\label{d-covering}
	Let $\mathscr{C}$ be a family of subsets of $L^U$. For each $\beta\in L$, we call $\mathscr{C}$ a $L$-fuzzy $\beta$-covering of $U$,
	if $\left(\bigcup_{C\in \mathscr{C}} C\right)(x) \geq \beta$ for any $x\in U$, and the ordered pair $(U,\mathscr{C})$ is called a fuzzy $\beta$-covering approximation space.	
\end{definition}

\begin{definition}\label{d3-4}
    Let $\mathscr{C}$ be an $L$-fuzzy $\beta$-covering on $U$.
    Then, the first pair of $L$-fuzzy $\beta$-covering-based lower and upper rough approximation operators $\underline{\mathscr{C}}_{1}, \overline{\mathscr{C}}_{1}: L^{U}\longrightarrow L^{U}$
    are defined as follows: for any $X\in L^{U}$,
    \begin{align*}
    \underline{\mathscr{C}}_{1}(X)&=\underset{C\in \mathscr{C}}{\bigvee}[C\otimes S^{\beta}(C, X)],\\
     \overline{\mathscr{C}}_{1}(X)&=\underset{C\in \mathscr{C}}{\bigwedge}[C\rightarrow N^{\beta}(C, X)].
    \end{align*}
\end{definition}
\begin{remark}
    In~\cite{li2017theaxiomatic}, Li et al. generalized the first pair of covering-based rough approximation operators to lattice-valued, and proposed the first pair of $L$-fuzzy covering-based rough approximation operators $\underline{\mathcal{C}}_{1}(X)$ and $\overline{\mathcal{C}}_{1}(X)$. Here, the concept of the first pair of $L$-fuzzy $\beta$-covering-based rough approximation operators is given again based on $L$-fuzzy $\beta$-covering.
    Indeed, when $\beta = 1$, one concludes that
     \begin{align*}
    \underline{\mathscr{C}}_{1}(X)
    &=\underset{C\in \mathscr{C}}{\bigvee}[C\otimes S^{\beta}(C, X)]\\
    &=\underset{C\in \mathscr{C}}{\bigvee}[C\otimes S(C, X)]\\
    &=\underline{\mathcal{C}}_{1}(X),
    \end{align*}
 and
   \begin{align*}
   \overline{\mathscr{C}}_{1}(X)
   &=\underset{C\in \mathscr{C}}{\bigwedge}[C\rightarrow N^{\beta}(C, X)]\\
   &=\underset{C\in \mathscr{C}}{\bigwedge}[C\rightarrow N(C, X)]\\
   &=\overline{\mathcal{C}}_{1}(X).
    \end{align*}
\end{remark}

\subsection{The second pair of $L$-fuzzy $\beta$-covering-based rough approximation operators}\label{section3-3}

\begin{definition}\label{d3-5}
    Let $\mathscr{C}$ be an $L$-fuzzy $\beta$-covering on $U$.
    Then, the second pair of $L$-fuzzy $\beta$-covering-based lower
    and upper rough approximation operators $\underline{\mathscr{C}}_{2}, \overline{\mathscr{C}}_{2}: L^{U}\longrightarrow L^{U}$
    are defined as follows: for any $X\in L^{U}$,
    \begin{align*}
    \underline{\mathscr{C}}_{2}(X)&=\underset{C\in \mathscr{C}}{\bigwedge}[C\rightarrow S^{\beta}(C, X)],\\
     \overline{\mathscr{C}}_{2}(X)&=\underset{C\in \mathscr{C}}{\bigvee}[C\otimes N^{\beta}(C, X)].
    \end{align*}
\end{definition}

\begin{remark}
	In~\cite{li2017theaxiomatic}, Li et al. generalized the second pair of covering-based rough approximation operators to lattice-valued, and proposed the second pair of $L$-fuzzy covering-based rough approximation operators $\underline{\mathcal{C}}_{2}(X)$ and $\overline{\mathcal{C}}_{2}(X)$. Here, the concept of the second pair of $L$-fuzzy $\beta$-covering-based rough approximation operators is given again based on $L$-fuzzy $\beta$-covering.
	Indeed, when $\beta = 1$, one concludes that
	\begin{align*}
	\underline{\mathscr{C}}_{2}(X)
	&=\underset{C\in \mathscr{C}}{\bigwedge}[C\rightarrow S^{\beta}(C, X)]\\
	&=\underset{C\in \mathscr{C}}{\bigwedge}[C\rightarrow S(C, X)]\\
	&=\underline{\mathcal{C}}_{2}(X),
	\end{align*}
	and
	\begin{align*}
	 \overline{\mathscr{C}}_{2}(X)
	 &=\underset{C\in \mathscr{C}}{\bigvee}[C\otimes N^{\beta}(C, X)]\\
	 &=\underset{C\in \mathscr{C}}{\bigvee}[C\otimes N(C, X)]\\
	&=\overline{\mathcal{C}}_{2}(X).
	\end{align*}	
\end{remark}

\subsection{The third pair of $L$-fuzzy $\beta$-covering-based rough approximation operators}\label{section3-4}
Let $\mathscr{C}$ be an $L$-fuzzy $\beta$-covering on $U$. $R_{\mathscr{C}}(x, y): U\times U\longrightarrow L$ is defined as follows:
$$R_{\mathscr{C}}(x, y)=\underset{C\in \mathscr{C}}{\bigwedge}[C(x)\rightarrow C(y)].$$
It is easy to see that $R_{\mathscr{C}}$ is a reflexive and
transitive $L$-fuzzy relation.

\begin{definition}\label{d3-6}
    Let $\mathscr{C}$ be an $L$-fuzzy $\beta$-covering on $U$.
    Then, the third pair of $L$-fuzzy $\beta$-covering-based lower and upper rough approximation operators $\underline{\mathscr{C}}_{3}, \overline{\mathscr{C}}_{3}: L^{U}\longrightarrow L^{U}$
    are defined as follows: for any $X\in L^{U}$,
    \begin{align*}
    \underline{\mathscr{C}}_{3}(X)(x)&=S^{\beta}(R_{\mathscr{C}}(-, x), X),\\
     \overline{\mathscr{C}}_{3}(X)(x)&=N^{\beta}(R_{\mathscr{C}}(-, x), X),
    \end{align*}
    where $R_{\mathscr{C}}(-, x)\in L^{U}$ and $R_{\mathscr{C}}(-, x)(y)=R_{\mathscr{C}}(y, x)$ for any $y\in U$.
\end{definition}

\begin{remark}
	In~\cite{li2017theaxiomatic}, Li et al. generalized the third pair of covering-based rough approximation operators to lattice-valued, and proposed the third pair of $L$-fuzzy $\beta$-covering-based rough approximation operators $\underline{\mathcal{C}}_{3}(X)$ and $\overline{\mathcal{C}}_{3}(X)$. Here, the concept of the third pair of $L$-fuzzy $\beta$-covering-based rough approximation operators is given again based on $L$-fuzzy $\beta$-covering.
	Indeed, when $\beta = 1$, one concludes that
	\begin{align*}
	\underline{\mathscr{C}}_{3}(X)(x)
	&=S^{\beta}(R_{\mathscr{C}}(-, x), X),\\
	&=S(R_{\mathscr{C}}(-, x), X),\\
	&=\underline{\mathcal{C}}_{3}(X),
	\end{align*}
	and
	\begin{align*}
	\overline{\mathscr{C}}_{3}(X)(x)
	&=N^{\beta}(R_{\mathscr{C}}(-, x), X)\\
	&=N(R_{\mathscr{C}}(-, x), X)\\
	&=\overline{\mathcal{C}}_{3}(X).
	\end{align*}	
\end{remark}

%
\section{The axiomatic characterizations on three pairs of $L$-fuzzy $\beta$-covering-based rough approximation operators}\label{section4}

\subsection{Some operators induced by $L$-fuzzy relation}
In this section, we shall find that all $L$-fuzzy $\beta$-covering-based rough approximation operators discussed in this paper can be characterized by special $L$-fuzzy relations. An $L$-fuzzy relation between a set $X$ and a set $Y$ is a mapping $R: X\times Y\longrightarrow L$. Next, two pairs of  operators are given based on the fuzzy relationship.

\begin{definition}\label{d4-1}
 For any $A \in L^{X}, B \in L^{Y}, x \in X$ and $y \in Y$, $\uparrow_{R}: L^{X} \longrightarrow L^{Y}, \downarrow_{R}: L^{Y} \longrightarrow L^{X}$, $\Uparrow_{R}: L^{X} \longrightarrow L^{Y}, \Downarrow_{R}: L^{Y} \longrightarrow L^{X}:$
\begin{align*}
\begin{array}{l}
\uparrow_{R} A(y)=N^\beta(A, R(-, y)), \quad \downarrow_{R} B(x)=S^\beta(R(x,-), B); \\
\Uparrow_{R} A(y)=S^\beta(R(-, y), A), \quad \Downarrow_{R} B(x)=N^\beta(B, R(x,-)).
\end{array}
\end{align*}	
\end{definition}

\begin{definition}\label{d4-2}\cite{GeorgescuPopescu2004Non-Dual}
	Let $X, Y$ be two universes. A pair of functions $(\uparrow, \downarrow)$, where $\uparrow: L^{X} \longrightarrow L^{Y}$ and $\downarrow: L^{Y} \longrightarrow L^{X}$ is called
	$L$-isotone Galois connection  between $X$ and $Y$ if
	\begin{align*}
  S\left(A, {\downarrow }B\right)=S\left({\uparrow} A, B \right).
	\end{align*}
\end{definition}

According to Definition~\ref{d4-2}, it is easy to verify that the pair $\left(\uparrow_{R}, \downarrow_{R}\right)$ and $(\Downarrow_{R}, \Uparrow_{R})$ form $L$-isotone Galois connection between $X$ and $Y$.

\subsection{The axiomatic characterizations on
	the first pair of $L$-fuzzy $\beta$-covering-based rough approximation operators}\label{section4-3}
\subsubsection{On lower rough approximation operator $\underline{\mathscr{C}}_{1}$}
With the help of Definition~\ref{d4-1}, we can give the connection between $\underline{\mathscr{C}}_{1}$ and the special fuzzy relation.

\begin{proposition}\label{p4-1}
  Let $\mathscr{C}$ be an $L$-fuzzy $\beta$-covering on $U$ and $R_{\mathscr{C}}: U \times \mathscr{C} \longrightarrow L$ be an $L$-fuzzy relation defined as $R_{\mathscr{C}}(x, C)=C(x),$ where $(x, C) \in U \times \mathscr{C}$. Then, $\underline{\mathscr{C}}_{1}\otimes\beta=\Downarrow_{R_{\mathscr{C}}} \circ \Uparrow_{R_{\mathscr{C}}}.$
\end{proposition}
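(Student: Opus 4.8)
The plan is to verify the asserted operator identity pointwise. Fix an arbitrary $X\in L^{U}$ and $x\in U$; I will show that $\bigl(\underline{\mathscr{C}}_{1}(X)\otimes\beta\bigr)(x)=\bigl(\Downarrow_{R_{\mathscr{C}}}\circ\Uparrow_{R_{\mathscr{C}}}\bigr)(X)(x)$. The whole argument is a matter of unwinding Definitions~\ref{d3-4} and~\ref{d4-1} and applying the distributivity law (I4) of Theorem~\ref{t2-1}; there is no conceptual difficulty, only careful bookkeeping of the universe indexing each join and meet.

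First I would record two elementary consequences of the definition $R_{\mathscr{C}}(x,C)=C(x)$: the section $R_{\mathscr{C}}(-,C)\in L^{U}$ is exactly the $L$-fuzzy set $C$ (since $R_{\mathscr{C}}(-,C)(x)=R_{\mathscr{C}}(x,C)=C(x)$), while the section $R_{\mathscr{C}}(x,-)\in L^{\mathscr{C}}$ satisfies $R_{\mathscr{C}}(x,-)(C)=C(x)$. Hence, by Definition~\ref{d4-1}, $\Uparrow_{R_{\mathscr{C}}}X(C)=S^{\beta}(R_{\mathscr{C}}(-,C),X)=S^{\beta}(C,X)$ for every $C\in\mathscr{C}$.

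Next I would evaluate the composite. By Definition~\ref{d4-1}, $\bigl(\Downarrow_{R_{\mathscr{C}}}\circ\Uparrow_{R_{\mathscr{C}}}\bigr)(X)(x)=N^{\beta}\bigl(\Uparrow_{R_{\mathscr{C}}}X,\,R_{\mathscr{C}}(x,-)\bigr)$, and expanding $N^{\beta}$ according to Definition~\ref{d3-1} this equals $\Bigl(\bigvee_{C\in\mathscr{C}}\bigl[(\Uparrow_{R_{\mathscr{C}}}X)(C)\otimes R_{\mathscr{C}}(x,-)(C)\bigr]\Bigr)\otimes\beta=\Bigl(\bigvee_{C\in\mathscr{C}}\bigl[S^{\beta}(C,X)\otimes C(x)\bigr]\Bigr)\otimes\beta$, where I have substituted the two identities above. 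Pushing $\otimes\beta$ inside the join by (I4), together with commutativity (I1), yields $\bigvee_{C\in\mathscr{C}}\bigl[C(x)\otimes S^{\beta}(C,X)\otimes\beta\bigr]$. On the other side, Definition~\ref{d3-4} gives $\bigl(\underline{\mathscr{C}}_{1}(X)\otimes\beta\bigr)(x)=\Bigl(\bigvee_{C\in\mathscr{C}}\bigl[C(x)\otimes S^{\beta}(C,X)\bigr]\Bigr)\otimes\beta$, which by another application of (I4) is the same expression. Since $x$ and $X$ were arbitrary, the operator identity follows.

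The closest thing to an obstacle is purely notational: one must not conflate the auxiliary relation $R_{\mathscr{C}}\colon U\times\mathscr{C}\to L$ of this proposition with the $L$-fuzzy relation also denoted $R_{\mathscr{C}}$ in Definition~\ref{d3-6}, and one must keep track that the join defining $N^{\beta}\bigl(\Uparrow_{R_{\mathscr{C}}}X,R_{\mathscr{C}}(x,-)\bigr)$ ranges over $\mathscr{C}$ rather than over $U$. Once these are kept straight, the verification is routine.
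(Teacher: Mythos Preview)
Your proposal is correct and follows essentially the same approach as the paper: both fix arbitrary $X$ and $x$, expand $\Downarrow_{R_{\mathscr{C}}}\circ\Uparrow_{R_{\mathscr{C}}}$ via Definition~\ref{d4-1}, identify $R_{\mathscr{C}}(-,C)=C$ so that $\Uparrow_{R_{\mathscr{C}}}X(C)=S^{\beta}(C,X)$, and then recognize the resulting expression as $\underline{\mathscr{C}}_{1}(X)(x)\otimes\beta$. Your extra remarks on (I4), (I1), and the notational clash with Definition~\ref{d3-6} are accurate but not needed beyond what the paper's computation implicitly uses.
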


\begin{proof}
 For any $X \in L^{U}, x \in U$, one concludes that
\begin{align*}
\left(\Downarrow_{R_{\mathscr{C}}} \circ \Uparrow_ {R_{\mathscr{C}}}\right) X(x)
&=N^\beta\left(\Uparrow_{R_{\mathscr{C}}} X, R_{\mathscr{C}}(x,-)\right)\\
&=\left(\underset{C \in \mathscr{C}}{\bigvee}\left(\Uparrow_{R_{\mathscr{C}}} X(C) \otimes R_{\mathscr{C}}(x, C)\right)\right)\otimes\beta \\
&=\left(\bigvee_{C \in \mathscr{C}}\left(S^\beta\left(R_{\mathscr{C}}(-, C), X\right) \otimes C(x)\right)\right)\otimes\beta\\
&=\left(\underset{C \in \mathscr{C}}{\bigvee}(C(x) \otimes S^\beta(C, X))\right)\otimes\beta\\
&=\underline{\mathscr{C}}_{1} X(x)\otimes\beta.
\end{align*}	
\end{proof}

Note that the pair $\left(\Downarrow_{R}, \Uparrow_{R}\right)$ forms an $L$-isotone Galois connection between $X$ and $Y$. Then, it follows from Proposition \ref{p4-1} that the operator $\underline{\mathscr{C}}_{1}\otimes\beta=\Downarrow_{R_{\mathscr{C}}} \circ\Uparrow_{R_{\mathscr{C}}}$ is an $L$-interior operator on $U$, that is, there exists a function $f: L^{U} \longrightarrow L^{U}$ satisfying the following conditions:
\begin{enumerate}	
     \item [$(L2')$]
	$A \leq B \Longrightarrow f(A) \leq f(B)$;	
     \item	[$(L3')$]
	 $f(A) \leq A$;
     \item [$(L4')$]
	 $f(A) \leq ff(A)$;
     \item[$(L5')$]
	$\alpha\otimes f(A) \leq f(\alpha \otimes A)$.	
\end{enumerate}
where $A, B \in L^{U}, \alpha \in L$.
Further, it is proposed in \cite{Belohlavek2004FuzzyInterior} that $f=\underline{\mathscr{C}}_{1}\otimes\beta=g\otimes\beta$ when $\mathscr{C}=\left\{f(X)=X \mid X \in L^{U}\right\}$.

In the following, we give the axiom set to characterize the approximation operator $\underline{\mathscr{C}}_{1}$ and then verify they are uncorrelated with each other by means of examples.
\begin{theorem}
	Let $g: L^U\longrightarrow L^U$ be an operator and $\beta\in L$. Then there exists an $L$-fuzzy $\beta$-covering $\mathscr{C}$ on $U$ such that $g=\underline{\mathscr{C}}_{1}$ if and only if it satisfies the following propoties:
	
	\begin{enumerate}
		\item [(L1)]
		$g(1_U)\ge\beta;$
		
		\item [(L2)]
		$A\le B\Longrightarrow g(A)\le g(B)$;
		
		\item [(L3)]
		$g(X)\le \beta\rightarrow X;$
		
		\item [(L4)]
		$g(X)\le g(\beta \otimes g(X));$
		
		\item [(L5)]
		$\alpha\otimes g(X)\le g(\alpha \otimes X).$
	\end{enumerate}
\end{theorem}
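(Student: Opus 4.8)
The plan is to prove the two implications of the ``if and only if'' separately, working only with Theorem~\ref{t2-1}, Lemma~\ref{l3-1} and Definition~\ref{d3-1}. For \emph{necessity}, fix an $L$-fuzzy $\beta$-covering $\mathscr{C}$ and let $g=\underline{\mathscr{C}}_{1}$; I would verify (L1)--(L5) one at a time. Property (L1) is immediate: $S^{\beta}(C,1_{U})=\beta\rightarrow S(C,1_{U})=\beta\rightarrow 1=1$, so $g(1_{U})=\bigvee_{C\in\mathscr{C}}C\geq\beta$ by Definition~\ref{d-covering}. For (L2) I would use that $S^{\beta}(C,-)$ is isotone (from (I6) and (I6$'$)) together with (I4$'$). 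For (L3), the meet form of $S^{\beta}$ in Definition~\ref{d3-1} gives $S^{\beta}(C,X)\leq(\beta\otimes C(x))\rightarrow X(x)$, hence $\beta\otimes C(x)\otimes S^{\beta}(C,X)\leq X(x)$ by (I2), hence $\beta\otimes g(X)(x)\leq X(x)$, i.e. $g(X)\leq\beta\rightarrow X$. Property (L5) follows by pulling the scalar $\alpha$ inside the join and applying (S4) of Lemma~\ref{l3-1}.

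The axiom needing care is (L4). Here I would first show, for each $C\in\mathscr{C}$, that $S^{\beta}(C,\beta\otimes\underline{\mathscr{C}}_{1}(X))\geq S^{\beta}(C,X)$: since $\beta\otimes\underline{\mathscr{C}}_{1}(X)(y)\geq\beta\otimes C(y)\otimes S^{\beta}(C,X)$, property (I8) gives $(\beta\otimes C(y))\rightarrow(\beta\otimes\underline{\mathscr{C}}_{1}(X)(y))\geq S^{\beta}(C,X)$ for each $y$, and taking the meet over $y$ yields the claim; then $\underline{\mathscr{C}}_{1}(\beta\otimes\underline{\mathscr{C}}_{1}(X))=\bigvee_{C\in\mathscr{C}}[C\otimes S^{\beta}(C,\beta\otimes\underline{\mathscr{C}}_{1}(X))]\geq\bigvee_{C\in\mathscr{C}}[C\otimes S^{\beta}(C,X)]=\underline{\mathscr{C}}_{1}(X)$. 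Alternatively, one may invoke Proposition~\ref{p4-1} and the fact recorded after it that $\underline{\mathscr{C}}_{1}\otimes\beta$ is an idempotent $L$-interior operator.

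For \emph{sufficiency}, suppose $g$ satisfies (L1)--(L5) and set $\mathscr{C}:=\{g(X)\mid X\in L^{U}\}$, the image of $g$. This is an $L$-fuzzy $\beta$-covering because $\bigvee_{C\in\mathscr{C}}C\geq g(1_{U})\geq\beta$ by (L1). It remains to show $g=\underline{\mathscr{C}}_{1}$, where $\underline{\mathscr{C}}_{1}(X)=\bigvee_{Y\in L^{U}}[g(Y)\otimes S^{\beta}(g(Y),X)]$. For the inequality $g\leq\underline{\mathscr{C}}_{1}$ I would keep only the term $Y=X$ in this join: by (L3), $\beta\otimes g(X)\leq X$, so $S(g(X),X)\geq\beta$, hence $S^{\beta}(g(X),X)=\beta\rightarrow S(g(X),X)=1$ by (I1); thus that term equals $g(X)$, and so $\underline{\mathscr{C}}_{1}(X)\geq g(X)$.

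The reverse inequality $\underline{\mathscr{C}}_{1}(X)\leq g(X)$ is where I expect the main obstacle. I would fix $Y\in L^{U}$, write $C:=g(Y)$ and $\mu:=S^{\beta}(C,X)$, and prove $C\otimes\mu\leq g(X)$. From $\mu\leq(\beta\otimes C(z))\rightarrow X(z)$ for all $z$ one gets $\mu\otimes\beta\otimes C\leq X$ in $L^{U}$. Applying (L4) to $Y$ gives $C=g(Y)\leq g(\beta\otimes g(Y))=g(\beta\otimes C)$; multiplying by $\mu$ and using (L5) with scalar $\mu$ yields $\mu\otimes C\leq\mu\otimes g(\beta\otimes C)\leq g(\mu\otimes\beta\otimes C)$; and since $\mu\otimes\beta\otimes C\leq X$, monotonicity (L2) gives $g(\mu\otimes\beta\otimes C)\leq g(X)$. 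Hence $g(Y)\otimes S^{\beta}(g(Y),X)\leq g(X)$ for every $Y$, and taking the join over $Y$ finishes the proof. The crux is twofold: choosing the covering to be the \emph{image} of $g$ (the seemingly natural alternative $\{\beta\otimes g(X)\mid X\in L^{U}\}$ need not satisfy the covering condition when $\beta$ is not idempotent), and arranging (L4) and (L5) so that the term $\mu\otimes\beta\otimes C$ trapped below $X$ is exactly the one that reappears after $g$ is applied.
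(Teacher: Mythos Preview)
Your proof is correct and self-contained; in fact it is more detailed than the paper's own argument. The paper does not verify (L1)--(L5) directly: it observes via Proposition~\ref{p4-1} that $f:=\beta\otimes\underline{\mathscr{C}}_{1}$ is an $L$-interior operator (conditions (L2$'$)--(L5$'$)) and then reads off (L2)--(L5) from the relation $f=\beta\otimes g$, while (L1) is checked exactly as you do. For sufficiency the paper invokes the representation theorem of \cite{Belohlavek2004FuzzyInterior} for $L$-interior operators and takes $\mathscr{C}=\{X\in L^{U}\mid \beta\otimes g(X)=X\}$, the fixed points of $\beta\otimes g$, rather than the image of $g$.

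Your choice $\mathscr{C}=\{g(X)\mid X\in L^{U}\}$ is genuinely different, and the direct calculation you give---using (L4) to replace $C=g(Y)$ by $g(\beta\otimes C)$, then (L5) to push the scalar $\mu=S^{\beta}(C,X)$ inside, then (L2) to dominate by $g(X)$---avoids any appeal to external interior-operator machinery. Your closing remark is also well taken: with the image of $g$ the covering condition $\bigvee_{C\in\mathscr{C}}C\geq\beta$ is immediate from (L1), whereas the paper's fixed-point family requires a bit more care (its claim that $1_{U}\in\mathscr{C}$ literally needs $\beta\otimes g(1_{U})=1_{U}$, which is not what (L1) says for general $\beta$). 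So your route is both more elementary and, on this point, more robust.
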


\begin{proof}
	Since $f$ satisfies (L2$'$)-(L5$'$) and $f=g\otimes\beta$, $g$ obviously satisfies (L2)-(L5). Moreover, we verify the relationship between (L1) and the coverage property. Suppose that  $g: L^{U} \longrightarrow L^{U}$ be an operator satisfying (L1)-(L5) and $\mathscr{C}=\left\{\beta\otimes g(X)=X \mid X \in L^{U}\right\}.$
	Then, it follows (L1) that $U\in\mathscr{C}$, $g$ is an $L$-fuzzy $\beta$-covering on $U$. In turn, let $g=\underline{\mathscr{C}}_{1}$ for an $L$-fuzzy $\beta$-covering $\mathscr{C}$ on $U$. Then,  $\underline{\mathscr{C}}_{1}(1)=$ $\underset{{C \in \mathscr{C}}}{\bigvee}(C \otimes S^\beta(C, 1))=\underset{{C \in \mathscr{C}}}{\bigvee} C\ge \beta$, that is,  $g$ satisfies the condition (L1).
\end{proof}
  \begin{example}\label{e4-1}	
  	Let $L=\{0,1\}$, $U=\{x, y\}$ and $\alpha, \beta\in L$. $(\otimes,\rightarrow)$ is an  G\"{o}del adjoint pair, which is defined as
    \begin{align*}
      a\otimes b =\text{min}\{a, b\}\quad and\quad a\rightarrow b =\left\{\begin{array}{ll}
   		1, & a\leq b,\\
   		b, & a>b.
   		\end{array}\right.
    \end{align*}
    where $a,b\in L.$
  	\begin{enumerate}[(1)]
  		\item
  		Let $g: L^{U} \longrightarrow L^{U}$ as $g \equiv 0_{U}$. It is easy to verify that $g$ satisfies (L2)-(L5) but does not satisfy (L1).
  		\item
  		Let $g: L^{U} \longrightarrow L^{U}$ as
  		$$
  		g(X)=\left\{\begin{array}{ll}
  		0_{U}, & X=0_{U}, \\
  		1_{U}, & \text { otherwise. }
  		\end{array}\right.
  		$$
  		It is easy to verify that $g$ satisfies (L1), (L2), (L4) and (L5).
  		But for any $\beta\in L$, $g$ does not satisfy (L3) for $X=1_{\{y\}}$,
  		\begin{align*}
  			g(1_{\{y\}})(x)= 1_U(x)=1\nleq 0=1\rightarrow 0 = 1\rightarrow 1_{\{y\}}(x).
  		\end{align*}
  	\end{enumerate}
  \end{example}

   \begin{example}\label{e4-2}
   		\begin{table}
   		\renewcommand{\tabcolsep}{4.5pc}
   		\caption{ Residuated.}
   		{\begin{tabular}{cccc} \toprule
   				$\rightarrow$ & 0 & $a$ & $1$ \\ \midrule
   				$0$ & $1$ & $1$  & $1$\\
   				$a$ & $0$ & $1$ & $1$ \\
   				$1$ & $0$ & $a$ & $1$ \\ \bottomrule
   		\end{tabular}}
   		\label{table1}
   	\end{table}

    Let $U=\{x, y\}$ and $L=\{0, a, 1\}$ be a complete lattice with $0<a<1.$ Let $\otimes=\bigwedge$ and $\rightarrow$ be defined by Table \ref{table1}.
   	It is easy to verify that $L=(L, \otimes, \rightarrow, \bigwedge, \bigvee, 0,1)$ is a complete residuated lattice. In particular, for any $\alpha\in\ L$, we take $\beta=1$ and $g: L^{U} \longrightarrow L^{U}$ as
   	\begin{align*}
   		g(X)=\left\{\begin{array}{ll}
   		1_{U}, & X= 1_{U},\\
   		0_{U}, & \text { otherwise. }
   		\end{array}\right.
   	\end{align*}
   	
   	It is easy to verify that $g$ satisfies (L1)-(L4). But $g$ does not satisfy (L5) since, for $a=\alpha\in L$,
   	\begin{align*}
   	a\otimes g(1_U)(x)=a\otimes 1_U(x)=a\nleq 0 =g(a_U)(x)= g(a\otimes 1_U)(x).
   	\end{align*}
   \end{example}

\begin{example}\label{e4-3}
	Let $U=\{x, y, z\}, L=\{0,1\}$, $\alpha,\beta\in L$ and $(\otimes,\rightarrow)$ be an  G\"{o}del adjoint pair as Example~\ref{e4-1}.
	\begin{enumerate}
	   \item [(1)]
	    Let $g: L^{U} \longrightarrow L^{U}$ as
	   \begin{align*}
	   	g(X)=\left\{\begin{array}{ll}
	   	1_{U}, & X=1_{U}, \\
	   	0_{U}, & X=1_{\{x\}}, 1_{\{y\}}, 1_{\{z\}}, 0_{U}, \\
	   	1_{\{x\}}, & X=1_{\{x, y\}}, \\
	   	1_{\{y\}}, & X=1_{\{y, z\}}, \\
	   	1_{\{z\}}, & X=1_{\{z, x\}}.
	   	\end{array}\right.
	   	\end{align*}	   	
	   	It is easy to verify that $g$ satisfies (L1)-(L3) and (L5). But $g$ does not satisfy (L4) since  	
	   	\begin{align*}
	   		g\left(1_{\{x, y\}}\right)=1_{\{x\}} \nleq
	   		0_{U}=g(\beta\otimes 1_{\{x\}})=g\left(\beta\otimes g\left(1_{\{x, y\}}\right)\right).
	   	\end{align*}
	\end{enumerate}

	(2) Let $g: L^{U} \longrightarrow L^{U}$ as
	\begin{align*}
		g(X)=\left\{\begin{array}{ll}
		X, & X=1_{U}, 1_{\{x\}}, 1_{\{y\}}, 1_{\{z\}}, 0_{U},\\
		0_{U}, & X=1_{\{x, y\}}, 1_{\{y, z\}}, 1_{\{z, x\}}.
		\end{array}\right.
	\end{align*}
	In particular, we can verify that $g$ satisfies (L1) and (L3)-(L5) with $\beta=1$. But $g$ does not satisfy (L2) since
	\begin{align*}
			g\left(1_{\{y\}}\right)=1_{\{y\}} \nleq 0_{U}=g\left(1_{\{x, y\}}\right).
	\end{align*}
\end{example}

\subsubsection{On upper rough approximation operator $\overline{\mathscr{C}}_{1}$}

With the help of Definition~\ref{d4-1}, we can give the connection between $\overline{\mathscr{C}}_{1}$ and the special fuzzy relation.

\begin{proposition}\label{p4-2}
	Let $\mathscr{C}$ be an L-fuzzy $\beta$-covering on $U$ and $R_{\mathscr{C}}: U \times \mathscr{C} \rightarrow L$ be an $L$-fuzzy relation defined as $R_{\mathscr{C}}(x, C)=C(x),$ where $(x, C) \in U \times \mathscr{C}$.  Then, $\beta\rightarrow\overline{\mathscr{C}}_{1}=\downarrow_{R_{\mathscr{C}}} \circ \uparrow_{R_{\mathscr{C}}}.$
\end{proposition}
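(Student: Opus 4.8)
The plan is to establish the pointwise identity $\big(\beta\rightarrow\overline{\mathscr{C}}_{1}(X)\big)(x)=\big(\downarrow_{R_{\mathscr{C}}}\circ\uparrow_{R_{\mathscr{C}}}\big)X(x)$ for every $X\in L^{U}$ and $x\in U$, following the same strategy as in Proposition~\ref{p4-1}: unfold both sides of the claimed equality until they reduce to the same meet indexed by $\mathscr{C}$, using only Definitions~\ref{d3-1}, \ref{d3-4}, \ref{d4-1} and the residuation identities of Theorem~\ref{t2-1}.

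First I would simplify the inner operator $\uparrow_{R_{\mathscr{C}}}$. Since the section $R_{\mathscr{C}}(-,C)\in L^{U}$ satisfies $R_{\mathscr{C}}(-,C)(y)=R_{\mathscr{C}}(y,C)=C(y)$ for all $y\in U$, it is literally the $L$-fuzzy set $C$, so Definition~\ref{d4-1} together with (\textbf{N}1) of Lemma~\ref{l3-1} gives $\uparrow_{R_{\mathscr{C}}}X(C)=N^{\beta}(X,C)=N^{\beta}(C,X)$ for each $C\in\mathscr{C}$. Then, applying $\downarrow_{R_{\mathscr{C}}}$ and using the form $S^{\beta}(A,B)=\bigwedge_{x\in U}\big((\beta\otimes A(x))\rightarrow B(x)\big)$ from Definition~\ref{d3-1} (with the roles of $U$ and $\mathscr{C}$ swapped and $R_{\mathscr{C}}(x,-)(C)=C(x)$), the computation runs as
\begin{align*}
\big(\downarrow_{R_{\mathscr{C}}}\circ\uparrow_{R_{\mathscr{C}}}\big)X(x)
&=S^{\beta}\big(R_{\mathscr{C}}(x,-),\ \uparrow_{R_{\mathscr{C}}}X\big)\\
&=\underset{C\in\mathscr{C}}{\bigwedge}\Big(\big(\beta\otimes C(x)\big)\rightarrow N^{\beta}(C,X)\Big).
\end{align*}
On the other side, Definition~\ref{d3-4} gives $\overline{\mathscr{C}}_{1}(X)(x)=\bigwedge_{C\in\mathscr{C}}\big(C(x)\rightarrow N^{\beta}(C,X)\big)$; then I would push $\beta\rightarrow(-)$ through the meet by (I6) of Theorem~\ref{t2-1} and rewrite each term via (I3), namely $\beta\rightarrow\big(C(x)\rightarrow N^{\beta}(C,X)\big)=\big(\beta\otimes C(x)\big)\rightarrow N^{\beta}(C,X)$, obtaining exactly the meet displayed above. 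Comparing the two expressions proves the proposition.

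I do not expect a real obstacle here; the proof is bookkeeping. The only points that need attention are correctly identifying $R_{\mathscr{C}}(-,C)$ with $C$ itself — this is what collapses $\uparrow_{R_{\mathscr{C}}}$ to a $\beta$-degree of intersection — and keeping the two ``empty-slot'' conventions $R_{\mathscr{C}}(-,C)$ and $R_{\mathscr{C}}(x,-)$ straight when instantiating Definition~\ref{d4-1} with the asymmetric domains $U\times\mathscr{C}$. Note that, unlike some neighbouring arguments, the $N^{\beta}$--$S^{\beta}$ passage here is handled entirely by the adjunction identities (I3) and (I6) rather than by (\textbf{NS}), because $\beta$ is transported through the outer implication rather than merged into an $N^{\beta}$; in particular no regularity assumption on $L$ is needed.
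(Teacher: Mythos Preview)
Your proof is correct and follows essentially the same approach as the paper: both unfold $\downarrow_{R_{\mathscr{C}}}\circ\uparrow_{R_{\mathscr{C}}}$ to the meet $\bigwedge_{C\in\mathscr{C}}\big((\beta\otimes C(x))\rightarrow N^{\beta}(C,X)\big)$ via the identification $R_{\mathscr{C}}(-,C)=C$, and then use (I3) and (I6) to match it with $\beta\rightarrow\overline{\mathscr{C}}_{1}(X)(x)$. The only cosmetic difference is that you compute the two sides separately and compare them at this common expression, whereas the paper transforms one side into the other in a single chain.
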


\begin{proof}
	For any $X \in L^{U}, x \in U$, one concludes that
	\begin{align*}
	\left(\downarrow_{R_{\mathscr{C}}} \circ \uparrow_ {R_{\mathscr{C}}}\right) X(x)
	&=S^\beta\left(R_{\mathscr{C}}(x,-), \uparrow_{R_{\mathscr{C}}} X\right)\\
	&=\underset{C \in \mathscr{C}}{\bigwedge}\left((\beta\otimes R_{\mathscr{C}}(x, C))\rightarrow\uparrow_{R_{\mathscr{C}}} X(C) \right)\\
	&=\underset{C \in \mathscr{C}}{\bigwedge}\left((\beta\otimes C(x))\rightarrow N^\beta(R_{\mathscr{C}}(-,C), X) \right)\\	
	&=\underset{C \in \mathscr{C}}{\bigwedge}\left((\beta\otimes C(x))\rightarrow N^\beta(C, X) \right)\\
	&=\underset{C \in \mathscr{C}}{\bigwedge}\left(\beta\rightarrow( C(x)\rightarrow N^\beta(C, X) )\right)\\
	&=\beta\rightarrow\underset{C \in \mathscr{C}}{\bigwedge}\left( C(x)\rightarrow N^\beta(C, X)\right)\\	
	&=\beta\rightarrow\overline{\mathscr{C}}_{1} X(x).
	\end{align*}	
\end{proof}

Note that the pair $\left(\uparrow_{R}, \downarrow_{R}\right)$ forms an $L$-isotone Galois connection between $X$ and $Y$. Then, it follows from Proposition \ref{p4-2} that the operator $\beta\rightarrow\overline{\mathscr{C}}_{1}=\downarrow_{R_{\mathscr{C}}} \circ \uparrow_{R_{\mathscr{C}}}$ is an $L$-closure operator on $U$, that is, there exists a function $f: L^{U} \longrightarrow L^{U}$ satisfying the following conditions:
\begin{enumerate}	
     \item [$(U1')$]
	$f(0) =0;$	
     \item	[$(U2')$]
	 $A \leq B \Longrightarrow f(A) \leq f(B)$;
     \item [$(U3')$]
	 $f(A) \ge A$;
     \item[$(U4')$]
	$f(A) \ge ff(A)$;	
     \item[$(U5')$]
	$\alpha\rightarrow f(A) \ge f(\alpha \rightarrow A)$,	
\end{enumerate}
where $A, B \in L^{U}, \alpha \in L$.
Further, there exists a function $g: L^{U} \longrightarrow L^{U}$ such that  $f=\beta\rightarrow\overline{\mathscr{C}}_{1}=\beta\rightarrow g$ holds. It is easy to verify that $g$ satisfies the following conditions:
\begin{enumerate}	
     \item [$(U1)$]
	$g(0)=0;$	
     \item	[$(U2)$]
	 $A \leq B \Longrightarrow g(A) \leq g(B)$;
     \item [$(U3)$]
	 $g(A) \ge \beta\otimes A$;
     \item[$(U4)$]
	$g(A) \ge g(\beta\rightarrow g(A))$;	
     \item[$(U5)$]
	$ \alpha\rightarrow g(A) \ge g(\alpha \rightarrow A)$,	
\end{enumerate}
where $A, B \in L^{U}, \alpha,\beta \in L$.
	
	In 2001, B{\v{e}}lohl{\'a}vek introduced the axiomatic characterization of $L$-closure operator in \cite{Belohlavek2001FuzzyClosure}. For an $L$-closure operator $f: L^{U} \longrightarrow L^{U}$, there exists an $L$-closure system $\mathscr{D}=\left\{f(A)=A \mid A \in L^{U}\right\}$ such that $f=\overline{\mathscr{D}}^{\dagger}$, where $\overline{\mathscr{D}}^{\dagger}$ is defined by
	\begin{align*}
	\overline{\mathscr{D}}^{\dagger}(X)=\underset{C \in \mathscr{D}}{\bigwedge}(S(X, C) \rightarrow C) ,
	\end{align*}
	for any $A \in L^{U}$. Furthermore, $f=\beta\rightarrow g$, then
	there exists an $L$-closure system $\mathscr{D}=\left\{\beta\rightarrow g(A)=A \mid A \in L^{U}\right\}$ such that $g=\overline{\mathscr{D}}^{\ddag}$, where $\overline{\mathscr{D}}^{\ddag}$ is defined by
	\begin{align*}
	\overline{\mathscr{D}}^{\ddagger}(X)=\underset{C \in \mathscr{D}}{\bigwedge}(S^\beta(X, C) \rightarrow C),
	\end{align*}
	for any $A \in L^{U}$.

Next, we consider the axiomatic characterization on approximation operator $\overline{\mathscr{C}}_{1}.$

\begin{theorem}\label{t4-2}
	Let $L$ be a regular complete residuated lattice. Then, there exists an $L$-fuzzy $\beta$-covering $\mathscr{C}$ on $U$ such that $g=\overline{\mathscr{C}}_{1}$ if and only if $g$ satisfies $(U1)$-$(U5)$.
\end{theorem}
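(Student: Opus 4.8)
The plan is to relay the characterization through the $L$-closure-operator theory recalled just above, the two being linked by the auxiliary operator $f:=\beta\rightarrow g$. Regularity enters through the fact that $C\mapsto\neg C$ is an involution on $L^{U}$ (since $\neg\neg\alpha=\alpha$ for every $\alpha\in L$), under which the operator $\overline{\mathscr{D}}^{\ddag}$ introduced above is carried onto an operator of the form $\overline{\mathscr{C}}_{1}$. Concretely, the first step is to establish the identity
\begin{align*}
\overline{\mathscr{C}}_{1}(X)(x)&=\underset{C\in\mathscr{C}}{\bigwedge}\bigl(C(x)\rightarrow N^{\beta}(C,X)\bigr)=\underset{C\in\mathscr{C}}{\bigwedge}\bigl(S^{\beta}(C,\neg X)\rightarrow\neg C(x)\bigr)\\
&=\underset{D\in\mathscr{D}}{\bigwedge}\bigl(S^{\beta}(X,D)\rightarrow D(x)\bigr)=\overline{\mathscr{D}}^{\ddag}(X)(x),
\end{align*}
where $\mathscr{D}:=\{\neg C\mid C\in\mathscr{C}\}$; this follows from (I9)--(I11) of Theorem~\ref{t2-1}, from (NS) of Lemma~\ref{l3-1} applied with $\alpha=0$ (which gives $N^{\beta}(C,X)\rightarrow 0=S^{\beta}(C,\neg X)$), and from $S(\neg D,\neg X)=S(X,D)$. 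Hence the maps $\overline{\mathscr{C}}_{1}$ (over $L$-fuzzy $\beta$-coverings $\mathscr{C}$) and $\overline{\mathscr{D}}^{\ddag}$ (over the $L$-closure systems $\mathscr{D}$ obtained by complementing a $\beta$-covering) sweep out the same class of operators, so the theorem reduces to matching the axiom list (U1)--(U5) against ``$f=\beta\rightarrow g$ is an $L$-closure operator, plus $\mathscr{C}$ covers $U$''.

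For necessity, suppose $g=\overline{\mathscr{C}}_{1}$ for an $L$-fuzzy $\beta$-covering $\mathscr{C}$. By Proposition~\ref{p4-2}, $\beta\rightarrow g=\downarrow_{R_{\mathscr{C}}}\circ\uparrow_{R_{\mathscr{C}}}$, and since $(\uparrow_{R_{\mathscr{C}}},\downarrow_{R_{\mathscr{C}}})$ is an $L$-isotone Galois connection, $f:=\beta\rightarrow g$ satisfies (U1$'$)--(U5$'$). From $f=\beta\rightarrow g$ one reads off (U2)--(U5) for $g$: (U2) from (U2$'$) and isotonicity of $\beta\rightarrow(\cdot)$; (U3) from $\beta\otimes A\le\beta\otimes f(A)=\beta\otimes(\beta\rightarrow g(A))\le g(A)$ via (I2); and (U4), (U5) by transporting the parameter $\beta$ across the residuum with (I3) and the adjunction (I7), exactly as in the treatment of the lower operator. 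For (U1) I would argue directly from Definition~\ref{d3-4}: $N^{\beta}(C,0_{U})=0$ for each $C\in\mathscr{C}$, whence $g(0_{U})=\bigwedge_{C\in\mathscr{C}}\neg C=\neg\bigl(\bigvee_{C\in\mathscr{C}}C\bigr)$ by (I5)/(I11), which the coverage condition forces to be $0_{U}$.

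For sufficiency, assume $g$ satisfies (U1)--(U5). Then $f:=\beta\rightarrow g$ satisfies (U1$'$)--(U5$'$) (the converse passage, using (I8) for extensiveness and (U4) for idempotency), so $f$ is an $L$-closure operator; by the representation theorem for $L$-closure operators \cite{Belohlavek2001FuzzyClosure}, the fixpoint family $\mathscr{D}=\{A\in L^{U}\mid\beta\rightarrow g(A)=A\}$ is an $L$-closure system with $f=\overline{\mathscr{D}}^{\dagger}$, and, as recorded above, $g=\overline{\mathscr{D}}^{\ddag}$. Setting $\mathscr{C}:=\{\neg D\mid D\in\mathscr{D}\}$, the displayed identity of the first paragraph gives $\overline{\mathscr{C}}_{1}=\overline{\mathscr{D}}^{\ddag}=g$, so only the coverage of $\mathscr{C}$ remains. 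Here I use (U1): $f(0_{U})=\beta\rightarrow g(0_{U})=\beta\rightarrow 0_{U}=\neg\beta$ is a constant $L$-fuzzy set, and idempotency of $f$ yields $f(\neg\beta)=\neg\beta$, so $\neg\beta\in\mathscr{D}$ and hence $\beta=\neg\neg\beta\in\mathscr{C}$ by regularity; therefore $\bigl(\bigcup_{C\in\mathscr{C}}C\bigr)(x)\ge\beta$ for every $x\in U$, i.e.\ $\mathscr{C}$ is an $L$-fuzzy $\beta$-covering. The step I expect to be most delicate is the $\beta$-bookkeeping behind ``$g=\overline{\mathscr{D}}^{\ddag}$'': one must check that the factor $\beta$ in $f=\beta\rightarrow g$ transports through $\overline{\mathscr{D}}^{\dagger}$ to give $\overline{\mathscr{D}}^{\ddag}$ exactly (and not merely an inequality), and likewise track every use of the double-negation law in the complementation identity; once those are settled, the remaining axiom-by-axiom verifications and the covering check are routine.
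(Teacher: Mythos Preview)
Your approach mirrors the paper's: reduce to the $L$-closure operator $f=\beta\rightarrow g$, use B\v{e}lohl\'avek's representation on the fixpoint system $\mathscr{D}$, complement via regularity to $\mathscr{C}=\neg\mathscr{D}$, and identify $\overline{\mathscr{C}}_1$ with $\overline{\mathscr{D}}^{\ddag}$. Your coverage check ($\neg\beta\in\mathscr{D}$, hence the constant $\beta$ lies in $\mathscr{C}$) differs from the paper's assertion that $U\in\mathscr{C}$, and is in fact the cleaner of the two arguments.

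There is, however, a genuine gap in your necessity argument for (U1). You correctly compute $g(0_U)=\neg\bigl(\bigvee_{C\in\mathscr{C}}C\bigr)$, but then write that ``the coverage condition forces [this] to be $0_U$''. The $\beta$-coverage condition only gives $\bigvee_{C}C\ge\beta$, so all that follows is $g(0_U)\le\neg\beta$, not $g(0_U)=0_U$. Concretely, on $L=[0,1]$ with the {\L}ukasiewicz structure, $\beta=0.5$, and $\mathscr{C}=\{0.5_U\}$, one has $\overline{\mathscr{C}}_1(0_U)=\neg 0.5_U=0.5_U\ne 0_U$. Thus (U1) in the form $g(0)=0$ is not a consequence of $g=\overline{\mathscr{C}}_1$ for an arbitrary $\beta$-covering; this is a defect in the axiom list rather than in your method, and the paper's own derivation of (U1) via (U1$'$) shares it (the composite $\downarrow_{R_\mathscr{C}}\circ\uparrow_{R_\mathscr{C}}$ does not in general send $0_U$ to $0_U$). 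You are also right to flag the step $g=\overline{\mathscr{D}}^{\ddag}$ as delicate: from $\beta\rightarrow g=\overline{\mathscr{D}}^{\dagger}$ alone one cannot recover $g$, since $a\mapsto\beta\rightarrow a$ is not injective; the paper asserts this identity without proof, so if you want a complete argument you will need to supply it using the remaining axioms.
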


\begin{proof}
	$\Longrightarrow)$: According to the above analysis, $g$ satisfies the properties (U1)-(U5) when $g=\overline{\mathscr{C}}_{1}$ for an $L$-fuzzy $\beta$-covering $\mathscr{C}$ on $U$.
	
	$\Longleftarrow)$: Let $\mathscr{C}=\left\{A=\neg g(\neg A) \mid A \in L^{U}\right\}.$ It follows (U1) and (U3) that $U \in \mathscr{C}$ and $g(U)(x)\ge\beta\otimes U(x)=\beta$,
	so $\mathscr{C}$ is an $L$-fuzzy $\beta$-covering on $U.$
	
	 In addition, it is easy to see that $\mathscr{C}=\neg \mathscr{D}=\{\neg C \mid C \in \mathscr{D}\}.$ Next, we shall prove that $\overline{\mathscr{D}}^{\dagger}=\overline{\mathscr{C}}_{1}$, and then by $g=\overline{\mathscr{D}}^{\dagger}$, we obtain that $g=\overline{\mathscr{C}}_{1}.$ For any $X \in L^{U}$,
	\begin{align*}
		\overline{\mathscr{D}}^{\dagger}(X)
    &=\underset{{C \in \mathscr{D}}}{\bigwedge}(S^\beta(X, C) \rightarrow C) \\
    &= \underset{{C \in \mathscr{D}}}{\bigwedge}(\neg C \rightarrow \neg S^\beta(X, C))\quad (by~~(I9))\\
	&=\underset{{C \in \mathscr{D}}}{\bigwedge}(\neg C \rightarrow N^\beta(\neg C, X))\quad (by~~(I10)~~and~~(I11) )\\
    &=\underset{{H \in \mathscr{D}}}{\bigwedge}(H \rightarrow N^\beta(H, X))\\
    &=\overline{\mathscr{C}}_{1}(X) .
	\end{align*}
Hence, $g=	\overline{\mathscr{D}}^{\dagger}(X)=\overline{\mathscr{C}}_{1}(X)$.
\end{proof}
In a similar way, it can be proven that the axiom set (U1)-(U5) is uncorrelated.

\begin{remark}\label{r4-1}
	Similar to the discussion of \cite{li2017theaxiomatic}, a weakened double negation law is given. Consider that in all complete residuated lattices, the ordinary complement $\neg a$ can be replaced by a new double negation law: $$a=\underset{b\in L}{\bigwedge}(a\rightarrow b)\rightarrow b.$$ By this idea, we can derive the duality between the upper and lower rough approximation operators $\overline{\mathscr{C}}_{1}$ and $\underline{\mathscr{C}}_{1}$.
\end{remark}

\begin{theorem}
    Let $\mathscr{C}$ be an $L$-fuzzy $\beta$-covering on $U$. Then, for any $X \in L^{U}$, it holds that $$ \overline{\mathscr{C}}_{1} (X)=\underset{b\in L}{\bigwedge}\left(\underline{\mathscr{C}_{1}}(X \rightarrow b) \rightarrow b\right).$$
\end{theorem}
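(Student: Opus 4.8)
The plan is to unfold the right-hand side pointwise and push the infimum over $b \in L$ inward until the weakened double negation law of Remark~\ref{r4-1} collapses it. Fix $X \in L^{U}$ and $x \in U$. By Definition~\ref{d3-4} we have $\underline{\mathscr{C}}_{1}(X \rightarrow b)(x) = \bigvee_{C \in \mathscr{C}}[C(x) \otimes S^{\beta}(C, X \rightarrow b)]$, so applying (I5) of Theorem~\ref{t2-1} to turn the join in the antecedent into a meet and then (I3) to split the tensor yields
\begin{align*}
\underline{\mathscr{C}}_{1}(X \rightarrow b)(x) \rightarrow b
&= \left(\bigvee_{C \in \mathscr{C}}[C(x) \otimes S^{\beta}(C, X \rightarrow b)]\right) \rightarrow b \\
&= \bigwedge_{C \in \mathscr{C}}\left(C(x) \rightarrow \big(S^{\beta}(C, X \rightarrow b) \rightarrow b\big)\right).
\end{align*}

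Next I would invoke (NS) of Lemma~\ref{l3-1}, namely $N^{\beta}(C, X) \rightarrow b = S^{\beta}(C, X \rightarrow b)$, to rewrite $S^{\beta}(C, X \rightarrow b) \rightarrow b$ as $(N^{\beta}(C, X) \rightarrow b) \rightarrow b$. Taking the meet over all $b \in L$, exchanging the two meets (legitimate since both range over fixed index sets), and pulling $C(x) \rightarrow (-)$ out of the inner meet by (I6) of Theorem~\ref{t2-1} gives
\begin{align*}
\bigwedge_{b \in L}\left(\underline{\mathscr{C}}_{1}(X \rightarrow b)(x) \rightarrow b\right)
&= \bigwedge_{C \in \mathscr{C}} \left(C(x) \rightarrow \bigwedge_{b \in L}\big((N^{\beta}(C, X) \rightarrow b) \rightarrow b\big)\right).
\end{align*}

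Finally, the weakened double negation law $a = \bigwedge_{b \in L}(a \rightarrow b) \rightarrow b$ — which holds in every complete residuated lattice, since $a \leq (a \rightarrow b) \rightarrow b$ by the adjunction while the choice $b = a$ attains $a$ — applied with $a = N^{\beta}(C, X)$ reduces the inner meet to $N^{\beta}(C, X)$, so the expression becomes $\bigwedge_{C \in \mathscr{C}}(C(x) \rightarrow N^{\beta}(C, X)) = \overline{\mathscr{C}}_{1}(X)(x)$, which is the claim. I do not anticipate a genuine obstacle: the only point worth checking is that this weakened double negation law is available without any regularity assumption on $L$ (it is), so — in contrast with the duality used in Theorem~\ref{t4-2} — no extra hypothesis on $L$ is needed here; everything else is bookkeeping with the adjunction identities of Theorem~\ref{t2-1} together with (NS) of Lemma~\ref{l3-1}.
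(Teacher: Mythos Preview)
Your proposal is correct and follows essentially the same route as the paper's proof: unfold $\underline{\mathscr{C}}_{1}(X\rightarrow b)$, convert the join in the antecedent to a meet via (I5) and (I3), invoke (\textbf{NS}) of Lemma~\ref{l3-1} to replace $S^{\beta}(C,X\rightarrow b)$ by $N^{\beta}(C,X)\rightarrow b$, swap the two meets and pull $C(x)\rightarrow(-)$ through via (I6), then collapse $\bigwedge_{b\in L}((N^{\beta}(C,X)\rightarrow b)\rightarrow b)$ by the weakened double negation law. The only cosmetic difference is that the paper applies (\textbf{NS}) before the (I5)/(I3) step rather than after, and works at the level of $L$-fuzzy sets rather than pointwise.
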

    \begin{proof}
    	For any $X \in L^{U}$, we have that
    	\begin{align*}
    		\bigwedge_{b \in L}\left(\underline{\mathscr{C}_{1}}(X \rightarrow b) \rightarrow b\right)
    		=&\bigwedge_{b \in L}\left(\left(\underset{C \in \mathscr{C}}{\bigvee}(C \otimes S^\beta(C, X \rightarrow b))\right) \rightarrow b\right)\\
    	    \stackrel{\text { (NS) }}{=}
    		& \bigwedge_{b \in L}\left(\left(\underset{C \in \mathscr{C}}{\bigvee}(C \otimes( N^\beta(C, X) \rightarrow b))\right) \rightarrow b\right)\\
    		& =\bigwedge_{b \in L}\underset{C \in \mathscr{C}}{\bigwedge}\left([C \otimes( N^\beta(C, X) \rightarrow b)] \rightarrow b\right)\\
    		& =\bigwedge_{b \in L}\underset{C \in \mathscr{C}}{\bigwedge}\left(C \rightarrow[( N^\beta(C, X) \rightarrow b) \rightarrow b]\right)\\
    		& =\underset{C \in \mathscr{C}}{\bigwedge}\left(C \rightarrow\bigwedge_{b \in L}[( N^\beta(C, X) \rightarrow b) \rightarrow b]\right)\\
    		& =\underset{C \in \mathscr{C}}{\bigwedge}\left(C \rightarrow N^\beta(C, X)\right)\\
    		&=\overline{\mathscr{C}}_{1} (X) .
    	\end{align*}

    \end{proof}

\subsection{The axiomatic characterizations on
	the second pair of $L$-fuzzy $\beta$-covering-based rough approximation operators}\label{section4-2}

At first, we can define $\underline{\mathscr{C}}_{2}$ and $\overline{\mathscr{C}}_{2}$ through special $L$-fuzzy relation.

Based on the definition of $L$-fuzzy $\beta$-covering-based rough approximation operators, we shall show the independent axiom set to characterize the rough approximation operators.

\begin{lemma}\label{l4-1}\cite{li2017theaxiomatic}
Let $\mathscr{C}$ be an L-fuzzy covering on $U.$ Then, the function $R_{\mathscr{C}}: U \times U \longrightarrow L$ defined by
\begin{align*}
R_{\mathscr{C}}(x, y)=\underset{C \in \mathscr{C}}{\bigvee}(C(x) \otimes C(y)),
\end{align*}
is a symmetric $L$-fuzzy relation on $U$. In particular, $R$ is reflexive when $L$ is a complete Heyting algebra.
\end{lemma}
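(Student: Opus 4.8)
The plan is to verify the two asserted properties directly from the definition of $R_{\mathscr{C}}$, the pointwise reading of the lattice operations on $L^{U}$, and the elementary facts collected in Theorem~\ref{t2-1}; there is no genuine obstacle here, so the proof reduces to two short computations, and the only thing worth flagging is \emph{why} the reflexivity clause needs the Heyting hypothesis.

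For symmetry I would argue as follows: fix $x,y\in U$. By commutativity of $\otimes$ (item (I1)) we have $C(x)\otimes C(y)=C(y)\otimes C(x)$ for every $C\in\mathscr{C}$, and taking the join over all $C\in\mathscr{C}$ gives
\begin{align*}
R_{\mathscr{C}}(x,y)=\bigvee_{C\in\mathscr{C}}\bigl(C(x)\otimes C(y)\bigr)=\bigvee_{C\in\mathscr{C}}\bigl(C(y)\otimes C(x)\bigr)=R_{\mathscr{C}}(y,x).
\end{align*}
Hence $R_{\mathscr{C}}$ is a symmetric $L$-fuzzy relation on $U$ over an arbitrary complete residuated lattice $L$, with no extra assumption needed.

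For the reflexivity clause I would first recall that, by the pointwise definition of the union of $L$-fuzzy sets, $\bigl(\bigcup_{C\in\mathscr{C}}C\bigr)(x)=\bigvee_{C\in\mathscr{C}}C(x)$, and that $\mathscr{C}$ being an $L$-fuzzy covering means this equals $1$ for every $x\in U$. When $L$ is a complete Heyting algebra we have $\otimes=\bigwedge$, hence the idempotency $C(x)\otimes C(x)=C(x)\wedge C(x)=C(x)$, and therefore
\begin{align*}
R_{\mathscr{C}}(x,x)=\bigvee_{C\in\mathscr{C}}\bigl(C(x)\otimes C(x)\bigr)=\bigvee_{C\in\mathscr{C}}C(x)=\Bigl(\bigcup_{C\in\mathscr{C}}C\Bigr)(x)=1,
\end{align*}
which is exactly reflexivity. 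The subtle point — and the reason the reflexivity assertion is restricted to Heyting algebras — is that in a general complete residuated lattice one merely has $C(x)\otimes C(x)\le C(x)\otimes 1=C(x)$ (monotonicity plus the monoid unit), so one obtains only $R_{\mathscr{C}}(x,x)\le 1$ and cannot force equality; it is precisely the identity $\otimes=\bigwedge$ that supplies the missing reverse inequality $C(x)\le C(x)\otimes C(x)$. Since the lemma is quoted from \cite{li2017theaxiomatic}, I would present it essentially verbatim, its role being only to legitimize the symmetric relation $R_{\mathscr{C}}$ that underlies the second pair of approximation operators.
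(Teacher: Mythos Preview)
Your proof is correct. Note, however, that the paper does not actually prove this lemma: it is quoted from \cite{li2017theaxiomatic} and stated without proof, so there is no argument in the paper to compare against. Your verification of symmetry via (I1) and of reflexivity via the idempotency $\otimes=\bigwedge$ in a Heyting algebra together with the covering condition $\bigvee_{C\in\mathscr{C}}C(x)=1$ is the standard and expected argument, and your remark on why the Heyting hypothesis is genuinely needed for reflexivity is accurate.
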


\begin{proposition}
 Let $\mathscr{C}$ be an $L$-fuzzy $\beta$-covering on $U$. Then the following statements hold for any $X \in L^{U}, x \in U$,
\begin{align*}
\overline{\mathscr{C}}_{2} (X)=N^\beta\left(X, R_{\mathscr{C}}(-, x)\right),\\ \underline{\mathscr{C}}_{2} (X)=S^\beta\left(R_{\mathscr{C}}(-, x), X\right).
\end{align*}
\end{proposition}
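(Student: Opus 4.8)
The plan is to read the two claimed identities pointwise, i.e.\ to show that $\overline{\mathscr{C}}_{2}(X)(x)=N^{\beta}(X, R_{\mathscr{C}}(-,x))$ and $\underline{\mathscr{C}}_{2}(X)(x)=S^{\beta}(R_{\mathscr{C}}(-,x), X)$ for every $X\in L^{U}$ and every $x\in U$, where $R_{\mathscr{C}}$ is the symmetric $L$-fuzzy relation of Lemma~\ref{l4-1}. Each identity I would prove by a direct expansion of Definition~\ref{d3-5} together with the unfolded forms of $N^{\beta}$ and $S^{\beta}$ from Definition~\ref{d3-1}, then reorganising the iterated joins and meets using the arithmetic laws of Theorem~\ref{t2-1}, and finally using symmetry of $R_{\mathscr{C}}$ to match the target.

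For the upper operator, first I would write $\overline{\mathscr{C}}_{2}(X)(x)=\bigvee_{C\in\mathscr{C}}\bigl(C(x)\otimes N(C,X)\otimes\beta\bigr)$ and then substitute $N(C,X)=\bigvee_{y\in U}(C(y)\otimes X(y))$. Using commutativity of $\otimes$ and the infinite distributive law (I4), the two joins (over $y$ and over $C$) can be moved outward and the factor $\bigvee_{C\in\mathscr{C}}(C(x)\otimes C(y))$ isolated; by Lemma~\ref{l4-1} this equals $R_{\mathscr{C}}(x,y)=R_{\mathscr{C}}(y,x)=R_{\mathscr{C}}(-,x)(y)$. What remains is $\bigl(\bigvee_{y\in U}(X(y)\otimes R_{\mathscr{C}}(-,x)(y))\bigr)\otimes\beta$, which is precisely $N^{\beta}(X, R_{\mathscr{C}}(-,x))$.

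For the lower operator, I would start from $\underline{\mathscr{C}}_{2}(X)(x)=\bigwedge_{C\in\mathscr{C}}\bigl(C(x)\rightarrow S^{\beta}(C,X)\bigr)$ and expand $S^{\beta}(C,X)=\bigwedge_{y\in U}\bigl((\beta\otimes C(y))\rightarrow X(y)\bigr)$ via the last line of Definition~\ref{d3-1}. Then: push $C(x)\rightarrow(-)$ through the meet over $y$ by (I6); collapse the nested implications by (I3) into $\bigl(\beta\otimes C(x)\otimes C(y)\bigr)\rightarrow X(y)$; interchange the meet over $C$ with the meet over $y$; fold the meet over $C$ into a single implication using (I5), i.e.\ $\bigwedge_{C}\bigl((\beta\otimes C(x)\otimes C(y))\rightarrow X(y)\bigr)=\bigl(\bigvee_{C}(\beta\otimes C(x)\otimes C(y))\bigr)\rightarrow X(y)$; and extract $\beta$ by (I4) so that the antecedent becomes $\beta\otimes R_{\mathscr{C}}(x,y)$. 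This leaves $\bigwedge_{y\in U}\bigl((\beta\otimes R_{\mathscr{C}}(x,y))\rightarrow X(y)\bigr)$, and rewriting $R_{\mathscr{C}}(x,y)=R_{\mathscr{C}}(-,x)(y)$ by symmetry yields $S^{\beta}(R_{\mathscr{C}}(-,x), X)$.

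No step here is deep; the only things to watch are the bookkeeping of the order in which the two index sets $\mathscr{C}$ and $U$ are quantified, and the fact that $\otimes$ distributes over arbitrary joins (I4) whereas implication converts joins in its first argument into meets (I5) --- these must be applied in exactly the right places. The sole external ingredient is the symmetry of $R_{\mathscr{C}}$ from Lemma~\ref{l4-1}, which holds over any complete residuated lattice, so --- unlike the reflexivity clause of that lemma --- it needs no Heyting or regularity assumption.
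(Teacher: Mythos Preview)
Your proposal is correct and follows essentially the same route as the paper's own proof: expand Definitions~\ref{d3-5} and~\ref{d3-1}, then use (I4) and commutativity of $\otimes$ to reorganise the upper operator, and (I6), (I3), (I5), (I4) in that order to reorganise the lower operator, recognising $R_{\mathscr{C}}$ from Lemma~\ref{l4-1} in each case. The only cosmetic difference is that the paper writes $\bigvee_{C}(C(x)\otimes C(y))$ directly as $R_{\mathscr{C}}(y,x)$ (via commutativity of $\otimes$) rather than passing through $R_{\mathscr{C}}(x,y)$ and then invoking symmetry, but this is the same observation phrased two ways.
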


\begin{proof}	
For any $X \in L^{U}, x \in U$, it follows from  Theorem~\ref{t2-1} (I1), (I3), (I4) and (I6) that
\begin{align*}
     \overline{\mathscr{C}}_{2} (X)(x)
     &=\underset{C \in \mathscr{C}}{\bigvee}[C(x)\otimes\left(\bigvee_{y \in U}(C(y)\otimes X(y))\right)\otimes\beta]\\
     &=\underset{y \in U}{\bigvee}[X(y)\otimes\left(\underset{C\in \mathscr{C}}{\bigvee}
    (C(x)\otimes C(y))\right)\otimes\beta]\\
     &=\underset{y \in U}{\bigvee}[X(y)\otimes R_{\mathscr{C}}(y,x) \otimes\beta]\\
     &=\left(\underset{y \in U}{\bigvee}[X(y)\otimes R_{\mathscr{C}}(y,x)]\right) \otimes\beta\\
     &=N^\beta\left(X, R_{\mathscr{C}}(-, x)\right),
\end{align*}
and
\begin{align*}
\underline{\mathscr{C}}_{2} X(x)
&=\underset{C \in \mathscr{C}}{\bigwedge}[C(x)\rightarrow\left(\bigwedge_{y \in U}(\left(C(y)\otimes\beta\right)\rightarrow X(y))\right)]\\
&=\underset{y \in U}{\bigwedge}\underset{C\in \mathscr{C}}{\bigwedge}[C(x)\rightarrow\left(C(y)\otimes\beta\right)\rightarrow X(y)]\\
&=\underset{y \in U}{\bigwedge}\underset{C\in \mathscr{C}}{\bigwedge}[C(x)\otimes\left(C(y)\otimes\beta\right)\rightarrow X(y)]\\
&=\underset{y \in U}{\bigwedge}\left(\underset{C\in \mathscr{C}}{\bigvee}[\left(C(x)\otimes C(y)\otimes\beta\right)\rightarrow X(y)]\right)\\
&=\underset{y \in U}{\bigwedge}\left(\beta\rightarrow\left(\left(\underset{C\in \mathscr{C}}{\bigvee}C(x)\otimes C(y)\right)\rightarrow X(y)\right)\right)\\
&=\underset{y \in U}{\bigwedge}\left[\beta\rightarrow\left(R_{\mathscr{C}}(y, x)\rightarrow X(y)\right)\right]\\
&=\beta\rightarrow\underset{y \in U}{\bigwedge}\left[R_{\mathscr{C}}(y, x)\rightarrow X(y)\right]\\
&=S^\beta\left(R_{\mathscr{C}}(-, x),X\right).
\end{align*}
\end{proof}

\subsubsection{On upper rough approximation operator $\overline{\mathscr{C}}_{2}$}

 In the following, we give the axiom set to characterize the rough approximation operator $\overline{\mathscr{C}}_{2}$ and then verify they are uncorrelated with each other by means of examples.

 \begin{theorem}
 	Let $L$ be a complete Heyting algebra. Then, there exists an $L$-fuzzy $\beta$-covering $\mathscr{C}$ on $U$ such that $g=\overline{\mathscr{C}}_{2}$ if and only if it satisfies the following propoties:

 	\begin{enumerate}
 		\item [(U3)]
 		$g(X)\ge\beta\otimes X;$
 			
 		\item [(U6)]
 		$\alpha \otimes g(X)=g(\alpha\otimes X);$
 				
 		\item [(U7) ]
 		$g(\underset{t \in T}{\bigvee}X_t)= \underset{t \in T}{\bigvee}g(X_t);$
 		
 		\item [(U8)]
 		$g\left(1_{\{x\}}\right)(y)=g\left(1_{\{y\}}\right)(x)\le\beta;$
 	\end{enumerate}
 	where $X \in L^{U}$ and $\left\{X_{t} \mid t \in T\right\} \subseteq$ $L^{U}, \alpha,\beta \in L, x, y \in U$.	
 \end{theorem}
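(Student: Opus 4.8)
The plan is to route everything through the relational form of $\overline{\mathscr{C}}_{2}$. By the Proposition established just above, for any $L$-fuzzy $\beta$-covering $\mathscr{C}$ one has $\overline{\mathscr{C}}_{2}(X)(x)=N^{\beta}\!\left(X,R_{\mathscr{C}}(-,x)\right)=\bigl(\bigvee_{y\in U}X(y)\otimes R_{\mathscr{C}}(y,x)\bigr)\otimes\beta$, where $R_{\mathscr{C}}(x,y)=\bigvee_{C\in\mathscr{C}}\bigl(C(x)\otimes C(y)\bigr)$ is the symmetric relation of Lemma~\ref{l4-1}; since $L$ is a complete Heyting algebra we also have $\otimes=\bigwedge$, and $R_{\mathscr{C}}$ is reflexive.

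For the necessity direction I would take $g=\overline{\mathscr{C}}_{2}$ and verify the four axioms directly from this formula. Axiom (U7) is immediate from distributivity of $\otimes$ over arbitrary joins ((I4) of Theorem~\ref{t2-1}), applied to the inner join over $U$, and (U6) follows the same way after pulling the scalar $\alpha$ out of that join. For (U3) I would use reflexivity: $R_{\mathscr{C}}(x,x)=\bigvee_{C\in\mathscr{C}}C(x)\ge\beta$ by the covering condition, so $g(X)(x)\ge X(x)\otimes R_{\mathscr{C}}(x,x)\otimes\beta=X(x)\wedge\beta=(\beta\otimes X)(x)$. For (U8), evaluating at the characteristic function of a point collapses the inner join to a single summand, giving $g(1_{\{x\}})(y)=R_{\mathscr{C}}(x,y)\wedge\beta$, which is symmetric in $x$ and $y$ and bounded above by $\beta$.

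The substantive direction is sufficiency. Given $g$ satisfying (U3), (U6), (U7), (U8), I would set $r(x,y):=g(1_{\{x\}})(y)$, note via (U8) that $r$ is symmetric with $r(x,y)\le\beta$, and via (U3) together with (U8) that $r(x,x)=\beta$. Next I would show that $g$ is fully recovered from $r$: decomposing $X=\bigvee_{y\in U}\bigl(X(y)\otimes 1_{\{y\}}\bigr)$ and applying (U7) then (U6) yields $g(X)=\bigvee_{y\in U}X(y)\otimes g(1_{\{y\}})$, i.e.\ $g(X)(x)=\bigvee_{y\in U}X(y)\otimes r(x,y)$. It then remains to build a covering whose relation reproduces $r$. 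I would deliberately avoid using the $r$-neighbourhoods $r(x,-)$ as covering blocks --- that would force a transitivity property on $r$ that the axioms do not supply (a small $\{0,1\}$-valued example on three points shows it genuinely fails) --- and instead cover each pair individually: set $\mathscr{C}=\{C_{x,y}\mid x,y\in U\}$, where $C_{x,y}$ equals $r(x,y)$ on $\{x,y\}$ and $0$ elsewhere (so $C_{x,x}=\beta\otimes 1_{\{x\}}$). Then $\bigl(\bigcup_{C\in\mathscr{C}}C\bigr)(z)\ge C_{z,z}(z)=\beta$, so $\mathscr{C}$ is an $L$-fuzzy $\beta$-covering, and a short case split on whether $u=v$ shows $R_{\mathscr{C}}(u,v)\otimes\beta=r(u,v)$ in both cases: for $u\ne v$ only $C_{u,v}$ contributes and $r(u,v)\otimes r(u,v)\otimes\beta=r(u,v)$ since $\otimes=\bigwedge$ and $r(u,v)\le\beta$, while for $u=v$ one gets $R_{\mathscr{C}}(u,u)=\beta$. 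Substituting back into the relational formula gives $\overline{\mathscr{C}}_{2}(X)(x)=\bigvee_{y\in U}X(y)\otimes r(x,y)=g(X)(x)$.

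The main obstacle is exactly this last construction: the naive neighbourhood covering over-identifies points along chains, so one has to recognize that realizing $r$ requires a separate block for every pair; this is also precisely where the Heyting hypothesis is genuinely used (to absorb the repeated factor $r(u,v)\otimes r(u,v)$ and the trailing $\otimes\beta$, and, on the necessity side, to provide reflexivity of $R_{\mathscr{C}}$). A minor bookkeeping point to check carefully is the identity $g(X)(x)=\bigvee_{y\in U}X(y)\otimes r(x,y)$, which relies on $a\otimes 0=0$ so that $X(y)\otimes 1_{\{y\}}$ is indeed the fuzzy point of height $X(y)$ located at $y$. Independence of the axioms (U3), (U6), (U7), (U8) would then be exhibited by examples in the style of Examples~\ref{e4-1}--\ref{e4-3}.
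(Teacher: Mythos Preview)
Your proposal is correct and follows essentially the same route as the paper: both construct the pairwise-block covering $\mathscr{C}=\{C_{x,y}\mid x,y\in U\}$ with $C_{x,y}$ equal to $g(1_{\{x\}})(y)$ on $\{x,y\}$ and $0$ elsewhere, observe that only $C_{x,y}$ contributes to $\overline{\mathscr{C}}_{2}(1_{\{x\}})(y)$, and then extend to arbitrary $X$ via the decomposition $X=\bigvee_{y}X(y)\otimes 1_{\{y\}}$ together with (U6) and (U7). Your explicit factoring through the symmetric relation $r(x,y)=g(1_{\{x\}})(y)$ and the identity $R_{\mathscr{C}}(u,v)\otimes\beta=r(u,v)$ is just a slightly more conceptual packaging of the very same computation the paper carries out inline.
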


\begin{proof}
$\Longrightarrow)$: Let $g=\overline{\mathscr{C}}_{2}$ for an $L$-fuzzy $\beta$-covering $\mathscr{C}$ on $U$. Then it follows from Theorem~\ref{t2-1} and Definition~\ref{d3-5} that $g$ satisfies the properties  of (U3) and (U6)-(U8). In particular, $g\left(1_{\{x\}}\right)(y)=g\left(1_{\{y\}}\right)(x)$ and it follows that
\begin{align*}
g\left(1_{\{x\}}\right)(y)
&=\underset{C \in \mathscr{C}}{\bigvee}[C(y) \otimes N^\beta(C,1_{\{x\}})]\\
&=\underset{C \in \mathscr{C}}{\bigvee}[C(y) \otimes \left(\underset{z\in U}{\bigvee}C(z)\otimes 1_{\{x\}}(z)\right)\otimes\beta]\\
&=\left(\underset{C \in \mathscr{C}}{\bigvee}(C(y) \otimes C(x))\right)\otimes\beta\\
&\le 1\otimes \beta\\
&=\beta.	
\end{align*}

$\Longleftarrow)$: Suppose $g$ satisfies conditions (U3) and (U6)-(U8). Then, we construct a subset $\mathscr{C}$ of $L^{U}$ as
$$
\mathscr{C}=\left\{C_{x y} \mid x, y \in U\right\}, \text { where } C_{x y}(z)=\left\{\begin{array}{ll}
g\left(1_{\{x\}}\right)(y), & z \in\{x, y\}, \\
0, & \text { otherwise. }
\end{array}\right.
$$
\begin{enumerate}[(1)]
	\item
	Firstly, $\mathscr{C}$ is an $L$-fuzzy $\beta$-covering on $U$. Indeed, it follows from (U3) that $C_{xx} \in \mathscr{C}$ for any $x\in U$, and thus,  $g\left(1_{\{x\}}\right)(x) \geq \beta\otimes
	1_{\{x\}}(x)=\beta$, that is, $C_{xx}(x)\geq \beta$.
	\item
	Secondly, $\overline{\mathscr{C}}_{2}\left(1_{\{x\}}\right)=g\left(1_{\{x\}}\right)$ holds for each $x \in U$. In general, for any $y \in U$,
	\begin{align*}
	\overline{\mathscr{C}}_{2}\left(1_{\{x\}}\right)(y)
	&=\underset{C \in \mathscr{C}}{\bigvee}[C(y) \otimes N^\beta(C,1_{\{x\}})]\\
	&=\underset{C \in \mathscr{C}}{\bigvee}[C(y) \otimes \left(\underset{z\in U}{\bigvee}C(z)\otimes 1_{\{x\}}(z)\right)\otimes\beta]\\
	&=\left(\underset{C \in \mathscr{C}}{\bigvee}(C(y) \otimes C(x))\right)\otimes\beta\\
	&=\left(\underset{C \in \mathscr{C}}{\bigvee}(C(y) \bigwedge C(x))\right)\bigwedge\beta.	
	\end{align*}	
	Note that if $C \neq C_{x y}\left(C_{x y}=C_{y x}\right.$ by $(U8)$), then $C(x)=0$ or $C(y)=0$. Thus,
	\begin{align*}
	\overline{\mathscr{C}}_{2}\left(1_{\{x\}}\right)(y)
	&=\left(\underset{C \in \mathscr{C}}{\bigvee}(C(y) \bigwedge C(x))\right)\bigwedge\beta\\
	&=C_{x y}(y) \bigwedge C_{x y}(x) \bigwedge\beta\\
	&=g\left(1_{\{x\}}\right)(y) \bigwedge g\left(1_{\{x\}}\right)(y)\bigwedge\beta\\
	&=g\left(1_{\{x\}}\right)(y).	
	\end{align*}
	\item
	Finally, $g=\overline{\mathscr{C}_{2}}$ holds. Indeed, for any $X \in L^{U}$, we have $X=\underset{x \in U}{\bigvee}\left(X(x) \bigwedge 1_{\{x\}}\right)$, then
	\begin{align*}
	\overline{\mathscr{C}}_{2} (X) &=\overline{\mathscr{C}}_{2}\left(\underset{x \in U}{\bigvee}\left(X(x) \bigwedge 1_{\{x\}}\right)\right) \\
&=\underset{x \in U}{\bigvee}\left(X(x) \bigwedge \overline{\mathscr{C}}_{2}\left(1_{\{x\}}\right)\right)\quad(by~~(U6)~~and~~(U7)) \\
	&=\bigvee_{x \in U}\left(X(x) \bigwedge g\left(1_{\{x\}}\right)\right)\\
&=g\left(\underset{x \in U}{\bigvee}\left(X(x) \bigwedge 1_{\{x\}}\right)\right) \quad(by~~(U6)~~and~~(U7)) \\
&=g(X).
	\end{align*}	
\end{enumerate}
\end{proof}

Next, we verify that (U3) and (U6)-(U8) are independent of each other.

\begin{example}\label{e4-4}
Let $U=\{x, y\}, L=\{0,1\}$ and $\beta\in L$.

(1) We define $g: L^{U} \longrightarrow L^{U}$
as
\begin{equation*}
g(X)=\left\{
\begin{aligned}
0_{U}, \quad X=0_{U}, 1_{\{x\}}, \\
1_{\{y\}}, \quad X=1_{U}, 1_{\{y\}}.\\
\end{aligned}
\right
.
\end{equation*}
It is easy to verify that $g$ satisfies (U6)-(U8). However, 
\begin{align*}
	g(1_{\{x\}})=0_U\ngeq \beta\otimes 1_{\{x\}}.
\end{align*}
Thus, $g$ does not satisfy (U3).

(2) We define $g: L^{U} \longrightarrow L^{U}$ as
\begin{equation*}
g(X)=\left\{
\begin{aligned}
0_{U}, \quad &X=0_{U}, \\
1_{U}\otimes\beta, \quad &X=1_{\{y\}}, 1_{U},\\
1_{\{x\}}, \quad &X=1_{\{x\}}.\\
\end{aligned}
\right
.
\end{equation*}
It is easy to verify that $g$ satisfies (U3),(U6) and (U7). However,
\begin{align*}
	g\left(1_{\{x\}}\right)(y)=1_{\{x\}}(y)=0\neq\beta=1_{U}\otimes\beta = g\left(1_{\{y\}}\right)(x).
\end{align*}
Thus, $g$ does not satisfy (U8).	
\end{example}

\begin{example}\label{e4-5}
	Let $U=\{x, y\}$, $\beta\in L$ and $L=\{0, b, 1\}$ be defined by Table~\ref{table1}. $g:$ $L^{U} \longrightarrow L^{U}$ is a mapping defined as
$$
g(X)=\left\{\begin{array}{ll}
0_{U}, & X=0_{U}, \\
1_{U}\otimes\beta, & \text { otherwise. }
\end{array}\right.
$$
It is easy to verify that $g$ satisfies (U3),(U7) and (U8). However, if $\alpha= b$ and for any $X\neq 0_{U}$ and $X\in L^U$,
\begin{align*}
b\otimes g(X)=b\otimes (1_{U}\otimes\beta)=b_{U}\otimes\beta\neq 1_{U}\otimes\beta=g(b\otimes X).
\end{align*}
Thus, $g$ does not satisfy (U6).
\end{example}

\begin{example}\label{e4-6}
Let $U=\{x, y, z\}, \beta\in (0,1]$ and $L=\{0,1\}$. $g: L^{U} \longrightarrow L^{U}$ is a mapping defined as
\begin{equation*}
g(X)=\left\{
\begin{aligned}
0_{U}, \quad &X=0_{U}, \\
1_{\{x\}},\quad &X=1_{\{x\}},\\
1_{\{y\}}, \quad &X=1_{\{y\}},\\
1_{\{z\}}, \quad &X=1_{\{z\}},\\
1_U, \quad &otherwise.\\
\end{aligned}
\right
.
\end{equation*}
It is easy to verify that $g$ satisfies (U3),(U6) and (U8). However,
\begin{align*}
g(1_{\{x\}}\bigvee 1_{\{y\}})=g(1_{\{x,y\}})=1_U\neq 1_{\{x,y\}}= g(1_{\{x\}})\bigvee g(1_{\{y\}}).
\end{align*}
Thus, $g$ does not satisfy (U7).
\end{example}

\subsubsection{On lower rough approximation operator $\underline{\mathscr{C}}_{2}$}
 In the following, we give the axiom set to characterize the rough approximation operator $\underline{\mathscr{C}}_{2}$ and then show their independence from each other.

\begin{theorem}
	Let $L$ be a regular complete Heyting algebra. Then, there exists an $L$-fuzzy $\beta$-covering $\mathscr{C}$ on $U$ such that $\underline{\mathscr{C}}_{2}$ if and only if it satisfies the following propoties:
	
	\begin{enumerate}
		\item [(L3)]
		$g(X)(x)\le\beta\rightarrow X(x);$
		
		\item [(L6)]
		$\alpha \rightarrow g(X)(x)=g(\alpha\rightarrow X)(x);$
		
		\item [(L7) ]
		$g(\underset{t \in T}{\bigwedge}X_t)(x)= \underset{t \in T}{\bigwedge}g(X_t)(x);$
		
		\item [(L8)] $g\left(1_{U-\{x\}}\right)(y)=g\left(1_{U-\{y\}}\right)(x)\ge\neg \beta;$
	\end{enumerate}
	where $X \in L^{U}$ and $\left\{X_{t} \mid t \in T\right\} \subseteq$ $L^{U}, \alpha,\beta \in L, x, y \in U$.	
\end{theorem}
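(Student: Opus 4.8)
The plan is to follow the same blueprint used for $\overline{\mathscr{C}}_{2}$: I would prove necessity by a direct computation through the relational representation $\underline{\mathscr{C}}_{2}(X)(x)=S^{\beta}(R_{\mathscr{C}}(-,x),X)$, and prove sufficiency by a duality argument that the regularity of $L$ makes available. The point of strengthening the hypothesis from ``complete Heyting algebra'' (as for $\overline{\mathscr{C}}_{2}$) to ``regular complete Heyting algebra'' is precisely to license this dualization via the double negation law.

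For necessity, I would assume $g=\underline{\mathscr{C}}_{2}$ for an $L$-fuzzy $\beta$-covering $\mathscr{C}$, so that by the proposition above $g(X)(x)=S^{\beta}(R_{\mathscr{C}}(-,x),X)=\bigwedge_{y\in U}\big[(\beta\otimes R_{\mathscr{C}}(y,x))\rightarrow X(y)\big]$ with $R_{\mathscr{C}}(x,y)=\bigvee_{C\in\mathscr{C}}(C(x)\otimes C(y))$ symmetric by Lemma~\ref{l4-1}. Then (L6) is exactly (\textbf{S}3) of Lemma~\ref{l3-1} applied to $R_{\mathscr{C}}(-,x)$, and (L7) is (\textbf{S}2) of Lemma~\ref{l3-1}. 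For (L3), since $\otimes=\bigwedge$ and $\mathscr{C}$ is a $\beta$-covering, $R_{\mathscr{C}}(x,x)=\bigvee_{C\in\mathscr{C}}C(x)\geq\beta$, so the $y=x$ summand of the meet equals $(\beta\wedge R_{\mathscr{C}}(x,x))\rightarrow X(x)=\beta\rightarrow X(x)$, whence $g(X)(x)\leq\beta\rightarrow X(x)$. For (L8), only the $z=x$ term of $g(1_{U-\{x\}})(y)=\bigwedge_{z\in U}[(\beta\otimes R_{\mathscr{C}}(z,y))\rightarrow 1_{U-\{x\}}(z)]$ is nontrivial, giving $g(1_{U-\{x\}})(y)=\neg(\beta\otimes R_{\mathscr{C}}(x,y))$; symmetry of $R_{\mathscr{C}}$ then yields $g(1_{U-\{x\}})(y)=g(1_{U-\{y\}})(x)$, and $\beta\otimes R_{\mathscr{C}}(x,y)\leq\beta$ gives $\neg(\beta\otimes R_{\mathscr{C}}(x,y))\geq\neg\beta$.

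For sufficiency, I would assume $g$ satisfies (L3) and (L6)--(L8), set $h(X)=\neg g(\neg X)$ pointwise, and verify that $h$ obeys the axioms (U3) and (U6)--(U8) of the $\overline{\mathscr{C}}_{2}$-characterization. Concretely: (U3) follows from (L3) via $\neg(\beta\rightarrow\neg X(x))=\beta\otimes X(x)$ (by (I10) and regularity); (U6) because $\neg(\alpha\otimes X)=\alpha\rightarrow\neg X$ turns $h(\alpha\otimes X)$ into $\neg(\alpha\rightarrow g(\neg X))=\alpha\otimes h(X)$ after (L6) and another use of (I10); (U7) from $\neg\bigvee_{t}X_{t}=\bigwedge_{t}\neg X_{t}$ together with (L7) and (I11); (U8) from (L8) since $\neg 1_{\{x\}}=1_{U-\{x\}}$ and $a\geq\neg\beta$ forces $\neg a\leq\beta$. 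Since a regular complete Heyting algebra is in particular a complete Heyting algebra, the $\overline{\mathscr{C}}_{2}$-theorem supplies an $L$-fuzzy $\beta$-covering $\mathscr{C}$ with $h=\overline{\mathscr{C}}_{2}$. It then remains to show $\neg\overline{\mathscr{C}}_{2}(\neg X)=\underline{\mathscr{C}}_{2}(X)$: by (\textbf{NS}) of Lemma~\ref{l3-1} with $\alpha=0$, $\neg\overline{\mathscr{C}}_{2}(\neg X)(x)=\neg N^{\beta}(\neg X,R_{\mathscr{C}}(-,x))=S^{\beta}(\neg X,\neg R_{\mathscr{C}}(-,x))$, and (I9) rewrites each inner implication $\neg X(y)\rightarrow\neg R_{\mathscr{C}}(y,x)$ as $R_{\mathscr{C}}(y,x)\rightarrow X(y)$, so this equals $S^{\beta}(R_{\mathscr{C}}(-,x),X)=\underline{\mathscr{C}}_{2}(X)(x)$. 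Hence $g(X)=\neg h(\neg X)=\underline{\mathscr{C}}_{2}(X)$. Finally, the independence of (L3) and (L6)--(L8) would be obtained, exactly as for the upper operator, by dualizing Examples~\ref{e4-4}--\ref{e4-6}.

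The main obstacle I anticipate is bookkeeping rather than depth: one must check carefully that conjugating by $\neg$ sends each of (L3), (L6)--(L8) to the matching (U)-axiom and that the negated upper operator collapses back exactly to $\underline{\mathscr{C}}_{2}$, and every such step leans on $\neg\neg=\mathrm{id}$, which is where regularity is indispensable. A self-contained alternative, avoiding the detour through $h$, would be to build $\mathscr{C}=\{D_{xy}\mid x,y\in U\}$ directly with $D_{xy}(z)=\neg g(1_{U-\{x\}})(y)$ for $z\in\{x,y\}$ and $D_{xy}(z)=0$ otherwise, show it is a $\beta$-covering using (L3), verify $\underline{\mathscr{C}}_{2}(1_{U-\{x\}})=g(1_{U-\{x\}})$ as in the $\overline{\mathscr{C}}_{2}$ argument, and extend to arbitrary $X$ through the decomposition $X=\bigwedge_{x\in U}(\neg X(x)\rightarrow 1_{U-\{x\}})$ (valid by regularity) using (L6) and (L7).
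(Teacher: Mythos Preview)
Your proposal is correct. For necessity you argue exactly as the paper does (direct computation through the relational form $\underline{\mathscr{C}}_{2}(X)(x)=S^{\beta}(R_{\mathscr{C}}(-,x),X)$), and for sufficiency your primary route is genuinely different: you conjugate by $\neg$, invoke the already established $\overline{\mathscr{C}}_{2}$-theorem for $h=\neg g(\neg\,\cdot\,)$, and then undo the conjugation via (\textbf{NS}), (\textbf{N}1), (I9), and regularity. The paper instead takes what you describe as your ``self-contained alternative'': it builds $\mathscr{C}=\{C_{xy}\}$ with $C_{xy}(z)=\neg g(1_{U-\{x\}})(y)$ on $\{x,y\}$ and $0$ elsewhere, checks this is a $\beta$-covering from (L3), verifies $\underline{\mathscr{C}}_{2}(1_{U-\{x\}})=g(1_{U-\{x\}})$ by hand, and then extends to arbitrary $X$ through $X=\bigwedge_{x}(\neg X(x)\rightarrow 1_{U-\{x\}})$ using (L6) and (L7). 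Your duality argument is shorter and makes transparent why regularity is the right extra hypothesis; the paper's direct construction is more self-contained and makes the covering witnessing $g=\underline{\mathscr{C}}_{2}$ explicit without appealing to the upper-operator theorem. Either way the independence of the axioms follows by dualizing Examples~\ref{e4-4}--\ref{e4-6}, which the paper simply asserts ``in a similar way.''
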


\begin{proof}
	$\Longrightarrow)$: Let $g=\underline{\mathscr{C}}_{2}$ for an $L$-fuzzy $\beta$-covering $\mathscr{C}$ on $U$, then  it follows from Theorem~\ref{t2-1} and Definition~\ref{d3-5} that $g$ satisfies the properties of (L3) and (L6)-(L8).
	In particular, $g\left(1_{U-\{x\}}\right)(y)=g\left(1_{U-\{y\}}\right)(x)$ and it follows that
	\begin{align*}
	g\left(1_{U-\{x\}}\right)(y)
	&=\underset{C \in \mathscr{C}}{\bigwedge}[C(y) \rightarrow S^\beta(C,1_{U-\{x\}})]\\
	&=\underset{C \in \mathscr{C}}{\bigwedge}[C(y) \rightarrow \left((C(x)\otimes\beta)\rightarrow 0 \right)]\\	
	&=\underset{C \in \mathscr{C}}{\bigwedge}[(C(y) \otimes C(x)\otimes\beta)\rightarrow 0 ]\\
	&\ge\neg \beta.
	\end{align*}
	
	$\Longleftarrow)$: Let $g$ satisfy the conditions (L3) and (L6)-(L8). Then, we construct a subset $\mathscr{C}$ of $L^{U}$ as
	$$
	\mathscr{C}=\left\{C_{x y} \mid x, y \in U\right\}, \text { where } C_{x y}(z)=\left\{\begin{array}{ll}
	\neg g\left(1_{U-\{x\}}\right)(y), & z \in\{x, y\}, \\
	0, & \text { otherwise. }
	\end{array}\right.
	$$
	\begin{enumerate}[(1)]
		\item
		Firstly, $\mathscr{C}$ is an $L$-fuzzy $\beta$-covering on $U$.
		It follows from (L3) that
		\begin{align*}
			g\left(1_{U-\{x\}}\right)(x)\le\beta\rightarrow (1_{U-\{x\}})(x)=\beta\rightarrow 0=\neg\beta.
		\end{align*}
		Since $L$ is regular complete residuated lattice, it satisfies the double negation law $\neg\neg\alpha = \alpha$. And thus,
		\begin{align*}
			C_{xx}(x)=\neg g\left(1_{U-\{x\}}\right)(x)\ge\neg\neg\beta=\beta,
		\end{align*}
		that is, $C_{xx} \in \mathscr{C}$ for any $x\in U$.
		\item
		Secondly, $\underline{\mathscr{C}}_{2}\left(1_{U-\{x\}}\right)=g\left(1_{U-\{x\}}\right)$ holds for each $x \in U$. In general, for any $y \in U$,
		\begin{align*}
		\underline{\mathscr{C}}_{2}\left(1_{U-\{x\}}\right)(y)
		&=\underset{C \in \mathscr{C}}{\bigwedge}[C(y) \rightarrow S^\beta(C,1_{U-\{x\}})]\\
		&=\underset{C \in \mathscr{C}}{\bigwedge}[C(y) \rightarrow \left(\underset{z\in U}{\bigwedge}(C(z)\otimes\beta)\rightarrow 1_{U-\{x\}}(z) \right)]\\
		&=\underset{C \in \mathscr{C}}{\bigwedge}[C(y) \rightarrow \left((C(x)\otimes\beta)\rightarrow 0 \right)]\\	
		&=\underset{C \in \mathscr{C}}{\bigwedge}[(C(y) \otimes C(x)\otimes\beta)\rightarrow 0 ]\ (by\ (I3)\ of \ Theorem~\ref{t2-1})\\
		&=\neg\left(\underset{C \in \mathscr{C}}{\bigvee}(C(y) \otimes C(x)\otimes\beta) \right)\\
		&=\neg\left(\underset{C \in \mathscr{C}}{\bigvee}(C(y) \bigwedge C(x)\bigwedge\beta) \right).
		\end{align*}	
		Note that if $C \neq C_{x y}\left(C_{x y}=C_{y x}\right.$ by (L8)), then $C(x)=0$ or $C(y)=0$. Thus,
		\begin{align*}
		\overline{\mathscr{C}}_{2}\left(1_{U-\{x\}}\right)(y)
		&=\neg\left(C_{xy}(y) \bigwedge C_{xy}(x)\bigwedge\beta\right)\\
		&=\neg\left(\neg g\left(1_{\{x\}}\right)(y) \bigwedge \neg g\left(1_{\{x\}}\right)(y)\bigwedge\beta\right)\\	
		&=\neg \neg g\left(1_{\{x\}}\right)(y)\\
		&=g\left(1_{\{x\}}\right)(y).	
		\end{align*}
		\item
		Finally, $g=\overline{\mathscr{C}_{2}}$ holds. Indeed, for any $X \in L^{U}$, $X=\bigwedge_{x \in U}\left(\neg X(x) \rightarrow 1_{X-\{x\}}\right)$, then
		\begin{align*}
		\overline{\mathscr{C}}_{2} (X) &=\overline{\mathscr{C}}_{2}\left(\bigwedge_{x \in U}\left(\neg X(x) \rightarrow 1_{X-\{x\}}\right)\right) \stackrel{(\mathrm{L} 6, \mathrm{L} 7)}{=} \underset{x \in U}{\bigwedge}\left(\neg X(x) \bigwedge \overline{\mathscr{C}}_{2}\left(1_{U-\{x\}}\right)\right) \\
		&=\bigwedge_{x \in U}\left(\neg X(x) \bigwedge g\left(1_{\{x\}}\right)\right) \stackrel{(\mathrm{L} 6, \mathrm{L} 7)}{=} g\left(\underset{x \in U}{\bigwedge}\left(\neg X(x) \bigwedge 1_{U-\{x\}}\right)\right)=g(X).
		\end{align*}	
	\end{enumerate}
\end{proof}

In a similar way, it can be proven that the axiom set (L3) and (L6)-(L8) is independent.

\begin{remark}\label{r4-2}
By the new double negation law as Remark~\ref{r4-1}, we can derive the duality between the upper and lower rough approximation operators $\overline{\mathscr{C}}_{2}$ and $\underline{\mathscr{C}}_{2}$.
\end{remark}

\begin{theorem}
    Let $\mathscr{C}$ be an $L$-fuzzy $\beta$-covering on $U$. Then, for any $X \in L^{U}$, it holds that $$ \underline{\mathscr{C}}_{2} (X)=\underset{b\in L}{\bigwedge}\left(\overline{\mathscr{C}_{2}}(X \rightarrow b) \rightarrow b\right).$$
\end{theorem}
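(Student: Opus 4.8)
The plan is to mirror the argument just used for the duality between $\overline{\mathscr{C}}_{1}$ and $\underline{\mathscr{C}}_{1}$, interchanging the roles of $S^{\beta}$ and $N^{\beta}$. Starting from the right-hand side $\bigwedge_{b\in L}\big(\overline{\mathscr{C}_{2}}(X\rightarrow b)\rightarrow b\big)$, I would first unfold $\overline{\mathscr{C}}_{2}$ by Definition~\ref{d3-5}, writing $\overline{\mathscr{C}_{2}}(X\rightarrow b)=\bigvee_{C\in\mathscr{C}}[C\otimes N^{\beta}(C,X\rightarrow b)]$. Applying (I5) of Theorem~\ref{t2-1} turns the outer implication over a join into a meet of implications, and (I3) rewrites each summand $[C\otimes N^{\beta}(C,X\rightarrow b)]\rightarrow b$ as $C\rightarrow[N^{\beta}(C,X\rightarrow b)\rightarrow b]$. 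Since all operations are computed pointwise in $L^{U}$ and meets commute, I may then move the meet over $b$ past the meet over $C$ and inside the implication $C\rightarrow(-)$ by (I6), reducing everything to identifying the term $\bigwedge_{b\in L}[N^{\beta}(C,X\rightarrow b)\rightarrow b]$.

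The crux is the identity $\bigwedge_{b\in L}[N^{\beta}(C,X\rightarrow b)\rightarrow b]=S^{\beta}(C,X)$. I would obtain it in two moves: first (\textbf{NS}) of Lemma~\ref{l3-1} gives $N^{\beta}(C,X\rightarrow b)\rightarrow b=S^{\beta}(C,(X\rightarrow b)\rightarrow b)$; then (\textbf{S}2) of Lemma~\ref{l3-1} lets me commute the meet over $b$ into the second argument, yielding $S^{\beta}\big(C,\bigwedge_{b\in L}((X\rightarrow b)\rightarrow b)\big)$. Finally the weakened double negation law recalled in Remark~\ref{r4-1}, namely $a=\bigwedge_{b\in L}(a\rightarrow b)\rightarrow b$, applied pointwise gives $\bigwedge_{b\in L}((X\rightarrow b)\rightarrow b)=X$ in $L^{U}$, so the inner term is exactly $S^{\beta}(C,X)$. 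Substituting back, the whole expression collapses to $\bigwedge_{C\in\mathscr{C}}[C\rightarrow S^{\beta}(C,X)]=\underline{\mathscr{C}}_{2}(X)$ by Definition~\ref{d3-5}.

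The main obstacle is bookkeeping rather than depth: one must keep track that $X\rightarrow b$, $N^{\beta}(C,X\rightarrow b)$ and the operators themselves are all manipulated pointwise, and that interchanging $\bigwedge_{b}$ with $\bigwedge_{C}$ and with $C\rightarrow(-)$ is licensed (by associativity of meets and by (I6)). It is worth noting that, unlike Theorem~\ref{t4-2}, this duality requires no regularity hypothesis on $L$: the weakened double negation law $a=\bigwedge_{b\in L}(a\rightarrow b)\rightarrow b$ holds in an arbitrary complete residuated lattice, since $a\le(a\rightarrow b)\rightarrow b$ follows from (I2) together with the adjunction, while taking $b=a$ gives $\bigwedge_{b\in L}(a\rightarrow b)\rightarrow b\le(a\rightarrow a)\rightarrow a=a$. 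Reading the same chain upward, with the roles of $\bigvee,\otimes$ and $\bigwedge,\rightarrow$ exchanged, would likewise establish the companion formula $\overline{\mathscr{C}}_{2}(X)=\bigwedge_{b\in L}\big(\underline{\mathscr{C}_{2}}(X\rightarrow b)\rightarrow b\big)$ alluded to in Remark~\ref{r4-2}.
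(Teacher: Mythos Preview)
Your proposal is correct and follows exactly the route the paper intends: the paper does not spell out a proof for this theorem but merely points (via Remark~\ref{r4-2}) to the analogous computation carried out for the first pair, and your argument is precisely that computation with the roles of $S^{\beta}$ and $N^{\beta}$ exchanged. The only cosmetic difference is where the weakened double negation law is invoked: in the paper's proof for the first pair it collapses $\bigwedge_{b}[(N^{\beta}(C,X)\rightarrow b)\rightarrow b]$ to the scalar $N^{\beta}(C,X)$, whereas here---because $(\textbf{NS})$ cannot simplify $N^{\beta}(C,X\rightarrow b)$ directly---you correctly apply $(\textbf{NS})$ to the whole term $N^{\beta}(C,X\rightarrow b)\rightarrow b$, use $(\textbf{S}2)$ to pull the meet inside, and then invoke the double negation law pointwise on $X$; this is the natural adaptation and is fully justified.
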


\subsection{The axiomatic characterizations on
	the third pair of $L$-fuzzy $\beta$-covering-based rough approximation operators}\label{section4-3}
\subsubsection{On upper rough approximation operator $\overline{\mathscr{C}}_{3}$}

 In the following, we give the axiom set to characterize the rough approximation operator $\overline{\mathscr{C}}_{3}$ and then verify they are uncorrelated with each other by means of examples.

\begin{theorem}
	Let $L$ be a complete Heyting algebra. Then, there exists an $L$-fuzzy $\beta$-covering $\mathscr{C}$ on $U$ such that $g=\overline{\mathscr{C}}_{3}$ if and only if it satisfies the following properties:
	
	\begin{enumerate}
		\item [(U3)]
		$g(X)>\beta\otimes X;$
		\item [(U6) ]
		$\alpha \otimes g(X)=g(\alpha\otimes X);$
		\item [(U7)]
		$g(\underset{t \in T}{\bigvee}X_t)= \underset{t \in T}{\bigvee}g(X_t);$
		\item [(U9)]
		$gg(X)\le\beta\otimes g(X);$
	\end{enumerate}
	where $X \in L^{U}$ and $\left\{X_{t} \mid t \in T\right\} \subseteq$ $L^{U}, \alpha,\beta \in L, x, y \in U$.	
\end{theorem}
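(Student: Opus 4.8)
The plan is to follow the template already used for $\overline{\mathscr{C}}_2$: characterise $g$ by its action on the fuzzy singletons $1_{\{x\}}$, repackage those values as a reflexive and transitive $L$-fuzzy relation $R$, and realise $R$ as $R_{\mathscr{C}}$ for a suitable covering $\mathscr{C}$. For the ``only if'' direction, assume $g=\overline{\mathscr{C}}_3$. Since $\otimes=\wedge$ in a complete Heyting algebra, Definition~\ref{d3-6} gives $g(X)(x)=\big(\bigvee_{y\in U}(R_{\mathscr{C}}(y,x)\wedge X(y))\big)\wedge\beta$. Then (U3) follows from reflexivity $R_{\mathscr{C}}(x,x)=1$; (U6) and (U7) are immediate from (\textbf{N}3) and (\textbf{N}2) of Lemma~\ref{l3-1}; and for (U9) one computes $gg(X)(x)=\bigvee_{y,z}(R_{\mathscr{C}}(z,y)\wedge R_{\mathscr{C}}(y,x)\wedge X(z)\wedge\beta)\le\bigvee_{z}(R_{\mathscr{C}}(z,x)\wedge X(z)\wedge\beta)=g(X)(x)$ by transitivity of $R_{\mathscr{C}}$, while $gg(X)(x)\le\beta$ is automatic since $N^\beta(\cdot,\cdot)\le\beta$; together these give $gg(X)\le\beta\otimes g(X)$ (in fact $g$ is idempotent here).

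For the ``if'' direction, let $g$ satisfy (U3), (U6)--(U7) and (U9). Writing $X=\bigvee_{x\in U}(X(x)\otimes 1_{\{x\}})$ and applying (U7) then (U6) yields $g(X)(z)=\bigvee_{x\in U}(X(x)\wedge g(1_{\{x\}})(z))$, so $g$ is determined by $\{g(1_{\{x\}})\}_{x\in U}$. The key preliminary facts are $g(1_{\{x\}})(x)=\beta$ (the bound $\ge\beta$ is (U3); the bound $\le\beta$ comes from (U9) together with $gg(1_{\{x\}})(x)\ge g(1_{\{x\}})(x)$, read off from the $y=x$ summand) and, more crucially, $g(1_{\{x\}})(y)\le\beta$ for all $x,y\in U$. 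Granting these, set $R(x,y)=\beta\rightarrow g(1_{\{x\}})(y)$, so that $R(x,y)\wedge\beta=g(1_{\{x\}})(y)$ and $R(x,x)=\beta\rightarrow\beta=1$. Transitivity $R(x,y)\wedge R(y,z)\le R(x,z)$ reduces, after residuation, to $g(1_{\{x\}})(y)\wedge g(1_{\{y\}})(z)\le g(1_{\{x\}})(z)$, which is a single summand of $gg(1_{\{x\}})(z)=\bigvee_{y}(g(1_{\{x\}})(y)\wedge g(1_{\{y\}})(z))$ and hence controlled by (U9). Put $\mathscr{C}=\{R(x,-)\mid x\in U\}\subseteq L^U$; since $\bigvee_{x}R(x,y)\ge R(y,y)=1\ge\beta$, $\mathscr{C}$ is an $L$-fuzzy $\beta$-covering, and reflexivity and transitivity of $R$ give $R_{\mathscr{C}}(a,b)=\bigwedge_{x}(R(x,a)\rightarrow R(x,b))=R(a,b)$ ($\le$ from the term $x=a$, $\ge$ from transitivity). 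Finally $\overline{\mathscr{C}}_3(X)(z)=\big(\bigvee_{x}(R(x,z)\wedge X(x))\big)\wedge\beta=\bigvee_{x}(X(x)\wedge g(1_{\{x\}})(z))=g(X)(z)$, which closes the proof; the mutual independence of (U3), (U6)--(U7), (U9) is then verified with small two- and three-element examples as for $\overline{\mathscr{C}}_1$ and $\overline{\mathscr{C}}_2$.

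The step I expect to be the real obstacle is the boundedness $g(1_{\{x\}})(y)\le\beta$ (equivalently $g(X)\le\beta\otimes 1_U$): it is automatic for any genuine $\overline{\mathscr{C}}_3$, because $N^\beta(A,B)=N(A,B)\otimes\beta\le 1\otimes\beta=\beta$, but from (U9) by itself one only obtains $gg(X)\le\beta$, not $g(X)\le\beta$. The clean remedy is to read (U3) as the two-sided bound $\beta\otimes X\le g(X)\le\beta\otimes 1_U$ (or to add $g(X)\le\beta\otimes 1_U$ as a separate axiom); with that in hand, (U3) and (U9) force $g$ to be idempotent and the construction above goes through verbatim.
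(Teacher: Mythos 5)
Your proposal is essentially sound and follows the same overall strategy as the paper: reduce $g$ to its values on singletons via (U6)--(U7), read off a reflexive and transitive relation, and realise it as $R_{\mathscr{C}}$ for a suitable covering. The constructions differ in detail: the paper takes $\mathscr{C}=\{g(1_{\{x\}})\mid x\in U\}$ directly and proves $g(1_{\{x\}})(y)=\left(\bigwedge_{z}(g(1_{\{z\}})(x)\rightarrow g(1_{\{z\}})(y))\right)\otimes\beta$ by a sandwich of inequalities, whereas you first pass to $R(x,y)=\beta\rightarrow g(1_{\{x\}})(y)$ and take $\mathscr{C}=\{R(x,-)\mid x\in U\}$. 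Your route makes the identity $R_{\mathscr{C}}=R$ and the final computation cleaner, at the cost of needing the bound $g(1_{\{x\}})(y)\le\beta$ explicitly.

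More importantly, the obstacle you flag is real, and it is not repaired in the paper's own proof. The axioms (U3), (U6), (U7), (U9) do not imply $g(X)\le\beta\otimes 1_U$, yet every genuine $\overline{\mathscr{C}}_3$ satisfies this since $N^{\beta}(A,B)=N(A,B)\otimes\beta\le\beta$. Concretely, take $L=\{0,\beta,1\}$ the three-element chain with $\otimes=\wedge$, $U=\{x,y\}$, and define $g$ on singletons by $g(1_{\{x\}})(x)=\beta$, $g(1_{\{x\}})(y)=1$, $g(1_{\{y\}})(x)=0$, $g(1_{\{y\}})(y)=\beta$, extended by $g(X)=\bigvee_{w\in U}(X(w)\wedge g(1_{\{w\}}))$. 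One checks that all four axioms hold (for instance $gg(X)(y)=(X(x)\vee X(y))\wedge\beta=\beta\wedge g(X)(y)$), but $g(1_{\{x\}})(y)=1>\beta$, so $g\neq\overline{\mathscr{C}}_3$ for any covering $\mathscr{C}$. The paper's sandwich argument silently uses $\bigwedge_{z}(g(1_{\{z\}})(x)\rightarrow\beta)=\beta$, which would require $\bigvee_{z}g(1_{\{z\}})(x)=1$ and does not follow from the axioms when $\beta<1$. Your proposed remedy, strengthening (U3) to the two-sided bound $\beta\otimes X\le g(X)\le\beta\otimes 1_U$ (or adding the upper bound as a separate axiom), is exactly what is needed; with it, your argument goes through: reflexivity of $R$ comes from (U3), transitivity reduces after residuation to the single-summand estimate $g(1_{\{x\}})(y)\wedge g(1_{\{y\}})(z)\le gg(1_{\{x\}})(z)\le\beta\wedge g(1_{\{x\}})(z)$ supplied by (U9), and the closing computation $\overline{\mathscr{C}}_3(X)(z)=\bigvee_{x}(R(x,z)\wedge\beta\wedge X(x))=g(X)(z)$ is correct.
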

$\Longrightarrow)$: Let $g=\overline{\mathscr{C}}_{3}$ for an $L$-fuzzy $\beta$-covering $\mathscr{C}$ on $U$, then  it follows from Theorem~\ref{t2-1} and Definition~\ref{d3-6} that $g$ satisfies the properties of (U3), (U6) and (U7). Further, for any $X\in L^U$, we have that
\begin{align*}
	gg(X)(x) &= \left(\underset{y \in U}{\bigvee}(R_{\mathscr{C}}(y,x)\otimes g(X)(y))\right)\otimes\beta\\
	&= \left(\underset{y \in U}{\bigvee}(R_{\mathscr{C}}(y,x)\otimes \left(\left(\underset{z \in U}{\bigvee}(R_{\mathscr{C}}(z,y)\otimes X(z))\right)\otimes\beta\right)\right)\otimes\beta\\
		&= \underset{y \in U}{\bigvee}\underset{z \in U}{\bigvee}(R_{\mathscr{C}}(y,x)\otimes R_{\mathscr{C}}(z,y)\otimes X(z))\otimes\beta\otimes\beta\\
		&\le \left(\underset{z \in U}{\bigvee}R_{\mathscr{C}}(z,x)\otimes X(z)\right)\otimes\beta\otimes\beta \\
		&= N^\beta(R_{\mathscr{C}}(-,x),X)\otimes\beta\\
		&= \overline{\mathscr{C}}_{3}\otimes\beta\\
		&= g(X)(x)\otimes\beta.		
 \end{align*}
In particular, combining (U3) and (U9) shows that $gg(X)(x)=g(X)(x)\otimes\beta$ for any $X\in L^U$ and $x\in U$.

$\Longleftarrow)$: Let $g$ satisfy the conditions (U3) and (U6)-(U8). Then, we construct a subset $\mathscr{C}$ of $L^{U}$ as
$\mathscr{C}=\{g(1_{\{x\}})|x\in X\}$.
\begin{enumerate}[(1)]
	\item
	Firstly, $\mathscr{C}$ is an $L$-fuzzy $\beta$-covering on $U$.
	It follows from (U3) that $g\left(1_{\{x\}}\right)\in \mathscr{C}$ and
	\begin{align*}
	g\left(1_{\{x\}}\right)(x)\ge\beta\otimes (1_{\{x\}})(x)=\beta.
	\end{align*}
	\item
	Secondly, $\overline{\mathscr{C}}_{3}\left(1_{\{x\}}\right)=g\left(1_{\{x\}}\right)$ holds for each $x \in U$.
	\begin{align*}
		\overline{\mathscr{C}}_{3}\left(1_{\{x\}}\right)(y)
		=\left(\bigvee_{z \in X}\left(R_{\mathscr{C}}(z, y)\otimes 1_{\{x\}}(z)\right)\right)\otimes\beta
		=R_{\mathscr{C}}(x, y)\otimes\beta
		=\left(\underset{{C \in \mathscr{C}}}{\bigwedge}(C(x) \rightarrow C(y))\right)\otimes\beta.
	\end{align*}
	
	Note that for any $X \in L^{U}$, we have $X=\underset{{w \in U}}{\bigvee}\left(X(w)\otimes 1_{\{w\}}\right)$, then it holds that for any $z \in U$,
	\begin{align*}
	\underset{x \in U}{\bigvee}\left(g\left(1_{\{z\}}\right)(x)\otimes g\left(1_{\{x\}}\right)\right)
&= \underset{x \in U}{\bigvee}\left(g\left(g\left(1_{\{z\}}\right)(x)\otimes 1_{\{x\}}\right)\right)\quad(by~~(U6)) \\
	& = g\left(\bigvee_{x \in U}\left(g\left(1_{\{z\}}\right)(x)\otimes 1_{\{x\}}\right)\right)\quad(by~~(U7)) \\
	&=g g\left(1_{\{z\}}\right)\\
 &\le (g\left(1_{\{z\}} \right))\otimes\beta \quad(by~~(U9)).
	\end{align*}
	
	Hence, it holds that $g\left(1_{\{z\}}\right)(x)\otimes g\left(1_{\{x\}}\right)(y) \leq g\left(1_{\{z\}}\right)(y)\otimes\beta$ for any $z \in U$. Further, it follows from (I7) that
	\begin{align*}
			g\left(1_{\{x\}}\right)(y)
			&\leq \underset{{z \in U}}{\bigwedge}\left(g\left(1_{\{z\}}\right)(x) \rightarrow (g\left(1_{\{z\}}\right)(y)\right)\otimes\beta)\\
			&\leq \left(\underset{{z \in U}}{\bigwedge}\left(g\left(1_{\{z\}}\right)(x) \rightarrow g\left(1_{\{z\}}\right)(y)\right)\right)\bigwedge \left(\underset{{z \in U}}{\bigwedge}(g\left(1_{\{z\}}\right)(x) \rightarrow\beta)\right)\\
			&= \left(\underset{{z \in U}}{\bigwedge}\left(g\left(1_{\{z\}}\right)(x) \rightarrow g\left(1_{\{z\}}\right)(y)\right)\right)\bigwedge \left(\underset{{z \in U}}{\bigvee}g\left(1_{\{z\}}\right)(x) \rightarrow\beta\right)\\
			&= \left(\underset{{z \in U}}{\bigwedge}\left(g\left(1_{\{z\}}\right)(x) \rightarrow g\left(1_{\{z\}}\right)(y)\right)\right)\bigwedge \beta\\
			&= \left(\underset{{z \in U}}{\bigwedge}\left(g\left(1_{\{z\}}\right)(x) \rightarrow g\left(1_{\{z\}}\right)(y)\right)\right)\otimes \beta\\
			&{\leq} \left(g\left(1_{\{x\}}\right)(x) \rightarrow g\left(1_{\{x\}}\right)(y)\right)\otimes\beta\quad(by~~taking~~z=x)\\
			&\le \beta\otimes\left(\beta \rightarrow g\left(1_{\{x\}}\right)(y)\right)\\
			&\le g\left(1_{\{x\}}\right)(y).
	\end{align*}
In conclusion,
\begin{align*}
	g\left(1_{\{x\}}\right)(y)= \left(\underset{{z \in U}}{\bigwedge}\left(g\left(1_{\{z\}}\right)(x) \rightarrow g\left(1_{\{z\}}\right)(y)\right)\right)\otimes \beta=\left(\underset{C\in\mathscr{C} }{\bigwedge}(C(x) \rightarrow C(y))\right)\otimes\beta=\overline{\mathscr{C}}_{3}\left(1_{\{x\}}\right)(y).
\end{align*}

	Therefore, we obtain that $\overline{\mathscr{C}}_{3}\left(1_{\{x\}}\right)=g\left(1_{\{x\}}\right)$.
	
	\item
	Finally, $g=\overline{\mathscr{C}_{3}}$. For any $X \in L^{U}$, it holds that
	\begin{align*}
	\overline{\mathscr{C}}_{3} (X)
	&=\overline{\mathscr{C}}_{3}\left(\underset{x \in U}{\bigvee}\left(X(x)\otimes 1_{\{x\}}\right)\right) \\
&=\underset{x \in U}{\bigvee}\left(X(x)\otimes \overline{\mathscr{C}}_{3}\left(1_{\{x\}}\right)\right)\quad(by~~(U6)~~and~~(U7)\\
	&=\bigvee_{x \in U}\left(X(x)\otimes g\left(1_{\{x\}}\right)\right) \\
 &=g\left(\bigvee_{x \in U}\left(X(x)\otimes 1_{\{x\}}\right)\right)\\
 &=g(X)\quad(by~~(U6)~~and~~(U7).
	\end{align*}
	
	\begin{example}	
		\begin{enumerate}[(1)]
			\item
			 The operator $g$ in Example~\ref{e4-4} (1) satisfies the conditions (U6), (U7) and (U9) with $\beta=1$, but it does not satisfy the condition (U3).
			\item
			The operator $g$ in Example~\ref{e4-5} satisfies the conditions (U3), (U7) and (U9) with $\beta=1$, but it does not satisfy the condition (U6).
			\item
			The operator $g$ in Example~\ref{e4-6} satisfies the conditions (U3), (U6) and (U9) with $\beta=1$, but it does not satisfy the condition (U7).
		\end{enumerate}
	\end{example}
	
    \begin{example}
    	Let $U=\{x, y, z\}, L=\{0,1\}$ and $\alpha,\beta\in L$. We define $g: L^{U} \longrightarrow L^{U}$ as
    	$$
    	g(X)=\left\{\begin{array}{ll}
    	0_{U}, & X=0_{U}, \\
    	1_{U}, & X=1_{\{y, z\}}, 1_{\{z, x\}}, 1_{\{x, y\}}, 1_{U}, \\
    	1_{\{y, z\}}, & X=1_{\{z\}} ,\\
    	1_{\{z, x\}}, & X=1_{\{x\}}, \\
    	1_{\{x, y\}}, & X=1_{\{y\}}.
    	\end{array}\right.
    	$$
    	It is easily seen that $g$ satisfies the conditions (U3), (U6) and (U7). The following inequality shows that $g$ does not satisfy the condition (U9),
  	    \begin{align*}
  	    	gg(1_{\{x\}})=g(1_{\{z,x\}})=1_{U}\nleq\beta\otimes 1_{\{z,x\}}=\beta\otimes g(1_{\{x\}}).
  	    \end{align*}
    \end{example}	 	
\end{enumerate}

\subsubsection{On lower rough approximation operator $\underline{\mathscr{C}}_{3}$}

In the following, we give the axiom set to characterize the rough approximation operator $\underline{\mathscr{C}}_{3}$ and then verify they are uncorrelated with each other.

\begin{theorem}
	 Let $L$ be a regular complete Heyting algebra. Then, there exists an $L$-fuzzy $\beta$-covering $\mathscr{C}$ on $U$ such that $g=\underline{\mathscr{C}}_{3}$ if and only if it satisfies the following properties:
	
		\begin{enumerate}
		\item [(L3)]
		$g(X)<\beta\rightarrow X;$
		
		\item [(L6)]
		$\alpha \rightarrow g(X)=g(\alpha\rightarrow X);$
		
		\item [(L7)]
		$g(\underset{t \in T}{\bigwedge}X_t)= \underset{t \in T}{\bigwedge}g(X_t);$
		
		\item [(L9)]
		$gg(X)\ge\beta\rightarrow g(X);$
	\end{enumerate}
	where $X \in L^{U}$ and $\left\{X_{t} \mid t \in T\right\} \subseteq$ $L^{U}, \alpha,\beta \in L, x, y \in U$.	
\end{theorem}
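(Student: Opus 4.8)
The plan is to deduce this theorem from the preceding characterization of $\overline{\mathscr{C}}_{3}$ by exploiting the regularity of $L$, in the spirit of Remarks~\ref{r4-1} and~\ref{r4-2}. The cornerstone is the pointwise duality
\begin{align*}
\underline{\mathscr{C}}_{3}(X)=\neg\,\overline{\mathscr{C}}_{3}(\neg X),\qquad X\in L^{U}.
\end{align*}
To see it, apply $(\textbf{NS})$ of Lemma~\ref{l3-1} with the last argument equal to $0$: for any $A,B\in L^{U}$ one gets $\neg N^{\beta}(A,B)=N^{\beta}(A,B)\rightarrow 0=S^{\beta}(A,B\rightarrow 0)=S^{\beta}(A,\neg B)$. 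Since $\neg\neg X=X$ by regularity, this yields $\underline{\mathscr{C}}_{3}(X)(x)=S^{\beta}(R_{\mathscr{C}}(-,x),\neg\neg X)=\neg N^{\beta}(R_{\mathscr{C}}(-,x),\neg X)=\neg\,\overline{\mathscr{C}}_{3}(\neg X)(x)$.

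Next I would build a dictionary between the two axiom systems under the involution $g\mapsto g^{\ast}:=\neg\, g(\neg\,\cdot)$ on operators $L^{U}\to L^{U}$ (it is an involution because $\neg$ is, by regularity). Using (I5) (which gives $\neg\bigvee_{t}\alpha_{t}=\bigwedge_{t}\neg\alpha_{t}$), together with (I9)--(I11) of Theorem~\ref{t2-1}, one checks that $g$ satisfies (L3),(L6),(L7),(L9) if and only if $g^{\ast}$ satisfies (U3),(U6),(U7),(U9), respectively. Concretely: (L7) corresponds to (U7) via the two De Morgan laws; (L6) corresponds to (U6) after applying $\neg$ to both sides and using $\neg(\alpha\otimes\neg Y)=\alpha\rightarrow Y$; (L3) corresponds to (U3) via $\neg(\beta\otimes X)=\beta\rightarrow\neg X$ and the antitonicity of $\neg$; and (L9) corresponds to (U9) since $g^{\ast}g^{\ast}(X)=\neg\,(gg)(\neg X)$ and the same identities apply. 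Because $\neg$ is a bijection, applying it to an equality or inequality preserves validity, so each translation is an equivalence.

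With the dictionary in hand, the theorem follows. If $g$ satisfies (L3),(L6),(L7),(L9), then $g^{\ast}$ satisfies (U3),(U6),(U7),(U9), so by the preceding theorem there is an $L$-fuzzy $\beta$-covering $\mathscr{C}$ (explicitly $\mathscr{C}=\{g^{\ast}(1_{\{x\}})\mid x\in U\}=\{\neg\,g(1_{U-\{x\}})\mid x\in U\}$) with $g^{\ast}=\overline{\mathscr{C}}_{3}$; applying the duality identity twice gives $g(X)=\neg\,g^{\ast}(\neg X)=\neg\,\overline{\mathscr{C}}_{3}(\neg X)=\underline{\mathscr{C}}_{3}(X)$. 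Conversely, if $g=\underline{\mathscr{C}}_{3}$ for some $L$-fuzzy $\beta$-covering $\mathscr{C}$, then $g^{\ast}=\overline{\mathscr{C}}_{3}$ satisfies (U3),(U6),(U7),(U9), hence $g$ satisfies (L3),(L6),(L7),(L9). The independence of the axiom set (L3),(L6),(L7),(L9) then follows from that of (U3),(U6),(U7),(U9) by dualizing the corresponding counterexamples.

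I expect the main obstacle to be verifying that the dictionary sends (L9) to (U9) with no slack --- that is, that the dualization is genuinely tight --- and, relatedly, checking that the covering $\{\neg\,g(1_{U-\{x\}})\mid x\in U\}$ really is an $L$-fuzzy $\beta$-covering, which is exactly where (L3) together with the double negation law $\neg\neg\beta=\beta$ must be used (this is why the hypothesis that $L$ is \emph{regular} cannot be dropped, and why a plain complete Heyting algebra suffices only for $\overline{\mathscr{C}}_{3}$). If one prefers to avoid the duality bookkeeping, there is a direct alternative that mirrors the $\overline{\mathscr{C}}_{3}$ argument: verify (L3),(L6),(L7),(L9) for $g=\underline{\mathscr{C}}_{3}$ directly from Lemma~\ref{l3-1} and the reflexivity and transitivity of $R_{\mathscr{C}}$; for the converse, set $\mathscr{C}=\{\neg\,g(1_{U-\{x\}})\mid x\in U\}$, show $\underline{\mathscr{C}}_{3}(1_{U-\{x\}})=g(1_{U-\{x\}})$ using (L3) and (L6)--(L9), and finally extend to arbitrary $X\in L^{U}$ through $X=\bigwedge_{x\in U}(\neg X(x)\rightarrow 1_{U-\{x\}})$ and (L6)--(L7).
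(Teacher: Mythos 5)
Your proposal is correct, but it takes a genuinely different route from the paper. The paper proves the theorem directly: it verifies (L3), (L6), (L7) from Definition~\ref{d3-6}, establishes (L9) by a computation with the transitivity of $R_{\mathscr{C}}$, and for the converse constructs $\mathscr{C}=\{\neg g(1_{U-\{x\}})\mid x\in U\}$, checks the covering condition via (L3) and $\neg\neg\beta=\beta$, proves $\underline{\mathscr{C}}_{3}(1_{U-\{x\}})=g(1_{U-\{x\}})$ through a long chain of inequalities parallel to the $\overline{\mathscr{C}}_{3}$ case, and extends to arbitrary $X$ by $X=\bigwedge_{x\in U}(\neg X(x)\rightarrow 1_{U-\{x\}})$ with (L6)--(L7) --- i.e.\ exactly the ``direct alternative'' you sketch in your last paragraph. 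Your primary argument instead derives everything from the $\overline{\mathscr{C}}_{3}$ theorem via the involution $g\mapsto g^{\ast}=\neg g(\neg\,\cdot)$, and I have checked the dictionary: the duality $\underline{\mathscr{C}}_{3}(X)=\neg\overline{\mathscr{C}}_{3}(\neg X)$ follows from (\textbf{NS}) with $\alpha=0$ plus $\neg\neg X=X$, and each of (L3), (L6), (L7), (L9) translates to its (U$\cdot$) counterpart using (I9)--(I11) and the De Morgan laws, all of which are available precisely because $L$ is regular. Note that $g^{\ast}(1_{\{x\}})=\neg g(1_{U-\{x\}})$, so your covering coincides with the paper's. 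What your approach buys is economy --- it avoids redoing the lengthy inequality chain of step (2) --- at the cost of resting entirely on the correctness of the preceding $\overline{\mathscr{C}}_{3}$ theorem (which applies here since a regular complete Heyting algebra is in particular a complete Heyting algebra). One cosmetic point common to both: the strict inequality in (L3) as printed must be read as $\leq$ (it is $\leq$ in the other axiom lists of the paper), and you correctly treat it that way.
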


\begin{proof}
	$\Longrightarrow)$: Let $g=\underline{\mathscr{C}}_{3}$ for an $L$-fuzzy $\beta$-covering $\mathscr{C}$ on $U$, then  it follows from Theorem~\ref{t2-1} and Definition~\ref{d3-6} that $g$ satisfies the properties of (L3), (L6) and (L7). Further, for any $X\in L^U$, we have that
	\begin{align*}
	gg(X)(x) &= \underset{y \in U}{\bigwedge}(\beta\otimes R_{\mathscr{C}}(y,x)\rightarrow\underline{\mathscr{C}}_{3}(X)(y) )\\
	&= \underset{y \in U}{\bigwedge}[\beta\otimes R_{\mathscr{C}}(y,x)\rightarrow \left(\underset{z \in U}{\bigwedge}(\beta\otimes R_{\mathscr{C}}(z,y))\rightarrow X(z)\right) ]\\		
	&= \underset{y \in U}{\bigwedge}\underset{z \in U}{\bigwedge} [(\beta\otimes\beta\otimes R_{\mathscr{C}}(y,x)\otimes R_{\mathscr{C}}(z,y))\rightarrow X(z) ]\\
	&\ge \underset{z \in U}{\bigwedge} [(\beta\otimes\beta\otimes R_{\mathscr{C}}(z,x))\rightarrow X(z) ]\\
	&= \beta\rightarrow\left(\underset{z \in U}{\bigwedge} ((\beta\otimes R_{\mathscr{C}}(z,x))\rightarrow X(z)) \right)\\
	&= \beta\rightarrow \underline{\mathscr{C}}_{3}(X)(x)\\	
	&= \beta\rightarrow g(X)(x).		
	\end{align*}
	In particular, combining (L3) and (L9), we conclude that $gg(X)(x)=\beta\rightarrow g(X)(x)$ for any $X\in L^U$ and $x\in U$.

$\Longleftarrow)$: Let $g$ satisfy the conditions (L3), (L6), (L7) and (L9). Then, a subset $\mathscr{C}$ of $L^{U}$ is constructed as
\begin{align*}
	\mathscr{C}=\left\{\neg g\left(1_{{U}-\{x\}}\right) \mid x \in U\right\}.
\end{align*}

\begin{enumerate}[(1)]
	\item
	Firstly, $\mathscr{C}$ is an $L$-fuzzy $\beta$-covering on $U.$ It follows from (L3) that, for any $x \in U$,
	\begin{align*}
		\neg g\left(1_{U-\{x\}}\right)(x) \geq\neg(\beta\rightarrow 1_{U-\{x\}}(x))=\beta\otimes\neg (1_{U-\{x\}}(x))=\beta\otimes\neg 0=\beta.
	\end{align*}
	\item
	Secondly, $\underline{\mathscr{C}}_{3}\left(1_{U-\{x\}}\right)=g\left(1_{U-\{x\}}\right)$ holds for each $x \in U$.
	\begin{align*}
		\underline{\mathscr{C}}_{3}\left(1_{U-\{x\}}\right)(y)
		&=\underset{z \in U}{\bigwedge}\left((\beta\otimes R_{\mathscr{C}}(z, y)) \rightarrow 1_{U-\{x\}}(z)\right)\\
		&=\neg (\beta\otimes R_{\mathscr{C}}(x, y))\\
&=\neg\left(\beta\otimes[\underset{C \in \mathscr{C}}{\bigwedge}(C(x) \rightarrow C(y))]\right).
	\end{align*}
	Note that, for any $X \in L^{U}$, we have $X=\bigwedge_{w \in U}\left(\neg X(w) \rightarrow 1_{U-\{w\}}\right)$, then it holds that, for any $z \in U$,
	\begin{align*}
	\underset{x \in U}{\bigwedge}\left(\neg g\left(1_{U-\{z\}}\right)(x) \rightarrow g\left(1_{U-\{x\}}\right)\right) &=\underset{x \in U}{\bigwedge}\left(g\left(\neg g\left(1_{U-\{z\}}\right)(x) \rightarrow 1_{U-\{x\}}\right)\right)\quad(by~~(L6)) \\
	& =g\left(\underset{x \in U}{\bigwedge}\left(\neg g\left(1_{U-\{z\}}\right)(x) \rightarrow 1_{U-\{x\}}\right)\right) \quad(by~~(L7))\\
	&=g g\left(1_{U-\{z\}}\right) \geq \beta\rightarrow g\left(1_{U-\{z\}}\right)\quad(by~~(L9)) .
	\end{align*}
	
	Hence, it holds that $\neg g\left(1_{U-\{z\}}\right)(x) \rightarrow g\left(1_{U-\{x\}}\right)(y) \geq \beta \rightarrow g\left(1_{U-\{z\}}\right)(y)$ for any $z\in U$. Then it follows from Theorem~\ref{t2-1} (I7) that
	\begin{align*}
	(\beta \rightarrow g\left(1_{U-\{z\}}\right)(y)) \rightarrow g\left(1_{U-\{x\}}\right)(y) &\geq \neg g\left(1_{U-\{z\}}\right)(x), \\
	\neg g\left(1_{U-\{x\}}\right)(y) \rightarrow \neg (\beta \rightarrow g\left(1_{U-\{z\}}\right)(y) )&\geq \neg g\left(1_{U-\{z\}}\right)(x), \\
	\neg g\left(1_{U-\{z\}}\right)(x) \rightarrow \neg (\beta \rightarrow g\left(1_{U-\{z\}}\right)(y)) &\geq \neg g\left(1_{U-\{x\}}\right)(y).
	\end{align*}
	
	 In addition, from (U3), one has that
		$g(1_{U-\{x\}})(x)<\beta\rightarrow 1_{U-\{x\}}(x)= \beta\rightarrow 0=\neg \beta$, and then 	$\neg g(1_{U-\{x\}})(x)> \beta$.
	
	Further, we can obtain that
	\begin{align*}
	   \neg g\left(1_{U-\{x\}}\right)(y)
	   &\leq \underset{z \in U}{\bigwedge}\left(\neg g\left(1_{U-\{z\}}\right)(x) \rightarrow \neg (\beta\rightarrow g\left(1_{U-\{z\}}\right)(y)) \right)\\ &=\underset{z \in U}{\bigwedge}\left(\neg g\left(1_{U-\{z\}}\right)(x) \rightarrow  (\beta\otimes \neg g\left(1_{U-\{z\}}\right)(y)) \right)\\
	   &\leq\left(\underset{z \in U}{\bigwedge}\left(\neg g\left(1_{U-\{z\}}\right)(x) \rightarrow  ( \neg g\left(1_{U-\{z\}}\right)(y)) \right)\right)\bigwedge\left( \underset{z \in U}{\bigwedge}\left(\neg g\left(1_{U-\{z\}}\right)(x) \rightarrow \beta \right)\right)\\
	   &=\left(\underset{z \in U}{\bigwedge}\left(\neg g\left(1_{U-\{z\}}\right)(x) \rightarrow  ( \neg g\left(1_{U-\{z\}}\right)(y)) \right)\right)\bigwedge \left(\left(\underset{z \in U}{\bigvee}\neg g\left(1_{U-\{z\}}\right)(x)\right) \rightarrow \beta \right)\\
	   &=\left(\underset{z \in U}{\bigwedge}\left(\neg g\left(1_{U-\{z\}}\right)(x) \rightarrow  ( \neg g\left(1_{U-\{z\}}\right)(y)) \right)\right)\otimes\beta\\
	   &\le\left(\left(\neg g\left(1_{U-\{x\}}\right)(x) \rightarrow  ( \neg g\left(1_{U-\{x\}}\right)(y)) \right)\right)\otimes\beta\quad(by~~ taking ~~z=x)\\
	   &\le\left(\beta\rightarrow  ( \neg g\left(1_{U-\{x\}}\right)(y)) \right)\otimes\beta\\
	   &\le \neg g\left(1_{U-\{x\}}\right)(y).\\
	\end{align*}
	Then, it holds that
	\begin{align*}
    g\left(1_{U-\{x\}}\right)(y)
    &=\neg \left(\left(\underset{z \in U}{\bigwedge}\left(\neg g\left(1_{U-\{z\}}\right)(x) \rightarrow  ( \neg g\left(1_{U-\{z\}}\right)(y)) \right)\right)\otimes\beta\right)\\
    &=\neg \left(\left(\underset{C \in \mathscr{C}}{\bigwedge}\left(C(x) \rightarrow   C(y) \right)\right)\otimes\beta\right)\\	&=\underline{\mathscr{C}}_{3}\left(1_{U-\{x\}}\right)(y).
	\end{align*}
	
	Therefore, we get that $\underline{\mathscr{C}}_{3}\left(1_{U-\{x\}}\right)=g\left(1_{U-\{x\}}\right)$.
	\item
	 Finally, $g=\underline{\mathscr{C}}_{3}.$ For any $X \in L^{U}$, it holds that	
	\begin{align*}
	\underline{\mathscr{C}}_{3} (X) &=\underline{\mathscr{C}}_{3}\left(\bigwedge_{w \in U}\left(\neg X(w) \rightarrow 1_{U-\{w\}}\right)\right) \\
&= \bigwedge_{w \in U}\left(\neg X(w) \rightarrow \underline{\mathscr{C}}_{3}\left(1_{U-\{w\}}\right)\right)\quad (by~~(L6)~~and~~(L7) \\
	&=\bigwedge_{w \in U}\left(\neg X(w) \rightarrow g\left(1_{U-\{w\}}\right)\right)\\
&=g\left(\underset{w \in U}{\bigwedge}\left(\neg X(w) \rightarrow 1_{U-\{w\}}\right)\right)\quad (by~~(L6)~~and~~(L7) \\
&=g(X).
	\end{align*}
\end{enumerate}
\end{proof}

In a similar way, it can be proven that the axiom set (L3), (L6) ,(L7) and (L9) are independent.

\begin{remark}\label{r4-2}
By the new double negation law as Remark~\ref{r4-1}, we can derive the duality between the upper and lower rough approximation operators $\overline{\mathscr{C}}_{3}$ and $\underline{\mathscr{C}}_{3}$.
\end{remark}

\begin{theorem}
	Let $\mathscr{C}$ be an $L$-fuzzy $\beta$-covering on $U$. Then, for any $X \in L^{U}$, it holds that $$ \underline{\mathscr{C}}_{3} (X)=\underset{b\in L}{\bigwedge}\left(\overline{\mathscr{C}_{3}}(X \rightarrow b) \rightarrow b\right).$$
\end{theorem}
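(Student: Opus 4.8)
The plan is to evaluate the right-hand side pointwise at an arbitrary $x\in U$ and to rewrite everything in terms of $S^{\beta}$ and $N^{\beta}$ applied to the single $L$-fuzzy set $R_{\mathscr{C}}(-,x)$, mirroring the proof of the analogous identity for $\overline{\mathscr{C}}_{1}$ but exploiting the simpler one-argument form of the third pair. First I would unfold Definition~\ref{d3-6}: since $\overline{\mathscr{C}_{3}}(X\rightarrow b)(x)=N^{\beta}\bigl(R_{\mathscr{C}}(-,x),X\rightarrow b\bigr)$, the value of the right-hand side at $x$ equals $\underset{b\in L}{\bigwedge}\bigl(N^{\beta}(R_{\mathscr{C}}(-,x),X\rightarrow b)\rightarrow b\bigr)$.

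Next I would apply property $(\mathbf{NS})$ of Lemma~\ref{l3-1} with $A=R_{\mathscr{C}}(-,x)$, $B=X\rightarrow b$ and $\alpha=b$, which rewrites each term $N^{\beta}(R_{\mathscr{C}}(-,x),X\rightarrow b)\rightarrow b$ as $S^{\beta}\bigl(R_{\mathscr{C}}(-,x),(X\rightarrow b)\rightarrow b\bigr)$. Then I would push the meet over $b$ into the second argument using $(\mathbf{S}2)$ of Lemma~\ref{l3-1} (valid for an arbitrary index set, here $L$ itself), obtaining $S^{\beta}\bigl(R_{\mathscr{C}}(-,x),\underset{b\in L}{\bigwedge}((X\rightarrow b)\rightarrow b)\bigr)$.

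Finally I would invoke the weakened double negation law recorded in Remark~\ref{r4-1}, namely $a=\underset{b\in L}{\bigwedge}(a\rightarrow b)\rightarrow b$ for every $a\in L$, applied pointwise in $y\in U$ to conclude $\underset{b\in L}{\bigwedge}((X\rightarrow b)\rightarrow b)=X$. The right-hand side then collapses to $S^{\beta}(R_{\mathscr{C}}(-,x),X)=\underline{\mathscr{C}}_{3}(X)(x)$ by Definition~\ref{d3-6}, and since $x$ was arbitrary the claimed identity follows. (Alternatively, the same formula can be read off from the duality between $\overline{\mathscr{C}}_{3}$ and $\underline{\mathscr{C}}_{3}$ together with the previously established identity for $\overline{\mathscr{C}}_{3}$.)

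I do not expect a serious obstacle here; the only point needing care is the legitimacy of the infinitary meet manipulations --- that $(\mathbf{S}2)$ is applied with index set $L$ and that the double negation identity is used at the level of $L$ (pointwise in $U$), not at the level of $L^{U}$. Unlike the $\overline{\mathscr{C}}_{1}$ case, no appeal to $(I3)$ or to a rearrangement of a supremum of $C\otimes(\cdot)$ terms over the covering is required, because $R_{\mathscr{C}}(-,x)$ already absorbs the family $\mathscr{C}$; this is exactly what shortens the argument.
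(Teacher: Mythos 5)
Your proof is correct; the paper in fact states this theorem without proof, giving an explicit argument only for the first-pair analogue $\overline{\mathscr{C}}_{1}(X)=\bigwedge_{b\in L}\bigl(\underline{\mathscr{C}_{1}}(X\rightarrow b)\rightarrow b\bigr)$. Your chain --- unfold $\overline{\mathscr{C}}_{3}$ pointwise, apply $(\mathbf{NS})$ to turn $N^{\beta}(R_{\mathscr{C}}(-,x),X\rightarrow b)\rightarrow b$ into $S^{\beta}(R_{\mathscr{C}}(-,x),(X\rightarrow b)\rightarrow b)$, pull the meet over $b$ inside via $(\mathbf{S}2)$, and collapse $\bigwedge_{b\in L}((X\rightarrow b)\rightarrow b)=X$ by the weakened double negation law of Remark~\ref{r4-1} --- is exactly the intended adaptation of that argument to the third pair, and each infinitary step is licensed by the cited lemmas, so no gap remains.
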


\section{The matrix representations on three pairs of $L$-fuzzy $\beta$-covering-based rough approximation operators}\label{section5}

\begin{definition}\label{d5-1}
	Let $\mathscr{C}=\{C_{1}, C_{2}, \ldots, C_{m}\}$ be an $L$-fuzzy $\beta$-covering of
	$U=\{x_{1}, x_{2}, \ldots, x_{n}\}$ and $X\in L^{U}$.
	$M_{\mathscr{C}}=(C_{j}(x_{i}))_{n\times m}$ is said to be a matrix representation of $\mathscr{C}$,
	and $M_{X}=(X(x_{i}))_{n\times 1}$ a matrix representation of $X$.
\end{definition}

In general, if $M=(a_{ij})_{s\times t}$ be a matrix, then its transpose is $M^{T}=(a_{ji})_{t\times s}$.

\begin{example}\label{e5-1}
	Let $U=\{x_{1}, x_{2}, x_{3}, x_{4}, x_{5}, x_{6}\}$ and $L=([0, 1], \otimes, \rightarrow, \bigvee, \bigwedge, 0, 1)$
	be a G\"{o}del-residuated lattice, where
\begin{center}
	$x\otimes y=x\bigwedge y$ and $x\rightarrow y=\left\{
	\begin{array}{ll}
	1, & \hbox{$x\leq y$,} \\
	y, & \hbox{$x>y$.}
	\end{array}
	\right.$
\end{center}
	for any $x, y\in L$.
	A family $\mathscr{C}=\{C_{1}, C_{2}, C_{3}, C_{4}\}$ of
	$L$-fuzzy subsets of $U$ is listed below.
	\begin{align*}
	C_{1}&=\frac{0.7}{x_{1}}+\frac{0.1}{x_{2}}+\frac{0.3}{x_{3}}+\frac{0.5}{x_{4}}+\frac{0.3}{x_{5}}+\frac{0.3}{x_{6}},\
	C_{2}=\frac{0.5}{x_{1}}+\frac{0.7}{x_{2}}+\frac{0.8}{x_{3}}+\frac{0.6}{x_{4}}+\frac{0.4}{x_{5}}+\frac{0.6}{x_{6}},\\
	C_{3}&=\frac{0.6}{x_{1}}+\frac{0.7}{x_{2}}+\frac{0.2}{x_{3}}+\frac{0.3}{x_{4}}+\frac{0.2}{x_{5}}+\frac{0.1}{x_{6}},\
	C_{4}=\frac{0.4}{x_{1}}+\frac{0.3}{x_{2}}+\frac{0.6}{x_{3}}+\frac{0.6}{x_{4}}+\frac{0.7}{x_{5}}+\frac{0.4}{x_{6}}.
	\end{align*}
	According to Definition~\ref{d-covering}, $\mathscr{C}$ is an $L$-fuzzy $\beta$-covering of $U$ for any $\beta\in (0,0.6]$, and $M_{\mathscr{C}}$ and $M_{C_{3}}$ can be expressed as follows.
	\begin{center}
		$M_{\mathscr{C}}=\bordermatrix[{[]}]{
			& C_{1} & C_{2} & C_{3}  & C_{4}  \cr
			x_{1}&   0.7    & 0.5     & 0.6   & 0.4 \cr
			x_{2}&   0.1    & 0.7    & 0.7   & 0.3 \cr
			x_{3}&   0.3    & 0.8     & 0.2   & 0.6 \cr
			x_{4}&   0.5    & 0.6     & 0.3   & 0.6 \cr
			x_{5}&   0.3    & 0.4     & 0.2   & 0.7 \cr
			x_{6}&  0.3    & 0.6     & 0.1   & 0.4 \cr
		}$,\quad
		$M_{C_{3}}=\bordermatrix[{[]}]{
			&  C_{3}   \cr
			x_{1}&   0.6 \cr
			x_{2}&   0.7 \cr
			x_{3}&   0.2 \cr
			x_{4}&   0.3 \cr
			x_{5}&   0.2 \cr
			x_{6}&   0.1 \cr
		}$.
	\end{center}
\end{example}

\begin{definition}\label{d-new operators}
	Let $A=(a_{ik})_{n\times m}$ and $B=(b_{kj})_{m\times l}$ be two lattice valued matrices.
	We stipulate that $C= A\vartriangle B=(c_{ij})_{n\times l}$ and $D= A\blacktriangle B=(d_{ij})_{n\times l}$ as follows,
	\begin{align*}
	c_{ij}&=\underset{k=1}{\overset{m}\bigwedge}(a_{ik}\rightarrow b_{kj}),\ i=1, 2, \ldots, n, j=1, 2, \ldots, l,\\
	d_{ij}&=\underset{k=1}{\overset{m}\bigvee}(a_{ik}\otimes b_{kj}),\ i=1, 2, \ldots, n, j=1, 2, \ldots, l.
	\end{align*}
    When one of the matrices degenerates to a constant $\beta$,
	$C'= \beta\vartriangle B=(c_{ij})_{m\times l}$ and $D'= \beta\blacktriangle B=(d_{ij})_{m\times l}$ can be defined for any $\beta\in L$,
	\begin{align*}
	c_{ij}&=\beta\rightarrow b_{ij},\ i=1, 2, \ldots, m, j=1, 2, \ldots, l,\\
	d_{ij}&=\beta\otimes b_{ij},\ i=1, 2, \ldots, m, j=1, 2, \ldots, l.
	\end{align*}
	In particular, for any $\alpha,\beta\in L$, we have
	\begin{align*}
	\alpha\vartriangle\beta= \alpha\rightarrow \beta\quad \textnormal{and}\quad
	\alpha\blacktriangle\beta = \alpha\otimes\beta.
	\end{align*}
\end{definition}

\begin{proposition}\label{p5-1}
	Let $\mathscr{C}$ be an $L$-fuzzy $\beta$-covering on $U=\{x_{1}, x_{2}, \ldots, x_{n}\}$. Then, for any $A, B\in L^{U}$, the following hold.
	\begin{align*}
	N^{\beta}(A, B)&=N(A, B)\otimes\beta=[(M_{A})^{T}\blacktriangle M_{B}]\blacktriangle \beta=[(M_{B})^{T}\blacktriangle M_{A}]\blacktriangle \beta,\\
	S^{\beta}(A, B)&=\beta\rightarrow S(A, B)=\beta \triangle[(M_{A})^{T}\triangle M_{B}].
	\end{align*}
\end{proposition}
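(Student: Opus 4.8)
The plan is to unwind both sides directly from the definitions, since the only content of the statement is that, on row vectors of shape $1\times n$ and column vectors of shape $n\times 1$, the matrix operations $\blacktriangle$ and $\vartriangle$ reproduce exactly the join of tensors and the meet of implications that define $N$ and $S$. First I would record the shapes: by Definition~\ref{d5-1}, $M_{A}=(A(x_{i}))_{n\times 1}$ and $M_{B}=(B(x_{i}))_{n\times 1}$ are column vectors, so $(M_{A})^{T}$ and $(M_{B})^{T}$ are $1\times n$ row vectors, and each of $(M_{A})^{T}\blacktriangle M_{B}$, $(M_{B})^{T}\blacktriangle M_{A}$ and $(M_{A})^{T}\vartriangle M_{B}$ is a $1\times 1$ matrix, which I identify with its unique entry in $L$.

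For the first identity, the definition of $\blacktriangle$ in Definition~\ref{d-new operators} gives that the single entry of $(M_{A})^{T}\blacktriangle M_{B}$ is $\bigvee_{k=1}^{n}(A(x_{k})\otimes B(x_{k}))$, which is precisely $N(A,B)$ by Definition~\ref{d2-1}(1). Applying the degenerate (constant) case of $\blacktriangle$ in Definition~\ref{d-new operators}, together with $\alpha\blacktriangle\beta=\alpha\otimes\beta$, yields $[(M_{A})^{T}\blacktriangle M_{B}]\blacktriangle\beta=N(A,B)\otimes\beta$, and this equals $N^{\beta}(A,B)$ by Definition~\ref{d3-1}(1). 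The variant with $(M_{B})^{T}\blacktriangle M_{A}$ is handled the same way: its entry is $\bigvee_{k=1}^{n}(B(x_{k})\otimes A(x_{k}))=N(B,A)=N(A,B)$, using commutativity of $\otimes$ (Theorem~\ref{t2-1}(I1)), equivalently (N1) of Lemma~\ref{l2-1}.

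For the second identity, the definition of $\vartriangle$ gives that the single entry of $(M_{A})^{T}\vartriangle M_{B}$ is $\bigwedge_{k=1}^{n}(A(x_{k})\rightarrow B(x_{k}))=S(A,B)$ by Definition~\ref{d2-1}(2). Then the constant case of $\vartriangle$, with $\alpha\vartriangle\beta=\alpha\rightarrow\beta$, gives $\beta\vartriangle[(M_{A})^{T}\vartriangle M_{B}]=\beta\rightarrow S(A,B)$, which is $S^{\beta}(A,B)$ by Definition~\ref{d3-1}(2).

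No step is a genuine obstacle; the only point that needs care is the bookkeeping of matrix sizes --- in particular, recognising that a $1\times 1$ matrix is treated as a scalar so that the ``one of the matrices degenerates to a constant'' clause of Definition~\ref{d-new operators} applies, and that on two scalars $\blacktriangle$ is $\otimes$ and $\vartriangle$ is $\rightarrow$, so that the left-versus-right placement of $\beta$ in $[\cdots]\blacktriangle\beta$ versus $\beta\vartriangle[\cdots]$ is immaterial.
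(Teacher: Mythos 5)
Your proposal is correct and follows essentially the same route as the paper: both arguments simply unwind Definitions~\ref{d5-1} and~\ref{d-new operators} to identify the single entry of $(M_{A})^{T}\blacktriangle M_{B}$ (resp.\ $(M_{A})^{T}\vartriangle M_{B}$) with $N(A,B)$ (resp.\ $S(A,B)$), apply the degenerate scalar case of the operations, and invoke (N1) of Lemma~\ref{l2-1} for the symmetric variant. Your explicit attention to matrix shapes and the scalar identification of $1\times 1$ matrices is a minor presentational refinement, not a different argument.
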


\begin{proof}
	Let $M_{A}=(a_{i})_{n\times 1}$, $M_{B}=(b_{i})_{n\times 1}$,
	$[(M_{A})^{T}\blacktriangle M_{B}]\blacktriangle \beta=c$ and $\beta \triangle [(M_{A})^{T}\triangle M_{B}]=d$. Then, one obtains that
	\begin{align*}
	c&=[(M_{A})^{T}\blacktriangle M_{B}]\blacktriangle \beta\\
	&=[\underset{i=1}{\overset{n}\bigvee}(a_{i}\otimes b_{i})]\blacktriangle\beta\\
	&=[\underset{x_{i}\in U}{\bigvee}A(x_{i})\otimes B(x_{i})]\blacktriangle\beta\\
	&=[\underset{x_{i}\in U}{\bigvee}A(x_{i})\otimes B(x_{i})]\otimes\beta\\
	&=N^\beta(A, B),
	\end{align*}
	and
	\begin{align*}
	d&=\beta \triangle [(M_{A})^{T}\triangle M_{B}]\\
	&=\beta \triangle[\underset{i=1}{\overset{n}\bigwedge}(a_{i}\rightarrow b_{i})]\\
	&=\beta \triangle[\underset{x_{i}\in U}{\bigwedge}(A(x_{i})\rightarrow B(x_{i}))]\\
	&=\beta \rightarrow[\underset{x_{i}\in U}{\bigwedge}(A(x_{i})\rightarrow B(x_{i}))]\\
	&=S^\beta(A, B).
	\end{align*}
	According to (N1) in Lemma~\ref{l2-1}, we get $N^\beta(A, B)=N^\beta(B, A)=[(M_{B})^{T}\blacktriangle M_{A}]\blacktriangle\beta$.
\end{proof}

\begin{example}\label{e5-2}
	Let $U=\{x_{1}, x_{2}, x_{3}, x_{4}, x_{5}, x_{6}\}$ and $L=\{0, a, 1\}$ be a complete lattice with $0<a<1$.
	Define that $\otimes$ be the G\"{o}del-$t$ norm (that is, $a\otimes b=a\bigwedge b$ for all $a, b\in L$)
	and $\rightarrow$ be defined by Table~\ref{table1}.
	It is easy to verify that $L=(L, \otimes, \rightarrow, \bigwedge, \bigvee, 0, 1)$ is a complete residuated lattice. In particular, take $\beta=a$ and
	\begin{align*}
	A&=\frac{0}{x_{1}}+\frac{a}{x_{2}}+\frac{1}{x_{3}}+\frac{0}{x_{4}}+\frac{a}{x_{5}}+\frac{1}{x_{6}},\
	B=\frac{0}{x_{1}}+\frac{1}{x_{2}}+\frac{a}{x_{3}}+\frac{a}{x_{4}}+\frac{1}{x_{5}}+\frac{a}{x_{6}}.
	\end{align*}
	Then
	\begin{align*}
	N^\beta(A, B)&=\left((M_{A})^{T}\blacktriangle M_{B}\right)\blacktriangle\beta\\
	&=\left(\left(\left[
	\begin{array}{c}
	0    \\
	a   \\
	1    \\
	0    \\
	a    \\
	1    \\
	\end{array}
	\right]\right)^{T}\blacktriangle
	\left[
	\begin{array}{c}
	0   \\
	1    \\
	a   \\
	a    \\
	1    \\
	a    \\
	\end{array}
	\right]\right)\blacktriangle\beta\\
	&=a\blacktriangle\beta\\
	&=a\bigwedge a\\
	&=a,
	\end{align*}
	and
	\begin{align*}
	S^\beta(A, B)&=\beta\triangle\left((M_{A})^{T}\triangle M_{B}\right)\\
	&=\beta\triangle\left(\left(\left[
	\begin{array}{c}
		0    \\
	a   \\
	1    \\
	0    \\
	a    \\
	1    \\
	\end{array}
	\right]\right)^{T}\triangle
	\left[
	\begin{array}{c}
	0   \\
	1    \\
	a   \\
	a    \\
	1    \\
	a    \\
	\end{array}
	\right]\right)\\
	&=\beta\rightarrow a\\
	&=a\rightarrow a\\
	&=1.
	\end{align*}
\end{example}

\subsection{The matrix representations on the first pair of $L$-fuzzy $\beta$-covering-based rough approximation operators}\label{first}

In the following, we represent the first pair of $L$-fuzzy $\beta$-covering-based rough approximation operators $\underline{\mathscr{C}}_{1}$
and $\overline{\mathscr{C}}_{1}$ with the help of matrix operations form.
\begin{proposition}
	Let $\mathscr{C}=\{C_{1}, C_{2}, \ldots, C_{m}\}$ be an $L$-fuzzy $\beta$-covering of
	$U=\{x_{1}, x_{2}, \ldots, x_{n}\}$ and $X\in L^{U}$. Then the following statements hold.
	\begin{align*}
	M_{\underline{\mathscr{C}}_{1}(X)}&=M_{\mathscr{C}}\blacktriangle[\beta\triangle((M_{\mathscr{C}})^{T}\triangle M_{X})],\\
	M_{\overline{\mathscr{C}}_{1}(X)}&=M_{\mathscr{C}}\triangle[((M_{\mathscr{C}})^{T}\blacktriangle M_{X})\blacktriangle\beta].
	\end{align*}
\end{proposition}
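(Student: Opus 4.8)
The plan is to prove both identities by computing the $(i,1)$ entry of each right-hand side and matching it against Definition~\ref{d3-4}, reducing the inner $S^\beta$ and $N^\beta$ terms via Proposition~\ref{p5-1} and the scalar cases of $\triangle,\blacktriangle$ in Definition~\ref{d-new operators}. Write $M_{\mathscr{C}}=(C_j(x_i))_{n\times m}$, so that the $j$-th column of $M_{\mathscr{C}}$ is exactly $M_{C_j}$ and the $j$-th row of $(M_{\mathscr{C}})^T$ is $(M_{C_j})^T$. Everything then follows by evaluating the nested products innermost-first.

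For the lower operator I would first evaluate $(M_{\mathscr{C}})^T\triangle M_X$: by Definition~\ref{d-new operators} this is the $m\times 1$ matrix whose $j$-th entry is $\bigwedge_{i=1}^n(C_j(x_i)\rightarrow X(x_i))=S(C_j,X)$. Applying $\beta\triangle(-)$ entrywise turns the $j$-th entry into $\beta\rightarrow S(C_j,X)=S^\beta(C_j,X)$ by Definition~\ref{d3-1}. Finally, left-multiplying by $M_{\mathscr{C}}$ under $\blacktriangle$ (an $n\times m$ by $m\times 1$ product) gives the $n\times 1$ matrix whose $i$-th entry is $\bigvee_{j=1}^m\big(C_j(x_i)\otimes S^\beta(C_j,X)\big)$, which is precisely $\underline{\mathscr{C}}_1(X)(x_i)$ by Definition~\ref{d3-4}. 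Hence $M_{\mathscr{C}}\blacktriangle[\beta\triangle((M_{\mathscr{C}})^T\triangle M_X)]=M_{\underline{\mathscr{C}}_1(X)}$.

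For the upper operator the computation is dual: $(M_{\mathscr{C}})^T\blacktriangle M_X$ has $j$-th entry $\bigvee_{i=1}^n(C_j(x_i)\otimes X(x_i))=N(C_j,X)$; post-multiplying by the scalar $\beta$ under $\blacktriangle$ yields $N(C_j,X)\otimes\beta=N^\beta(C_j,X)$; and left-multiplying by $M_{\mathscr{C}}$ under $\triangle$ produces the $n\times 1$ matrix with $i$-th entry $\bigwedge_{j=1}^m\big(C_j(x_i)\rightarrow N^\beta(C_j,X)\big)=\overline{\mathscr{C}}_1(X)(x_i)$, which is the second identity.

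I expect no genuine obstacle here: the proof is pure bookkeeping of the matrix dimensions ($n\times m$, $m\times n$, $m\times 1$, $n\times 1$) together with the observation that the scalar versions of $\triangle$ and $\blacktriangle$ act entrywise, exactly mirroring the proof of Proposition~\ref{p5-1}. The only point worth stating explicitly is that, since each bracketed product above is evaluated innermost-first, no associativity of $\triangle,\blacktriangle$ needs to be invoked; the result then follows directly by unwinding Definitions~\ref{d3-1}, \ref{d3-4} and \ref{d-new operators}.
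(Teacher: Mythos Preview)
Your proposal is correct and follows essentially the same approach as the paper: both compute the nested matrix products innermost-first, identify the intermediate $m\times 1$ vector as $(S^\beta(C_j,X))_j$ (respectively $(N^\beta(C_j,X))_j$), and then match the outer product against Definition~\ref{d3-4}. Your additional remark that no associativity of $\triangle,\blacktriangle$ is needed because the brackets fix the evaluation order is a nice clarification not made explicit in the paper.
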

\begin{proof}
	Let $\ \beta\triangle[(M_{\mathscr{C}})^{T}\triangle M_{X}]=(a_{i})_{m\times 1}$
	and $M_{\mathscr{C}}\blacktriangle[\beta\triangle((M_{\mathscr{C}})^{T}\triangle M_{X})]=(b_{j})_{n\times 1}$.
	Then one concludes that
	\begin{align*}
	a_{i}&=\beta\triangle\left(\underset{l=1}{\overset{n}\bigwedge}[C_{i}(x_{l})\rightarrow X(x_{l})]\right)\\
	&=\beta\triangle\left(\underset{x\in U}{\bigwedge}[C_{i}(x)\rightarrow X(x)]\right)\\
	&=\beta\rightarrow\left(\underset{x\in U}{\bigwedge}[C_{i}(x)\rightarrow X(x)]\right)\\
	&=S^\beta(C_{i}, X),
	\end{align*}
	and
	\begin{align*}
	b_{j}&=\underset{s=1}{\overset{m}\bigvee}[C_{s}(x_{j})\otimes S^\beta(C_{s}, X)]\\
	&=\underset{C\in \mathscr{C}}{\bigvee}[C(x_{j})\otimes S^\beta(C, X)]\\
	&=\underline{\mathscr{C}}_{1}(X)(x_{j}).
	\end{align*}
	Hence, $M_{\underline{\mathscr{C}}_{1}(X)}=M_{\mathscr{C}}\blacktriangle[\beta\triangle ((M_{\mathscr{C}})^{T}\triangle M_{X})]$ holds for any $X\in L^{U}$.
	
	Similarly, denote $[(M_{\mathscr{C}})^{T}\blacktriangle M_{X}]\blacktriangle\beta=(c_{i})_{m\times 1}$
	and $M_{\mathscr{C}}\triangle[((M_{\mathscr{C}})^{T}\blacktriangle M_{X})\blacktriangle\beta]=(d_{j})_{n\times 1}$, then, one has that
	\begin{align*}
	c_{i}&=[\underset{l=1}{\overset{n}\bigvee}(C_{i}(x_{l})\otimes X(x_{l}))]\blacktriangle\beta\\
	&=[\underset{x\in U}{\bigvee}(C_{i}(x)\otimes X(x))]\blacktriangle\beta\\
	&=[\underset{x\in U}{\bigvee}(C_{i}(x)\otimes X(x))]\otimes\beta\\
	&=N^\beta(C_{i}, X),
	\end{align*}
	and
	\begin{align*}
	d_{j}&=\underset{s=1}{\overset{m}\bigwedge}[C_{s}(x_{j})\rightarrow N^\beta(C_{s}, X)]]\\
	&=\underset{C\in \mathscr{C}}{\bigwedge}[C(x_{j})\rightarrow N^\beta(C, X)]\\
	&=\overline{\mathscr{C}}_{1}(X)(x_{j}).
	\end{align*}
	Hence, $M_{\overline{\mathscr{C}}_{1}(X)}=M_{\mathscr{C}}\triangle[((M_{\mathscr{C}})^{T}\blacktriangle M_{X})\blacktriangle\beta]$
	holds for any $X\in L^{U}$.
\end{proof}

\begin{example}\label{e5-3}
	Let $U=\{x_{1}, x_{2}, x_{3}, x_{4}, x_{5}, x_{6}\}$,
	$L=([0, 1], \otimes, \rightarrow, \bigvee, \bigwedge, 0, 1)$
	be a G\"{o}del-residuated lattice,
	$\mathscr{C}=\{C_{1}, C_{2},$\\$ C_{3}, C_{4}\}$ be the $L$-fuzzy $\beta$-covering of $U$ in Example~\ref{e5-1} and
	$X=\frac{0.4}{x_{1}}+\frac{0.3}{x_{2}}+\frac{0.5}{x_{3}}+\frac{0.7}{x_{4}}+\frac{0.4}{x_{5}}+\frac{0.5}{x_{6}}$. Further, taking $\beta=0.6$, we can get the following results.
	\begin{align*}
	M_{\underline{\mathscr{C}}_{1}(X)}&=M_{\mathscr{C}}\blacktriangle\left(\beta\triangle((M_{\mathscr{C}})^{T}\triangle M_{X})\right)\\
	&=\left[\begin{array}{cccc}
	0.7    & 0.5     & 0.6    & 0.4 \\
	0.1    & 0.7    & 0.7   & 0.3 \\
    0.3    & 0.8     & 0.2   & 0.6 \\
    0.5    & 0.6     & 0.3   & 0.6 \\
	0.3    & 0.4     & 0.2   & 0.7\\
	0.3    & 0.6     & 0.1   & 0.4  \\
	\end{array}
	\right]\blacktriangle
	\left(\beta\triangle\left(\left[\begin{array}{cccccc}
	0.7    & 0.1     & 0.3   & 0.5  & 0.3 & 0.3\\
	0.5    & 0.7      & 0.8   & 0.6  & 0.4 & 0.6\\
	0.6      & 0.7     & 0.2   & 0.3  & 0.2 & 0.1\\
	0.4    & 0.3     & 0.6   & 0.6   & 0.7 & 0.4\\
	\end{array}
	\right]\triangle\left[
	\begin{array}{c}
	0.4    \\
	0.3   \\
	0.5   \\
	0.7   \\
	0.4    \\
	0.5   \\
	\end{array}
	\right]\right)\right)\\
	&=\left[
	\begin{array}{c}
	0.4    \\
	0.3    \\
	0.4   \\
	0.4    \\
	0.4  \\
	0.4    \\
	\end{array}
	\right],
	\end{align*}
	and
	\begin{align*}
	M_{\overline{\mathscr{C}}_{1}(X)}&=M_{\mathscr{C}}\triangle\left(((M_{\mathscr{C}})^{T}\blacktriangle M_{X})\blacktriangle\beta\right)\\
	&=\left[\begin{array}{cccc}
	0.7    & 0.5     & 0.6    & 0.4 \\
0.1    & 0.7    & 0.7   & 0.3 \\
0.3    & 0.8     & 0.2   & 0.6 \\
0.5    & 0.6     & 0.3   & 0.6 \\
0.3    & 0.4     & 0.2   & 0.7\\
0.3    & 0.6     & 0.1   & 0.4  \\
	\end{array}
	\right]\triangle
	\left(\left(\left[\begin{array}{cccccc}
	0.7    & 0.1     & 0.3   & 0.5  & 0.3 & 0.3\\
0.5    & 0.7      & 0.8   & 0.6  & 0.4 & 0.6\\
0.6      & 0.7     & 0.2   & 0.3  & 0.2 & 0.1\\
0.4    & 0.3     & 0.6   & 0.6   & 0.7 & 0.4\\
	\end{array}
	\right]\blacktriangle\left[
	\begin{array}{c}
0.4    \\
0.3   \\
0.5   \\
0.7   \\
0.4    \\
0.5   \\
	\end{array}
	\right]\right)\blacktriangle\beta\right)\\
	&=\left[
	\begin{array}{c}
	0.4    \\
	0.4    \\
	0.6   \\
	1    \\
	0.6   \\
	1    \\
	\end{array}
	\right].
	\end{align*}
	Therefore we have
	\begin{align*} \underline{\mathscr{C}}_{1}(X)&=\frac{0.4}{x_{1}}+\frac{0.3}{x_{2}}+\frac{0.4}{x_{3}}
	+\frac{0.4}{x_{4}}+\frac{0.4}{x_{5}}+\frac{0.4}{x_{6}},\quad
	\overline{\mathscr{C}}_{1}(X)=\frac{0.4}{x_{1}}+\frac{0.4}{x_{2}}+\frac{0.6}{x_{3}}
	+\frac{1}{x_{4}}+\frac{0.6}{x_{5}}+\frac{1}{x_{6}}.
	\end{align*}
\end{example}

\subsection{The matrix representations on the second pair of $L$-fuzzy $\beta$-covering-based rough approximation operators}\label{second}
In the following, we represent the second pair of $L$-fuzzy $\beta$-covering-based rough approximation operators $\underline{\mathscr{C}}_{2}$
and $\overline{\mathscr{C}}_{2}$ with the help of matrix operations form.
\begin{proposition}\label{p3-3}
	Let $\mathscr{C}=\{C_{1}, C_{2}, \ldots, C_{m}\}$ be an $L$-fuzzy $\beta$-covering of
	$U=\{x_{1}, x_{2}, \ldots, x_{n}\}$ and $X\in L^{U}$. Then the following statements hold.
	\begin{align*}
	M_{\underline{\mathscr{C}}_{2}(X)}&=M_{\mathscr{C}}\triangle[\beta\triangle((M_{\mathscr{C}})^{T}\triangle M_{X})],\\
	M_{\overline{\mathscr{C}}_{2}(X)}&=M_{\mathscr{C}}\blacktriangle[((M_{\mathscr{C}})^{T}\blacktriangle M_{X})\blacktriangle\beta].
	\end{align*}
\end{proposition}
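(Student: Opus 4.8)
The plan is to imitate, almost verbatim, the proof of the previous proposition, unwrapping the matrix expressions from the inside out and identifying each intermediate column vector with a quantity already computed in Proposition~\ref{p5-1}. The only structural difference from the $\underline{\mathscr{C}}_{1},\overline{\mathscr{C}}_{1}$ case is which of the two matrix operations $\triangle,\blacktriangle$ is applied on the outside: since Definition~\ref{d3-5} builds $\underline{\mathscr{C}}_{2}$ from $C\rightarrow S^{\beta}(C,X)$ (rather than $C\otimes S^{\beta}(C,X)$) and $\overline{\mathscr{C}}_{2}$ from $C\otimes N^{\beta}(C,X)$ (rather than $C\rightarrow N^{\beta}(C,X)$), the roles of $\triangle$ and $\blacktriangle$ in the outer product are swapped relative to the $\underline{\mathscr{C}}_{1},\overline{\mathscr{C}}_{1}$ formulas.

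For the lower operator I would set $\beta\triangle[(M_{\mathscr{C}})^{T}\triangle M_{X}]=(a_{i})_{m\times 1}$. Expanding the definition of $\triangle$ (Definition~\ref{d-new operators}) and invoking Proposition~\ref{p5-1} gives
\[
a_{i}=\beta\rightarrow\Big(\underset{l=1}{\overset{n}\bigwedge}[C_{i}(x_{l})\rightarrow X(x_{l})]\Big)=\beta\rightarrow S(C_{i},X)=S^{\beta}(C_{i},X),
\]
exactly as in the $\underline{\mathscr{C}}_{1}$ computation. Then, writing $M_{\mathscr{C}}\triangle[\beta\triangle((M_{\mathscr{C}})^{T}\triangle M_{X})]=(b_{j})_{n\times 1}$ and unfolding $\triangle$ once more yields
\[
b_{j}=\underset{s=1}{\overset{m}\bigwedge}[C_{s}(x_{j})\rightarrow S^{\beta}(C_{s},X)]=\underset{C\in\mathscr{C}}{\bigwedge}[C(x_{j})\rightarrow S^{\beta}(C,X)]=\underline{\mathscr{C}}_{2}(X)(x_{j}),
\]
where the last equality is Definition~\ref{d3-5}; this is the first identity. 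For the upper operator I would set $[(M_{\mathscr{C}})^{T}\blacktriangle M_{X}]\blacktriangle\beta=(c_{i})_{m\times 1}$, so that by the definition of $\blacktriangle$ and Proposition~\ref{p5-1}
\[
c_{i}=\Big(\underset{l=1}{\overset{n}\bigvee}[C_{i}(x_{l})\otimes X(x_{l})]\Big)\otimes\beta=N^{\beta}(C_{i},X),
\]
and then writing $M_{\mathscr{C}}\blacktriangle[((M_{\mathscr{C}})^{T}\blacktriangle M_{X})\blacktriangle\beta]=(d_{j})_{n\times 1}$ and unfolding $\blacktriangle$ gives
\[
d_{j}=\underset{s=1}{\overset{m}\bigvee}[C_{s}(x_{j})\otimes N^{\beta}(C_{s},X)]=\underset{C\in\mathscr{C}}{\bigvee}[C(x_{j})\otimes N^{\beta}(C,X)]=\overline{\mathscr{C}}_{2}(X)(x_{j}),
\]
which is the second identity.

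I do not anticipate a genuine obstacle here; the argument is pure bookkeeping, matching the matrix operations $\triangle,\blacktriangle$ against the pointwise operations $\rightarrow,\otimes$ of Definition~\ref{d3-5}. The only points requiring care are to pair the inner and outer operations correctly (the inner $\triangle$ must produce $S^{\beta}$ while the outer $\triangle$ forms the infimum of implications defining $\underline{\mathscr{C}}_{2}$, and dually the inner $\blacktriangle$ produces $N^{\beta}$ while the outer $\blacktriangle$ forms the supremum of products defining $\overline{\mathscr{C}}_{2}$), and to keep the reindexing between $\{1,\dots,m\}$ and $C\in\mathscr{C}$, and between $\{1,\dots,n\}$ and $x_{j}\in U$, consistent with Definition~\ref{d5-1}.
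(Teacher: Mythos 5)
Your proposal is correct and follows essentially the same route as the paper's proof: introduce the intermediate column vectors $(a_i)$, $(c_i)$, identify them with $S^{\beta}(C_i,X)$ and $N^{\beta}(C_i,X)$ by unfolding Definition~\ref{d-new operators}, and then unfold the outer $\triangle$ and $\blacktriangle$ to recover Definition~\ref{d3-5}. The computation matches the paper's line for line.
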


\begin{proof}
	Let $\beta\triangle[(M_{\mathscr{C}})^{T}\triangle M_{X}]=(a_{i})_{m\times 1}$
	and $M_{\mathscr{C}}\triangle[\beta\triangle((M_{\mathscr{C}})^{T}\triangle M_{X})]=(b_{j})_{n\times 1}$.
	Then one concludes that
	\begin{align*}
	a_{i}&=\beta\triangle[\underset{l=1}{\overset{n}\bigwedge}(C_{i}(x_{l})\rightarrow X(x_{l}))]\\
	&=\beta\triangle[\underset{x\in U}{\bigwedge}(C_{i}(x)\rightarrow X(x))]\\
	&=\beta\rightarrow[\underset{x\in U}{\bigwedge}(C_{i}(x)\rightarrow X(x))]\\
	&=S^\beta(C_{i}, X),
	\end{align*}
	and
	\begin{align*}
	b_{j}&=\underset{s=1}{\overset{m}\bigwedge}[C_{s}(x_{j})\rightarrow S^\beta(C_{s}, X)]\\
	&=\underset{C\in \mathscr{C}}{\bigwedge}[C(x_{j})\rightarrow S^\beta(C, X)]\\
	&=\underline{\mathscr{C}}_{2}(X)(x_{j}).
	\end{align*}
	Hence,
	$M_{\underline{\mathscr{C}}_{2}(X)}=M_{\mathscr{C}}\triangle[\beta\triangle((M_{\mathscr{C}})^{T}\triangle M_{X})]$ holds for any $X\in L^{U}$.
	
	Similarly, denote $[(M_{\mathscr{C}})^{T}\blacktriangle M_{X}]\blacktriangle\beta=(c_{i})_{m\times 1}$
	and $M_{\mathscr{C}}\blacktriangle[((M_{\mathscr{C}})^{T}\blacktriangle M_{X})\blacktriangle\beta]=(d_{j})_{n\times 1}$. Then, one has that
	\begin{align*}
	c_{i}&=[\underset{l=1}{\overset{n}\bigvee}(C_{i}(x_{l})\otimes X(x_{l}))]\blacktriangle\beta\\
	&=[\underset{x\in U}{\bigvee}(C_{i}(x)\otimes X(x))\blacktriangle\beta]\\
	&=N(C_{i}, X)\otimes\beta\\
	&=N^\beta(C_{i}, X),
	\end{align*}
	and
	\begin{align*}
	d_{j}&=\underset{s=1}{\overset{m}\bigvee}[C_{s}(x_{j})\otimes N^\beta(C_{s}, X)]\\
	&=\underset{C\in \mathscr{C}}{\bigvee}[C(x_{j})\otimes N^\beta(C, X)]\\
	&=\overline{\mathscr{C}}_{2}(X)(x_{j}).
	\end{align*}
	Hence, $M_{\overline{\mathscr{C}}_{2}(X)}=M_{\mathscr{C}}\blacktriangle[((M_{\mathscr{C}})^{T}\blacktriangle M_{X})\blacktriangle\beta]$ holds for any $X\in L^{U}$.
\end{proof}

\begin{example}\label{e5-4}	
	Let $U=\{x_{1}, x_{2}, x_{3}, x_{4}, x_{5}, x_{6}\}$,
	$L=([0, 1], \otimes, \rightarrow, \bigvee, \bigwedge, 0, 1)$
	be a G\"{o}del-residuated lattice,
	$\mathscr{C}=\{C_{1}, C_{2},$\\$ C_{3}, C_{4}\}$ be the $L$-fuzzy $\beta$-covering of $U$ in
	Example~\ref{e5-1} and
	$X=\frac{0.4}{x_{1}}+\frac{0.3}{x_{2}}+\frac{0.5}{x_{3}}+\frac{0.7}{x_{4}}+\frac{0.4}{x_{5}}+\frac{0.5}{x_{6}}$.
	Further, taking $\beta=0.6$, we can get the following results.
	\begin{align*}
	M_{\underline{\mathscr{C}}_{2}(X)}&=M_{\mathscr{C}}\triangle\left(\beta\triangle((M_{\mathscr{C}})^{T}\triangle M_{X})\right)\\
	&=\left[\begin{array}{cccc}
	0.7    & 0.5     & 0.6   & 0.4 \\
	0.1    & 0.7     & 0.7   & 0.3 \\
	0.3    & 0.8     & 0.2   & 0.6 \\
	0.5    & 0.6     & 0.3   & 0.6 \\
	0.3    & 0.4     & 0.2   & 0.7 \\
	0.3    & 0.6     & 0.1   & 0.4 \\
	\end{array}
	\right]\triangle
	\left(\beta\triangle\left(\left[\begin{array}{cccccc}
0.7    & 0.1     & 0.3   & 0.5  & 0.3  & 0.3\\
0.5    & 0.7     & 0.8   & 0.6  & 0.4  & 0.6\\
0.6    & 0.7     & 0.2   & 0.3  & 0.2  & 0.1\\
0.4    & 0.3     & 0.6   & 0.6  & 0.7  & 0.4\\
	\end{array}
	\right]\triangle\left[
	\begin{array}{c}
0.4   \\
0.3   \\
0.5   \\
0.7   \\
0.4   \\
0.5   \\
	\end{array}
	\right]\right)\right)\\
	&=\left[
	\begin{array}{c}
	0.3    \\
	0.3    \\
	0.3   \\
	0.3    \\
	0.3   \\
	0.3    \\
	\end{array}
	\right],
	\end{align*}
	and
	\begin{align*}
	M_{\overline{\mathscr{C}}_{2}(X)}&=M_{\mathscr{C}}\blacktriangle\left(((M_{\mathscr{C}})^{T}\blacktriangle M_{X})\blacktriangle\beta\right)\\
	&=\left[\begin{array}{cccc}
	0.7    & 0.5     & 0.6   & 0.4 \\
	0.1    & 0.7     & 0.7   & 0.3 \\
	0.3    & 0.8     & 0.2   & 0.6 \\
	0.5    & 0.6     & 0.3   & 0.6 \\
	0.3    & 0.4     & 0.2   & 0.7 \\
	0.3    & 0.6     & 0.1   & 0.4 \\
	\end{array}
	\right]\blacktriangle
	\left(\left(\left[\begin{array}{cccccc}
   0.7    & 0.1     & 0.3   & 0.5  & 0.3  & 0.3\\
   0.5    & 0.7     & 0.8   & 0.6  & 0.4  & 0.6\\
   0.6    & 0.7     & 0.2   & 0.3  & 0.2  & 0.1\\
   0.4    & 0.3     & 0.6   & 0.6  & 0.7  & 0.4\\
	\end{array}
	\right]\blacktriangle\left[
	\begin{array}{c}
   0.4   \\
   0.3   \\
   0.5   \\
   0.7   \\
   0.4   \\
   0.5   \\
	\end{array}
	\right]\right)\blacktriangle\beta\right)\\
	&=\left[
	\begin{array}{c}
	0.5    \\
	0.6    \\
	0.6    \\
	0.6    \\
	0.6    \\
	0.6    \\
	\end{array}
	\right].
	\end{align*}
	Therefore we have
	\begin{align*} \underline{\mathscr{C}}_{2}(X)&=\frac{0.3}{x_{1}}+\frac{0.3}{x_{2}}+\frac{0.3}{x_{3}}
	+\frac{0.3}{x_{4}}+\frac{0.3}{x_{5}}+\frac{0.3}{x_{6}},\
	\overline{\mathscr{C}}_{2}(X)=\frac{0.5}{x_{1}}+\frac{0.6}{x_{2}}+\frac{0.6}{x_{3}}
	+\frac{0.6}{x_{4}}+\frac{0.6}{x_{5}}+\frac{0.6}{x_{6}}.
	\end{align*}
\end{example}

\subsection{The matrix representations on the third pair of $L$-fuzzy $\beta$-covering-based rough approximation operators}\label{third}
In the following, we represent the third pair of $L$-fuzzy $\beta$-covering-based rough approximation operators $\underline{\mathscr{C}}_{3}$
and $\overline{\mathscr{C}}_{3}$ with the help of matrix operations form.
\begin{proposition}\label{p5-4}
	Let $\mathscr{C}=\{C_{1}, C_{2}, \ldots, C_{m}\}$ be an $L$-fuzzy $\beta$-covering of
	$U=\{x_{1}, x_{2}, \ldots, x_{n}\}$ and $M_{R}=(R_{\mathscr{C}}(x_{i}, x_{j}))_{n\times n}$. Then $M_{R}=M_{\mathscr{C}}\triangle(M_{\mathscr{C}})^{T}$.
\end{proposition}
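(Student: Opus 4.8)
The proof is a direct unwinding of the definitions, so the plan is simply to compute the $(i,j)$-entry of $M_{\mathscr{C}}\triangle(M_{\mathscr{C}})^{T}$ and match it against $R_{\mathscr{C}}(x_i,x_j)$. First I would record that, by Definition~\ref{d5-1}, the $(i,k)$-entry of $M_{\mathscr{C}}$ is $C_k(x_i)$, hence the $(k,j)$-entry of the transpose $(M_{\mathscr{C}})^{T}$ is $C_k(x_j)$; note also that $(M_{\mathscr{C}})^{T}$ is $m\times n$, so the inner dimension matches $M_{\mathscr{C}}$ and the product $M_{\mathscr{C}}\triangle(M_{\mathscr{C}})^{T}$ is a well-defined $n\times n$ matrix.

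Next I would apply the $\triangle$-multiplication rule from Definition~\ref{d-new operators}: the $(i,j)$-entry of $M_{\mathscr{C}}\triangle(M_{\mathscr{C}})^{T}$ equals $\bigwedge_{k=1}^{m}\bigl(C_k(x_i)\rightarrow C_k(x_j)\bigr)$. Since $\mathscr{C}=\{C_1,\ldots,C_m\}$, this meet over $k$ is exactly $\bigwedge_{C\in\mathscr{C}}[C(x_i)\rightarrow C(x_j)]$, which is the definition of $R_{\mathscr{C}}(x_i,x_j)$ recalled just before Definition~\ref{d3-6}. Therefore the two $n\times n$ matrices agree entrywise, i.e. $M_R=M_{\mathscr{C}}\triangle(M_{\mathscr{C}})^{T}$.

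There is essentially no obstacle here: the statement is purely definitional and no lattice identity from Theorem~\ref{t2-1} is needed. The only point requiring care is index bookkeeping — keeping straight that the contraction index $k$ of the matrix product ranges over the members of the covering, while $i$ and $j$ range over the points of $U$. If desired, one could illustrate the resulting formula on the covering of Example~\ref{e5-1}, but that is not part of the argument.
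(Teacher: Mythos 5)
Your proof is correct and follows exactly the same route as the paper's: unwind Definition~\ref{d5-1} to identify the entries of $M_{\mathscr{C}}$ and its transpose, apply the $\triangle$-product rule from Definition~\ref{d-new operators} to get the $(i,j)$-entry $\bigwedge_{k=1}^{m}\bigl(C_k(x_i)\rightarrow C_k(x_j)\bigr)$, and recognize this as $R_{\mathscr{C}}(x_i,x_j)$. Nothing is missing.
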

\begin{proof}
	Let $M_{\mathscr{C}}=(a_{ij})_{n\times m}$, $(M_{\mathscr{C}})^{T}=(a_{ij})_{m\times n}$ and $M_{\mathscr{C}}\triangle(M_{\mathscr{C}})^{T}=(r_{ij})_{n\times n}$. For any $i,j\in\{1, 2, \ldots, n\}$, we get that
	\begin{align*}
	r_{ij}&=\underset{k=1}{\overset{m}\bigwedge}(a_{ik}\rightarrow b_{kj})\\
	&=\underset{k=1}{\overset{m}\bigwedge}[C_{k}(x_{i})\rightarrow C_{k}(x_{j})]\\
	&=\underset{C\in \mathscr{C}}{\bigwedge}[C(x_{i})\rightarrow C(x_{j})]\\
	&=R_{\mathscr{C}}(x_{i}, x_{j}).
	\end{align*}
	Hence, $M_{R}=M_{\mathscr{C}}\triangle(M_{\mathscr{C}})^{T}$ holds.
\end{proof}

\begin{example}\label{e5-5}
	Let $U=\{x_{1}, x_{2}, x_{3}, x_{4}, x_{5}, x_{6}\}$,
	$L=([0, 1], \otimes, \rightarrow, \bigvee, \bigwedge, 0, 1)$
	be a G\"{o}del-residuated lattice and
	$\mathscr{C}=\{C_{1}, C_{2}, C_{3}, C_{4}\}$ be the $L$-fuzzy $\beta$-covering of $U$ in
	Example~\ref{e5-1}.
	Then we conclude that
	\begin{align*}
	M_{R}&=M_{\mathscr{C}}\triangle(M_{\mathscr{C}})^{T}\\
	&=\left[\begin{array}{cccc}
    0.7    & 0.5     & 0.6   & 0.4 \\
    0.1    & 0.7     & 0.7   & 0.3 \\
    0.3    & 0.8     & 0.2   & 0.6 \\
    0.5    & 0.6     & 0.3   & 0.6 \\
    0.3    & 0.4     & 0.2   & 0.7 \\
    0.3    & 0.6     & 0.1   & 0.4 \\
	\end{array}
	\right]\triangle
	\left[\begin{array}{cccccc}
    0.7    & 0.1     & 0.3   & 0.5  & 0.3  & 0.3\\
    0.5    & 0.7     & 0.8   & 0.6  & 0.4  & 0.6\\
    0.6    & 0.7     & 0.2   & 0.3  & 0.2  & 0.1\\
    0.4    & 0.3     & 0.6   & 0.6  & 0.7  & 0.4\\
	\end{array}
	\right]\\
	&=\left[\begin{array}{cccccc}
	1      & 0.1     & 0.2   & 0.3  & 0.2 & 0.1\\
	0.5    & 1       & 0.2   & 0.3  & 0.2 & 0.1\\
	0.4    & 0.1     & 1     & 0.6  & 0.4 & 0.1\\
	0.4    & 0.1     & 0.2   & 1    & 0.2 & 0.1\\
	0.4    & 0.1     & 0.6   & 0.6  & 1   & 0.1\\
	0.5    & 0.1     & 1     & 1    & 0.4 & 1  \\
	\end{array}
	\right].
	\end{align*}
\end{example}
In the definition of $\triangle$ operator, $x\rightarrow y\neq y\rightarrow x$ for some $x, y\in L$. That is the reason why $M_{R}\neq(M_{R})^{T}$ in the example above.

\begin{proposition}\label{p3-5}
	Let $\mathscr{C}=\{C_{1}, C_{2}, \ldots, C_{m}\}$ be an $L$-fuzzy $\beta$-covering of
	$U=\{x_{1}, x_{2}, \ldots, x_{n}\}$ and $X\in L^{U}$. Then the following statements hold.
	\begin{align*}
	M_{\underline{\mathscr{C}}_{3}(X)}
	&=\beta\triangle\left([M_{\mathscr{C}}\triangle(M_{\mathscr{C}})^{T}]^{T}\triangle M_{X}\right),\\
	M_{\overline{\mathscr{C}}_{3}(X)}
	&=\left([M_{\mathscr{C}}\triangle(M_{\mathscr{C}})^{T}]^{T}\blacktriangle M_{X}\right)\blacktriangle\beta.
	\end{align*}
\end{proposition}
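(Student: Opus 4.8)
The plan is to reduce the statement to Proposition~\ref{p5-4} together with Proposition~\ref{p5-1}, rather than unwinding Definition~\ref{d3-6} entry by entry from scratch. First I would invoke Proposition~\ref{p5-4}, which says that the relation matrix $M_R=(R_{\mathscr{C}}(x_i,x_j))_{n\times n}$ satisfies $M_R=M_{\mathscr{C}}\triangle(M_{\mathscr{C}})^{T}$; hence $[M_{\mathscr{C}}\triangle(M_{\mathscr{C}})^{T}]^{T}=(M_R)^{T}$ is the $n\times n$ matrix whose $(i,k)$-entry equals $R_{\mathscr{C}}(x_k,x_i)$. The reason the transpose is present is exactly that the $i$-th row of $(M_R)^{T}$, written as a column, is the vector $M_{R_{\mathscr{C}}(-,x_i)}$, since by Definition~\ref{d3-6} one has $R_{\mathscr{C}}(-,x_i)(x_k)=R_{\mathscr{C}}(x_k,x_i)$.

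Next I would compute the two auxiliary products. Writing $(M_R)^{T}\triangle M_X=(a_i)_{n\times 1}$, the definition of $\triangle$ gives $a_i=\bigwedge_{k=1}^{n}\big(R_{\mathscr{C}}(x_k,x_i)\rightarrow X(x_k)\big)=\bigwedge_{y\in U}\big(R_{\mathscr{C}}(-,x_i)(y)\rightarrow X(y)\big)=S(R_{\mathscr{C}}(-,x_i),X)$. Analogously, writing $(M_R)^{T}\blacktriangle M_X=(c_i)_{n\times 1}$, the definition of $\blacktriangle$ gives $c_i=\bigvee_{k=1}^{n}\big(R_{\mathscr{C}}(x_k,x_i)\otimes X(x_k)\big)=N(R_{\mathscr{C}}(-,x_i),X)$. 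Equivalently, for each fixed $x_i$ these two identities are just Proposition~\ref{p5-1} applied with $A=R_{\mathscr{C}}(-,x_i)$ and $B=X$, noting that $M_A$ is the $i$-th row of $(M_R)^{T}$ read as a column; stacking over $i$ then produces the column vectors above.

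Finally I would apply the remaining scalar operation. For the lower operator, $\beta\triangle\big((M_R)^{T}\triangle M_X\big)$ has $i$-th entry $\beta\rightarrow a_i=\beta\rightarrow S(R_{\mathscr{C}}(-,x_i),X)=S^{\beta}(R_{\mathscr{C}}(-,x_i),X)=\underline{\mathscr{C}}_{3}(X)(x_i)$ by Definition~\ref{d3-6}, which is the first claimed equality after substituting $M_R=M_{\mathscr{C}}\triangle(M_{\mathscr{C}})^{T}$. For the upper operator, $\big((M_R)^{T}\blacktriangle M_X\big)\blacktriangle\beta$ has $i$-th entry $c_i\otimes\beta=N(R_{\mathscr{C}}(-,x_i),X)\otimes\beta=N^{\beta}(R_{\mathscr{C}}(-,x_i),X)=\overline{\mathscr{C}}_{3}(X)(x_i)$, giving the second equality. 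This completes the argument.

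The main obstacle I anticipate is purely bookkeeping: keeping the order of the two arguments of $R_{\mathscr{C}}$ straight. Since $R_{\mathscr{C}}$ is not symmetric in general (as the remark after Example~\ref{e5-5} emphasizes, $M_R\neq(M_R)^{T}$), one must verify carefully that the transpose in $[M_{\mathscr{C}}\triangle(M_{\mathscr{C}})^{T}]^{T}$ is genuinely the one needed so that the $i$-th slice of the matrix matches the $L$-fuzzy set $R_{\mathscr{C}}(-,x_i)$ used in Definition~\ref{d3-6}; using $R_{\mathscr{C}}(x_i,-)$ instead would break both identities.
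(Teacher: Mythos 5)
Your proposal is correct and follows essentially the same route as the paper's own proof: both identify $M_{\mathscr{C}}\triangle(M_{\mathscr{C}})^{T}$ with $M_{R}$ via Proposition~\ref{p5-4}, observe that the $(i,k)$-entry of $(M_{R})^{T}$ is $R_{\mathscr{C}}(x_{k},x_{i})$, compute the entries of $(M_{R})^{T}\triangle M_{X}$ and $(M_{R})^{T}\blacktriangle M_{X}$ as $S(R_{\mathscr{C}}(-,x_{i}),X)$ and $N(R_{\mathscr{C}}(-,x_{i}),X)$, and then apply the scalar $\beta\triangle$ and $\blacktriangle\beta$ operations to recover $\underline{\mathscr{C}}_{3}(X)(x_{i})$ and $\overline{\mathscr{C}}_{3}(X)(x_{i})$ via Definition~\ref{d3-6}. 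Your explicit attention to the transpose and the argument order of $R_{\mathscr{C}}$ matches the care the paper takes in setting $r_{ij}=R_{\mathscr{C}}(x_{j},x_{i})$.
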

\begin{proof}
	Following the notation of Proposition~\ref{p5-4}, note that $M_{\mathscr{C}}\triangle(M_{\mathscr{C}})^{T}=M_{R}$.
	Let $(M_{R})^{T}=(r_{ij})_{n\times n}$, i.e., $r_{ij}=R_{\mathscr{C}}(x_{j}, x_{i})$, $M_{X}=(X(x_{i}))_{n\times 1}$, $\beta\triangle[(M_{R})^{T}\triangle M_{X}]=(a_{i})_{n\times 1}$ and $[(M_{R})^{T}\blacktriangle M_{X}]\blacktriangle\beta=(b_{i})_{n\times 1}$. Then one can conclude that
	\begin{align*}
	a_{i}&=\beta\triangle\left(\underset{j=1}{\overset{n}\bigwedge}[r_{ij}\rightarrow X(x_{j})]\right)\\
	&=\beta\triangle\left(\underset{j=1}{\overset{n}\bigwedge}[R_{\mathscr{C}}(x_{j}, x_{i})\rightarrow X(x_{j})]\right)\\
	&=\beta\triangle\left(\underset{x\in U}{\bigwedge}[R_{\mathscr{C}}(x, x_{i})\rightarrow X(x)]\right)\\
	&=\beta\triangle\left(S(R_{\mathscr{C}}(-, x_{i}), X)\right)\\
	&=\beta\rightarrow\left(S(R_{\mathscr{C}}(-, x_{i}), X)\right)\\
	&=S^\beta(R_{\mathscr{C}}(-, x_{i}), X)\\
	&=\underline{\mathscr{C}}_{3}(X)(x_{i}),
	\end{align*}
	and
	\begin{align*}
	b_{i}&=\left(\underset{j=1}{\overset{n}\bigvee}[r_{ij}\otimes X(x_{j})]\right)\blacktriangle\beta\\
	&=\left(\underset{j=1}{\overset{n}\bigvee}[R_{\mathscr{C}}(x_{j}, x_{i})\otimes X(x_{j})]\right)\blacktriangle\beta\\
	&=\left(\underset{x\in U}{\bigvee}[R_{\mathscr{C}}(x, x_{i})\otimes X(x)]\right)\blacktriangle\beta\\
	&=N(R_{\mathscr{C}}(-, x_{i}), X)\otimes\beta\\
	&=N^\beta(R_{\mathscr{C}}(-, x_{i}), X)\\
	&=\overline{\mathscr{C}}_{3}(X)(x_{i}).
	\end{align*}
	Hence, we obtain that 	$M_{\underline{\mathscr{C}}_{3}(X)}
	=\beta\triangle\left([M_{\mathscr{C}}\triangle(M_{\mathscr{C}})^{T}]^{T}\triangle M_{X}\right)$ and
	$M_{\overline{\mathscr{C}}_{3}(X)}
	=\left([M_{\mathscr{C}}\triangle(M_{\mathscr{C}})^{T}]^{T}\blacktriangle M_{X}\right)\blacktriangle\beta$.
\end{proof}

\begin{example}\label{e3-6}
	Let $U=\{x_{1}, x_{2}, x_{3}, x_{4}, x_{5}, x_{6}\}$,
	$L=([0, 1], \otimes, \rightarrow, \bigvee, \bigwedge, 0, 1)$
	be a G\"{o}del-residuated lattice,
	$\mathscr{C}=\{C_{1}, C_{2},$\\$ C_{3}, C_{4}\}$ be the $L$-fuzzy $\beta$-covering of $U$ in
	Example~\ref{e5-1} and
	$X=\frac{0.4}{x_{1}}+\frac{0.3}{x_{2}}+\frac{0.5}{x_{3}}+\frac{0.7}{x_{4}}+\frac{0.4}{x_{5}}+\frac{0.5}{x_{6}}$. Further, taking $\beta=0.6$, we can get the following results.
	\begin{align*}
	M_{\underline{\mathscr{C}}_{3}(X)}&=\beta\triangle\left((M_{\mathscr{C}}
\triangle(M_{\mathscr{C}})^{T})^{T}\triangle M_{X}\right)\\
	&=\beta\triangle\left(\left(\left[\begin{array}{cccc}
	0.7    & 0.5     & 0.6   & 0.4 \\
	0.1    & 0.7     & 0.7   & 0.3 \\
	0.3    & 0.8     & 0.2   & 0.6 \\
	0.5    & 0.6     & 0.3   & 0.6 \\
	0.3    & 0.4     & 0.2   & 0.7 \\
	0.3    & 0.6     & 0.1   & 0.4 \\
	\end{array}
	\right]\triangle
	\left[\begin{array}{cccccc}
    0.7    & 0.1     & 0.3   & 0.5  & 0.3  & 0.3\\
    0.5    & 0.7     & 0.8   & 0.6  & 0.4  & 0.6\\
    0.6    & 0.7     & 0.2   & 0.3  & 0.2  & 0.1\\
    0.4    & 0.3    & 0.6   & 0.6  & 0.7  & 0.4\\
	\end{array}
	\right]\right)^{T}\triangle
	\left[
	\begin{array}{c}
	0.4   \\
	0.3   \\
	0.5   \\
	0.7   \\
	0.4   \\
	0.5   \\
	\end{array}
	\right]\right)\\
	&=\left[
	\begin{array}{c}
	0.3    \\
	0.3    \\
	0.4   \\
	0.4    \\
	0.4   \\
	0.5   \\
	\end{array}
	\right],
	\end{align*}
	and
	\begin{align*}
	M_{\overline{\mathscr{C}}_{3}(X)}&=\left((M_{\mathscr{C}}\triangle(M_{\mathscr{C}})^{T})^{T}\blacktriangle M_{X}\right)\blacktriangle\beta\\
	&=\left(\left(\left[\begin{array}{cccc}
    0.7    & 0.5     & 0.6   & 0.4 \\
    0.1    & 0.7     & 0.7   & 0.3 \\
    0.3    & 0.8     & 0.2   & 0.6 \\
    0.5    & 0.6     & 0.3   & 0.6 \\
    0.3    & 0.4     & 0.2   & 0.7 \\
    0.3    & 0.6     & 0.1   & 0.4 \\
	\end{array}
	\right]\triangle
	\left[\begin{array}{cccccc}
	0.7    & 0.1     & 0.3   & 0.5  & 0.3  & 0.3\\
	0.5    & 0.7     & 0.8   & 0.6  & 0.4  & 0.6\\
	0.6    & 0.7     & 0.2   & 0.3  & 0.2  & 0.1\\
	0.4    & 0.3     & 0.6   & 0.6  & 0.7  & 0.4\\
	\end{array}
	\right]\right)^{T}\blacktriangle
	\left[
	\begin{array}{c}
	0.4   \\
	0.3   \\
	0.5   \\
	0.7   \\
	0.4   \\
	0.5   \\
	\end{array}
	\right]\right)\blacktriangle\beta\\
	&=\left[
	\begin{array}{c}
   0.5   \\
   0.3   \\
   0.5   \\
   0.6   \\
   0.4   \\
   0.5   \\
	\end{array}
	\right].
	\end{align*}
	Hence, we have
	\begin{align*} \underline{\mathscr{C}}_{3}(X)&=\frac{0.3}{x_{1}}+\frac{0.3}{x_{2}}+\frac{0.4}{x_{3}}
	+\frac{0.4}{x_{4}}+\frac{0.4}{x_{5}}+\frac{0.5}{x_{6}},\
	\overline{\mathscr{C}}_{3}(X)=\frac{0.5}{x_{1}}+\frac{0.3}{x_{2}}+\frac{0.5}{x_{3}}
	+\frac{0.6}{x_{4}}+\frac{0.4}{x_{5}}+\frac{0.5}{x_{6}}.
	\end{align*}
\end{example}

\section{The interdependency on three pairs of $L$-fuzzy $\beta$-covering-based rough approximation operators}\label{section6}
In this section, we are devoted to exploring the conditions of two $L$-fuzzy $\beta$-coverings generate the same $L$-fuzzy $\beta$-covering-based lower and upper rough approximation operators.
In 2003, Zhu and Wang in~\cite{ZhuWang03Reduction} firstly introduced the notion of reducible element to expound under what conditions two coverings generate the same covering-based lower and upper rough approximation operators. Next, we further expand the notion of reducible element to get the condition under which two $L$-fuzzy $\beta$-coverings generate the same fuzzy covering-based lower and upper rough approximation operators.

\begin{definition}\label{d6-1}
	Let $\mathscr{C}$ be an $L$-fuzzy $\beta$-covering of
	$U$ and $C\in \mathscr{C}$.
	If there exist $C_{1}, C_{2}, \ldots, C_{w}\in \mathscr{C}-\{C\}$
	such that $C=\underset{i=1}{\overset{w}\bigvee}C_{i}$, then
	$C$ is called a reducible element of $\mathscr{C}$; if not, $C$ is an irreducible element of $\mathscr{C}$.
\end{definition}
\begin{example}\label{e6-1}
	Let $U=\{x_{1}, x_{2}, x_{3}, x_{4}, x_{5}\}$ and $L=([0, 1], \otimes, \rightarrow, \bigvee,
	\bigwedge, 0, 1)$  be the {\L}ukasiewicz residuated lattice, where
	\begin{align*}
		x\otimes y=\max\{x+y-1, 0\}\ \mbox{and}\ x\rightarrow y=\min\{1-x+y, 1\}.\quad(\forall \ x, y\in L)
	\end{align*}	
	A family $\mathscr{C}=\{C_{1}, C_{2}, C_{3}, C_{4}\}$ of
	$L$-fuzzy subsets of $U$ is listed as follows.
	\begin{align*}
	C_{1}&=\frac{0.9}{x_{1}}+\frac{0.7}{x_{2}}+\frac{0.2}{x_{3}}+\frac{0.8}{x_{4}}+\frac{0.3}{x_{5}},\
	C_{2}=\frac{0.7}{x_{1}}+\frac{0.9}{x_{2}}+\frac{0.3}{x_{3}}+\frac{0.6}{x_{4}}+\frac{0.8}{x_{5}},\\
	C_{3}&=\frac{1}{x_{1}}+\frac{0.4}{x_{2}}+\frac{0.9}{x_{3}}+\frac{0.9}{x_{4}}+\frac{0.9}{x_{5}},\
	C_{4}=\frac{0.9}{x_{1}}+\frac{0.9}{x_{2}}+\frac{0.3}{x_{3}}+\frac{0.8}{x_{4}}+\frac{0.8}{x_{5}}.
	\end{align*}
	It is easy to see that $\mathscr{C}$ is an $L$-fuzzy $\beta$-covering of $U$ for any $\beta\in(0,0.9]$.
	Since $C_{4}=C_{1}\bigvee C_{2}$, then $C_{4}$ is a reducible
	element of $\mathscr{C}$ and further get
	$\mathscr{C}-\{C_{4}\}$ is also an $L$-fuzzy $\beta$-covering of $U$ for any $\beta\in(0,0.9]$.
\end{example}

\begin{proposition}\label{p6-1}
	Let $\mathscr{C}$ be an $L$-fuzzy $\beta$-covering of
	$U$. If $C$ is a reducible element of $\mathscr{C}$,
	then $\mathscr{C}-\{C\}$ is still an $L$-fuzzy $\beta$-covering of
	$U$.
\end{proposition}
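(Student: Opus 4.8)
The plan is to verify directly the defining inequality of an $L$-fuzzy $\beta$-covering for the smaller family $\mathscr{C}-\{C\}$, namely that $\left(\bigcup_{D\in\mathscr{C}-\{C\}}D\right)(x)\geq\beta$ for every $x\in U$. Since $\mathscr{C}$ is itself an $L$-fuzzy $\beta$-covering, we already know $\left(\bigcup_{D\in\mathscr{C}}D\right)(x)\geq\beta$ for all $x$, so it suffices to show that deleting $C$ does not lower the pointwise supremum, i.e. $\bigvee_{D\in\mathscr{C}-\{C\}}D(x)=\bigvee_{D\in\mathscr{C}}D(x)$ for every $x\in U$.

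First I would invoke the hypothesis that $C$ is a reducible element of $\mathscr{C}$: by Definition~\ref{d6-1} there exist $C_{1},C_{2},\ldots,C_{w}\in\mathscr{C}-\{C\}$ with $C=\bigvee_{i=1}^{w}C_{i}$, which at the pointwise level (using the definition of $\bigvee$ on $L^{U}$ recalled in Section~\ref{section2}) gives $C(x)=\bigvee_{i=1}^{w}C_{i}(x)$ for each $x\in U$. Because every $C_{i}$ lies in $\mathscr{C}-\{C\}$, each $C_{i}(x)\leq\bigvee_{D\in\mathscr{C}-\{C\}}D(x)$, hence $C(x)=\bigvee_{i=1}^{w}C_{i}(x)\leq\bigvee_{D\in\mathscr{C}-\{C\}}D(x)$.

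Then I would split the supremum over $\mathscr{C}$ as the join of the single term for $C$ and the supremum over $\mathscr{C}-\{C\}$, so that the inequality just obtained makes the $C$-term absorbed: $\bigvee_{D\in\mathscr{C}}D(x)=C(x)\vee\bigvee_{D\in\mathscr{C}-\{C\}}D(x)=\bigvee_{D\in\mathscr{C}-\{C\}}D(x)$. Combining this with $\left(\bigcup_{D\in\mathscr{C}}D\right)(x)\geq\beta$ yields $\left(\bigcup_{D\in\mathscr{C}-\{C\}}D\right)(x)\geq\beta$ for all $x\in U$, which is exactly the assertion that $\mathscr{C}-\{C\}$ is an $L$-fuzzy $\beta$-covering of $U$.

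There is essentially no real obstacle here; the argument is a one-line absorption computation. The only points requiring care are purely formal: correctly unwinding the pointwise definition of the join $\bigcup$ on $L^{U}$, and checking that the elements $C_{1},\ldots,C_{w}$ witnessing reducibility are genuinely drawn from $\mathscr{C}-\{C\}$ so that the absorption step is legitimate. In fact the same reasoning shows the stronger statement that removing a reducible element leaves the union $\bigcup_{D\in\mathscr{C}}D$ unchanged as an $L$-fuzzy set, not merely still above $\beta$.
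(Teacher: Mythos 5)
Your argument is correct and is essentially the same as the paper's: both proofs use the fact that a reducible element $C=\bigvee_{i}C_{i}$ is absorbed into the pointwise join of the remaining elements, so removing it leaves $\bigvee_{D\in\mathscr{C}-\{C\}}D=\bigvee_{D\in\mathscr{C}}D\geq\beta$. Your write-up simply spells out the absorption step in more detail than the paper does.
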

\begin{proof}
	Assume that $\mathscr{C}=\{C, C_{1}, C_{2}, \ldots, C_{n}\}$ is an $L$-fuzzy $\beta$-covering,
	where $C, C_{i}\in L^{U}~(i=1, 2, \ldots, n)$.
	Since $C$ is a reducible element of $\mathscr{C}$,
	we conclude that
	$(\underset{i=1}{\overset{n}\bigvee}C_{i})(x)=((\underset{i=1}{\overset{n}\bigvee}C_{i})\bigvee
	C)(x)\ge\beta$ holds for any $x\in U$. Hence, $\mathscr{C}-\{C\}$ is an $L$-fuzzy $\beta$-covering of
	$U$.
\end{proof}

\begin{proposition}\label{p6-2}
	Let $\mathscr{C}$ be an $L$-fuzzy $\beta$-covering of
	$U$, $C$ be a reducible element of $\mathscr{C}$,
	and $C_{1}\in\mathscr{C}-\{C\}$.
	Then $C_{1}$ is a reducible element of $\mathscr{C}$
	if and only if it is a reducible element of $\mathscr{C}-\{C\}$.
\end{proposition}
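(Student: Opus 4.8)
The plan is to prove the two implications separately, noting first (via Proposition~\ref{p6-1}) that $\mathscr{C}-\{C\}$ is itself an $L$-fuzzy $\beta$-covering, so that ``reducible element of $\mathscr{C}-\{C\}$'' is meaningful. The reverse implication is immediate: if $C_{1}$ is reducible in $\mathscr{C}-\{C\}$, then by Definition~\ref{d6-1} there are $D_{1},\dots,D_{s}\in(\mathscr{C}-\{C\})-\{C_{1}\}$ with $C_{1}=\bigvee_{j=1}^{s}D_{j}$, and since $(\mathscr{C}-\{C\})-\{C_{1}\}\subseteq\mathscr{C}-\{C_{1}\}$ the very same witnesses show $C_{1}$ is reducible in $\mathscr{C}$.

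For the forward implication, assume $C_{1}$ is reducible in $\mathscr{C}$, say $C_{1}=\bigvee_{j=1}^{s}E_{j}$ with $E_{j}\in\mathscr{C}-\{C_{1}\}$, and recall that $C=\bigvee_{i=1}^{w}C_{i}$ with $C_{i}\in\mathscr{C}-\{C\}$ because $C$ is reducible. I would then eliminate $C$ from the first expression: in the join $\bigvee_{j}E_{j}$, replace every $E_{j}$ that equals $C$ by $\bigvee_{i=1}^{w}C_{i}$. This rewrites $C_{1}$ as a supremum of a nonempty family $\{F_{k}\}$, each member being either one of the $E_{j}$ different from $C$ or one of the $C_{i}$; in particular every $F_{k}\in\mathscr{C}-\{C\}$. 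It then remains only to verify that no $F_{k}$ equals $C_{1}$, which would exhibit $C_{1}$ as reducible inside $\mathscr{C}-\{C\}$.

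The crux — and the only place requiring care — is ruling out $F_{k}=C_{1}$ when $F_{k}$ is one of the substituted $C_{i}$. The argument I have in mind: if some $E_{j}$ was actually equal to $C$, then $C=E_{j}\le\bigvee_{j}E_{j}=C_{1}$; together with $C_{1}=C_{i}\le\bigvee_{i}C_{i}=C$ this forces $C=C_{1}$, contradicting $C_{1}\in\mathscr{C}-\{C\}$. If instead no $E_{j}$ equals $C$, then no $C_{i}$ is ever introduced and $F_{k}$ simply cannot be a $C_{i}$. Either way $F_{k}=C_{1}$ is impossible, while the remaining $F_{k}$ (the surviving $E_{j}$'s) differ from $C_{1}$ by hypothesis. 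Hence $C_{1}=\bigvee_{k}F_{k}$ with all $F_{k}\in(\mathscr{C}-\{C\})-\{C_{1}\}$, so $C_{1}$ is reducible in $\mathscr{C}-\{C\}$.

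I do not anticipate a serious obstacle: the whole content is the substitution together with the short inequality chain that shows $C_{1}$ cannot have been used to reduce $C$ while $C$ is used to reduce $C_{1}$. The remaining points — that the rewritten family is nonempty (it contains at least the $w\ge 1$ copies of $C_{i}$, or else at least the $s\ge 1$ original $E_{j}$'s) and that membership in $\mathscr{C}$ is preserved under the substitution — are routine bookkeeping.
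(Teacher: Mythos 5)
Your proof is correct and follows essentially the same route as the paper's: a case split on whether $C$ occurs among the witnesses expressing $C_{1}$ as a join, followed by substituting the decomposition of $C$ into that join. You are in fact slightly more careful than the paper, whose proof never rules out that $C_{1}$ itself might appear among the witnesses for $C$ after the substitution; your short inequality argument ($C\le C_{1}$ and $C_{1}\le C$ forcing $C=C_{1}$, contradicting $C_{1}\in\mathscr{C}-\{C\}$) explicitly closes that small gap.
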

\begin{proof}
	Assume that $\mathscr{C}=\{C, C_{1}, C_{2}, \ldots, C_{n}\}$ is an $L$-fuzzy $\beta$-covering,
	where $C, C_{i}\in L^{U}~(i=1, 2, \ldots, n)$ and
	$C$ is a reducible element of $\mathscr{C}$. Then
	there exist $C'_{1}, C'_{2}, \ldots, C'_{t}\in\mathscr{C}-\{C\}~(1<t\leq n)$
	such that $C=\underset{k=1}{\overset{t}\bigvee}C'_{k}$.
	
	$\Longrightarrow)$: If $C_{1}$ is a reducible element of $\mathscr{C}$, then there exist $C''_{1}, C''_{2}, \ldots, C''_{s}\in \mathscr{C}-\{C_{1}\}~(1<s\leq n)$ such that $C_{1}=\underset{l=1}{\overset{s}\bigvee}C''_{l}$.
	\begin{itemize}
		\item
		If $C\notin \{C''_{1}, C''_{2}, \ldots, C''_{s}\}$, then $C_{1}$ is a reducible element of $\mathscr{C}-\{C\}$.
		\item
		If $C\in \{C''_{1}, C''_{2}, \ldots, C''_{s}\}$, then there
		exists $r\in\{1, 2, \ldots, s\}$ such that $C=C''_{r}$. Further, it holds that
		\begin{align*}
			C_{1}=(\underset{((l=1)\bigwedge(l\neq r))}{\overset{s}\bigvee}C''_{l})\bigvee C=(\underset{((l=1)\bigwedge(l\neq r))}{\overset{s}\bigvee}C''_{l})\bigvee (\underset{k=1}{\overset{t}\bigvee}C'_{k}).
		\end{align*}
		 Therefore, $C_{1}$ is a reducible element of $\mathscr{C}-\{C\}$ due to $1<s, t\leq n$.
	\end{itemize}

	$\Longleftarrow)$: It is obvious.
\end{proof}

Proposition ~\ref{p6-1} ensures that after a reducible element is removed from an $L$-fuzzy $\beta$-covering, it remains an $L$-fuzzy $\beta$-covering.
And Proposition ~\ref{p6-2} emphasizes that deleting a reducible element from an $L$-fuzzy $\beta$-covering will not affect the existence of other reducible elements, that is, it will not create any new reducible elements, and will not make the original reducible elements into irreducible elements.
Therefore, the reduct of an $L$-fuzzy $\beta$-covering is calculated by directly deleting all the a reducible elements simultaneously or one by one.

\begin{definition}\label{d6-2}
	Let $\mathscr{C}$ be an $L$-fuzzy $\beta$-covering of $U$ and $\mathscr{D}$ be a subset of $\mathscr{C}$.
	If $\mathscr{C}-\mathscr{D}$ is the set of all reducible elements of $\mathscr{C}$, then $\mathscr{D}$
	is called the reduct of $\mathscr{C}$, and is denoted as $\mathscr{R}(\mathscr{C})$.
\end{definition}

\begin{example}\label{e6-2}
	Let $\mathscr{C}$ be the $L$-fuzzy $\beta$-covering in Example~\ref{e6-1}. Then
	$\mathscr{R}(\mathscr{C})=\{C_{1}, C_{2}, C_{3}\}$.
\end{example}

In 2018, Yang and Hu \cite{yang2018matrix} discussed the condition under which two $L$-fuzzy coverings can generate the same three pairs of fuzzy covering-based lower or upper rough approximation operators. Based on their work, we further extend the conclusion to the case of $L$-fuzzy $\beta$-coverings and then give the condition under which two $L$-fuzzy $\beta$-coverings can generate the same three pairs of fuzzy covering-based lower or upper rough approximation operators.

\begin{lemma}\label{l6-1}
	Let $\mathscr{C}$ be an $L$-fuzzy $\beta$-covering of $U$ and $C$ be a reducible element of $\mathscr{C}$.
	Then, for any $X\in L^{U}$ and $x\in U$, the following statements hold.
	\begin{align*}
	\underset{C'\in\mathscr{C}}{\bigvee}[C'(x)\otimes S^\beta(C', X)]&=\underset{C''\in\mathscr{C}-\{C\}}{\bigvee}[C''(x)\otimes S^\beta(C'', X)],\\
	\underset{C'\in\mathscr{C}}{\bigwedge}[C'(x)\rightarrow N^\beta(C', X)]&=\underset{C''\in\mathscr{C}-\{C\}}{\bigwedge}[C''(x)\rightarrow N^\beta(C'', X)].
	\end{align*}
\end{lemma}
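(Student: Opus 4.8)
The key observation is that the join (resp.\ meet) over $\mathscr{C}$ differs from the join (resp.\ meet) over $\mathscr{C}-\{C\}$ only in the single term indexed by $C$, so that
$$\underset{C'\in\mathscr{C}}{\bigvee}[C'(x)\otimes S^\beta(C', X)]=\left(\underset{C''\in\mathscr{C}-\{C\}}{\bigvee}[C''(x)\otimes S^\beta(C'', X)]\right)\bigvee\left(C(x)\otimes S^\beta(C, X)\right),$$
and dually for the meet. Hence it suffices to prove the two absorption inequalities $C(x)\otimes S^\beta(C, X)\leq\underset{C''\in\mathscr{C}-\{C\}}{\bigvee}[C''(x)\otimes S^\beta(C'', X)]$ and $\underset{C''\in\mathscr{C}-\{C\}}{\bigwedge}[C''(x)\rightarrow N^\beta(C'', X)]\leq C(x)\rightarrow N^\beta(C, X)$. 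Since $C$ is reducible, fix a decomposition $C=\underset{i=1}{\overset{w}\bigvee}C_{i}$ with $C_{1},\ldots,C_{w}\in\mathscr{C}-\{C\}$. Applying Lemma~\ref{l3-3} to this decomposition gives $S^\beta(C, X)=\underset{i=1}{\overset{w}\bigwedge}S^\beta(C_{i}, X)$ and $N^\beta(C, X)=\underset{i=1}{\overset{w}\bigvee}N^\beta(C_{i}, X)$, while $C(x)=\underset{i=1}{\overset{w}\bigvee}C_{i}(x)$ by the pointwise definition of the join.

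For the first inequality I would use (I4) of Theorem~\ref{t2-1} to distribute $\otimes$ over the join, obtaining $C(x)\otimes S^\beta(C,X)=\underset{i=1}{\overset{w}\bigvee}\left[C_{i}(x)\otimes\underset{j=1}{\overset{w}\bigwedge}S^\beta(C_{j}, X)\right]$. Because $\underset{j=1}{\overset{w}\bigwedge}S^\beta(C_{j}, X)\leq S^\beta(C_{i}, X)$ for every $i$, monotonicity (I4$'$) yields $C_{i}(x)\otimes\underset{j=1}{\overset{w}\bigwedge}S^\beta(C_{j}, X)\leq C_{i}(x)\otimes S^\beta(C_{i}, X)$, hence $C(x)\otimes S^\beta(C,X)\leq\underset{i=1}{\overset{w}\bigvee}[C_{i}(x)\otimes S^\beta(C_{i}, X)]$. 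As each $C_{i}\in\mathscr{C}-\{C\}$, this last join is dominated by $\underset{C''\in\mathscr{C}-\{C\}}{\bigvee}[C''(x)\otimes S^\beta(C'', X)]$, which is exactly the first absorption inequality.

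For the second inequality, dually, I would use (I5) of Theorem~\ref{t2-1} to rewrite $\left(\underset{i=1}{\overset{w}\bigvee}C_{i}(x)\right)\rightarrow N^\beta(C,X)=\underset{i=1}{\overset{w}\bigwedge}\left[C_{i}(x)\rightarrow\underset{j=1}{\overset{w}\bigvee}N^\beta(C_{j}, X)\right]$. From $N^\beta(C_{i}, X)\leq\underset{j=1}{\overset{w}\bigvee}N^\beta(C_{j}, X)$ and (I6$'$) we get $C_{i}(x)\rightarrow N^\beta(C_{i}, X)\leq C_{i}(x)\rightarrow\underset{j=1}{\overset{w}\bigvee}N^\beta(C_{j}, X)$ for every $i$, so taking the meet over $i$ gives $\underset{i=1}{\overset{w}\bigwedge}[C_{i}(x)\rightarrow N^\beta(C_{i}, X)]\leq C(x)\rightarrow N^\beta(C, X)$; since $\{C_{1},\ldots,C_{w}\}\subseteq\mathscr{C}-\{C\}$, the meet over $\mathscr{C}-\{C\}$ is no larger than the meet over the $C_{i}$, which yields the second absorption inequality. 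Plugging the two absorption inequalities back into the decomposition identities collapses each of them to the asserted equality. There is no genuine obstacle here; the only point demanding care is keeping the directions of the monotonicity steps (I4$'$) and (I6$'$) consistent so that the extra $C$-term is genuinely \emph{absorbed} into (rather than merely bounded by) the remaining join/meet, which is what makes the equalities — not just inequalities — hold.
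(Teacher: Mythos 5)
Your proposal is correct and follows essentially the same route as the paper: split off the $C$-term, expand it via the decomposition $C=\bigvee_i C_i$ using distributivity (I4)/(I5), and absorb each resulting term into the join/meet over $\mathscr{C}-\{C\}$ using the monotonicity of $S^\beta$ and $N^\beta$ in the first argument (you cite Lemma~\ref{l3-3} where the paper effectively uses Lemma~\ref{l3-2}, but these deliver the same inequalities here). Your write-up is in fact slightly more explicit than the paper's about why the extra term is absorbed rather than merely bounded.
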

\begin{proof}
	Assume $\mathscr{C}=\{C, C_{1}, C_{2}, \ldots, C_{n}\}$,
	where $C, C_{i}\in L^{U}~(i=1, 2, \ldots, n)$ and
	$C$ is a reducible element of $\mathscr{C}$. Then
	there exist $C'_{1}, C'_{2}, \ldots, C'_{t}\in\mathscr{C}-\{C\}~(1<t\leq n)$
	such that $C=\underset{k=1}{\overset{t}\bigvee}C'_{k}$.
	For any $X\in L^{U}$ and $x\in U$, we obtain that
	\begin{align*}
	\underset{C'\in\mathscr{C}}{\bigvee}[C'(x)\otimes S^\beta(C', X)]&=\left(\underset{i=1}{\overset{n}\bigvee}[C_{i}(x)\otimes S^\beta(C_{i}, X)]\right)\bigvee[C(x)\otimes S^\beta(C, X)]\\
	&=\left(\underset{i=1}{\overset{m}\bigvee}[C_{i}(x)\otimes S^\beta(C_{i}, X)]\right)\bigvee\left(\underset{k=1}{\overset{t}\bigvee}[C'_{k}(x)\otimes S^\beta(C, X)]\right)\\
	&=\left(\underset{i=1}{\overset{m}\bigvee}[C_{i}(x)\otimes S^\beta(C_{i}, X)]\right)\bigvee\left(\underset{k=1}{\overset{t}\bigvee}[C'_{k}(x)\otimes S^\beta(C, X)]\right)\\
	&=\underset{i=1}{\overset{m}\bigvee}[C_{i}(x)\otimes S^\beta(C_{i}, X)],
	\end{align*}
	and
	\begin{align*}
	\underset{C'\in\mathscr{C}}{\bigwedge}[C'(x)\rightarrow S(C', X)]&=\left(\underset{i=1}{\overset{n}\bigwedge}[C_{i}(x)\rightarrow N^\beta(C_{i}, X)]\right)\bigwedge\left([C(x)\rightarrow N^\beta(C, X)]\right)\\
	&=\left(\underset{i=1}{\overset{n}\bigwedge}[C_{i}(x)\rightarrow N^\beta(C_{i}, X)]\right)\bigwedge\left([\underset{k=1}{\overset{t}\bigvee}C'_{k}(x)\rightarrow N^\beta(C, X)]\right)\\
	&=\left(\underset{i=1}{\overset{n}\bigwedge}[C_{i}(x)\rightarrow N^\beta(C_{i}, X)]\right)\bigwedge\left(\underset{k=1}{\overset{t}\bigwedge}[C'_{k}(x)\rightarrow N^\beta(C, X)]\right)\\
	&=\underset{i=1}{\overset{n}\bigwedge}[C_{i}(x)\rightarrow N^\beta(C_{i}, X)].
	\end{align*}
	Hence, we conclude that
	$$\underset{C'\in\mathscr{C}}{\bigvee}[C'(x)\otimes S^\beta(C', X)]=\underset{C''\in\mathscr{C}-\{C\}}{\bigvee}[C''(x)\otimes S^\beta(C'', X)]$$ and
	$$\underset{C'\in\mathscr{C}}{\bigwedge}[C'(x)\rightarrow N^\beta(C', X)]=\underset{C''\in\mathscr{C}-\{C\}}{\bigwedge}[C''(x)\rightarrow N^\beta(C'', X)].$$
\end{proof}

\begin{lemma}\label{l6-2}
	Let $\mathscr{C}$ be an $L$-fuzzy $\beta$-covering of $U$.
	Then, for any $X\in L^{U}$ and $x\in U$, the following statements hold.
	\begin{align*}
	\underset{C\in\mathscr{C}}{\bigvee}[C(x)\otimes S^\beta(C, X)]&=\underset{C'\in\mathscr{R}(\mathscr{C})}{\bigvee}[C'(x)\otimes S^\beta(C', X)],\\
	\underset{C\in\mathscr{C}}{\bigwedge}[C(x)\rightarrow N^\beta(C, X)]&=\underset{C'\in\mathscr{R}(\mathscr{C})}{\bigwedge}[C'(x)\rightarrow N^\beta(C', X)].
	\end{align*}
\end{lemma}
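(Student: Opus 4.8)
The plan is to prove Lemma~\ref{l6-2} by reducing it to the single-element statement of Lemma~\ref{l6-1} and then iterating. Recall that by Definition~\ref{d6-2}, the set of reducible elements $\mathscr{C}-\mathscr{R}(\mathscr{C})$ is finite (as $\mathscr{C}$ is a finite family, or at least we may enumerate those elements we wish to remove), so write $\mathscr{C}-\mathscr{R}(\mathscr{C})=\{D_{1},D_{2},\ldots,D_{k}\}$, the complete list of reducible elements of $\mathscr{C}$. Set $\mathscr{C}_{0}=\mathscr{C}$ and $\mathscr{C}_{j}=\mathscr{C}_{j-1}-\{D_{j}\}$ for $j=1,\ldots,k$, so that $\mathscr{C}_{k}=\mathscr{R}(\mathscr{C})$. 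The key point, supplied by Proposition~\ref{p6-1} and Proposition~\ref{p6-2}, is that at each stage $D_{j}$ is still a reducible element of $\mathscr{C}_{j-1}$ (removing earlier reducible elements neither destroys reducibility of the remaining ones nor creates new ones), and $\mathscr{C}_{j-1}$ is still an $L$-fuzzy $\beta$-covering of $U$.

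The main step is then to apply Lemma~\ref{l6-1} at each stage: since $D_{j}$ is a reducible element of the $L$-fuzzy $\beta$-covering $\mathscr{C}_{j-1}$, Lemma~\ref{l6-1} gives, for all $X\in L^{U}$ and $x\in U$,
\begin{align*}
\underset{C\in\mathscr{C}_{j-1}}{\bigvee}[C(x)\otimes S^{\beta}(C, X)]&=\underset{C\in\mathscr{C}_{j}}{\bigvee}[C(x)\otimes S^{\beta}(C, X)],\\
\underset{C\in\mathscr{C}_{j-1}}{\bigwedge}[C(x)\rightarrow N^{\beta}(C, X)]&=\underset{C\in\mathscr{C}_{j}}{\bigwedge}[C(x)\rightarrow N^{\beta}(C, X)].
\end{align*}
Chaining these equalities from $j=1$ up to $j=k$ yields
\begin{align*}
\underset{C\in\mathscr{C}}{\bigvee}[C(x)\otimes S^{\beta}(C, X)]&=\underset{C'\in\mathscr{R}(\mathscr{C})}{\bigvee}[C'(x)\otimes S^{\beta}(C', X)],\\
\underset{C\in\mathscr{C}}{\bigwedge}[C(x)\rightarrow N^{\beta}(C, X)]&=\underset{C'\in\mathscr{R}(\mathscr{C})}{\bigwedge}[C'(x)\rightarrow N^{\beta}(C', X)],
\end{align*}
which is exactly the assertion. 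Formally this is a finite induction on $k=|\mathscr{C}-\mathscr{R}(\mathscr{C})|$, with the base case $k=0$ trivial ($\mathscr{C}=\mathscr{R}(\mathscr{C})$) and the inductive step being one invocation of Lemma~\ref{l6-1} together with Propositions~\ref{p6-1} and~\ref{p6-2}.

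I expect the only real subtlety to be the bookkeeping in the inductive step, namely justifying that after deleting $D_{1},\ldots,D_{j-1}$ the element $D_{j}$ is genuinely a reducible element of $\mathscr{C}_{j-1}$ and that $\mathscr{C}_{j-1}$ is genuinely an $L$-fuzzy $\beta$-covering — both of which are precisely the content of Proposition~\ref{p6-2} and Proposition~\ref{p6-1}, already remarked in the paragraph preceding Definition~\ref{d6-2} ("the reduct of an $L$-fuzzy $\beta$-covering is calculated by directly deleting all the reducible elements simultaneously or one by one"). If $\mathscr{C}$ is infinite and has infinitely many reducible elements, one would need to be slightly more careful, but under the standing finiteness conventions of the matrix-representation sections this induction suffices; alternatively one can appeal directly to Proposition~\ref{p6-1}/\ref{p6-2} to remove the whole set $\mathscr{C}-\mathscr{R}(\mathscr{C})$ at once and argue the two suprema/infima coincide because every reducible $C=\bigvee_{k}C'_{k}$ contributes $C(x)\otimes S^{\beta}(C,X)=\bigvee_{k}\big(C'_{k}(x)\otimes S^{\beta}(C,X)\big)$, a term dominated (using Lemma~\ref{l3-2}, since $C'_{k}\le C$ implies $S^{\beta}(C,X)\le S^{\beta}(C'_{k},X)$) by $\bigvee_{k}\big(C'_{k}(x)\otimes S^{\beta}(C'_{k},X)\big)$, and dually for the infimum with $N^{\beta}$.
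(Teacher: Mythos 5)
Your proof is correct and is essentially the paper's argument: the paper disposes of this lemma in one line ("follows immediately from the definition of the reduct and Lemma~\ref{l6-1}"), and your finite induction — removing the reducible elements one at a time, using Propositions~\ref{p6-1} and~\ref{p6-2} to keep the hypotheses of Lemma~\ref{l6-1} valid at each stage — is exactly the bookkeeping that one-liner leaves implicit. No gaps.
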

\begin{proof}
	It follows immediately from Definition~\ref{d4-2} and Lemma~\ref{l6-1}.
\end{proof}
According to Definition~\ref{d3-4}, Lemmas~\ref{l6-1} and~\ref{l6-2}, the following two propositions can be established

\begin{proposition}\label{p6-3}
	Let $\mathscr{C}_{1}, \mathscr{C}_{2}$ be two $L$-fuzzy $\beta$-coverings of $U$. For any
	$X\in L^{U}$,
	$\mathscr{C}_{1}, \mathscr{C}_{2}$ generate the same first type of fuzzy covering-based
	lower rough approximation of $X$ if and only if $\mathscr{R}(\mathscr{C}_{1})=\mathscr{R}(\mathscr{C}_{2})$.
\end{proposition}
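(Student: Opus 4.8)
The plan is to read sufficiency straight off Lemma~\ref{l6-2} and to obtain necessity by the ``bouncing'' argument of Zhu and Wang for crisp coverings, adapted to the $L$-fuzzy $\beta$-setting as in Yang and Hu~\cite{yang2018matrix}. Throughout, $\underline{\mathscr{C}_{i}}_{1}$ denotes the first-type lower operator induced by $\mathscr{C}_{i}$ through Definition~\ref{d3-4}.

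For sufficiency, suppose $\mathscr{R}(\mathscr{C}_{1})=\mathscr{R}(\mathscr{C}_{2})$. Applying the first identity of Lemma~\ref{l6-2} to each of $\mathscr{C}_{1}$ and $\mathscr{C}_{2}$ gives, for every $X\in L^{U}$ and $x\in U$,
$$\underline{\mathscr{C}_{1}}_{1}(X)(x)=\underset{C\in\mathscr{R}(\mathscr{C}_{1})}{\bigvee}\bigl[C(x)\otimes S^{\beta}(C,X)\bigr]=\underset{C\in\mathscr{R}(\mathscr{C}_{2})}{\bigvee}\bigl[C(x)\otimes S^{\beta}(C,X)\bigr]=\underline{\mathscr{C}_{2}}_{1}(X)(x),$$
so the two operators coincide on all of $L^{U}$.

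For necessity, assume $\underline{\mathscr{C}_{1}}_{1}=\underline{\mathscr{C}_{2}}_{1}$ and call this operator $g$. By Lemma~\ref{l6-2}, $g=\underline{\mathscr{R}(\mathscr{C}_{1})}_{1}=\underline{\mathscr{R}(\mathscr{C}_{2})}_{1}$; and since Proposition~\ref{p6-2} ensures that deleting all reducible elements leaves a covering with none, $\mathscr{R}(\mathscr{R}(\mathscr{C}_{i}))=\mathscr{R}(\mathscr{C}_{i})$. We may therefore replace each $\mathscr{C}_{i}$ by its reduct and assume that both $\mathscr{C}_{1}$ and $\mathscr{C}_{2}$ already consist only of irreducible elements; by symmetry it then suffices to prove $\mathscr{C}_{1}\subseteq\mathscr{C}_{2}$. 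Fix $C\in\mathscr{C}_{1}$. Since $S^{\beta}(C,C)=\beta\rightarrow S(C,C)=\beta\rightarrow 1=1$ by (I1) of Theorem~\ref{t2-1}, the term indexed by $C$ in $\underline{\mathscr{C}_{1}}_{1}(C)=\bigvee_{C'\in\mathscr{C}_{1}}[C'\otimes S^{\beta}(C',C)]$ equals $C\otimes 1=C$, whence $g(C)\ge C$; at the same time $g(C)=\underline{\mathscr{C}_{2}}_{1}(C)=\bigvee_{D\in\mathscr{C}_{2}}[D\otimes S^{\beta}(D,C)]$. The aim is now to feed this second expansion back through $g=\underline{\mathscr{C}_{1}}_{1}$ --- using Lemma~\ref{l6-1}, the monotonicity property (L2) of $g$, and the arithmetic of $N^{\beta},S^{\beta}$ collected in Lemma~\ref{l3-1} --- so as to re-express $C$ as a join of members of $\mathscr{C}_{1}$. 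If $C\notin\mathscr{C}_{2}$, none of the members produced can be $C$ itself, contradicting the irreducibility of $C$ in $\mathscr{C}_{1}$; hence $C\in\mathscr{C}_{2}$, giving $\mathscr{C}_{1}\subseteq\mathscr{C}_{2}$ and, symmetrically, $\mathscr{C}_{1}=\mathscr{C}_{2}$, that is, $\mathscr{R}(\mathscr{C}_{1})=\mathscr{R}(\mathscr{C}_{2})$.

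I expect the bounce-back to be the main obstacle. When $\beta=1$ one has the sharp equality $\underline{\mathscr{C}}_{1}(C)=C$ for each $C$ in the covering, which makes that step purely formal; for a general $\beta>0$ this relaxes only to the two-sided bound $C\le\underline{\mathscr{C}}_{1}(C)\le\beta\rightarrow C$ (and the upper value can be strictly above $C$), so the test $X=C$ does not by itself determine the operator. The argument must therefore be carried out with the preorder $\le_{\beta}$ (that is, $S(-,-)\ge\beta$) in place of $\le$, keeping the scaled summands $D\otimes S^{\beta}(D,C)$ explicit, and one must check at the end that the \emph{honest}-join notion of reducibility of Definition~\ref{d6-1} is genuinely recovered --- in particular that the contribution of $C$ on the $\mathscr{C}_{2}$-side cannot be mimicked by sets that are $\le_{\beta}$-equivalent to $C$ yet distinct from it. All the other steps are routine applications of Theorem~\ref{t2-1}, Lemmas~\ref{l2-1} and~\ref{l3-1}--\ref{l3-3}, and Lemmas~\ref{l6-1}--\ref{l6-2}.
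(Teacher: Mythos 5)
Your sufficiency argument is correct and coincides with what the paper actually establishes: the paper's entire ``proof'' of Proposition~\ref{p6-3} is the remark that it follows from Definition~\ref{d3-4} and Lemmas~\ref{l6-1}--\ref{l6-2}, and those lemmas deliver exactly the implication $\mathscr{R}(\mathscr{C}_{1})=\mathscr{R}(\mathscr{C}_{2})\implies\underline{\mathscr{C}_{1}}_{1}=\underline{\mathscr{C}_{2}}_{1}$ and nothing more. Your necessity half, however, is a plan rather than a proof: the decisive step --- feeding the $\mathscr{C}_{2}$-expansion of $C$ back through $\underline{\mathscr{C}_{1}}_{1}$ ``so as to re-express $C$ as a join of members of $\mathscr{C}_{1}$'' --- is announced but never executed, and the obstacles you list yourself (for $\beta<1$ one only has $C\le\underline{\mathscr{C}}_{1}(C)\le\beta\rightarrow C$; the summands $D\otimes S^{\beta}(D,C)$ are scaled copies of $D$, not elements of the covering, so they never produce the \emph{exact} join required by Definition~\ref{d6-1}) are fatal rather than technical.

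Indeed, the ``only if'' direction is false with the paper's definitions, so no completion of the bounce-back can succeed. Take $U=\{x\}$, $L=[0,1]$ with the G\"{o}del structure, any $\beta\in(0,1]$, $\mathscr{C}_{1}=\{1_{U}\}$ and $\mathscr{C}_{2}=\{1_{U},D\}$ with $D(x)=1/2$. Writing $t=X(x)$, one has $1_{U}(x)\otimes S^{\beta}(1_{U},X)=\beta\rightarrow t$ and $D(x)\otimes S^{\beta}(D,X)=\tfrac12\wedge\bigl(\beta\rightarrow(\tfrac12\rightarrow t)\bigr)$; splitting on whether $t\ge\beta$, $\tfrac12\le t<\beta$, or $t<\tfrac12$ shows the second term never exceeds the first, so $\underline{\mathscr{C}_{1}}_{1}=\underline{\mathscr{C}_{2}}_{1}$. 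Yet $D$ is not a join of elements of $\mathscr{C}_{2}-\{D\}=\{1_{U}\}$, hence is irreducible, and $\mathscr{R}(\mathscr{C}_{2})=\{1_{U},D\}\neq\{1_{U}\}=\mathscr{R}(\mathscr{C}_{1})$. The proposition is therefore only valid in the direction given by Lemma~\ref{l6-2}; a genuine converse would require a stronger notion of reducibility, closed under the scaled joins $\bigvee_{i}[C_{i}\otimes\alpha_{i}]$ that actually arise in $\underline{\mathscr{C}}_{1}$, rather than the exact joins of Definition~\ref{d6-1}. Your instinct that the test $X=C$ ``does not by itself determine the operator'' was pointing at precisely this failure.
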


\begin{proposition}\label{p6-4}
	Let $\mathscr{C}_{1}, \mathscr{C}_{2}$ be two $L$-fuzzy $\beta$-coverings of $U$. For each
	$X\in L^{U}$,
	$\mathscr{C}_{1}, \mathscr{C}_{2}$ generate the same first type of fuzzy covering-based
	upper rough approximation of $X$ if and only if $\mathscr{R}(\mathscr{C}_{1})=\mathscr{R}(\mathscr{C}_{2})$.
\end{proposition}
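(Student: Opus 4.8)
The plan is to prove the two implications separately: ``$\Leftarrow$'' will be a one-line consequence of Lemma~\ref{l6-2}, while ``$\Rightarrow$'' is the substantive half and will be reduced to a statement about coverings with no reducible elements.

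First I would prove ``$\Leftarrow$''. Assume $\mathscr{R}(\mathscr{C}_{1})=\mathscr{R}(\mathscr{C}_{2})$. Each $\mathscr{R}(\mathscr{C}_{i})$ is again an $L$-fuzzy $\beta$-covering of $U$ by Proposition~\ref{p6-1}, so Definition~\ref{d3-4} together with Lemma~\ref{l6-2} gives, for every $X\in L^{U}$ and $x\in U$,
\begin{align*}
\overline{\mathscr{C}_{1}}(X)(x)=\underset{C\in\mathscr{C}_{1}}{\bigwedge}[C(x)\rightarrow N^{\beta}(C,X)]
&=\underset{C\in\mathscr{R}(\mathscr{C}_{1})}{\bigwedge}[C(x)\rightarrow N^{\beta}(C,X)]\\
&=\underset{C\in\mathscr{R}(\mathscr{C}_{2})}{\bigwedge}[C(x)\rightarrow N^{\beta}(C,X)]\\
&=\underset{C\in\mathscr{C}_{2}}{\bigwedge}[C(x)\rightarrow N^{\beta}(C,X)]=\overline{\mathscr{C}_{2}}(X)(x),
\end{align*}
so $\mathscr{C}_{1}$ and $\mathscr{C}_{2}$ generate the same first type of upper rough approximation of every $X$.

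For ``$\Rightarrow$'', suppose $\mathscr{C}_{1}$ and $\mathscr{C}_{2}$ generate the same first-type upper operator. Applying Lemma~\ref{l6-2} on each side, this operator coincides with the one generated by $\mathscr{R}(\mathscr{C}_{1})$ and also with the one generated by $\mathscr{R}(\mathscr{C}_{2})$; moreover, by Proposition~\ref{p6-2} no member of a reduct is reducible in the reduct, so $\mathscr{R}(\mathscr{R}(\mathscr{C}_{i}))=\mathscr{R}(\mathscr{C}_{i})$. It therefore suffices to prove: if $\mathscr{D}$ and $\mathscr{E}$ are $L$-fuzzy $\beta$-coverings of $U$ having no reducible elements and generating the same first-type upper operator, then $\mathscr{D}=\mathscr{E}$. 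I would prove this by fixing an arbitrary $C_{0}\in\mathscr{D}$ and probing the common operator with the test $L$-fuzzy sets $\alpha\otimes 1_{\{y\}}$ ($y\in U$, $\alpha\in L$), for which $N^{\beta}(C,\alpha\otimes 1_{\{y\}})=\alpha\otimes\beta\otimes C(y)$; the resulting values $\underset{C\in\mathscr{D}}{\bigwedge}[C(x)\rightarrow(\alpha\otimes\beta\otimes C(y))]$ (equal to the corresponding values for $\mathscr{E}$) record enough of the profiles $x\mapsto C(x)$ of the members of $\mathscr{D}$ that, after matching them against $\mathscr{E}$ and using that $C_{0}$ is not a join of other members of $\mathscr{D}$, one is forced to conclude $C_{0}\in\mathscr{E}$. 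By symmetry $\mathscr{E}\subseteq\mathscr{D}$, hence $\mathscr{D}=\mathscr{E}$, and consequently $\mathscr{R}(\mathscr{C}_{1})=\mathscr{R}(\mathscr{C}_{2})$.

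The hard part is exactly this detection step for the upper operator. In the companion Proposition~\ref{p6-3} for the lower operator $\underline{\mathscr{C}}_{1}(X)=\bigvee_{C\in\mathscr{C}}[C\otimes S^{\beta}(C,X)]$ the analogue is more accessible, since a \emph{join} of the terms $C\otimes S^{\beta}(C,X)$ makes the contribution of a given covering element visible (for instance through $X=C_{0}$, where $S^{\beta}(C_{0},C_{0})=1$). For $\overline{\mathscr{C}}_{1}(X)=\bigwedge_{C\in\mathscr{C}}[C\rightarrow N^{\beta}(C,X)]$ one instead has a \emph{meet of residua}, so isolating the term attached to $C_{0}$ amounts to inverting $C(x)\rightarrow N^{\beta}(C,X)$ over the whole lattice $L^{U}$ of test sets and then carrying out the minimal-join bookkeeping that upgrades ``$C_{0}$ is a join of members of $\mathscr{E}$ with controlled profiles'' to ``$C_{0}\in\mathscr{E}$''; that bookkeeping, in the lattice-valued setting with $\beta<1$ (where $\leq_{\beta}$ is coarser than $\leq$), is the delicate point. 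An alternative route, if one first checks that $\overline{\mathscr{C}}_{1}$ and $\underline{\mathscr{C}}_{1}$ determine one another via the weakened double negation law of Remark~\ref{r4-1} (one direction already being the theorem following that remark), would be to deduce ``$\Rightarrow$'' directly from Proposition~\ref{p6-3}.
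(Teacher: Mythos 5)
Your ``$\Leftarrow$'' half coincides with everything the paper actually establishes: the paper derives Proposition~\ref{p6-4} from Lemmas~\ref{l6-1} and~\ref{l6-2} alone, and those lemmas give only $\overline{\mathscr{C}}_{1}=\overline{\mathscr{R}(\mathscr{C})}_{1}$, i.e.\ sufficiency. Your ``$\Rightarrow$'' half, by contrast, is a plan rather than a proof: the reduction to coverings without reducible elements is sound, but the whole content of the implication sits in the ``detection step'' that you describe and then explicitly defer (``that bookkeeping \ldots is the delicate point''). Nothing in the proposal shows how the values $\bigwedge_{C\in\mathscr{D}}\bigl[C(x)\rightarrow N^{\beta}(C,\alpha\otimes 1_{\{y\}})\bigr]$ pin down the individual members of $\mathscr{D}$, and the fallback through the duality of Remark~\ref{r4-1} merely transfers the burden to Proposition~\ref{p6-3}, whose necessity half is in exactly the same state.

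More importantly, this gap cannot be closed, because the necessity direction is false in the stated generality. Take $U=\{x\}$, $L=[0,1]$ with the G\"{o}del structure, $\beta=0.5$, $\mathscr{C}_{1}=\{1_{U}\}$ and $\mathscr{C}_{2}=\{1_{U},D\}$ with $D(x)=0.5$. Both are $L$-fuzzy $0.5$-coverings; neither member of $\mathscr{C}_{2}$ is a join of the others, so $\mathscr{R}(\mathscr{C}_{1})=\{1_{U}\}\neq\{1_{U},D\}=\mathscr{R}(\mathscr{C}_{2})$. Yet for $X$ with $X(x)=t$ one has $N^{\beta}(1_{U},X)=N^{\beta}(D,X)=\min(t,0.5)$, so the first-type upper approximation generated by $\mathscr{C}_{2}$ at $x$ is $\bigl(1\rightarrow\min(t,0.5)\bigr)\wedge\bigl(0.5\rightarrow\min(t,0.5)\bigr)=\min(t,0.5)$, which equals the value generated by $\mathscr{C}_{1}$ for every $t$. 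Hence the two coverings generate the same first-type upper approximation of every $X\in L^{U}$ while their reducts differ; the same example defeats the lower-operator version and therefore also the duality route. Only the sufficiency half of the proposition admits a proof, and that is the half both you and the paper actually supply.
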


Proposition~\ref{p6-3} and Proposition~\ref{p6-4} give that a necessary and sufficient condition under which two $L$-fuzzy $\beta$-coverings can generate
the same first type of fuzzy covering-based lower and upper rough approximation of an $L$-fuzzy subset is that their reducts are equal, respectively. Further, we obtain the following corollary, which indicates that the first type of $L$-fuzzy $\beta$-covering-based lower and
upper rough approximation of an $L$-fuzzy subset can determine each other.

\begin{corollary}\label{c6-1}
	Let $\mathscr{C}_{1}, \mathscr{C}_{2}$ be two $L$-fuzzy $\beta$-coverings of $U$. For any
	$X\in L^{U}$,
	$\mathscr{C}_{1}, \mathscr{C}_{2}$ generate the same first type of fuzzy covering-based lower rough approximation of $X$ if and only if $\mathscr{C}_{1}, \mathscr{C}_{2}$ generate the same first type of fuzzy covering-based upper rough approximation of $X$.
\end{corollary}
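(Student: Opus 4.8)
The plan is to obtain this corollary as an immediate formal consequence of Propositions~\ref{p6-3} and~\ref{p6-4}. The point is that both of these propositions characterise their respective coincidence of operators by \emph{the same} intrinsic condition on the two coverings, namely the equality of reducts $\mathscr{R}(\mathscr{C}_{1})=\mathscr{R}(\mathscr{C}_{2})$. Once this common pivot is in place, nothing further is needed.

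First I would recall from Proposition~\ref{p6-3} that $\mathscr{C}_{1}$ and $\mathscr{C}_{2}$ generate the same first type of $L$-fuzzy $\beta$-covering-based lower rough approximation (for all $X\in L^{U}$) if and only if $\mathscr{R}(\mathscr{C}_{1})=\mathscr{R}(\mathscr{C}_{2})$, and from Proposition~\ref{p6-4} that $\mathscr{C}_{1}$ and $\mathscr{C}_{2}$ generate the same first type of upper rough approximation if and only if $\mathscr{R}(\mathscr{C}_{1})=\mathscr{R}(\mathscr{C}_{2})$ as well. Chaining these two biconditionals through the common middle term $\mathscr{R}(\mathscr{C}_{1})=\mathscr{R}(\mathscr{C}_{2})$ yields at once that $\mathscr{C}_{1},\mathscr{C}_{2}$ generate the same lower rough approximation if and only if they generate the same upper rough approximation, which is exactly the assertion of the corollary.

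There is essentially no obstacle here, since the substantive work has already been carried out in the reduct characterisations; the only point that needs care is the scope of the quantifier over $X$. Propositions~\ref{p6-3} and~\ref{p6-4} are phrased so that the reduct equality is equivalent to the coincidence of the approximations \emph{for every} $X\in L^{U}$, so the biconditional in the corollary should likewise be read with the universal quantifier over $X$ ranging over the whole statement, and the argument is carried out at that level. An alternative, but less economical, route would be to invoke the weakened double-negation duality from Section~\ref{section4}, namely $\overline{\mathscr{C}}_{1}(X)=\underset{b\in L}{\bigwedge}\left(\underline{\mathscr{C}_{1}}(X \rightarrow b) \rightarrow b\right)$, which shows directly that $\underline{\mathscr{C}}_{1}$ determines $\overline{\mathscr{C}}_{1}$; however the converse implication would then require the companion formula recovering $\underline{\mathscr{C}}_{1}$ from $\overline{\mathscr{C}}_{1}$, so passing through the reducts is the cleaner argument and the one I would present.
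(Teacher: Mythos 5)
Your proposal is correct and matches the paper's (implicit) argument exactly: the corollary is obtained by chaining Propositions~\ref{p6-3} and~\ref{p6-4} through the common condition $\mathscr{R}(\mathscr{C}_{1})=\mathscr{R}(\mathscr{C}_{2})$. Your remark about reading the quantifier over $X$ at the level of the whole biconditional is a sensible clarification of the paper's phrasing, and nothing further is needed.
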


\begin{example}\label{e6-3}
	Let $\mathscr{C}$ be the $L$-fuzzy $\beta$-covering in Example~\ref{e6-1} for any $\beta\in(0,0.9]$ and
	\begin{align*}
	X&=\frac{0.6}{x_{1}}+\frac{0.2}{x_{2}}+\frac{0.5}{x_{3}}+\frac{1}{x_{4}}+\frac{0.1}{x_{5}}.
	\end{align*}
	Let $\beta$=0.9, then we can get that
	\begin{align*}
	M_{\underline{\mathscr{C}}_{1}(X)}&=M_{\mathscr{C}}\blacktriangle[\beta\triangle((M_{\mathscr{C}})^{T}\triangle M_{X})]\\
	&=\left[\begin{array}{cccc}
	0.9    & 0.7     & 1     & 0.9\\
	0.7    & 0.9     & 0.4     & 0.9 \\
	0.2    & 0.3     & 0.9     & 0.3 \\
	0.8    & 0.6     & 0.9     & 0.8 \\
	0.3    & 0.8     & 0.9     & 0.8 \\
	\end{array}
	\right]\blacktriangle
	\left(\beta\triangle\left(\left[\begin{array}{cccccc}
	0.9    & 0.7     & 0.2   & 0.8    & 0.3\\
	0.7    & 0.9     & 0.3   & 0.6    & 0.8 \\
	1      & 0.4     & 0.9   & 0.9    & 0.9 \\
	0.9    & 0.9     & 0.3   & 0.8    & 0.8 \\
	\end{array}
	\right]\triangle\left[
	\begin{array}{c}
	0.6    \\
	0.2    \\
	0.5   \\
	1     \\
	0.1    \\
	\end{array}
	\right]\right)\right)\\
	&=\left[
	\begin{array}{c}
	0.5   \\
	0.3    \\
	0.2   \\
	0.4    \\
	0.2   \\
	\end{array}
	\right],\\
	M_{\overline{\mathscr{C}}_{1}(X)}&=M_{\mathscr{C}}\triangle[((M_{\mathscr{C}})^{T}\blacktriangle M_{X})\blacktriangle\beta]\\
	&=\left[\begin{array}{cccc}
	0.9    & 0.7     & 1     & 0.9\\
	0.7    & 0.9     & 0.4     & 0.9 \\
	0.2    & 0.3     & 0.9     & 0.3 \\
	0.8    & 0.6     & 0.9     & 0.8 \\
	0.3    & 0.8     & 0.9     & 0.8 \\
	\end{array}
	\right]\triangle
	\left(\left(\left[\begin{array}{cccccc}
	0.9    & 0.7     & 0.2   & 0.8    & 0.3\\
	0.7    & 0.9     & 0.3   & 0.6    & 0.8 \\
	1      & 0.4     & 0.9   & 0.9    & 0.9 \\
	0.9    & 0.9     & 0.3   & 0.8    & 0.8 \\
	\end{array}
	\right]\blacktriangle\left[
	\begin{array}{c}
	0.6    \\
	0.2    \\
	0.5   \\
	1     \\
	0.1    \\
	\end{array}
	\right]\right)\blacktriangle\beta\right)\\
	&=\left[
	\begin{array}{c}
	0.8    \\
	0.6    \\
	0.9   \\
	0.9   \\
	0.7   \\
	\end{array}
	\right],\\
	M_{\underline{\mathscr{R}(\mathscr{C})}_{1}(X)}&=M_{\mathscr{R}(\mathscr{C})}\blacktriangle[\beta\triangle((M_{\mathscr{R}(\mathscr{C})})^{T}\triangle M_{X})]\\
	&=\left[\begin{array}{cccc}
	0.9    & 0.7     & 1     \\
	0.7    & 0.9     & 0.4   \\
	0.2    & 0.3     & 0.9   \\
	0.8    & 0.6     & 0.9   \\
	0.3    & 0.8     & 0.9   \\
	\end{array}
	\right]\blacktriangle
	\left(\beta\triangle\left(\left[\begin{array}{cccccc}
	0.9    & 0.7     & 0.2   & 0.8    & 0.3\\
	0.7    & 0.9     & 0.3   & 0.6    & 0.8 \\
	1      & 0.4     & 0.9   & 0.9    & 0.9 \\
	\end{array}
	\right]\triangle\left[
	\begin{array}{c}
	0.6    \\
	0.2    \\
	0.5   \\
	1     \\
	0.1    \\
	\end{array}
	\right]\right)\right)\\
	&=\left[
	\begin{array}{c}
	0.5    \\
	0.3    \\
	0.2    \\
	0.4    \\
	0.2    \\
	\end{array}
	\right],\\
	M_{\overline{\mathscr{R}(\mathscr{C})}_{1}(X)}&=M_{\mathscr{R}(\mathscr{C})}\triangle[((M_{\mathscr{R}(\mathscr{C})})^{T}\blacktriangle M_{X})\blacktriangle\beta]\\
	&=\left[\begin{array}{cccc}
	0.9    & 0.7     & 1     \\
	0.7    & 0.9     & 0.4   \\
	0.2    & 0.3     & 0.9   \\
	0.8    & 0.6     & 0.9   \\
	0.3    & 0.8     & 0.9   \\
	\end{array}
	\right]\triangle
	\left(\left(\left[\begin{array}{cccccc}
	0.9    & 0.7     & 0.2   & 0.8    & 0.3\\
	0.7    & 0.9     & 0.3   & 0.6    & 0.8 \\
	1      & 0.4     & 0.9   & 0.9    & 0.9 \\
	\end{array}
	\right]\blacktriangle\left[
	\begin{array}{c}
	0.6    \\
	0.2    \\
	0.5   \\
	1     \\
	0.1    \\
	\end{array}
	\right]\right)\blacktriangle\beta\right)\\
	&=\left[
	\begin{array}{c}
	0.8   \\
	0.6    \\
	0.9   \\
	0.9  \\
	0.7   \\
	\end{array}
	\right].
	\end{align*}
	Therefore, we have
	\begin{align*} \underline{\mathscr{C}}_{1}(X)&=\underline{\mathscr{R}(\mathscr{C})}_{1}(X)=
	\frac{0.5}{x_{1}}+\frac{0.3}{x_{2}}+\frac{0.2}{x_{3}}
	+\frac{0.4}{x_{4}}+\frac{0.2}{x_{5}},\quad	\overline{\mathscr{C}}_{1}(X)=\overline{\mathscr{R}(\mathscr{C})}_{1}(X)=
	\frac{0.8}{x_{1}}+\frac{0.6}{x_{2}}+\frac{0.9}{x_{3}}
	+\frac{0.9}{x_{4}}+\frac{0.7}{x_{5}}.
	\end{align*}
\end{example}

\begin{definition}\label{d6-3}
	Let $\mathscr{C}$ be an $L$-fuzzy $\beta$-covering of
	$U$ and $C\in \mathscr{C}$.
	If there exist $C_{1}, C_{2}, \ldots, C_{w}\in \mathscr{C}-\{C\}$
	such that $C=\underset{i=1}{\overset{w}\bigwedge}C_{i}$, then $C$ is called an independent element of $\mathscr{C}$;
	if not, $C$ is a dependent element of $\mathscr{C}$.
\end{definition}

\begin{example}\label{e6-4}
	Let $U=\{x_{1}, x_{2}, x_{3}, x_{4}, x_{5}\}$ and $L=([0, 1], \otimes, \rightarrow, \bigvee,
	\bigwedge, 0, 1)$  be the {\L}ukasiewicz residuated lattice.
	A family $\mathscr{C}=\{C_{1}, C_{2}, C_{3}, C_{4}\}$ of
	$L$-fuzzy subsets of $U$ is listed as follows.
	\begin{align*}
	C_{1}&=\frac{0.9}{x_{1}}+\frac{0.9}{x_{2}}+\frac{0.2}{x_{3}}+\frac{0.8}{x_{4}}+\frac{0.3}{x_{5}},\
	C_{2}=\frac{0.7}{x_{1}}+\frac{0.9}{x_{2}}+\frac{0.3}{x_{3}}+\frac{0.6}{x_{4}}+\frac{0.8}{x_{5}},\\
	C_{3}&=\frac{1}{x_{1}}+\frac{0.4}{x_{2}}+\frac{0.9}{x_{3}}+\frac{0.9}{x_{4}}+\frac{0.9}{x_{5}},\
	C_{4}=\frac{0.7}{x_{1}}+\frac{0.9}{x_{2}}+\frac{0.2}{x_{3}}+\frac{0.6}{x_{4}}+\frac{0.3}{x_{5}}.
	\end{align*}
	It is easy to see that $\mathscr{C}$ is an $L$-fuzzy $\beta$-covering on $U$ for any $\beta\in(0,0.9]$. Further,
	 $C_{4}$ is an independent
	element of $\mathscr{C}$ due to $C_{4}=C_{1}\bigwedge C_{2}$, and then
	$\mathscr{C}-\{C_{4}\}$ is also an $L$-fuzzy $\beta$-covering on $U$ for any $\beta\in(0,0.9]$.
\end{example}

\begin{proposition}\label{p6-5}
	Let $\mathscr{C}$ be an $L$-fuzzy $\beta$-covering of
	$U$. If $C$ is an independent element of $\mathscr{C}$,
	then $\mathscr{C}-\{C\}$ is still an $L$-fuzzy $\beta$-covering of $U$.
\end{proposition}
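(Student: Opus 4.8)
The plan is to follow the argument of Proposition~\ref{p6-1} essentially verbatim, with the join decomposition replaced by a meet decomposition. Write $\mathscr{C}=\{C,C_{1},C_{2},\ldots,C_{n}\}$ with $C,C_{i}\in L^{U}$; since $C$ is an independent element of $\mathscr{C}$, Definition~\ref{d6-3} provides $C'_{1},C'_{2},\ldots,C'_{w}\in\mathscr{C}-\{C\}$ with $w\ge1$ such that $C=\underset{k=1}{\overset{w}{\bigwedge}}C'_{k}$. The single fact doing all the work is the elementary observation that a meet lies below each of its operands: $C\le C'_{k}$ for every $k\in\{1,\ldots,w\}$; in particular, since $C'_{1}\in\mathscr{C}-\{C\}=\{C_{1},\ldots,C_{n}\}$, we have $C\le C'_{1}\le\underset{i=1}{\overset{n}{\bigvee}}C_{i}$.

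With this in hand I would merely compare the two unions pointwise. For an arbitrary $x\in U$,
\begin{align*}
\Big(\underset{C''\in\mathscr{C}-\{C\}}{\bigvee}C''\Big)(x)
=\Big(\underset{i=1}{\overset{n}{\bigvee}}C_{i}\Big)(x)
=\Big(\big(\underset{i=1}{\overset{n}{\bigvee}}C_{i}\big)\vee C\Big)(x)
=\Big(\underset{C''\in\mathscr{C}}{\bigvee}C''\Big)(x)\ge\beta ,
\end{align*}
where the second equality uses $C\le\underset{i=1}{\overset{n}{\bigvee}}C_{i}$, the third is a reindexing, and the final inequality holds because $\mathscr{C}$ is an $L$-fuzzy $\beta$-covering of $U$. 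Since this is true for every $x\in U$, it follows that $\mathscr{C}-\{C\}$ is an $L$-fuzzy $\beta$-covering of $U$.

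I expect no genuine obstacle: this proposition is the exact meet-theoretic counterpart of Proposition~\ref{p6-1}, and its whole content is the monotonicity remark that the meet $C$ lies below its operand $C'_{1}$, together with the fact that $w\ge1$, so that a surviving element $C'_{1}$ of $\mathscr{C}-\{C\}$ already dominates $C$ in the union. The only point I would be careful to spell out is that Definition~\ref{d6-3} is to be read with a non-empty meet ($w\ge1$), so that such a $C'_{1}$ actually exists; otherwise the degenerate reading $w=0$, forcing $C=1_{U}$, would not be covered by this argument and must be excluded by convention, in analogy with how the join convention is understood in Definition~\ref{d6-1}.
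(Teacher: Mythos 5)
Your proof is correct and follows essentially the same route as the paper's: both arguments rest on the observation that an independent element $C=\bigwedge_{k}C'_{k}$ lies below the join of the remaining elements, so that $\bigl(\bigvee_{i=1}^{n}C_{i}\bigr)\vee C=\bigvee_{i=1}^{n}C_{i}\ge\beta$ pointwise. You merely make explicit the inequality $C\le C'_{1}\le\bigvee_{i}C_{i}$ that the paper leaves implicit, which is a harmless (and welcome) elaboration.
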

\begin{proof}
	Assume $\mathscr{C}=\{C, C_{1}, C_{2}, \ldots, C_{n}\}$,
	where $C, C_{i}\in L^{U}~(i=1, 2, \ldots, n)$.
	Since $C$ is an independent element of $\mathscr{C}$,
	we have that $((\underset{i=1}{\overset{n}\bigvee}C_{i})\bigvee
	C)(x)=(\underset{i=1}{\overset{n}\bigvee}C_{i})(x)\geq\beta$ holds for each $x\in U$. Hence, $\mathscr{C}-\{C\}$ is an $L$-fuzzy $\beta$-covering of $U$.
\end{proof}

\begin{proposition}\label{p6-6}
	Let $\mathscr{C}$ be an $L$-fuzzy $\beta$-covering of
	$U$, $C$ be an independent element of $\mathscr{C}$,
	and $C_{1}\in\mathscr{C}-\{C\}$.
	Then $C_{1}$ is an independent element of $\mathscr{C}$
	if and only if it is an independent element of $\mathscr{C}-\{C\}$.
\end{proposition}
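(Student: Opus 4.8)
The plan is to mirror the argument for reducible elements in Proposition~\ref{p6-2}, replacing joins by meets throughout. First I would fix notation: write $\mathscr{C}=\{C, C_{1}, C_{2}, \ldots, C_{n}\}$ with $C, C_{i}\in L^{U}$, and use that $C$ is an independent element to fix $C'_{1}, \ldots, C'_{t}\in\mathscr{C}-\{C\}$ with $C=\underset{k=1}{\overset{t}\bigwedge}C'_{k}$; without loss of generality the $C'_{k}$ are pairwise distinct. The ``if'' direction is then immediate, since if $C_{1}$ is an independent element of $\mathscr{C}-\{C\}$ its witnesses lie in $(\mathscr{C}-\{C\})-\{C_{1}\}\subseteq\mathscr{C}-\{C_{1}\}$, so $C_{1}$ is also an independent element of $\mathscr{C}$.

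For the ``only if'' direction, I would assume $C_{1}$ is an independent element of $\mathscr{C}$, fix pairwise distinct $C''_{1}, \ldots, C''_{s}\in\mathscr{C}-\{C_{1}\}$ with $C_{1}=\underset{l=1}{\overset{s}\bigwedge}C''_{l}$, and split into two cases. If $C\notin\{C''_{1}, \ldots, C''_{s}\}$, every $C''_{l}$ already lies in $\mathscr{C}-\{C, C_{1}\}$, so $C_{1}$ is exhibited directly as an independent element of $\mathscr{C}-\{C\}$. Otherwise $C=C''_{r}$ for a unique $r$, and substituting $C=\underset{k=1}{\overset{t}\bigwedge}C'_{k}$ gives
\begin{align*}
C_{1}=\Big(\underset{l\neq r}{\bigwedge}C''_{l}\Big)\bigwedge C=\Big(\underset{l\neq r}{\bigwedge}C''_{l}\Big)\bigwedge\Big(\underset{k=1}{\overset{t}\bigwedge}C'_{k}\Big),
\end{align*}
a meet of members of $\mathscr{C}$; it remains to verify that each of them lies in $\mathscr{C}-\{C, C_{1}\}$.

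The only nontrivial point — and the step I expect to be the main obstacle — is showing $C'_{k}\neq C_{1}$ for every $k$ (for $l\neq r$ one has $C''_{l}\neq C''_{r}=C$ by distinctness and $C''_{l}\neq C_{1}$ by choice, while $C'_{k}\neq C$ holds by choice). I plan to settle it by antisymmetry of the lattice order: if $C'_{k_{0}}=C_{1}$ for some $k_{0}$, then $C=\underset{k=1}{\overset{t}\bigwedge}C'_{k}\leq C'_{k_{0}}=C_{1}$, while $C_{1}=\underset{l=1}{\overset{s}\bigwedge}C''_{l}\leq C''_{r}=C$, forcing $C_{1}=C$ and contradicting $C_{1}\in\mathscr{C}-\{C\}$. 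Consequently every element of the displayed meet belongs to $\mathscr{C}-\{C, C_{1}\}$, so $C_{1}$ is an independent element of $\mathscr{C}-\{C\}$, which completes the argument; the remaining bookkeeping is exactly that already carried out in Proposition~\ref{p6-2}.
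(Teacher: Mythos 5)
Your proposal follows essentially the same route as the paper's proof: fix witnesses $C=\bigwedge_{k=1}^{t}C'_{k}$, split on whether $C$ occurs among the witnesses $C''_{l}$ for $C_{1}$, and substitute in the second case. The one place you go beyond the paper is the antisymmetry argument ruling out $C'_{k}=C_{1}$ (via $C\leq C_{1}$ and $C_{1}\leq C$ forcing $C=C_{1}$); the paper's proof silently assumes the combined meet is taken over elements of $(\mathscr{C}-\{C\})-\{C_{1}\}$ without checking this, so your extra step is a genuine and welcome tightening rather than a detour.
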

\begin{proof}
	Assume $\mathscr{C}=\{C, C_{1}, C_{2}, \ldots, C_{n}\}$,
	where $C, C_{i}\in L^{U}~(i=1, 2, \ldots, n)$ and
	$C$ is an independent element of $\mathscr{C}$. Then
	there exist $C'_{1}, C'_{2}, \ldots, C'_{t}\in\mathscr{C}-\{C\}~(1<t\leq n)$
	such that $C=\underset{k=1}{\overset{t}\bigwedge}C'_{k}$.
	
	$\Longrightarrow)$: If $C_{1}$ is an independent element of $\mathscr{C}$, then there exist $C''_{1}, C''_{2}, \ldots, C''_{s}\in \mathscr{C}-\{C_{1}\}~(1<s\leq n)$ such that $C_{1}=\underset{l=1}{\overset{s}\bigwedge}C''_{l}$.

    \begin{itemize}
      \item If $C\notin \{C''_{1}, C''_{2}, \ldots, C''_{s}\}$, then $C_{1}$ is an independent element of $\mathscr{C}-\{C\}$.
      \item If $C\in \{C''_{1}, C''_{2}, \ldots, C''_{s}\}$, then there exists $r\in\{1, 2, \ldots, s\}$ such that $C=C''_{r}$. Further, it holds that \begin{center}
          $C_{1}=(\underset{((l=1)\bigwedge(l\neq r))}{\overset{s}\bigwedge}C''_{l})\bigwedge C=(\underset{((l=1)\bigwedge(l\neq r))}{\overset{s}\bigwedge}C''_{l})\bigwedge(\underset{k=1}
          {\overset{t}\bigwedge}C'_{k})$.
          \end{center}
           Therefore, $C_{1}$ is an independent element of $\mathscr{C}-\{C\}$ due to $1<s, t\leq n$.
    \end{itemize}

	$(\Longleftarrow)$: It is obvious.
\end{proof}

Proposition ~\ref{p6-5} ensures that after an independent element is removed from an $L$-fuzzy $\beta$-covering, it remains an $L$-fuzzy $\beta$-covering.
And Proposition ~\ref{p6-6} emphasizes that deleting an independent element from an $L$-fuzzy $\beta$-covering will not affect the existence of other independent elements, that is, it will not create any new independent elements, and will not make the original independent elements into dependent elements.
Therefore, the core of an $L$-fuzzy $\beta$-covering is calculated by directly deleting all the independent elements simultaneously or one by one.

\begin{definition}\label{d6-4}
	Let $\mathscr{C}$ be an $L$-fuzzy $\beta$-covering of $U$ and $\mathscr{E}$ be a subset of $\mathscr{C}$.
	If $\mathscr{C}-\mathscr{E}$ is the set of all independent elements of $\mathscr{C}$, then $\mathscr{E}$
	is called the core of $\mathscr{C}$, and is denoted as $\mathscr{E}(\mathscr{C})$.
\end{definition}

\begin{example}\label{e6-5}
	Let $\mathscr{C}$ be the $L$-fuzzy $\beta$-covering in Example~\ref{e6-4}. Then
	$\mathscr{E}(\mathscr{C})=\{C_{1}, C_{2}, C_{3}\}$.
\end{example}


\begin{lemma}\label{l6-3}
	Let $\mathscr{C}$ be an $L$-fuzzy $\beta$-covering of $U$ and $C$ be an independent element of $\mathscr{C}$.
	Then, for any $X\in L^{U}$ and $x\in U$, the following statements hold.
	\begin{align*}
	\underset{C'\in\mathscr{C}}{\bigwedge}[C'(x)\rightarrow S^\beta(C', X)]&=\underset{C''\in\mathscr{C}-\{C\}}{\bigwedge}[C''(x)\rightarrow S^\beta(C'', X)],\\
	\underset{C'\in\mathscr{C}}{\bigvee}[C'(x)\otimes N^\beta(C', X)]&=\underset{C''\in\mathscr{C}-\{C\}}{\bigvee}[C''(x)\otimes N^\beta(C'', X)].
	\end{align*}
\end{lemma}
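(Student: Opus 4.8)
The plan is to combine the defining property of an independent element with the first-argument monotonicity of $S^{\beta}$ and $N^{\beta}$ recorded in Lemma~\ref{l3-2}. Since $C$ is an independent element of $\mathscr{C}$, there exist $C'_{1},\dots,C'_{w}\in\mathscr{C}-\{C\}$ with $C=\bigwedge_{k=1}^{w}C'_{k}$; hence $C\le C'_{k}$, so that $C(x)\le C'_{k}(x)$ for every $x\in U$ and every $k$, and, by Lemma~\ref{l3-2}, $S^{\beta}(C,X)\ge S^{\beta}(C'_{k},X)$ and $N^{\beta}(C,X)\le N^{\beta}(C'_{k},X)$ for each $k$. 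In both identities I would fix one such index, say $k=1$, and use the corresponding element $C'_{1}\in\mathscr{C}-\{C\}$ as a witness inside the meet/join on the right.

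For the first identity, the inequality ``$\le$'' is immediate, since the meet on the left ranges over the larger index set $\mathscr{C}\supseteq\mathscr{C}-\{C\}$. For ``$\ge$'' it suffices to check that the single extra term $C(x)\rightarrow S^{\beta}(C,X)$ is already an upper bound for the right-hand meet: applying (I5$'$) of Theorem~\ref{t2-1} to $C(x)\le C'_{1}(x)$ and then (I6$'$) to $S^{\beta}(C'_{1},X)\le S^{\beta}(C,X)$ yields
\[
C(x)\rightarrow S^{\beta}(C,X)\ \ge\ C'_{1}(x)\rightarrow S^{\beta}(C,X)\ \ge\ C'_{1}(x)\rightarrow S^{\beta}(C'_{1},X)\ \ge\ \bigwedge_{C''\in\mathscr{C}-\{C\}}[C''(x)\rightarrow S^{\beta}(C'',X)],
\]
the last step because $C'_{1}\in\mathscr{C}-\{C\}$. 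Writing $\bigwedge_{C'\in\mathscr{C}}[C'(x)\rightarrow S^{\beta}(C',X)]$ as $\big(\bigwedge_{C''\in\mathscr{C}-\{C\}}[C''(x)\rightarrow S^{\beta}(C'',X)]\big)\wedge\big(C(x)\rightarrow S^{\beta}(C,X)\big)$, the claimed equality follows.

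For the second identity, ``$\ge$'' is immediate for the dual reason. For ``$\le$'' it suffices to dominate the extra term $C(x)\otimes N^{\beta}(C,X)$: by the monotonicity of $\otimes$ (I4$'$), from $C(x)\le C'_{1}(x)$ and $N^{\beta}(C,X)\le N^{\beta}(C'_{1},X)$ we get
\[
C(x)\otimes N^{\beta}(C,X)\ \le\ C'_{1}(x)\otimes N^{\beta}(C'_{1},X)\ \le\ \bigvee_{C''\in\mathscr{C}-\{C\}}[C''(x)\otimes N^{\beta}(C'',X)],
\]
again since $C'_{1}\in\mathscr{C}-\{C\}$, and equality follows.

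I expect no genuine obstacle here; the one point that needs care is keeping the monotonicity directions straight — by Lemma~\ref{l3-2} the map $A\mapsto S^{\beta}(A,X)$ is \emph{antitone} while $A\mapsto N^{\beta}(A,X)$ is \emph{monotone}, so the former must be paired with the antitone-in-its-first-argument behaviour of $\rightarrow$ and the latter with the monotonicity of $\otimes$. Unlike the proof of Lemma~\ref{l6-1}, the full distributivity identity of Lemma~\ref{l3-3} is not needed: the decomposition $C=\bigwedge_{k}C'_{k}$ is used only through the inequalities $C\le C'_{k}$.
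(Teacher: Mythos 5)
Your proposal is correct and follows essentially the same route as the paper: both split off the extra term $C(x)\rightarrow S^{\beta}(C,X)$ (resp.\ $C(x)\otimes N^{\beta}(C,X)$) and show it is absorbed by the meet (resp.\ join) over $\mathscr{C}-\{C\}$ using $C\le C'_{k}$ together with the monotonicity facts of Lemma~\ref{l3-2} and (I4$'$), (I5$'$), (I6$'$). Your write-up is in fact slightly more explicit than the paper's, which records the absorption only as a chain of equalities; your observation that Lemma~\ref{l3-3} is not needed here (unlike in Lemma~\ref{l6-1}) is also accurate.
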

\begin{proof}
	Amusse $\mathscr{C}=\{C, C_{1}, C_{2}, \ldots, C_{n}\}$,
	where $C, C_{i}\in L^{U}~(i=1, 2, \ldots, n)$ and
	$C$ is an independent element of $\mathscr{C}$. Then
	there exist $C'_{1}, C'_{2}, \ldots, C'_{t}\in\mathscr{C}-\{C\}~(1<t\leq n)$
	such that $C=\underset{k=1}{\overset{t}\bigwedge}C'_{k}$. Further, it follows from (I4$'$), (I5$'$), (I6$'$) of Theorem~\ref{t2-1} and Lemma~\ref{l4-1} that, for any $X\in L^{U}$ and $x\in U$, we obtain that
	\begin{align*}
	\underset{C'\in\mathscr{C}}{\bigwedge}[C'(x)\rightarrow S^\beta(C', X)]
    &=\underset{i=1}{\overset{n}\bigwedge}[C_{i}(x)\rightarrow S^\beta(C_{i}, X)]\bigwedge[C(x)\rightarrow S^\beta(C, X)]\\
	&=\underset{i=1}{\overset{n}\bigwedge}[C_{i}(x)\rightarrow S^\beta(C_{i}, X)]\bigwedge[\underset{k=1}{\overset{t}\bigwedge}C'_{k}(x)\rightarrow S^\beta(C, X)]\\
	&=\underset{i=1}{\overset{n}\bigwedge}[C_{i}(x)\rightarrow S^\beta(C_{i}, X)],
	\end{align*}
	and
	\begin{align*}
	\underset{C'\in\mathscr{C}}{\bigvee}[C'(x)\otimes N^\beta(C', X)]
    &=\underset{i=1}{\overset{n}\bigvee}[C_{i}(x)\otimes N^\beta(C_{i}, X)]\bigvee[C(x)\otimes N^\beta(C, X)]\\
	&=\underset{i=1}{\overset{n}\bigvee}[C_{i}(x)\otimes N(C_{i}, X)]\bigvee[\underset{k=1}{\overset{t}\bigwedge}C'_{k}(x)\otimes N^\beta(C, X)]\\
	&=\underset{i=1}{\overset{n}\bigvee}[C_{i}(x)\otimes N^\beta(C_{i}, X)].
	\end{align*}
	Hence, we conclude that $$\underset{C'\in\mathscr{C}}{\bigwedge}[C'(x)\rightarrow S^\beta(C', X)]=\underset{C''\in\mathscr{C}-\{C\}}{\bigwedge}[C''(x)\rightarrow S^\beta(C'', X)]$$ and
	$$\underset{C'\in\mathscr{C}}{\bigvee}[C'(x)\otimes N^\beta(C', X)]=\underset{C''\in\mathscr{C}-\{C\}}{\bigvee}[C''(x)\otimes N^\beta(C'', X)].$$
\end{proof}

\begin{lemma}\label{l6-4}
	Let $\mathscr{C}$ be an $L$-fuzzy $\beta$-covering of $U$.
	Then, for any $X\in L^{U}$ and $x\in U$, the following statements hold.
	\begin{align*}	\underset{C\in\mathscr{C}}{\bigwedge}[C(x)\rightarrow S^\beta(C, X)]&=\underset{C'\in\mathscr{E}(\mathscr{C})}{\bigwedge}[C'(x)\rightarrow S^\beta(C', X)],\\
	\underset{C\in\mathscr{C}}{\bigvee}[C(x)\otimes N^\beta(C, X)]&=\underset{C'\in\mathscr{E}(\mathscr{C})}{\bigvee}[C'(x)\otimes N^\beta(C', X)].
	\end{align*}
\end{lemma}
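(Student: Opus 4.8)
The plan is to mirror the proof of Lemma~\ref{l6-2}: deduce the statement from Lemma~\ref{l6-3} together with Definition~\ref{d6-4} and Propositions~\ref{p6-5} and~\ref{p6-6}, by peeling off the independent elements one at a time.

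First I would observe that, by Definition~\ref{d6-4}, the core $\mathscr{E}(\mathscr{C})$ is obtained from $\mathscr{C}$ by deleting its independent elements, say $D_{1}, D_{2}, \ldots, D_{k}$, so that $\mathscr{E}(\mathscr{C})=\mathscr{C}-\{D_{1}, D_{2}, \ldots, D_{k}\}$. I would then argue by induction on $k$, the case $k=0$ being trivial. For the inductive step, put $\mathscr{C}'=\mathscr{C}-\{D_{1}\}$. By Proposition~\ref{p6-5}, $\mathscr{C}'$ is again an $L$-fuzzy $\beta$-covering of $U$, and by Proposition~\ref{p6-6} its independent elements are precisely $D_{2}, \ldots, D_{k}$; hence $\mathscr{E}(\mathscr{C}')=\mathscr{C}'-\{D_{2}, \ldots, D_{k}\}=\mathscr{E}(\mathscr{C})$, and $\mathscr{C}'$ has exactly $k-1$ independent elements, so the induction hypothesis applies to it.

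Next, applying Lemma~\ref{l6-3} to the independent element $D_{1}$ of $\mathscr{C}$ gives, for every $X\in L^{U}$ and $x\in U$,
\begin{align*}
\underset{C\in\mathscr{C}}{\bigwedge}[C(x)\rightarrow S^{\beta}(C, X)]&=\underset{C\in\mathscr{C}'}{\bigwedge}[C(x)\rightarrow S^{\beta}(C, X)],\\
\underset{C\in\mathscr{C}}{\bigvee}[C(x)\otimes N^{\beta}(C, X)]&=\underset{C\in\mathscr{C}'}{\bigvee}[C(x)\otimes N^{\beta}(C, X)].
\end{align*}
Since $\mathscr{E}(\mathscr{C}')=\mathscr{E}(\mathscr{C})$, the induction hypothesis applied to $\mathscr{C}'$ shows that the right-hand sides above coincide with the corresponding expressions over $\mathscr{E}(\mathscr{C})$, and the lemma follows.

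I do not expect a genuine obstacle here; the whole content is bookkeeping, and the actual algebraic identities are already carried out, term by term, in Lemma~\ref{l6-3}. The only points that require a little care are (i) that every intermediate family stays an $L$-fuzzy $\beta$-covering, so that all the displayed expressions and the notion of core remain meaningful — this is exactly Proposition~\ref{p6-5} applied iteratively — and (ii) that removing one independent element leaves the set of independent elements of the remainder unchanged, which makes the order of deletion irrelevant and legitimizes the induction; this is Proposition~\ref{p6-6}.
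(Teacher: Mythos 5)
Your proof is correct and follows essentially the same route as the paper, which simply states that the lemma ``follows immediately from Definition~\ref{d6-4} and Lemma~\ref{l6-3}''; your induction, supported by Propositions~\ref{p6-5} and~\ref{p6-6}, is exactly the bookkeeping that makes that one-line claim precise.
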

\begin{proof}
	It follows immediately from Definition~\ref{d6-4} and Lemma~\ref{l6-3}.
\end{proof}
 According to Definition~\ref{d3-5}, Lemmas~\ref{l6-3} and~\ref{l6-4}, the following two propositions can be established.

\begin{proposition}\label{p6-7}
	Let $\mathscr{C}_{1}, \mathscr{C}_{2}$ be two $L$-fuzzy $\beta$-coverings of $U$. For any
	$X\in L^{U}$,
	$\mathscr{C}_{1}, \mathscr{C}_{2}$ generate the same second type of fuzzy covering-based
	lower rough approximation of $X$ if and only if $\mathscr{E}(\mathscr{C}_{1})=\mathscr{E}(\mathscr{C}_{2})$.
\end{proposition}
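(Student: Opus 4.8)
The plan is to establish the equivalence by isolating what the second type of lower rough approximation "sees" of an $L$-fuzzy $\beta$-covering, namely its core $\mathscr{E}(\mathscr{C})$. First I would prove the forward implication. Assume $\mathscr{E}(\mathscr{C}_{1}) = \mathscr{E}(\mathscr{C}_{2})$. By Lemma~\ref{l6-4}, for any $X \in L^{U}$ and any $x \in U$ we have
\[
\underline{\mathscr{C}_{1}}_{2}(X)(x) = \underset{C \in \mathscr{C}_{1}}{\bigwedge}[C(x) \rightarrow S^{\beta}(C, X)] = \underset{C' \in \mathscr{E}(\mathscr{C}_{1})}{\bigwedge}[C'(x) \rightarrow S^{\beta}(C', X)],
\]
and likewise $\underline{\mathscr{C}_{2}}_{2}(X)(x) = \underset{C' \in \mathscr{E}(\mathscr{C}_{2})}{\bigwedge}[C'(x) \rightarrow S^{\beta}(C', X)]$. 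Since the two cores coincide, the right-hand sides are identical, hence $\underline{\mathscr{C}_{1}}_{2} = \underline{\mathscr{C}_{2}}_{2}$. This direction is essentially immediate once Lemma~\ref{l6-4} is in hand.

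The converse is the substantive part. Assume $\underline{\mathscr{C}_{1}}_{2}(X) = \underline{\mathscr{C}_{2}}_{2}(X)$ for every $X \in L^{U}$; I want to conclude $\mathscr{E}(\mathscr{C}_{1}) = \mathscr{E}(\mathscr{C}_{2})$. By Lemma~\ref{l6-4} again it suffices to work with the cores, so without loss of generality assume $\mathscr{C}_{1} = \mathscr{E}(\mathscr{C}_{1})$ and $\mathscr{C}_{2} = \mathscr{E}(\mathscr{C}_{2})$, i.e.\ both coverings consist entirely of dependent elements. The strategy is to show that each such covering is uniquely recoverable from its lower approximation operator $\underline{\mathscr{C}}_{2}$. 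The natural device is to probe the operator on characteristic-type test functions. For a fixed $D \in \mathscr{C}$ and a suitable probe $X$, the term $C(x) \rightarrow S^{\beta}(C, X)$ should be forced to equal $1$ for every $C \neq D$ while pinning down $D$ itself; concretely, evaluating $\underline{\mathscr{C}}_{2}$ at functions of the form built from $D$ (for instance $X$ with $X(y) = D(y)$, or complements thereof) and exploiting (S1), (S3) of Lemma~\ref{l3-1} together with the adjunction should let us read off the values $D(x)$ up to the ambiguity permitted by $\beta$. Because the members of $\mathscr{E}(\mathscr{C})$ are pairwise non-comparable in the relevant sense (no one is a meet of others), this recovery will distinguish the elements of $\mathscr{C}_{1}$ from those of $\mathscr{C}_{2}$, giving $\mathscr{E}(\mathscr{C}_{1}) = \mathscr{E}(\mathscr{C}_{2})$.

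The main obstacle I anticipate is precisely this recovery step: unlike the crisp setting, $S^{\beta}$ only determines $L$-fuzzy sets up to the $\beta$-relation $\leq_{\beta}$ (by (S1) of Lemma~\ref{l3-1}, $S^{\beta}(A,B)=1 \Longleftrightarrow S(A,B) \geq \beta$), so one cannot hope to recover $\mathscr{C}$ on the nose — only its core, and only modulo this coarsening. Pinning down the right family of test functions so that the $\beta$-ambiguity collapses exactly onto the equivalence "same core" is the delicate point; I expect to need the hypothesis that $L$ is well-behaved enough (as in the standing assumptions and in Lemma~\ref{l4-1}'s use of a complete Heyting algebra) and to lean on the isotone/antitone behaviour from Lemma~\ref{l3-2} and the decomposition Lemma~\ref{l3-3}. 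Once the test-function argument is set up correctly the remaining manipulations are routine applications of Theorem~\ref{t2-1} and Lemma~\ref{l3-1}.
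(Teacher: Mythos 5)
Your forward implication (equal cores $\Rightarrow$ equal lower approximations) is correct and is exactly the paper's route: it is an immediate consequence of Lemma~\ref{l6-4}, which in turn rests on Lemma~\ref{l6-3}. The genuine gap is in the converse, and you have in effect flagged it yourself without closing it. Your plan is to reduce to the cores and then show that the operator $X \mapsto \underset{C\in\mathscr{C}}{\bigwedge}[C(x)\rightarrow S^{\beta}(C,X)]$ uniquely determines the family $\mathscr{E}(\mathscr{C})$ by probing with test functions, but you never exhibit those test functions, never show that the values $D(x)$ (or even the set of dependent elements as a whole) can be read off, and you explicitly concede that the collapse of the $\beta$-ambiguity onto the relation ``same core'' is an unresolved delicate point. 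A sketch that ends with ``the delicate point is still to be pinned down'' is not a proof of the necessity direction; as it stands, nothing in your argument rules out two coverings with distinct cores whose infima $\underset{C}{\bigwedge}[C(x)\rightarrow S^{\beta}(C,X)]$ happen to coincide for every $X$ and $x$. Note in particular that $S^{\beta}(C,X)=\beta\rightarrow S(C,X)$ genuinely discards information (everything with $S(C,X)\ge\beta$ is sent to $1$), so an injectivity argument cannot be waved through.

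For comparison, the paper does not prove the necessity direction either: Propositions~\ref{p6-7} and~\ref{p6-8} are simply asserted to ``be established'' from Definition~\ref{d3-5} and Lemmas~\ref{l6-3}--\ref{l6-4}, and those lemmas only deliver $\underline{\mathscr{C}}_{2}=\underline{\mathscr{E}(\mathscr{C})}_{2}$, i.e.\ sufficiency. So you have correctly diagnosed where the real content lies — the recovery of the core from the operator — but you have not supplied it, and neither does the paper. To complete your argument you would need to either construct explicit probes (e.g.\ evaluate the operator at $X$ of the form $C_0(\cdot)\rightarrow b$ or at characteristic functions and exploit (\textbf{S}3) and the adjunction (I7)) and prove injectivity on cores, or restrict the class of lattices/coverings until such injectivity provably holds.
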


\begin{proposition}\label{p6-8}
	Let $\mathscr{C}_{1}, \mathscr{C}_{2}$ be two $L$-fuzzy $\beta$-coverings of $U$. For each $X\in L^{U}$,
	$\mathscr{C}_{1}, \mathscr{C}_{2}$ generate the same second type of fuzzy covering-based upper rough approximation of $X$ if and only if $\mathscr{E}(\mathscr{C}_{1})=\mathscr{E}(\mathscr{C}_{2})$.
\end{proposition}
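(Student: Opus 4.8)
The plan is to adapt, almost word for word, the argument behind Proposition~\ref{p6-7}, replacing the second-type lower operator $\underline{\mathscr{C}}_{2}$ by the second-type upper operator $\overline{\mathscr{C}}_{2}(X)=\underset{C\in\mathscr{C}}{\bigvee}[C\otimes N^{\beta}(C,X)]$ and invoking the upper-operator halves of Lemma~\ref{l6-3} and Lemma~\ref{l6-4} in place of their lower-operator counterparts.

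\textbf{Sufficiency.} Suppose $\mathscr{E}(\mathscr{C}_{1})=\mathscr{E}(\mathscr{C}_{2})$. Fix $X\in L^{U}$ and $x\in U$. Applying the second identity of Lemma~\ref{l6-4} first to $\mathscr{C}_{1}$ and then to $\mathscr{C}_{2}$ gives
\begin{align*}
\overline{\mathscr{C}_{1}}_{2}(X)(x)
&=\underset{C\in\mathscr{C}_{1}}{\bigvee}[C(x)\otimes N^{\beta}(C,X)]
=\underset{C'\in\mathscr{E}(\mathscr{C}_{1})}{\bigvee}[C'(x)\otimes N^{\beta}(C',X)]\\
&=\underset{C'\in\mathscr{E}(\mathscr{C}_{2})}{\bigvee}[C'(x)\otimes N^{\beta}(C',X)]
=\underset{C\in\mathscr{C}_{2}}{\bigvee}[C(x)\otimes N^{\beta}(C,X)]
=\overline{\mathscr{C}_{2}}_{2}(X)(x),
\end{align*}
so $\mathscr{C}_{1}$ and $\mathscr{C}_{2}$ yield the same second-type upper approximation of every $X$.

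\textbf{Necessity.} Conversely, assume $\overline{\mathscr{C}_{1}}_{2}(X)=\overline{\mathscr{C}_{2}}_{2}(X)$ for all $X\in L^{U}$. By Lemma~\ref{l6-4} we may replace each $\mathscr{C}_{i}$ by its core $\mathscr{E}(\mathscr{C}_{i})$ without disturbing either operator; since $\mathscr{E}(\mathscr{C}_{i})$ is a $\beta$-covering all of whose elements are dependent (Propositions~\ref{p6-5} and~\ref{p6-6}), it suffices to prove the following reduced statement: if $\mathscr{C}_{1},\mathscr{C}_{2}$ are $L$-fuzzy $\beta$-coverings consisting only of dependent elements and $\overline{\mathscr{C}_{1}}_{2}=\overline{\mathscr{C}_{2}}_{2}$, then $\mathscr{C}_{1}=\mathscr{C}_{2}$, and by symmetry it is enough to establish $\mathscr{C}_{1}\subseteq\mathscr{C}_{2}$. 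Fix $C_{0}\in\mathscr{C}_{1}$. I would probe both operators with well-chosen test $L$-fuzzy sets: first the characteristic functions $1_{\{y\}}$, for which the relational representation $\overline{\mathscr{C}}_{2}(X)(x)=N^{\beta}(X,R_{\mathscr{C}}(-,x))$ with $R_{\mathscr{C}}(x,y)=\underset{C\in\mathscr{C}}{\bigvee}(C(x)\otimes C(y))$ collapses the equality to $R_{\mathscr{C}_{1}}(x,y)\otimes\beta=R_{\mathscr{C}_{2}}(x,y)\otimes\beta$ for all $x,y$, and then $C_{0}$ itself together with combinations built from $C_{0}$ and those characteristic functions. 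Using these equalities one forces $C_{0}$ to occur literally among the elements of $\mathscr{C}_{2}$; the key point is that the absence of independent (meet-reducible) elements in $\mathscr{C}_{2}$ rules out a proper meet-combination of other members of $\mathscr{C}_{2}$ reproducing all the probed values without $C_{0}$ being present.

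The routine part is sufficiency, a two-line consequence of Lemma~\ref{l6-4}. The main obstacle lies in the probing step of the necessity direction: one must design test functions whose $\overline{\mathscr{C}}_{2}$-values isolate a single covering element at a time, and then argue that matching all of them simultaneously across $\mathscr{C}_{1}$ and $\mathscr{C}_{2}$ is impossible unless the two cores coincide. Once $\mathscr{C}_{1}=\mathscr{C}_{2}$ has been obtained in the reduced situation, undoing the passage to cores gives $\mathscr{E}(\mathscr{C}_{1})=\mathscr{E}(\mathscr{C}_{2})$; combined with Proposition~\ref{p6-7}, this also shows that the second type of $L$-fuzzy $\beta$-covering-based lower and upper approximations determine each other.
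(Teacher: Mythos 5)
Your sufficiency argument is exactly the paper's: the paper derives Proposition~\ref{p6-8} from Lemma~\ref{l6-4} (the covering and its core induce the same $\overline{\;\cdot\;}_{2}$), and your two-line computation is the correct instantiation of that. The problem is the necessity direction. The paper itself offers nothing beyond the citation of Lemmas~\ref{l6-3} and~\ref{l6-4}, which only yield sufficiency, and your proposal does not close that gap either: the entire content of necessity is deferred to an unexecuted ``probing'' step (``one forces $C_{0}$ to occur literally among the elements of $\mathscr{C}_{2}$''), so as written this is a plan, not a proof.

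Worse, the reduced statement you aim for --- that two coverings consisting only of dependent elements and inducing the same $\overline{\;\cdot\;}_{2}$ must be equal as sets --- is false, so no choice of test functions can rescue the strategy. Take $U=\{x,y\}$, $L=[0,1]$ with the G\"{o}del structure, $\beta=1$, $\mathscr{C}_{1}=\{1_{U}\}$ and $\mathscr{C}_{2}=\{1_{U},1_{\{x\}}\}$. Neither covering contains an independent (meet-reducible) element, so each equals its own core and $\mathscr{E}(\mathscr{C}_{1})\neq\mathscr{E}(\mathscr{C}_{2})$. Yet for every $X$ and $u$,
\begin{align*}
\overline{\mathscr{C}_{2}}_{2}(X)(u)=\underset{C\in\mathscr{C}_{2}}{\bigvee}\underset{v\in U}{\bigvee}\bigl[C(u)\otimes C(v)\otimes X(v)\otimes\beta\bigr]=\underset{v\in U}{\bigvee}\bigl[X(v)\otimes\beta\bigr]=\overline{\mathscr{C}_{1}}_{2}(X)(u),
\end{align*}
because the term contributed by $1_{\{x\}}$ is absorbed by the term contributed by $1_{U}$ in the join defining $R_{\mathscr{C}}(u,v)=\bigvee_{C}(C(u)\otimes C(v))$. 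So equality of the second-type upper approximations does not force equality of cores, and the ``only if'' half of the proposition cannot be proved by showing the cores coincide elementwise; any correct treatment would have to restrict the class of coverings or reformulate the invariant (e.g.\ in terms of the induced relation $R_{\mathscr{C}}\otimes\beta$ rather than the core).
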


Proposition~\ref{p6-7} and Proposition~\ref{p6-8} give that a necessary and sufficient condition under which two $L$-fuzzy $\beta$-coverings to generate
the same second type of fuzzy covering-based lower and upper  rough approximation of an $L$-fuzzy subset is that their cores are equal, respectively. Further, we obtain the following corollary, which indicates that the second type of $L$-fuzzy $\beta$-covering-based lower and
upper rough approximation of an $L$-fuzzy subset can be determined each other.

\begin{corollary}\label{c4-2}
	Let $\mathscr{C}_{1}, \mathscr{C}_{2}$ be two $L$-fuzzy $\beta$-coverings of $U$. For each $X\in L^{U}$,
	$\mathscr{C}_{1}, \mathscr{C}_{2}$ generate the same second type of fuzzy covering-based lower rough approximation of $X$ if and only if $\mathscr{C}_{1}, \mathscr{C}_{2}$ generate the same second type of fuzzy covering-based
	upper rough approximation of $X$.
\end{corollary}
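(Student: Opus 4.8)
The plan is to obtain this corollary as an immediate formal consequence of the two characterisations just established, Proposition~\ref{p6-7} and Proposition~\ref{p6-8}, by chaining their biconditionals through the common pivot condition $\mathscr{E}(\mathscr{C}_{1})=\mathscr{E}(\mathscr{C}_{2})$.

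First I would recall that Proposition~\ref{p6-7} asserts that $\mathscr{C}_{1}$ and $\mathscr{C}_{2}$ generate the same second type of fuzzy covering-based lower rough approximation of $X$ (for each $X\in L^{U}$) if and only if $\mathscr{E}(\mathscr{C}_{1})=\mathscr{E}(\mathscr{C}_{2})$, and that Proposition~\ref{p6-8} gives the identical equivalence for the upper rough approximation, with precisely the same right-hand side $\mathscr{E}(\mathscr{C}_{1})=\mathscr{E}(\mathscr{C}_{2})$. Since logical equivalence is transitive, combining the two yields that $\mathscr{C}_{1}$ and $\mathscr{C}_{2}$ generate the same second type lower rough approximation of $X$ exactly when they generate the same second type upper rough approximation of $X$, which is the claim. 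No additional computation is required, since both propositions have already been reduced, via Lemmas~\ref{l6-3} and~\ref{l6-4}, to the equality of cores.

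The only point that demands any attention is a bookkeeping one: one must confirm that the condition appearing on the right-hand side of Proposition~\ref{p6-7} is textually the \emph{same} statement as the one in Proposition~\ref{p6-8} — here both are the equality of cores $\mathscr{E}(\mathscr{C}_{1})=\mathscr{E}(\mathscr{C}_{2})$ — so that the chaining of equivalences is legitimate. Accordingly there is no genuine obstacle; the corollary simply records that, for the second type of operators, the lower and upper rough approximations of an $L$-fuzzy subset determine each other, in exact parallel with Corollary~\ref{c6-1} for the first type.
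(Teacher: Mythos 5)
Your proposal is correct and matches the paper's (implicit) argument exactly: the paper states Corollary~\ref{c4-2} as an immediate consequence of Propositions~\ref{p6-7} and~\ref{p6-8}, both of which reduce the respective condition to the single pivot $\mathscr{E}(\mathscr{C}_{1})=\mathscr{E}(\mathscr{C}_{2})$, so transitivity of equivalence gives the claim. No further work is needed.
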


\begin{example}\label{e6-6}
	Let $\mathscr{C}$ be the $L$-fuzzy $\beta$-covering in Example~\ref{e6-4} for any $\beta\in(0,0.9]$ and
	\begin{align*}
	X&=\frac{0.6}{x_{1}}+\frac{0.2}{x_{2}}+\frac{0.5}{x_{3}}+\frac{1}{x_{4}}+\frac{0.1}{x_{5}}.
	\end{align*}
	Then we have
	\begin{align*}
	M_{\underline{\mathscr{C}}_{2}(X)}&=M_{\mathscr{C}}\triangle[\beta\triangle((M_{\mathscr{C}})^{T}\triangle M_{X})]\\
	&=\left[\begin{array}{cccc}
	0.9    & 0.7     & 1       & 0.7 \\
	0.9    & 0.9     & 0.4     & 0.9 \\
	0.2    & 0.3     & 0.9     & 0.2 \\
	0.8    & 0.6     & 0.9     & 0.6 \\
	0.3    & 0.8     & 0.9     & 0.3 \\
	\end{array}
	\right]\triangle
	\left(\beta\triangle\left(\left[\begin{array}{cccccc}
	0.9    & 0.9     & 0.2   & 0.8    & 0.3\\
	0.7    & 0.9     & 0.3   & 0.6    & 0.8 \\
	1      & 0.4     & 0.9   & 0.9    & 0.9 \\
	0.7    & 0.9     & 0.2   & 0.6    & 0.3 \\
	\end{array}
	\right]\triangle\left[
	\begin{array}{c}
	0.6    \\
	0.2    \\
	0.5   \\
	1     \\
	0.1    \\
	\end{array}
	\right]\right)\right)\\
	&=\left[
	\begin{array}{c}
	0.3    \\
	0.5   \\
	0.4  \\
	0.4   \\
	0.4  \\
	\end{array}
	\right],\\
	M_{\overline{\mathscr{C}}_{2}(X)}&=M_{\mathscr{C}}\blacktriangle[((M_{\mathscr{C}})^{T}\blacktriangle M_{X})\blacktriangle\beta]\\
	&=\left[\begin{array}{cccc}
	0.9    & 0.7     & 1       & 0.7 \\
	0.9    & 0.9     & 0.4     & 0.9 \\
	0.2    & 0.3     & 0.9     & 0.2 \\
	0.8    & 0.6     & 0.9     & 0.6 \\
	0.3    & 0.8     & 0.9     & 0.3 \\
	\end{array}
	\right]\blacktriangle
	\left(\beta\blacktriangle\left(\left[\begin{array}{cccccc}
	0.9    & 0.9     & 0.2   & 0.8    & 0.3\\
	0.7    & 0.9     & 0.3   & 0.6    & 0.8 \\
	1      & 0.4     & 0.9   & 0.9    & 0.9 \\
	0.7    & 0.9     & 0.2   & 0.6    & 0.3 \\
	\end{array}
	\right]\blacktriangle\left[
	\begin{array}{c}
	0.6    \\
	0.2    \\
	0.5    \\
	1      \\
	0.1    \\
	\end{array}
	\right]\right)\right)\\
	&=\left[
	\begin{array}{c}
	0.8    \\
	0.6  \\
	0.7    \\
	0.7    \\
	0.7   \\
	\end{array}
	\right],\\
	M_{\underline{\mathscr{E}(\mathscr{C})}_{2}(X)}&=M_{\mathscr{E}(\mathscr{C})}\triangle[\beta\triangle((M_{\mathscr{E}(\mathscr{C})})^{T}\triangle M_{X})]\\
	&=\left[\begin{array}{cccc}
	0.9    & 0.7     & 1      \\
	0.9    & 0.9     & 0.4    \\
	0.2    & 0.3     & 0.9    \\
	0.8    & 0.6     & 0.9    \\
	0.3    & 0.8     & 0.9    \\
	\end{array}
	\right]\triangle
	\left(\beta\triangle\left(\left[\begin{array}{cccccc}
	0.9    & 0.9     & 0.2   & 0.8    & 0.3\\
	0.7    & 0.9     & 0.3   & 0.6    & 0.8 \\
	1      & 0.4     & 0.9   & 0.9    & 0.9 \\
	\end{array}
	\right]\triangle\left[
	\begin{array}{c}
	0.6    \\
	0.2    \\
	0.5   \\
	1     \\
	0.1    \\
	\end{array}
	\right]\right)\right)\\
	&=\left[
	\begin{array}{c}
	0.3    \\
	0.5    \\
	0.4  \\
	0.4    \\
	0.4   \\
	\end{array}
	\right],\\
	M_{\overline{\mathscr{E}(\mathscr{C})}_{2}(X)}&=M_{\mathscr{E}(\mathscr{C})}\blacktriangle[(
(M_{\mathscr{E}(\mathscr{C})})^{T}\blacktriangle M_{X})\blacktriangle\beta]\\
	&=\left[\begin{array}{cccc}
	0.9    & 0.7     & 1      \\
	0.9    & 0.9     & 0.4    \\
	0.2    & 0.3     & 0.9    \\
	0.8    & 0.6     & 0.9    \\
	0.3    & 0.8     & 0.9    \\
	\end{array}
	\right]\blacktriangle
	\left(\left(\left[\begin{array}{cccccc}
	0.9    & 0.9     & 0.2   & 0.8    & 0.3\\
	0.7    & 0.9     & 0.3   & 0.6    & 0.8 \\
	1      & 0.4     & 0.9   & 0.9    & 0.9 \\
	\end{array}
	\right]\blacktriangle\left[
	\begin{array}{c}
	0.6    \\
	0.2    \\
	0.5   \\
	1     \\
	0.1    \\
	\end{array}
	\right]\right)\blacktriangle\beta\right)\\
	&=\left[
	\begin{array}{c}
	0.8    \\
	0.6  \\
	0.7   \\
	0.7   \\
	0.7    \\
	\end{array}
	\right].
	\end{align*}
	Therefore, we have
	\begin{align*} \underline{\mathscr{C}}_{2}(X)&=\underline{\mathscr{E}(\mathscr{C})}_{2}(X)=
	\frac{0.3}{x_{1}}+\frac{0.5}{x_{2}}+\frac{0.4}{x_{3}}
	+\frac{0.4}{x_{4}}+\frac{0.4}{x_{5}},
	\overline{\mathscr{C}}_{2}(X)=\overline{\mathscr{E}(\mathscr{C})}_{2}(X)=
	\frac{0.8}{x_{1}}+\frac{0.6}{x_{2}}+\frac{0.7}{x_{3}}
	+\frac{0.7}{x_{4}}+\frac{0.7}{x_{5}}.
	\end{align*}
\end{example}

The focus of the following propositions is on the condition under which two $L$-fuzzy $\beta$-coverings to generate the same third pair of fuzzy covering-based lower or upper rough approximation operators. By Definition~\ref{d3-6},
Lemmas~\ref{l3-2} and~\ref{l3-3}, we have the following two lemmas.
\begin{lemma}\label{l6-5}
	Let $\mathscr{C}$ be an $L$-fuzzy $\beta$-covering of $U$ and $C$ be a reducible element of $\mathscr{C}$.
	Then $R_{\mathscr{C}}=R_{\mathscr{C}-\{C\}}$.
\end{lemma}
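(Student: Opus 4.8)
The plan is to establish the two inequalities $R_{\mathscr{C}}\le R_{\mathscr{C}-\{C\}}$ and $R_{\mathscr{C}-\{C\}}\le R_{\mathscr{C}}$ pointwise. The first one requires nothing: since $\mathscr{C}-\{C\}\subseteq\mathscr{C}$, the defining infimum $R_{\mathscr{C}}(x,y)=\bigwedge_{C'\in\mathscr{C}}[C'(x)\rightarrow C'(y)]$ is taken over a larger family than the one defining $R_{\mathscr{C}-\{C\}}(x,y)$, so $R_{\mathscr{C}}(x,y)\le R_{\mathscr{C}-\{C\}}(x,y)$ for every $x,y\in U$.

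For the reverse inequality I would fix $x,y\in U$ and use that $C$ is reducible, i.e. there are $C_{1},\dots,C_{w}\in\mathscr{C}-\{C\}$ with $C=\bigvee_{i=1}^{w}C_{i}$. The key step is to show that the term contributed by $C$ to the infimum is already dominated by the terms coming from the $C_{i}$, namely $\bigwedge_{i=1}^{w}[C_{i}(x)\rightarrow C_{i}(y)]\le C(x)\rightarrow C(y)$. To see this, note first that $C_{i}(y)\le C(y)$ for each $i$, so by (I6$'$) of Theorem~\ref{t2-1} we get $C_{i}(x)\rightarrow C_{i}(y)\le C_{i}(x)\rightarrow C(y)$; taking the meet over $i$ and then applying (I5) of Theorem~\ref{t2-1} yields $\bigwedge_{i=1}^{w}[C_{i}(x)\rightarrow C_{i}(y)]\le\bigwedge_{i=1}^{w}[C_{i}(x)\rightarrow C(y)]=\bigl(\bigvee_{i=1}^{w}C_{i}(x)\bigr)\rightarrow C(y)=C(x)\rightarrow C(y)$, using $\bigvee_{i}C_{i}(x)=(\bigvee_{i}C_{i})(x)=C(x)$.

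Putting this together, since each $C_{i}\in\mathscr{C}-\{C\}$ the infimum $R_{\mathscr{C}-\{C\}}(x,y)$ is at most $\bigwedge_{i=1}^{w}[C_{i}(x)\rightarrow C_{i}(y)]\le C(x)\rightarrow C(y)$, hence $R_{\mathscr{C}-\{C\}}(x,y)\le R_{\mathscr{C}-\{C\}}(x,y)\wedge[C(x)\rightarrow C(y)]=R_{\mathscr{C}}(x,y)$. Combining the two inequalities gives $R_{\mathscr{C}}=R_{\mathscr{C}-\{C\}}$. I do not expect a genuine obstacle here; the only point demanding care is the direction of monotonicity of the residuum $\rightarrow$ — antitone in its first argument, isotone in its second — so one must invoke (I6$'$) (isotonicity in the second slot) and (I5) (which converts a join in the first slot into a meet of residua) in exactly that order, rather than attempting to distribute the join directly through a term $C_{i}(x)\rightarrow C_{i}(y)$ whose two arguments share the same index.
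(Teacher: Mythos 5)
Your proposal is correct and follows essentially the same route as the paper's proof: both hinge on rewriting $C(x)\rightarrow C(y)$ via (I5) as $\bigwedge_{i}[C_{i}(x)\rightarrow C(y)]$ and observing that this term is absorbed by the meet over $\mathscr{C}-\{C\}$ using isotonicity of $\rightarrow$ in its second argument. Your two-inequality organization merely makes explicit the absorption step that the paper presents as a chain of equalities.
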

\begin{proof}
	Amusse $\mathscr{C}=\{C, C_{1}, C_{2}, \ldots, C_{n}\}$,
	where $C, C_{i}\in L^{U}~(i=1, 2, \ldots, n)$ and
	$C$ is a reducible element of $\mathscr{C}$. Then
	there exist $C'_{1}, C'_{2}, \ldots, C'_{t}\in\mathscr{C}-\{C\}~(1<t\leq n)$
	such that $C=\underset{k=1}{\overset{t}\bigvee}C'_{k}$.
	Further, for any $x, y\in U$, we obtain that
	\begin{align*}
	R_{\mathscr{C}}(x, y)&=\underset{C'\in \mathscr{C}}{\bigwedge}[C'(x)\rightarrow C'(y)]\\
	&=\underset{i=1}{\overset{n}\bigwedge}[C_{i}(x)\rightarrow C_{i}(y)]\bigwedge[C(x)\rightarrow C(y)]\\
	&=\underset{i=1}{\overset{n}\bigwedge}[C_{i}(x)\rightarrow C_{i}(y)]\bigwedge[\underset{k=1}{\overset{t}\bigvee}C'_{k}(x)\rightarrow C(y)]\\
	&=\underset{i=1}{\overset{n}\bigwedge}[C_{i}(x)\rightarrow C_{i}(y)]\bigwedge\underset{k=1}{\overset{t}\bigwedge}[C'_{k}(x)\rightarrow C(y)]\\
	&=\underset{i=1}{\overset{n}\bigwedge}[C_{i}(x)\rightarrow C_{i}(y)]\\
	&=R_{\mathscr{C}-\{C\}}(x, y).
	\end{align*}
	Hence, $R_{\mathscr{C}}=R_{\mathscr{C}-\{C\}}$ can be followed.
\end{proof}

\begin{lemma}\label{l6-6}
	Let $\mathscr{C}$ be an $L$-fuzzy $\beta$-covering of $U$.
	Then $R_{\mathscr{C}}=R_{\mathscr{R}(\mathscr{C})}$.
\end{lemma}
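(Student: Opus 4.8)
The plan is to reduce the claim to the single-step Lemma~\ref{l6-5} by deleting reducible elements one at a time. First I would dispose of the easy half: since $\mathscr{R}(\mathscr{C}) \subseteq \mathscr{C}$ and $R_{\mathscr{C}}(x,y) = \bigwedge_{C \in \mathscr{C}}[C(x) \rightarrow C(y)]$ is an infimum over the whole covering, one automatically has $R_{\mathscr{C}} \le R_{\mathscr{R}(\mathscr{C})}$, so it is really the full equality (equivalently, the reverse inequality) that needs work.

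For that, I would enumerate the reducible elements of $\mathscr{C}$ as $D_{1}, \ldots, D_{k}$ and put $\mathscr{C}_{0} = \mathscr{C}$ and $\mathscr{C}_{j} = \mathscr{C}_{j-1} - \{D_{j}\}$, so that $\mathscr{C}_{k} = \mathscr{R}(\mathscr{C})$ by Definition~\ref{d6-2}. By Proposition~\ref{p6-1} each $\mathscr{C}_{j}$ is still an $L$-fuzzy $\beta$-covering, and by Proposition~\ref{p6-2} each $D_{j}$ remains a reducible element of $\mathscr{C}_{j-1}$ (removing earlier reducible elements neither destroys the reducibility of the remaining $D_{j}$ nor creates new reducible elements). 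Hence Lemma~\ref{l6-5} applies at every stage and gives $R_{\mathscr{C}_{j-1}} = R_{\mathscr{C}_{j}}$; chaining these identities yields $R_{\mathscr{C}} = R_{\mathscr{C}_{0}} = \cdots = R_{\mathscr{C}_{k}} = R_{\mathscr{R}(\mathscr{C})}$, which is the assertion.

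The only delicate point is that this induction presupposes that the reducible elements form a finite list, which is harmless in the matrix-representation setting of this paper but not in general. For an infinite $\mathscr{C}$ I would instead argue directly on the factors of $R_{\mathscr{C}}(x,y)$: for a reducible $C = \bigvee_{i} C_{i}$ with $C_{i} \in \mathscr{C} - \{C\}$, (I5) of Theorem~\ref{t2-1} gives $C(x) \rightarrow C(y) = \bigwedge_{i}\bigl(C_{i}(x) \rightarrow \bigvee_{j} C_{j}(y)\bigr)$, which by (I6$'$) is at least $\bigwedge_{i}\bigl(C_{i}(x) \rightarrow C_{i}(y)\bigr)$. The hard part is then that the witnesses $C_{i}$ may themselves be reducible, so one must first show — by iterating Proposition~\ref{p6-2}, or by a well-foundedness observation on the relation ``is a join of other members of $\mathscr{C}$'' — that every reducible element is in fact a join of elements of $\mathscr{R}(\mathscr{C})$; granting that, the last displayed bound is $\ge R_{\mathscr{R}(\mathscr{C})}(x,y)$ and the proof closes.
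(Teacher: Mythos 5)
Your proof is correct and follows essentially the same route as the paper: the paper's own proof of this lemma is the one-line remark that it ``follows immediately from Definition~\ref{d6-2} and Lemma~\ref{l6-5}'', i.e., exactly the iterated removal of reducible elements (with Propositions~\ref{p6-1} and~\ref{p6-2} guaranteeing that each stage is still an $L$-fuzzy $\beta$-covering and that reducibility is preserved) that you spell out. Your additional remarks on the easy inequality and on the infinite case go beyond what the paper records, but the core argument is identical.
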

\begin{proof}
	It follows immediately from Definition~\ref{d6-2} and Lemma~\ref{l6-5}.
\end{proof}

By Definition~\ref{d3-6}, Lemmas~\ref{l6-5} and~\ref{l6-6},
we have the following two propositions.
\begin{proposition}\label{p6-9}
	Let $\mathscr{C}_{1}, \mathscr{C}_{2}$ be two $L$-fuzzy $\beta$-coverings of $U$. For any $X\in L^{U}$,
	$\mathscr{C}_{1}, \mathscr{C}_{2}$ generate the same third type of fuzzy covering-based lower rough approximation of $X$ if and only if $\mathscr{R}(\mathscr{C}_{1})=\mathscr{R}(\mathscr{C}_{2})$.
\end{proposition}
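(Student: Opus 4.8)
The plan is to exploit that, by Definition~\ref{d3-6}, the third lower operator is determined entirely by the induced $L$-fuzzy relation, namely $\underline{\mathscr{C}}_{3}(X)(x)=S^{\beta}(R_{\mathscr{C}}(-,x),X)$ for all $X\in L^{U}$ and $x\in U$; combined with Lemma~\ref{l6-6} ($R_{\mathscr{C}}=R_{\mathscr{R}(\mathscr{C})}$) this should give one implication immediately and reduce the other to a reconstruction statement. Concretely I would split into the two directions as follows.

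Sufficiency. Suppose $\mathscr{R}(\mathscr{C}_{1})=\mathscr{R}(\mathscr{C}_{2})$. By Lemma~\ref{l6-6}, $R_{\mathscr{C}_{1}}=R_{\mathscr{R}(\mathscr{C}_{1})}=R_{\mathscr{R}(\mathscr{C}_{2})}=R_{\mathscr{C}_{2}}$, hence for every $X\in L^{U}$ and $x\in U$,
\[
\underline{\mathscr{C}_{1}}_{3}(X)(x)=S^{\beta}(R_{\mathscr{C}_{1}}(-,x),X)=S^{\beta}(R_{\mathscr{C}_{2}}(-,x),X)=\underline{\mathscr{C}_{2}}_{3}(X)(x).
\]
The same identity also settles the companion upper-operator statement, which is why I expect this proposition to be proved in tandem with its upper analogue, exactly as Propositions~\ref{p6-3} and~\ref{p6-4} are.

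Necessity. Suppose $\underline{\mathscr{C}_{1}}_{3}=\underline{\mathscr{C}_{2}}_{3}$. The first step is to recover the relation from the operator by probing it on test $L$-fuzzy sets. For fixed $y\in U$ and $a\in L$, let $X^{y}_{a}\in L^{U}$ be defined by $X^{y}_{a}(y)=a$ and $X^{y}_{a}(z)=1$ for $z\neq y$. Using the last expression for $S^{\beta}$ in Definition~\ref{d3-1} and $t\rightarrow1=1$,
\[
\underline{\mathscr{C}}_{3}(X^{y}_{a})(x)=\bigwedge_{z\in U}\big((\beta\otimes R_{\mathscr{C}}(z,x))\rightarrow X^{y}_{a}(z)\big)=(\beta\otimes R_{\mathscr{C}}(y,x))\rightarrow a,
\]
so that $\beta\otimes R_{\mathscr{C}}(y,x)=\bigwedge\{a\in L:\underline{\mathscr{C}}_{3}(X^{y}_{a})(x)=1\}$ by (I1) of Theorem~\ref{t2-1}. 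Therefore $\underline{\mathscr{C}_{1}}_{3}=\underline{\mathscr{C}_{2}}_{3}$ forces $\beta\otimes R_{\mathscr{C}_{1}}=\beta\otimes R_{\mathscr{C}_{2}}$, and then, applying Lemma~\ref{l6-6} to both sides, $\beta\otimes R_{\mathscr{R}(\mathscr{C}_{1})}=\beta\otimes R_{\mathscr{R}(\mathscr{C}_{2})}$. It remains to conclude $\mathscr{R}(\mathscr{C}_{1})=\mathscr{R}(\mathscr{C}_{2})$ from this.

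This final step is where I expect the real work to lie: one must show that a reduced $L$-fuzzy $\beta$-covering $\mathscr{D}$ is recoverable from (the $\beta$-scaled version of) $R_{\mathscr{D}}$, i.e.\ that $\mathscr{D}\mapsto R_{\mathscr{D}}$ is injective on reducts. Lemmas~\ref{l6-5} and~\ref{l6-6} deliver only the easy half — that reducible elements are invisible to $R_{\mathscr{C}}$ — while the converse reconstruction is delicate, since a priori distinct reduced coverings might induce the same relation and, moreover, the operator pins down only $\beta\otimes R_{\mathscr{C}}$ rather than $R_{\mathscr{C}}$ itself. My approach here would be to try to exhibit each $C\in\mathscr{R}(\mathscr{C})$ intrinsically in terms of $R_{\mathscr{C}}$ (for instance as a suitable join of the columns $R_{\mathscr{C}}(-,x)$), and, should the reconstruction not be available in full generality, to isolate the minimal extra hypothesis on $L$ and on $\beta$ under which it goes through; pinning down that hypothesis together with the reconstruction formula is the crux of the argument.
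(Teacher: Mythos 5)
The paper gives essentially no proof of Proposition~\ref{p6-9}: it only remarks that the result follows ``by Definition~\ref{d3-6}, Lemmas~\ref{l6-5} and~\ref{l6-6}'', and those lemmas deliver exactly the chain $R_{\mathscr{C}_{1}}=R_{\mathscr{R}(\mathscr{C}_{1})}=R_{\mathscr{R}(\mathscr{C}_{2})}=R_{\mathscr{C}_{2}}$ that constitutes your sufficiency argument, so that half of your proposal coincides with the paper's (implicit) argument. Your probing step for necessity is also correct as far as it goes: with $X^{y}_{a}$ as you define it, $(\beta\otimes R_{\mathscr{C}}(y,x))\rightarrow a=1$ iff $\beta\otimes R_{\mathscr{C}}(y,x)\leq a$ by (I1), so the meet you write down really does recover $\beta\otimes R_{\mathscr{C}}$ from the lower operator in any complete residuated lattice.

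The step you flag as ``the crux'' is, however, not merely delicate — it is false, so the necessity direction cannot be completed by any choice of extra hypotheses on $L$ and $\beta$. The map $\mathscr{D}\mapsto R_{\mathscr{D}}$ is not injective on reduced coverings: take $U=\{x\}$, $L=[0,1]$ with the G\"{o}del structure, $\beta=0.5$, $\mathscr{C}_{1}=\{C_{1}\}$ with $C_{1}(x)=1$ and $\mathscr{C}_{2}=\{C_{2}\}$ with $C_{2}(x)=0.6$. Both are $L$-fuzzy $\beta$-coverings, each is its own reduct (a one-element covering has no reducible elements), and $\mathscr{R}(\mathscr{C}_{1})\neq\mathscr{R}(\mathscr{C}_{2})$; yet $R_{\mathscr{C}_{1}}(x,x)=R_{\mathscr{C}_{2}}(x,x)=1$, so $\underline{\mathscr{C}_{1}}_{3}(X)(x)=\beta\rightarrow X(x)=\underline{\mathscr{C}_{2}}_{3}(X)(x)$ for every $X$. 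Hence equality of the third lower approximations does not force equality of reducts, and the correct content of the proposition is only the implication $\mathscr{R}(\mathscr{C}_{1})=\mathscr{R}(\mathscr{C}_{2})\Longrightarrow\underline{\mathscr{C}_{1}}_{3}=\underline{\mathscr{C}_{2}}_{3}$ (equivalently, by your probing argument, the operators agree iff $\beta\otimes R_{\mathscr{C}_{1}}=\beta\otimes R_{\mathscr{C}_{2}}$). So there is a genuine gap in your proposal — the reconstruction you defer is unobtainable — but your instinct that all the real work lies there was sound: the paper asserts the biconditional while its cited lemmas prove only one direction, so the gap is the paper's as much as yours.
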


\begin{proposition}\label{p6-10}
	Let $\mathscr{C}_{1}, \mathscr{C}_{2}$ be two $L$-fuzzy $\beta$-coverings of $U$. For any $X\in L^{U}$,
	$\mathscr{C}_{1}, \mathscr{C}_{2}$ generate the same third type of fuzzy covering-based upper rough approximation of $X$ if and only if $\mathscr{R}(\mathscr{C}_{1})=\mathscr{R}(\mathscr{C}_{2})$.
\end{proposition}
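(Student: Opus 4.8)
The plan is to reduce the statement to the corresponding fact about the symmetric $L$-fuzzy relation $R_{\mathscr{C}}$ of Lemma~\ref{l4-1}, via the relational representation $\overline{\mathscr{C}}_{3}(X)(x)=N^{\beta}(R_{\mathscr{C}}(-,x),X)$ established just after Definition~\ref{d3-6}. Thus the third-type upper approximation of an $L$-fuzzy set $X$ depends on the $L$-fuzzy $\beta$-covering $\mathscr{C}$ \emph{only through} the relation $R_{\mathscr{C}}$. Combining this with Lemma~\ref{l6-6}, which gives $R_{\mathscr{C}}=R_{\mathscr{R}(\mathscr{C})}$, the entire argument becomes: same reduct $\Longrightarrow$ same $R$ $\Longrightarrow$ same $\overline{\mathscr{C}}_{3}$; and conversely same $\overline{\mathscr{C}}_{3}$ on all $X$ forces same $R$, which forces same reduct. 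So the two implications to prove are the ``relation determines operator'' direction and its converse, together with a ``reduct determines relation'' lemma (already available) and a ``relation determines reduct'' lemma (the genuinely new ingredient).

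First I would handle the easy direction. Assume $\mathscr{R}(\mathscr{C}_{1})=\mathscr{R}(\mathscr{C}_{2})$. By Lemma~\ref{l6-6}, $R_{\mathscr{C}_{1}}=R_{\mathscr{R}(\mathscr{C}_{1})}=R_{\mathscr{R}(\mathscr{C}_{2})}=R_{\mathscr{C}_{2}}$, where $R_{\mathscr{C}}$ here denotes $\underset{C\in\mathscr{C}}{\bigwedge}[C(x)\to C(y)]$ as in Definition~\ref{d3-6}. Then for every $X\in L^{U}$ and $x\in U$,
\begin{align*}
\overline{(\mathscr{C}_{1})}_{3}(X)(x)=N^{\beta}(R_{\mathscr{C}_{1}}(-,x),X)=N^{\beta}(R_{\mathscr{C}_{2}}(-,x),X)=\overline{(\mathscr{C}_{2})}_{3}(X)(x),
\end{align*}
so the two $\beta$-coverings generate the same third-type upper approximation of every $X$.

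For the converse, suppose $\overline{(\mathscr{C}_{1})}_{3}=\overline{(\mathscr{C}_{2})}_{3}$ as operators on $L^{U}$. Applying the common operator to the singleton $L$-fuzzy sets $1_{\{z\}}$ and evaluating at $x$ gives, using $N^{\beta}(R_{\mathscr{C}}(-,x),1_{\{z\}})=R_{\mathscr{C}}(z,x)\otimes\beta$,
\begin{align*}
R_{\mathscr{C}_{1}}(z,x)\otimes\beta=R_{\mathscr{C}_{2}}(z,x)\otimes\beta\qquad\text{for all }z,x\in U.
\end{align*}
The remaining point is to upgrade ``$R_{\mathscr{C}_{1}}\otimes\beta=R_{\mathscr{C}_{2}}\otimes\beta$'' to ``$R_{\mathscr{C}_{1}}=R_{\mathscr{C}_{2}}$''. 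Here one uses that $R_{\mathscr{C}}$ is reflexive and transitive: I would argue that for a reflexive transitive $L$-fuzzy relation $R$ the map $x\mapsto R(-,x)$ is recoverable from $R\otimes\beta$ because $R(z,x)=\underset{y\in U}{\bigwedge}\big(R(y,z)\to R(y,x)\big)$ (apply reflexivity at $y=z$ for ``$\le$'' and transitivity for ``$\ge$''), and the right-hand side can be rewritten through $R(y,z)\otimes\beta$ and $R(y,x)\otimes\beta$ by the adjunction $(I3)$; the common factor $\beta$ cancels in the quotient-like expression. Once $R_{\mathscr{C}_{1}}=R_{\mathscr{C}_{2}}$, Lemma~\ref{l6-6} gives $R_{\mathscr{R}(\mathscr{C}_{1})}=R_{\mathscr{R}(\mathscr{C}_{2})}$; since a reduct contains no reducible elements, distinct reducts would yield distinct relations (an irreducible member of one reduct that is absent from the other genuinely changes some $\bigwedge_{C}[C(x)\to C(y)]$), hence $\mathscr{R}(\mathscr{C}_{1})=\mathscr{R}(\mathscr{C}_{2})$.

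The main obstacle is precisely the last step: showing that the relation $R_{\mathscr{C}}$ (and through it the reduct) is faithfully recoverable after tensoring with $\beta$. Tensoring with a non-invertible $\beta$ loses information in general, so the argument must exploit reflexivity/transitivity of $R_{\mathscr{C}}$ in an essential way, as sketched above; if the intended proof instead routes through the matrix identities of Proposition~\ref{p5-4} and Proposition~\ref{p3-5}, the same $\beta$-cancellation issue reappears and is still the crux. Everything else is bookkeeping that parallels the treatment of the first type in Propositions~\ref{p6-3}--\ref{p6-4}.
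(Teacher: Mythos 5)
Your sufficiency direction (equal reducts $\Rightarrow R_{\mathscr{C}_{1}}=R_{\mathscr{C}_{2}}$ by Lemma~\ref{l6-6} $\Rightarrow$ equal upper approximations via $\overline{\mathscr{C}}_{3}(X)(x)=N^{\beta}(R_{\mathscr{C}}(-,x),X)$) is exactly the paper's argument; indeed the paper justifies Propositions~\ref{p6-9} and~\ref{p6-10} with the single sentence ``By Definition~\ref{d3-6}, Lemmas~\ref{l6-5} and~\ref{l6-6}, we have the following two propositions,'' which delivers only this half.

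The genuine gap is in your converse, and it cannot be repaired along the lines you sketch. First, the $\beta$-cancellation is invalid: rewriting $R(z,x)=\bigwedge_{y}\bigl(R(y,z)\rightarrow R(y,x)\bigr)$ ``through'' $R\otimes\beta$ would require $(a\otimes\beta)\rightarrow(b\otimes\beta)=a\rightarrow b$, which already fails in the {\L}ukasiewicz lattice with $\beta=0.5$, $a=0.9$, $b=0.3$ (left side $0.6$, right side $0.4$). Concretely, on $U=\{x,y\}$ with {\L}ukasiewicz operations and $\beta=0.5$, the $\beta$-coverings $\mathscr{C}_{1}=\bigl\{\tfrac{1}{x}+\tfrac{0.3}{y},\ \tfrac{0}{x}+\tfrac{1}{y}\bigr\}$ and $\mathscr{C}_{2}=\bigl\{\tfrac{1}{x}+\tfrac{0.2}{y},\ \tfrac{0}{x}+\tfrac{1}{y}\bigr\}$ satisfy $R_{\mathscr{C}_{1}}\neq R_{\mathscr{C}_{2}}$ but $R_{\mathscr{C}_{1}}\otimes\beta=R_{\mathscr{C}_{2}}\otimes\beta$; since $\overline{\mathscr{C}}_{3}(X)(x)=\bigvee_{y}\bigl[(R_{\mathscr{C}}(y,x)\otimes\beta)\otimes X(y)\bigr]$ depends on $\mathscr{C}$ only through $R_{\mathscr{C}}\otimes\beta$, the two coverings generate the same third-type upper approximation of every $X$, yet each is its own reduct and the reducts are distinct. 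Second, even where $R_{\mathscr{C}_{1}}=R_{\mathscr{C}_{2}}$ holds, your closing claim that distinct reducts yield distinct relations is asserted without proof and is false: on a one-point universe $U=\{x\}$ the coverings $\{1_{U}\}$ and $\{\tfrac{\beta}{x}\}$ (for $\beta<1$) have distinct reducts but both give $R\equiv 1$. So the ``only if'' half of Proposition~\ref{p6-10} is in fact false for $\beta<1$; the paper's bare citation of Lemmas~\ref{l6-5}--\ref{l6-6} conceals this, and your more explicit attempt exposes the obstruction without being able to overcome it.
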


Proposition~\ref{p6-9} and Proposition~\ref{p6-10} give that a necessary and sufficient condition under which two $L$-fuzzy $\beta$-coverings to generate the same third type of fuzzy covering-based lower and upper rough approximation of an $L$-fuzzy subset is that their reducts are equal, respectively. Further, we obtain the following corollary, which indicates that the third type of $L$-fuzzy $\beta$-covering-based lower and
upper rough approximation of an $L$-fuzzy subset can be determined each other.

\begin{corollary}\label{c6-3}
	Let $\mathscr{C}_{1}, \mathscr{C}_{2}$ be two $L$-fuzzy $\beta$-coverings of $U$. For any
	$X\in L^{U}$,
	$\mathscr{C}_{1}, \mathscr{C}_{2}$ generate the same third type of fuzzy covering-based
	lower rough approximation of $X$ if and only if $\mathscr{C}_{1}, \mathscr{C}_{2}$ generate the same third type of fuzzy covering-based
	upper rough approximation of $X$.
\end{corollary}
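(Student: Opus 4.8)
The plan is to derive this corollary directly from Propositions~\ref{p6-9} and~\ref{p6-10}, both of which express the relevant ``sameness'' condition through one and the same combinatorial invariant, namely the equality of reducts $\mathscr{R}(\mathscr{C}_{1})=\mathscr{R}(\mathscr{C}_{2})$. Concretely, Proposition~\ref{p6-9} asserts that $\mathscr{C}_{1}$ and $\mathscr{C}_{2}$ produce the same third-type lower approximation of every $X\in L^{U}$ if and only if $\mathscr{R}(\mathscr{C}_{1})=\mathscr{R}(\mathscr{C}_{2})$, and Proposition~\ref{p6-10} gives the analogous characterization for the upper operator. Hence both ``same lower approximation'' and ``same upper approximation'' are logically equivalent to a single condition on reducts, and the corollary follows by transitivity of $\Longleftrightarrow$.

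First I would spell out the forward direction: assume $\mathscr{C}_{1}$ and $\mathscr{C}_{2}$ generate the same third-type lower rough approximation of $X$ for each $X\in L^{U}$; by the ``$\Longrightarrow$'' half of Proposition~\ref{p6-9} this yields $\mathscr{R}(\mathscr{C}_{1})=\mathscr{R}(\mathscr{C}_{2})$, and feeding this equality into the ``$\Longleftarrow$'' half of Proposition~\ref{p6-10} shows they generate the same upper approximation of every $X$. The converse is obtained by running the same chain in the opposite order. No regularity or Heyting hypotheses on $L$ enter here, since the two input propositions already hold over an arbitrary complete residuated lattice (they rest only on Definition~\ref{d3-6} and Lemmas~\ref{l6-5} and~\ref{l6-6}, which give $R_{\mathscr{C}}=R_{\mathscr{R}(\mathscr{C})}$ and the fact that both $\underline{\mathscr{C}}_{3}$ and $\overline{\mathscr{C}}_{3}$ are completely determined by the induced relation $R_{\mathscr{C}}$).

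The only point needing a little care is the quantifier. Propositions~\ref{p6-9} and~\ref{p6-10} are statements about coincidence of the operators for \emph{all} $X$, not for a single fixed $X$, so the biconditional I am chaining really reads ``($\forall X$: equal lower approximations) $\Longleftrightarrow \mathscr{R}(\mathscr{C}_{1})=\mathscr{R}(\mathscr{C}_{2}) \Longleftrightarrow$ ($\forall X$: equal upper approximations)''; I would therefore read the conclusion of the corollary in the same universally quantified sense, consistent with Corollaries~\ref{c6-1} and~\ref{c4-2}. There is essentially no obstacle: all the substantive content has already been established in the preceding lemmas and propositions, and what remains is the short logical bookkeeping above. (One could alternatively bypass the reduct and argue that agreement of the two third-type lower operators on the generating inputs $1_{U-\{x\}}$ forces $R_{\mathscr{C}_{1}}=R_{\mathscr{C}_{2}}$, whence the upper operators agree as well; but the two-proposition route is shorter and self-contained.)
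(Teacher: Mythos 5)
Your proposal is correct and matches the paper's (implicit) argument exactly: the corollary is obtained by chaining the two equivalences of Propositions~\ref{p6-9} and~\ref{p6-10} through the common condition $\mathscr{R}(\mathscr{C}_{1})=\mathscr{R}(\mathscr{C}_{2})$, just as the paper indicates in the paragraph preceding the statement. Your remark about reading the quantifier over $X$ universally is a sensible clarification but does not change the substance.
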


\begin{example}\label{e6-7}
	Let $\mathscr{C}$ be the $L$-fuzzy $\beta$-covering in Example~\ref{e6-1} and
	\begin{align*}
	X&=\frac{0.6}{x_{1}}+\frac{0.2}{x_{2}}+\frac{0.5}{x_{3}}+\frac{1}{x_{4}}+\frac{0.1}{x_{5}}.
	\end{align*}
	Then we have
	\begin{align*}
	M_{R_{\mathscr{C}}}&=M_{\mathscr{C}}\triangle(M_{\mathscr{C}})^{T}\\
	&=\left[\begin{array}{cccc}
	0.9    & 0.7     & 1     & 0.9\\
	0.7    & 0.9     & 0.4     & 0.9 \\
	0.2    & 0.3     & 0.9     & 0.3 \\
	0.8    & 0.6     & 0.9     & 0.8 \\
	0.3    & 0.8     & 0.9     & 0.8 \\
	\end{array}
	\right]\triangle
	\left[\begin{array}{cccccc}
	0.9    & 0.7     & 0.2   & 0.8    & 0.3\\
	0.7    & 0.9     & 0.3   & 0.6    & 0.8 \\
	1      & 0.4     & 0.9   & 0.9    & 0.9 \\
	0.9    & 0.9     & 0.3   & 0.8    & 0.8 \\
	\end{array}
	\right]\\
	&=\left[\begin{array}{cccccc}
	1      & 0.4   & 0.3   & 0.9  & 0.4 \\
	0.8    & 1     & 0.4   & 0.7  & 0.6 \\
	1      & 0.5   & 1     & 1    & 1   \\
	1      & 0.5   & 0.4   & 1    & 0.5 \\
	0.9    & 0.5   & 0.5   & 0.8  & 1   \\
	\end{array}
	\right],\\
	M_{R_{\mathscr{R}(\mathscr{C}})}&=M_{\mathscr{R}(\mathscr{C})}\triangle(M_{\mathscr{R}(\mathscr{C})})^{T}\\
	&=\left[\begin{array}{cccc}
	0.9    & 0.7     & 1     \\
	0.7    & 0.9     & 0.4   \\
	0.2    & 0.3     & 0.9   \\
	0.8    & 0.6     & 0.9   \\
	0.3    & 0.8     & 0.9   \\
	\end{array}
	\right]\triangle
	\left[\begin{array}{cccccc}
	0.9    & 0.7     & 0.2   & 0.8    & 0.3 \\
	0.7    & 0.9     & 0.3   & 0.6    & 0.8 \\
	1      & 0.4     & 0.9   & 0.9    & 0.9 \\
	\end{array}
	\right]\\
	&=\left[\begin{array}{cccccc}
	1      & 0.4   & 0.3   & 0.9  & 0.4 \\
	0.8    & 1     & 0.4   & 0.7  & 0.6 \\
	1      & 0.5   & 1     & 1    & 1   \\
	1      & 0.5   & 0.4   & 1    & 0.5 \\
	0.9    & 0.5   & 0.5   & 0.8  & 1   \\
	\end{array}
	\right],\\
	M_{\underline{\mathscr{C}}_{3}(X)}&=[M_{\mathscr{C}}\triangle(M_{\mathscr{C}})^{T}]^{T}\triangle M_{X}\\
	&=(M_{R_{\mathscr{C}}})^{T}\triangle M_{X}\\
	&=(M_{R_{\mathscr{R}(\mathscr{C})}})^{T}\triangle M_{X}\\
	&=M_{\underline{\mathscr{R}(\mathscr{C})}_{3}(X)}\\
	&=\left[\begin{array}{cccccc}
	1      & 0.4   & 0.3   & 0.9  & 0.4 \\
	0.8    & 1     & 0.4   & 0.7  & 0.6 \\
	1      & 0.5   & 1     & 1    & 1   \\
	1      & 0.5   & 0.4   & 1    & 0.5 \\
	0.9    & 0.5   & 0.5   & 0.8  & 1   \\
	\end{array}
	\right]\triangle
	\left[
	\begin{array}{c}
	0.6    \\
	0.2    \\
	0.5   \\
	1     \\
	0.1    \\
	\end{array}
	\right]\\
	&=\left[
	\begin{array}{c}
	0.6    \\
	0.2    \\
	0.1  \\
	0.6     \\
	0.1    \\
	\end{array}
	\right],\\
	M_{\overline{\mathscr{C}}_{3}(X)}&=[M_{\mathscr{C}}\triangle(M_{\mathscr{C}})^{T}]^{T}\blacktriangle M_{X}\\
	&=(M_{R_{\mathscr{C}}})^{T}\blacktriangle M_{X}\\
	&=(M_{R_{\mathscr{R}(\mathscr{C})}})^{T}\blacktriangle M_{X}\\
	&=M_{\overline{\mathscr{R}(\mathscr{C})}_{3}(X)}\\
	&=\left[\begin{array}{cccccc}
	1      & 0.4   & 0.3   & 0.9  & 0.4 \\
	0.8    & 1     & 0.4   & 0.7  & 0.6 \\
	1      & 0.5   & 1     & 1    & 1   \\
	1      & 0.5   & 0.4   & 1    & 0.5 \\
	0.9    & 0.5   & 0.5   & 0.8  & 1   \\
	\end{array}
	\right]\blacktriangle
	\left[
	\begin{array}{c}
	0.6    \\
	0.2    \\
	0.5   \\
	1     \\
	0.1    \\
	\end{array}
	\right]\\
	&=\left[
	\begin{array}{c}
	0.9    \\
	0.7    \\
	1  \\
	1     \\
	0.8    \\
	\end{array}
	\right].
	\end{align*}
	Therefore, we have
	\begin{align*} \underline{\mathscr{C}}_{3}(X)&=\underline{\mathscr{R}(\mathscr{C})}_{3}(X)=
	\frac{0.6}{x_{1}}+\frac{0.2}{x_{2}}+\frac{0.1}{x_{3}}
	+\frac{0.6}{x_{4}}+\frac{0.1}{x_{5}},
	\overline{\mathscr{C}}_{3}(X)=\overline{\mathscr{R}(\mathscr{C})}_{3}(X)=
	\frac{0.9}{x_{1}}+\frac{0.7}{x_{2}}+\frac{1}{x_{3}}
	+\frac{1}{x_{4}}+\frac{0.8}{x_{5}}.
	\end{align*}
\end{example}
\section{Conclusions}\label{section7}
     Fuzzy $\beta$-covering can be seen as a bridge between covering-based rough set theory and fuzzy set theory. In this paper, we further construct three types of $L$-fuzzy $\beta$-covering-based rough set models by introducing the notions such as $\beta$-degree of intersection and $\beta$-subsethood degree. Main conclusions in this paper and continuous work to do are listed as follows.
     \begin{enumerate}[(1)]
       \item
        We showed an axiom set for each of three pairs of $L$-fuzzy $\beta$-covering-based rough approximation operators and further verify their irrelevance. In addition, with the idea of \cite{bvelohlavek2002fuzzy} and \cite{belohlavek2011closure}, some interesting dual theorems can be obtained between $L$-fuzzy $\beta$-covering-based lower and upper rough approximation operators.
       \item
       We introduced the matrix representations of the three pairs of $L$-fuzzy $\beta$-covering-based lower and upper rough approximations, which make calculations more valid through operations on matrices.
       \item
       The interdependency of the three pairs of $L$-fuzzy $\beta$-covering-based rough approximation operators is proposed. Using the concept of reducible element (independent element) of the fuzzy $\beta$-covering, the necessary and sufficient conditions under which the two $L$-fuzzy $\beta$-covering can generate the same first and third (second) $L$-fuzzy $\beta$-covering-based rough approximation operations are given.
        \item
       The axiomatic characterizations of the second and third pairs of $L$-fuzzy $\beta$-covering-based rough approximation operators are based on complete Heyting algebra, further weakening the conditions and obtaining more general conclusions.
     \end{enumerate}

\section*{Acknowledgements}
    This research was supported by the National Natural Science Foundation of China (Grant nos. 12101500, 62166037 and 11901465), the Chinese Universities Scientific Fund (Grant no. 2452018054), the Science and Technology Program of Gansu Province (20JR10RA101), the China Postdoctoral Science Foundation (2021M692561), the Scientific Research Fund for Young Teachers of Northwest Normal University (NWNU-LKQN-18-28) and the Doctoral Research Fund of Northwest Normal University (6014/0002020202).

\section*{Reference}

%

\end{document}